    \tikzstyle{new style 0}=[fill={rgb,255: red,255; green,94; blue,247}, draw=black, shape=circle]
    \tikzstyle{pointy}=[fill=white, draw=black, shape=circle]
    \tikzstyle{pointy}=[->]
\newcommand{\pushright}[1]{\ifmeasuring@#1\else\omit\hfill$\displaystyle#1$\fi\ignorespaces}
\newcommand{\pushleft}[1]{\ifmeasuring@#1\else\omit$\displaystyle#1$\hfill\fi\ignorespaces}
\DeclareMathOperator{\tr}{tr}
\newcommand{\bE}{\mathbb{E}}
\newcommand{\bP}{\mathbb{P}}
\newcommand{\cY}{\mathcal{Y}}
\newcommand{\cB}{\mathcal{B}}
\renewcommand{\phi}{\varphi}
\newcommand{\rr}{\mathbb{R}}
\newcommand{\process}[3]{{#1}({#2})_{#3}}
\newcommand{\myset}[2]{\lbrace{#1}, \ldots, {#2}\rbrace}
\newcommand{\mySet}[2]{\left\lbrace {#1} \middle\vert {#2} \right\rbrace}
\newcommand{\N}{\mathbb{N}}
\newcommand{\R}{\mathbb{R}}
\newcommand{\eqdef}{\ensuremath{\,\raisebox{-1pt}{$\stackrel{\mbox{\upshape\tiny def.}}{=}$}}\,}
\newcommand{\sdim}{{N_\mathrm{sim}}} 
\newcommand{\pdim}{{N_\mathrm{pos}}} 
\newcommand{\edim}{{N}} 
\newcommand{\ddim}{{N'}} 
\newcommand{\rdim}{{N_\mathrm{ref}}} 
\newcommand{\tdim}{{N_\mathrm{time}}} 
\newcommand{\pldim}{{N_\mathrm{p}}} 
\newcommand{\ldim}{{d_K}} 
\newtheorem{definition}{Definition}
\newtheorem{proposition}{Proposition}
\newtheorem{assumption}{Assumption}
\newtheorem{theorem}{Theorem}
\newtheorem{lemma}{Lemma}
\newtheorem{remark}{Remark}
\NewDocumentCommand{\luca}{mo}{
    \IfValueF{#2}{
                        {{\scriptsize
                            \textcolor{green}{ 
                            \textbf{L:}
                            \textit{{#1}}
                            }
                        }}
        }
    \IfValueT{#2}{
                        \marginnote{{\scriptsize
                            \textcolor{green}{ 
                            \textbf{L:}
                            \textit{{#1}}
                            }
                        }}
        }
                    }
\NewDocumentCommand{\giulia}{mo}{
    \IfValueF{#2}{
                        {{\scriptsize
                            \textcolor{red}{ 
                            \textbf{GL:}
                            \textit{{#1}}
                            }
                        }}
        }
    \IfValueT{#2}{
                        \marginnote{{\scriptsize
                            \textcolor{red}{ 
                            \textbf{GL:}
                            \textit{{#1}}
                            }
                        }}
        }
}
\NewDocumentCommand{\anastasis}{mo}{
    \IfValueF{#2}{
                        {{\scriptsize
                            \textcolor{violet}{ 
                            \textbf{A:}
                            \textit{{#1}}
                            }
                        }}
        }
    \IfValueT{#2}{
                        \marginnote{{\scriptsize
                            \textcolor{violet}{ 
                            \textbf{A:}
                            \textit{{#1}}
                            }
                        }}
        }
                    }
\NewDocumentCommand{\cody}{mo}{
    \IfValueF{#2}{
                        {{\scriptsize
                            \textcolor{orange}{ 
                            \textbf{A:}
                            \textit{{#1}}
                            }
                        }}
        }
    \IfValueT{#2}{
                        \marginnote{{\scriptsize
                            \textcolor{orange}{ 
                            \textbf{A:}
                            \textit{{#1}}
                            }
                        }}
        }
                    }
\NewDocumentCommand{\yannick}{mo}{
    \IfValueF{#2}{
                        {{\scriptsize
                            \textcolor{cyan}{ 
                            \textbf{Y:}
                            \textit{{#1}}
                            }
                        }}
        }
    \IfValueT{#2}{
                        \marginnote{{\scriptsize
                            \textcolor{cyan}{ 
                            \textbf{Y:}
                            \textit{{#1}}
                            }
                        }}
        }
                    } 
\definecolor{darkgreen}{rgb}{0.0, 0.2, 0.13}
\NewDocumentCommand{\xuwei}{mo}{
    \IfValueF{#2}{
                        {{\scriptsize
                            \textcolor{darkgreen}{ 
                            \textbf{X:}
                            \textit{{#1}}
                            }
                        }}
        }
    \IfValueT{#2}{
                        \marginnote{{\scriptsize
                            \textcolor{darkgreen}{ 
                            \textbf{X:}
                            \textit{{#1}}
                            }
                        }}
        }
                    }
\newcounter{termcounter}
\renewcommand{\thetermcounter}{\Roman{termcounter}}
\crefname{term}{term}{terms}
\def\term{\@ifnextchar[\term@optarg\term@noarg}
\def\term@optarg[#1]#2{%
  \textup{#1}%
  \def\@currentlabel{#1}%
  \def\cref@currentlabel{[][2147483647][]#1}%
  \cref@label[term]{#2}}
\def\term@noarg#1{%
  \refstepcounter{termcounter}%
  \textup{(\thetermcounter)}%
  \cref@label[term]{#1}}
\crefname{lemma}{lemma}{lemmata}
\Crefname{lemma}{Lemma}{Lemmata}
\crefname{assumption}{assumption}{assumptions}
\Crefname{assumption}{Assumption}{Assumptions}
\crefname{example}{Example}{Examples}
\crefname{proposition}{Proposition}{Proposition}
\theoremstyle{thmstyletwo}%
\theoremstyle{thmstylethree}%
\begin{document}

\title{Transformers Can Solve Non-Linear and Non-Markovian Filtering Problems in Continuous Time For Conditionally Gaussian Signals}

\author[3,4]{\fnm{Blanka} \sur{Horvath}}
\email{horvath@maths.ox.ac.uk}

\author*[1,2]{\fnm{Anastasis} \sur{Kratsios}}\email{kratsioa@mcmaster.ca}

\author[3,4]{\fnm{Yannick} \sur{Limmer}}\email{yannick.limmer@maths.ox.ac.uk}

\author[1,2]{\fnm{Xuwei} \sur{Yang}}\email{henryyangxuwei@gmail.com}

\affil*[1]{\orgname{McMaster University}, \country{Canada}}

\affil*[2]{\orgname{The Vector Institute}, \country{Canada}}

\affil*[3]{\orgname{University of Oxford}, \country{UK}}

\affil*[4]{\orgname{Oxford-Man Institute}, \country{UK}}




\abstract{The use of attention-based deep learning models in stochastic filtering, e.g.\ transformers and deep Kalman filters, has recently come into focus; however, the potential for these models to solve stochastic filtering problems remains largely unknown. The paper provides an affirmative answer to this open problem in the theoretical foundations of machine learning by showing that a class of continuous-time transformer models, called \textit{filterformers}, can approximately implement the conditional law of a broad class of non-Markovian and conditionally Gaussian signal processes given noisy continuous-time (possibly non-Gaussian) measurements. Our approximation guarantees hold uniformly over sufficiently regular compact subsets of continuous-time paths, where the worst-case 2-Wasserstein distance between the true optimal filter and our deep learning model quantifies the approximation error. Our construction relies on two new customizations of the standard attention mechanism: The first can losslessly adapt to the characteristics of a broad range of paths since we show that the attention mechanism implements bi-Lipschitz embeddings of sufficiently regular sets of paths into low-dimensional Euclidean spaces; thus, it incurs no ``dimension reduction error''. The latter attention mechanism is tailored to the geometry of Gaussian measures in the $2$-Wasserstein space. Our analysis relies on new stability estimates of robust optimal filters in the conditionally Gaussian setting.}

\keywords{Bayesian Filtering, Approximation, Neural Networks, Non-Markovian}



\maketitle

\section{Introduction}
\label{s:Introduction_Deep_Filtering}

In a wide variety of scientific domains, from medicine~\cite{sameni2006filtering} and evolutionary biology~\cite{lillacci2010parameter} to mathematical finance~\cite{javaheri2003filtering,wells2013kalman}, one is often interested in estimating an unobservable \textit{signal} process $X_{\cdot}$ based on noisy \textit{observations} $Y_{\cdot}$. This leads to the classical problem of \textit{optimal (stochastic) filtering}, which seeks the best reconstruction of the signal process given the observed data. 
For instance, in mathematical finance, market sentiment can act as a signal, and the impact of the sentiment on market prices can act as observations. Although this problem is well studied and admits a unique solution in the form of an infinite-dimensional recursion, e.g.~\cite{stratonovich1959optimum,stratonovich1960application,Sirjaev1965Filtering,zakai1969optimal,BainFilteirnBook}, the resulting non-linear filters are nearly always computationally intractable and so are traditional approximation schemes; e.g.~particle filters~\cite{del1996nonlinear,del1998measure,del1999interacting,ning2023iterated} or linearized surrogates~\cite{gonon2020linearized}. This challenge has sparked the exploration of deep learning approaches to stochastic filtering, motivated by the success of neural networks in most challenging computational problems, which have shown promising empirical performance~\cite{KrishnanShalitSontag15,Balint_FinalPaper_2023,bach2024filtering}. Nevertheless, the fundamental question: \textit{``Can neural networks solve the stochastic filtering problem?''} remains open.

This paper provides an \textit{affirmative} answer to this question for a broad range of non-Markovian signal processes evolving according to non-linear dynamics, given general (possibly non-Gaussian) observations, in continuous time up to an arbitrarily small approximation error.  
However, only the latter of these two can be directly measured. 
More precisely, here both of the processes are often assumed to have continuous paths and are defined on a filtered probability space $(\Omega,\mathcal{F},(\mathcal{F}_t)_{t\in [0:T]},\mathbb{P})$.  
The objective of the \textit{robust stochastic filtering problem}, studied by \cite{clark1978design,Kushner_1979_RobustDiscreteSTate,Davis_1980MultiplicativeFunctionalNonLinearFiltering,DavisFiltering,clark2005robust,CrisanDielhFrizOberhauser_2013_AAP}, is to identify a continuous function $f_t:C([0:t],\mathbb{R}^{d_Y})\rightarrow \mathcal{P}(\mathbb{R}^{d_X})$, with $d_X, d_Y \in \N$ for $t>0$, satisfying 
\begin{align}
    \label{goal:Robust_Filtering}
        f_t(Y_{[0:t]})
    =
        \mathbb{P}(X_t\in \cdot|\mathcal{F}_t^Y)
    ,
\end{align}
where $\mathbb{P}(X_t\in \cdot|\mathcal{F}_t^Y)$ is the conditional law of the $d_X$-dimensional signal process $X_t$ given the $\sigma$-algebra generated by $(Y_s)_{s\in [0:t]}$. The key innovation in \eqref{goal:Robust_Filtering} is the continuity, and the uniqueness, of $f_t$.  In contrast, a Borel $f_t$ satisfying \eqref{goal:Robust_Filtering} exists by elementary measure-theoretic \citep[Theorem 6.3]{Kallenberg_2021__FoundationsOfModernProbability}. 
The continuity of $f_t$ is qualified by equipping the set of Borel probability measures on $\mathbb{R}^{d_X}$, denoted by $\mathcal{P}(\mathbb{R}^{d_X})$, with the weak topology and by equipping the set of continuous paths from $[0:t]$ to $\mathbb{R}^{d_Y}$, denoted by $C([0:t],\mathbb{R}^{d_Y})$, with the uniform norm.

When one has access to a \textit{robust representation}~\eqref{goal:Robust_Filtering}, then they can reliably predict the conditional law of $X_t$ even subject to imperfections on the observed historical data in $y_{[0:t]}\in C([0:t],\mathbb{R}^{d_Y})$.  These robust representations are particularly invaluable in mathematical finance, where continuous streams of data are often noisy.  More broadly, stochastic filters are indispensable in situations where latent parameters influence or obscure market factors.  Applications include the computation of optimal investment under partial information \cite{lakner1998optimal, BjorkDavis}, the estimation of volatility from observed intra-day stock prices \cite{Barndorff-NielsenShephard02}, estimation of interest rates \cite{chen1993maximum,duan1999estimating,BabbsNowman99,JavaheriLautierGalli03}, estimation of spot price estimation for commodity futures \cite{schwartz1997stochastic,schwartz2000short,ManoliuTompaidis02,LautierGalli04,Cody2007parameter}, hedging of credit derivatives \cite{frey2010pricing,FreySchmidt_2012_PricingHedging}, estimation of equilibria under asymmetric information \cite{ccetin2018financial}, and calibration of option pricing models \cite{LindstroemStroejbyBrodenWiktorssonHolst08,LindstroemWiktorsson14}.  Pathwise, or so-called robust, formulations were later derived in \cite{clark2005robust,crisan2013robust}.  Indeed, the abilities of stochastic filters to estimate unobservable variables, track market dynamics, and improve the precision of financial models have made them a staple tool in the quantitative finance toolbox, with several books treating the role of stochastic filtering in finance \cite{harvey1990forecasting,wells2013kalman,bhar2010stochastic,date2011linear,remillard2013statistical}.  

Although the stochastic filtering problem is a well-understood mathematical problem.   Many of the fundamental questions in stochastic filtering, such as existence and dynamics for the evolution of the conditional law of signal process $X_t$ given the realizations of the measurement process $Y_{\cdot}$, were solved in a series of classical \cite{stratonovich1959optimum,stratonovich1960application,zakai1969optimal,Sirjaev1965Filtering,Sirjaev_1978Filtering} and contemporary~\cite{zhang2024numerical} papers.  Nevertheless, the infinite-dimensionality of general stochastic filters, being measure-valued path-dependent processes, makes the problem computationally intractable. 
The exception to this rule is so-called finite-dimensional filters. These encapsulate rare situations in which the dynamics of $X_{\cdot}$ and $Y_{\cdot}$ give way to optimal filters that are finitely parameterized.  Examples include the \textit{Kalman-Bucy filters} \cite{Kalman60,KalmanBucy61} where closed-forms are derived under the assumption of Markovian Gaussian noise and affine OU-type dynamics of $X_{\cdot}$ and $Y_{\cdot}$, the \cite{wonham1964some} filter where all involved quantitative are finite-state Markov processes, and the Be\v{n}es filter \cite{benevs1970existence} which relies a particular set of one-dimensional dynamics.

The computational intractability of the general filtering problem leads to the use of approximately optimal filters.  These approaches include particle filters which aim at dynamically approximating optimal filters using an evolving interacting particle system \cite{del1997nonlinear,djuric2003particle,DelMoralBook_2013}, linear relaxations of the optimal filtering functional for affine processes which can be numerically computed by solving specific stochastic Riccati equations \cite{LukasJosef_LinearizedFiltering}, or by projection of the optimal filter onto finite-dimensional manifolds of exponential families \cite{brigo1998differential,brigo1999approximate,armstrong2023optimal} where it can be tracked using finitely parameterized representations.  The latter two approaches work well if the coupled system $(X_{\cdot},Y_{\cdot})$ follows the postulated affine dynamics, and the former particle-filtering approach is well-suited to low-dimensional settings. Nevertheless, one would like to have access to (approximate) optimal finite-dimensional filters for a broader range of situations.    

One promising path is via deep learning, as deep neural networks can solve a variety of previously intractable numerical problems. The expressivity of neural networks on classical learning problems has since inspired the use of deep learning approaches to numerical stochastic filtering problems, e.g.~\cite{Balint_FinalPaper_2023,Ariel_2023Filtering,pmlr-v80-ryder18a,de2019gru,herrera2021neural,krach2022optimal,bishop2023recurrent,heiss2024nonparametric}, and the development of measure-valued deep learning models e.g.~\cite{Acciaio2022_GHT}. Central to these developments was the emergence of the Deep Kalman filter (DKF) \cite{KrishnanShalitSontag15}.

\paragraph{Connections to Kalman Filters and the \textit{``Deep} Kalman Filter''.}
The original \textit{Deep Kalman Filter}(DKF)~\cite{KrishnanShalitSontag15},  which served as the starting point for this study is a Bayesian deep learning model which is loosely related to the classical Kalman filter through the serendipitous naming and through its \textit{conditionally Gaussian} nature---although DKF, in contrast to the latter, is not restricted to linear settings and hence could be more broadly applicable.  
The name (DKF) draws attention already for its seemingly oxymoronic title\footnote{The expressive power of neural networks would be unnecessary for solving a linear Gaussian (Kalman) filtering problem, which is already well-known to be resolved by the classical \textit{Kalman-Bucy} filter~\cite{KalmanBucy61}.}. 
It is the non-linearity (stemming from the neural network) which has led to its well-documented success in non-linear settings across a range of domains, from video compression~\cite{jaimungal2022reinforcement} to censoring~\cite{hosseinyalamdary2018deep} and mathematical finance~\cite{jaimungal2022reinforcement}. 
Given this (highly) non-linear nature of DKFs and the multitude of domains they have been applied to, it is a natural question to ask whether \textit{Deep Kalman Filters} applicability is indeed mathematically justifiable beyond the classical linear settings, which would usually be typical for Kalman (-Bucy) filters. In this paper, we answer this question affirmatively by solving the following rephrased problem 
\[
    \mbox{\textit{Which classes of (robust) non-linear filtering problems can DKF-type approximately solve?}}
\]
\paragraph{Main Result.}
This paper presents an attention-guided modification of the original DKF model~\cite{KrishnanShalitSontag15}, which we dubbed the \textit{Filterformer} (FF).  Our main result shows that this new model can uniformly approximate (Theorem~\ref{thrm:MAIN}) the robust representation $f_t$ in~\eqref{goal:Robust_Filtering} of the optimal filters in a broad class of \textit{non-linear} and \textit{non-Markovian} signal- and observation processes.  Specifically, we consider the partially-observed systems in~\citep[Chapter 12]{LipShiryaev_II_2001}, which are significant generalizations of the linear Gaussian structure considered in the classical Kalman(-Bucy) filter~\cite{Kalman60,KalmanBucy61}, while maintaining only the core structure required for conditional Gaussianity of the optimal filter. 
We guarantee that Filterformer is \textit{universal} in the class of optimal filters for any partially-observed system
\begin{align}
\label{eq:filtering_dynamics__signal}
dX_t & = [a_0(t,Y_{[0:t]}) + a_1(t,Y_{[0:t]}) X_t]dt + \sum_{i=1}^2\, b_i(t,Y_{[0:t]})dW^{(i)}_t \\
\label{eq:filtering_dynamics__observation}
dY_t & = [A_0(t,Y_{[0:t]}) + A_1(t,Y_{[0:t]}) X_t]dt + \sum_{i=1}^2\, B_i(t,Y_{[0:t]})dW^{(i)}_t
\end{align}
where $a_0$ and $A_0$ respectively take values in $\mathbb{R}^{d_X}$ and $\mathbb{R}^{d_Y}$, and where $a_1,A_1,b_1,b_2,B_1,B_2$ are matrix-valued of respective dimensions $d_X\times d_X$, $d_Y\times d_X$, $d_X\times d_X$, $d_X\times d_Y$, $d_Y\times d_X$, and $d_Y\times d_Y$, and the entries of $a_0, a_1, b_0, b_1, A_0, A_1, B_0, B_1$ are measurable nonanticipative functionals on the measurable space $\smash{([0:T] \times C([0:T], \R^{d_Y}), \cB_{[0:T]} \times \cB_T^{d_Y})}$, and where $\smash{\cB_t^d  \eqdef  \sigma(C([0: t], \R^d))}$ denotes the $\sigma$-algebra generated by continuous paths on $[0: t]$ to $\R^d$.   
We will always assume that the filtration $\mathbb{F}$ is right-continuous and that the probability space $(\Omega,\mathcal{F},\mathbb{F},\mathbb{P})$ supports independent $\mathbb{F}$-adapted Brownian motions, $\smash{W^{(1)}_{\cdot}\eqdef (W_t^{(1)})_{0\le t\le T}}$ and $\smash{W^{(2)}_{\cdot}\eqdef (W^{(2)}_t)_{0\le t\le T}}$, of respective dimensions $d_X$ and $d_Y$, for positive integers $d_X$ and $d_Y$; all technical requirements are detailed in \Cref{ass:technical_evolution_conditions} below.

{{
\paragraph{Technical Contributions.}
We provide Lipschitz-stability guarantees for the robust optimal filter associated with the partially-observed system~\eqref{eq:filtering_dynamics__signal}–\eqref{eq:filtering_dynamics__observation}, in the spirit of~\cite{clark2005robust,crisan2013robust}, but in our case the target distribution is quantified using the $2$-\textit{Wasserstein} distance $\mathcal{W}_2$ (Proposition~\ref{prop:LocLipReg_OptimFilter}). Once the Lipschitz regularity of the robust filter is established in the $\mathcal{W}_2$ sense, we prove a \textit{new} universal approximation theorem showing that our \textit{Filterformer} model can approximate continuous mappings from a suitable class of paths to multivariate Gaussian measures (Proposition~\ref{prop:Universal_Approximation_Theorem__PathToNd}).  
Unlike the Gaussian measure-valued approximation result in~\cite{arabpour2024low}, we do not need the Gaussian measures to be non-degenerate, nor do we rely on an information-geometric distance to quantify the approximation error; but do so using the standard $\mathcal{W}_2$ distance.  

Finally, we highlight the speed of our approximation rates, which we find noteworthy  
considering that the input space is infinite-dimensional (cf.~\cite{lanthaler2023curse}) and no smoothness requirements were stated for the target function (cf.~\cite{adcock2022near}).  Our fast convergence rates are made possible by our pathwise-attention mechanism which, as we show, can adaptively and \textit{losslessly} encode suitable compact sets $K$ of continuous paths by implementing bi-Lipschitz embedding of $K$ into finite-dimensional space (Proposition~\ref{prop:Lossless_Encoding}) whose embedding dimension is \textit{independent of the approximation error}.
}}


\subsection{The Filterformer Model}
\label{s:Introduction_Deep_Filtering__ss:ModelDetails}
The proposed Filterformer (FF), illustrated in \Cref{fig:FFModel}, generates predictions on the finite-dimensional metric space $(\mathcal{N}_{d_X},\mathcal{W}_2)$ whose points are non-singular Gaussian measures, and whose distance between points is quantified by the $2$-Wasserstein distance.  These predictions are generated from the continuous input paths via the following three-phase process.  
\begin{figure}[hpt!]
    \centering
        \centering
        \includegraphics[width=1\linewidth]{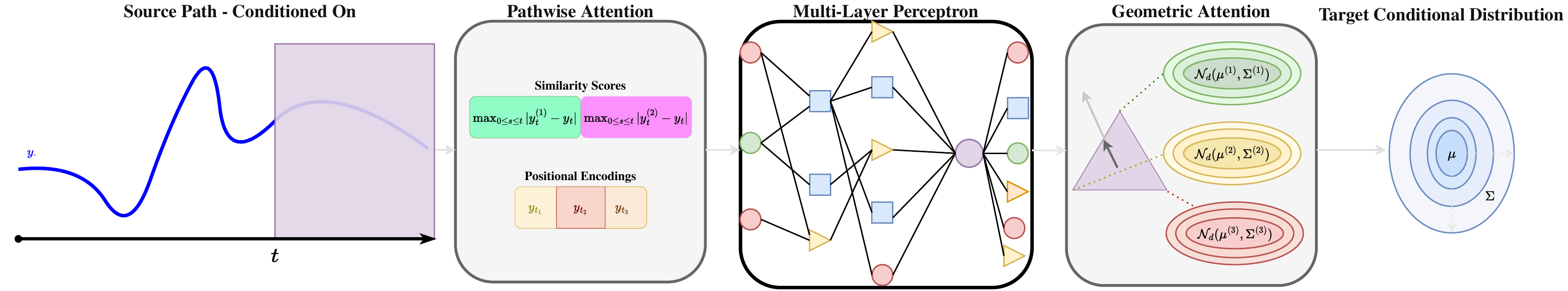}
    \caption{Architecture of a Filterformer which can approximate the conditional law of the signal process $X_{\cdot}$ given source paths, possibly (but not necessarily) taken by the observation process.}
    \label{fig:FFModel}
\end{figure}

\begin{enumerate}[wide,label=\emph{Phase \arabic*}.]
    \item The observed infinite-dimensional continuous path is encoded into a finite-length real vector.  Surprisingly, our proposed encoding layer, coined the \textit{pathwise attention mechanism}, adaptively implements a stable and lossless encoding for a broad class of compact subsets of $K$ paths in $C([0:T],\mathbb{R}^{d_Y})$.  By \textit{losslessness}, we mean that it is injective for a broad class of compact subsets of $K$, including classes of piecewise linear paths, any finite (training) set, and any $K$ which is isometric to a closed Riemannian manifold.  By stability, we mean it in the sense of constructive approximation theory, e.g.~\cite{MR4433109,MR4577400}, namely that the map encoding layer is Lipschitz and so is its inverse.  \textit{Stability} is desirable since it implies that minor numerical errors, e.g.\ rounding, do not lead to drastically different downstream predictions.  By \textit{adaptivity} we mean that the encoding layer is devoid of any projection onto a finite-dimensional (Schauder) basis, and its parameters can be chosen to suit the specific geometry of the compact set of paths $K$ on which the optimal filter is approximated.
\begin{figure}[H]
    \centering
        \centering
        \includegraphics[width=.35\linewidth]{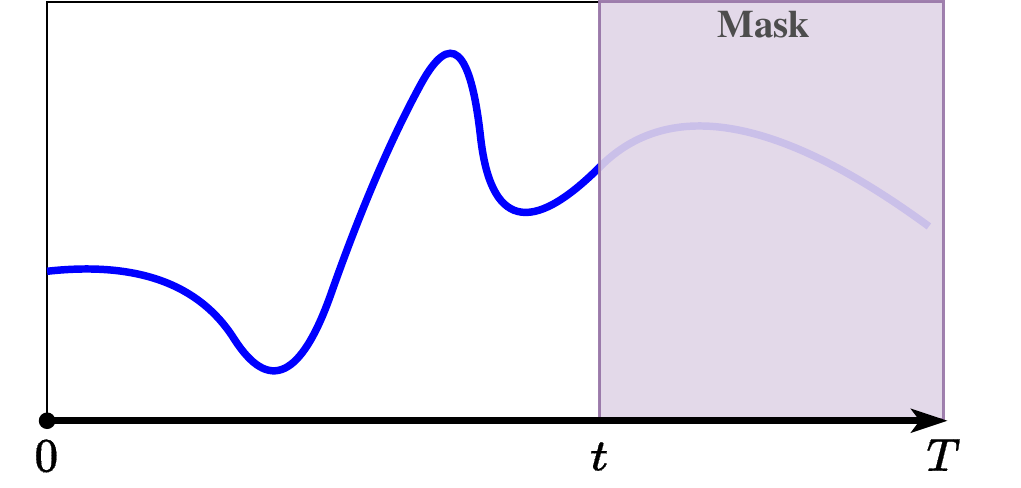}
    \caption{The \textit{Filterformer} model 
    has two inputs: a time $t$ and a continuous path $y_{\cdot}$ defined up to some time $T\ge t$.  The first parameter $t$ acts as a mask hiding the future evolution of the observation (input) path $y_{\cdot}$ beyond the current time $t$.  We note that there is no loss of generality in assuming that the path $y_{\cdot}$ is defined up to time $T$, since any path can be trivially extended beyond the current time $t$ by setting $y_{s}=y_t$ for all $s\in [t,T]$; e.g.\ as in the Functional It\^{o} Calculus \cite{MR3939653,MR2782350,MR3059194}.
    \\
    The mask, i.e.\ the parameter $t$, fills the analogous role to recursions in classical stochastic filters.  By varying $t$, the prediction of the FFs evolves into the future without having to retrain the FF model.}
    \label{fig:Mask}
\end{figure}
    \item The FF then processes the vectors encoding the observed paths and generates ``deep features'' which are then passed along to the output layer for prediction generation.  This phase leverages the efficient approximation capacity of MLPs, e.g.~\cite{yarotsky2018optimal,pmlr-v125-kidger20a,kratsios2021universal,LuShenYangZhang_2021_UATRegularTargets}, to flexibly process the encoded features in a task-specific manner, by adaptive to specific dynamics of the coupled system~\eqref{eq:filtering_dynamics__signal}-\eqref{eq:filtering_dynamics__observation}.
    
    \item In its final processing phase, the FF decodes the ``deep features'' generated by the MLP to $\mathcal{N}_{d_X}$-valued predictions, which are then used to approximately implement the predictions of the optimal filter~\eqref{goal:Robust_Filtering}.  This decoder is a modified instance of the \textit{geometric attention mechanism} of \cite{Acciaio2022_GHT}.  Intuitively, this layer translates Euclidean data to a point in a generalized geodesic convex hull of a finite number containing the image of $K$ under the optimal filter.
\end{enumerate}

\subsection{Organization of Paper}
\Cref{s:Setting} formalizes the basic required regularity conditions on the coupled system~\eqref{eq:filtering_dynamics__signal}-\eqref{eq:filtering_dynamics__observation} and it overviews any background material.  \Cref{s:Model} rigorously introduces the relevant FF model.  \Cref{s:Main_Results} contains the paper's main theoretical results, as well as an overview of the proof methodology.  Detailed proofs of our results are relegated to \Cref{s:Proof}.

\section{The Setting}
\label{s:Setting}
This section formalizes the setting in which our analysis takes place.  We first specify the dynamics of the stochastic processes in the filtering problem.  Next, the source space of paths considered is introduced.  Subsequently, we formulate the target space of probability measures wherein the optimal filter lies.

\subsection{Regularity Conditions on the Partially-Observed System}
We maintain the following basic regularity assumptions on the dynamics of the coupled system~\eqref{eq:filtering_dynamics__signal}-\eqref{eq:filtering_dynamics__observation}.
\begin{assumption}[Regularity Conditions: Dynamics]
\label{ass:technical_evolution_conditions}
We will assume the following uniform bounds:
\begin{align*}
    \vert \process{a_1}{t, y}{i, j}\vert \le L
    \mbox{ and }
    \vert \process{A_1}{t, y}{k, j}\vert \le L
\end{align*}
for $y \in C([0:t], \R^{d_Y})$, $t \in [0:T]$, $i,j=1,\dots,d_X$, and $k=1,\dots,d_Y$.  
Further, we require the following integrability conditions:\footnote{In particular, the integrability conditions imply that $\bE\big[\sum_{j = 1}^{d_X}(X_0)_j^4\big] < \infty$.}
\begin{enumerate}[label=(\roman*),leftmargin=1cm]
    \item $\int_0^T \bE[\process{a_0}{t, Y_{[0:t]}}{i}^4 + \process{b_1}{t, Y_{[0:t]}}{ij}^4 + \process{b_2}{t, Y_{[0:t]}}{ij}^4] dt < \infty$,
    \item $\int_0^T \bE[\process{b_2}{t, Y_{[0:t]}}{ij}^4] dt < \infty$,
\end{enumerate}
hold for $i,j=1,\dots,d_X$, and $k,l=1,\dots,d_Y$ and all paths $y_{[0:T]} \in C([0:T], \R^{d_Y})$. We define 
\begin{align*}
    B \circ B \eqdef B_1B_1^\top + B_2B_2^\top, \quad 
    b \circ B \eqdef b_1B_1^\top + b_2B_2^\top, \quad 
    b \circ b \eqdef b_1b_1^\top + b_2b_2^\top.
\end{align*}
and require that the matrix $B \circ B$ is uniformly non-singular, that is, its inverse is uniformly bounded; there are constants $L_1, L_2 \in \R$ as well as a non-decreasing right-continuous function $K: [0:T] \to [0:1]$ such that for every $x, y \in C([0:T], \R^{d_Y})$ and every $k=1,2$, and $i,j=1,\dots,d$ it holds for all $0 \leq t\leq T$ that 
\begin{enumerate}[label=(\roman*), resume,leftmargin=1cm]
    \item $\vert (B_k)_{i,j}(t, x) - (B_k)_{i,j}(t, y) \vert^2 
        \le
        L_1 \int_0^{t} \vert x_s - y_s \vert^2 dK(s) + L_2 \vert x_t - y_t \vert^2$,
    \item $(B_k)_{i,j}(t, x)^2  
        \le 
        L_1 \int_0^t \vert (1 + \vert x_s \vert^2 ) dK(s) + L_2 (1 +\vert x_t \vert^2)$,
    \item $\int_0^T \bE[\vert \process{A_1}{(t, Y_{[0:t]})}{i, j}(X_t)_j \vert] dt < \infty$,
    \item $\bE[\vert (X_t)_j \vert] < \infty, \quad \forall t \in [0:T]$,
    \item $\bP\big( \int_0^T (\process{A_1}{t, Y_{[0:t]}}{i, j} \bE[(X_t)_j \vert \cY_t])^2dt  < \infty \big) = 1$,
\end{enumerate}
for indices $i=1,\dots,d_Y$ and $j=1,\dots,d_X$, where $\vert \cdot \vert$ is the Euclidean distance on $\R^{d_Y}$.  

    We further impose the following assumptions on the dynamics \eqref{eq:filtering_dynamics__signal}, \eqref{eq:filtering_dynamics__observation}:
    \begin{enumerate}[label=(\roman*), resume,leftmargin=1cm]
        \item Local Lipschitz continuity in the path component uniformly in time\footnote{
            By \emph{local Lipschitz continuity uniformly in time} we mean that there exists a constant that is a local Lipschitz-constant for the path at all times $t \in [0,T]$.
        }, as well as global Lipschitz continuity in the time component of $a_0$, $a_1$, $b \circ b$, $b \circ B$, $ A_0$, $A_1$, $(b \circ B)$, $(B \circ B)^{-1}$ with respect to the $l^2$-norm, or Frobenius norm if matrix-valued. \label{asm:Regulartiy__1}
        \item Positive semi-definiteness of $(b \circ b)(t, y_{[0:t]})- (b \circ B)(B \circ B)(b \circ B)^\top(t, y_{[0:t]})$ for all times $t \in [0:T]$ and paths $y_{[0:T]} \in C^1([0:T], \R^{d_X})$. \label{asm:Regulartiy__2}
        \item \label{asm:Regularity__3} Let $G_t(y_{[0:t]})$ be a solution of $\partial_tG_t(y_{[0:t]}) = \Tilde{a}_1(t, y_{[0:t]})G_t(y_{[0:t]})$ with $G_0(y_{[0:t]}) = I_{d_X}$ for any path $y_{[0:T]} \in C^1([0:T], \R^{d_X})$, where 
        \begin{align*}
            \Tilde{a}_1^\top(t, y_{[0:t]}) \eqdef a_1(t, y_{[0:t]}) - (b \circ B)(B \circ B)A_1(t, y_{[0:t]}).
        \end{align*}
        There exist constants $K_1, K_2 > 0$ s.t. uniformly $K_1 \leq \tr(G_t(y_{[0:t]})) \leq K_2$ as well as $K_3 > 0$ s.t. $\tr(A_1^\top(B \circ B)^{-1}A_1(t, y_{[0:t]})) \leq K_3$. 
    \end{enumerate}

    Eventually we require for the initial conditions that 
    \begin{enumerate}[label=(\roman*), resume,leftmargin=1cm]
        \item the conditional distribution of $X_0$ given $Y_0$ is normal,
        \item the covariance matrix $\Sigma_0  \eqdef  \mathbb{E}[(X_0 - \mu_0)(X_0 - \mu_0)^\intercal \vert Y_0]$ is positive definite, where $\mu_0  \eqdef  \mathbb{E}[X_0 \vert Y_0]$,
        \item \label{asm:Regularity__LipInit} Lipschitz continuity of $\mu_0, \Sigma_0$ in $Y_0$ w.r.t. the $\Vert \cdot \Vert_2$-norm holds. 
    \end{enumerate}
\end{assumption}

\subsection{Source Spaces: Regular Compact Subsets of Path Space}
\label{s:Setting__ss:InputSpace}
Fix a time-horizon $T>0$.  For every $0<t\le T$, let $C([0:t],\mathbb{R}^{d_Y})$ denote the set of continuous ``paths'', i.e.\ functions, from $[0:t]$ to the $d$-dimensional Euclidean space $\mathbb{R}^{d_Y}$.  We equip this set with the uniform norm, making it a Banach space, where the norm of a path $y_{\cdot}\in C([0:t],\mathbb{R}^{d_Y})$ is defined by 
$
        \|y\|_t
    \eqdef 
        \max_{0\le s\le t}\, \|y_s\|_2
$, 
where $\|\cdot\|_2$ is the Euclidean norm on $\mathbb{R}^{d_Y}$.  
For every $0<t\le T$, similarly to the Horizontal Extension of a path in the Functional It\^{o} calculus \citep[Section 5.2.1]{fournierRama}, we may canonically embed any path $y\in C([0:t],\mathbb{R}^{d_Y})$ into a path $\bar{y} \in C([0:T],\mathbb{R}^{d_Y})$ via by extending the ``frozen version after time $t$'' defined by
\[
    \bar{y}(s)\eqdef \begin{cases}
        y(s) &: \mbox{ if } 0\le s \le t\\
        y(t) & : \mbox{ if } t<s\le T.
    \end{cases}
\]
Moreover, note that the map $\bar{\cdot}: C([0:t],\mathbb{R}^{d_Y}) \to C([0:T],\mathbb{R}^{d_Y})$ is a linear isometric embedding of the Banach spaces, since
$
\|\bar{y}\|_T = \max_{0\le s\le T}\, \|y(s)\|_2
= \max_{0\le s\le t}\, \|y(s)\|_2 = \|y\|_t
$, 
since $y(s)=y(t)$ for all $t<s\le T$.  Conversely, the restriction $\bar{y}_{[0:t]}$ of a path $\bar{y}\in C([0:T],\mathbb{R}^{d_Y})$ to any shorter time-interval $[0:t]$ for $0\le t\le T$ defines a non-expansive linear operator from $C([0:T],\mathbb{R}^{d_Y})$ to $C([0:t],\mathbb{R}^{d_Y})$ since
$
        \|\bar{y}_{[0:t]}\|_t 
    = 
       \max_{0\le s\le t}\, \|\bar{y}_{[0:t]}(s)\|_2 
    \le
        \max_{0\le s\le T}\, \|\bar{y}_{[0:t]}(s)\|_2 
    =
        \max_{0\le s\le T}\, \|\bar{y}(s)\|_2 
    =
        \|\bar{y}\|_T
$. 
Therefore, in what follows, we will always consider the domain of our path space to be the Banach space $C([0:T],\mathbb{R}^{d_Y})$.  
As show in \citep[Proposition A.3]{lanthaler2023curse}, even in finite dimensions, there are rather regular functions which cannot be approximated by standard deep neural networks parameterized determined by a number of trainable parameters which is a polynomial in the reciprocal approximation error.  In other words, the curse of dimensionality is generally unavoidable when approximating rather regular functions between finite-dimensional Banach spaces, and thus the problem can only be exacerbated in finite-dimensions; see \citep{galimberti2022designing}.   

The curse of dimensionality cannot be broken, but can typically be avoided either by restricting the class of functions being approximated, e.g.\ in~\cite{adcock2022near,marcati2023exponential,siegel2023characterization,gonon2023approximation}, or the regularity of the compact subsets of the input space on which the uniform approximation is to hold; e.g.\ in~\cite{kratsios2021universal,lu2021deep,kratsios2022universal}.  Recent advances in infinite-dimensional approximation of functions not taking values in a linear space -- as is the case in our results -- show that such restrictions are a sufficient requirement for obtaining universality, see \cite{kratsios2023transfer}. However, the necessity of such restrictions on compact subsets is still unknown.  The explicit effects of restricting the (fractal) dimension and diameter of compact sets of which a deep neural network approximation is to hold is explicitly studied in \citep[Proposition 3.10]{Acciaio2022_GHT}.

We adopt the second approach, since we cannot restrict the function class, which is determined by the stochastic filtering problem.  
We therefore restrict to compact subsets $K$ of the input space which are regular, in that they are either isometric to some compact Riemannian manifold or they are comprised of piecewise linear paths with finitely many pieces.  In particular, we exclude fractal-like subsets of the path space $C([0:T],\mathbb{R}^{d_Y})$ which need not be compressible into finitely many dimensions without ``loosing information'' (i.e.\ for which there is no bi-Lipschitz embedding into a finite-dimensional normed space).  We denote the Riemannian volume of a connected $(C^2)$ Riemannian manifold $(\mathcal{M},g)$ by $\operatorname{Vol}(\mathcal{M},g)$, its geodesic distance function is $d_g$.
\begin{assumption}[Domain Regularity]
\label{ass:reg_compact}
Fix a compact $K\subseteq C([0:T],\mathbb{R}^{d_Y})$ and a ``latent dimension $\ldim\in \mathbb{N}_+$''.
Suppose that any of the following hold:
\begin{enumerate}[label=(\roman*),leftmargin=1cm]
    \item  \label{ass:reg_compact_finite}\emph{Finite Domains:} $K$ is finite.
    \item \label{ass:reg_compact_linear} \emph{Piecewise Linear Domains:} There are $\pldim \in \mathbb{N}_+$, $0=t_0<\dots<t_\pldim=T$, and $C_K>0$ such that $y_{\cdot}\in K$ if and only if: $y_0=0$, $\max_{i=1,\dots,\pldim}\,|y(t_i)|\le C_K$, and  
            for $i=0,\dots,\pldim-1$ there is a $d\times d$ matrix $A^{(i)}$ and a $b^{(i)}\in \mathbb{R}^{d_Y}$ satisfying $
                y(t)=A^{(i)}\,t+b^{(i)}
            $
            $for all 
                t\in (t_i,t_{i+1})
            .
            $
    \item \label{ass:reg_compact_riemann}\emph{Smooth Domains:} There exists a compact and connected $\ldim$-dimensional Riemannian manifold $(\mathcal{M},g)$ for which $(\mathcal{M},d_g)$ is isometric to $(K,\|\cdot\|_T)$.
\end{enumerate}
\end{assumption}
Evidently, their domains $K$ satisfying \Cref{ass:reg_compact} \ref{ass:reg_compact_finite} are easily exhibited, and they correspond to any \textit{training set}.  One can interpret results in this case as paralleling interpolation results for deep learning, e.g.\ \cite{CristaJosefMartin_2020_UnivInterpol,vershynin2020memory,KratsiosDevarnotDokmanic_SmallTransformers_JMLR2023}, but allowing for some arbitrarily small error as in \cite{park2021provable}.  Compact subsets of the path-space $C([0:T],\mathbb{R}^{d_Y})$ satisfying \Cref{ass:reg_compact} \ref{ass:reg_compact_riemann} are also plentiful.  Indeed, using Banach-Mazur theorem \cite{BanachMazurTheoremBook1955} one can show that $C([0:T],\mathbb{R}^{d_Y})$ contains isometric copies of any compact metric space, and in particular, every compact and connected Riemannian manifold (see Proposition~\ref{prop:Non_Vaccousness} in the supplement).

We obtain explicit estimates for the case where $(K,d_{\cdot})$ satisfies \Cref{ass:reg_compact} \ref{ass:reg_compact_riemann}, if the manifold $\mathcal{M}$ is topologically regular, in that sense that it is \textit{aspherical}.
This means that their homotopy groups $\pi_i(K)$ all vanish for all indices $i\ge 2$, e.g.\ if $\mathcal{M}$ is a torus. See e.g. \citep[Chapter 7]{spanier1989algebraic} for definitions and details on homotopy groups.  
Domains $K$ satisfying \Cref{ass:reg_compact} \ref{ass:reg_compact_linear} include any piecewise linear interpolation of real-world financial time-series data, or simulated data.  This is because only a finite number of inflection points can be observed, and prices are not unbounded.  In this way, \Cref{ass:reg_compact} summarizes a rich family of compact subsets of the path-space $C([0:T],\mathbb{R}^{d_Y})$ which possess a sufficient degree of structure to be losslessly encoded into finitely many dimensions by our model's encoding layer, called its \textit{pathwise attention mechanism}.

\section{The Model: The Filterformer}
\label{s:Model}

\subsection{Phase 1 - Encoding via Pathwise Attention}
\label{s:Model__ss:PathwiseAttention}

We consider an attention-type feature encoding which modifies the \textit{attention mechanism} of \cite{bahdanau2016neural} and the graph attention mechanism of \cite{velivckovic2017graph} to the context of continuous-time path sources/inputs.  Our attention mechanism, called \textit{pathwise attention}, encodes key information about a novel path $y_{\cdot}$ and, like those attention mechanisms, allows us to adaptively encode any newly observed path in terms of its similarity scores to ``known/previously generated paths''.  

\begin{figure}[hpt!]
    \centering
    \begin{subfigure}[t]{.45\textwidth}
        \centering
        \includegraphics[width=.8\linewidth]{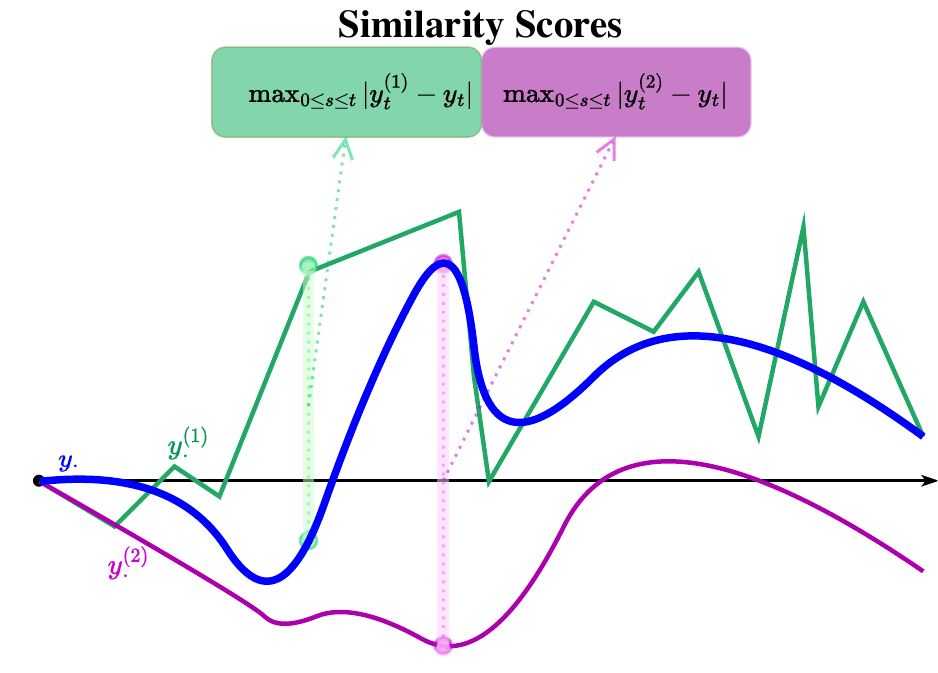}
        \caption{\label{fig:Similarity_Score} The \textit{similarity score} component $\operatorname{sim}_T^{\theta_0}$ encodes a path into finite-dimensional vectorial data, by relating its similarity to a dictionary of previously observed ``contextual paths''.}  
    \end{subfigure}
    \centering
    \begin{subfigure}[t]{.45\textwidth}
        \centering
        \includegraphics[width=.8\linewidth]{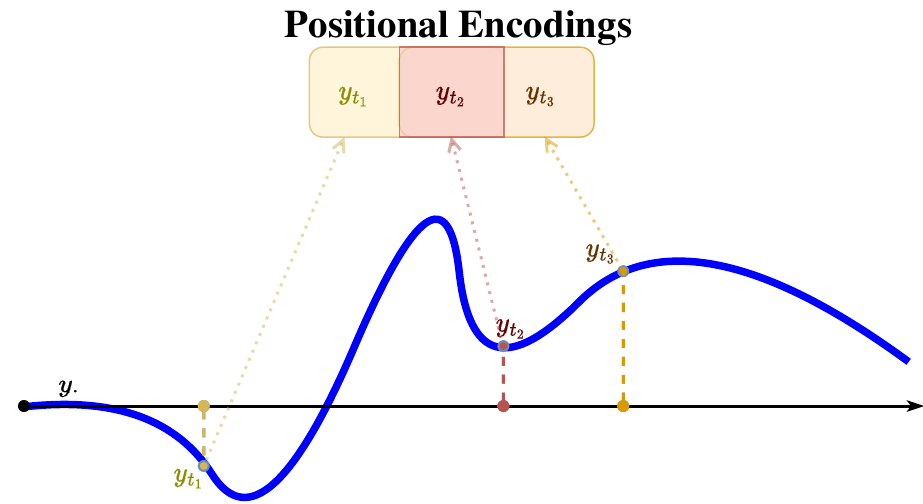}
        \caption{\label{fig:PositionalEncoding} The \textit{positional encoding} component $\operatorname{post}_T^{\theta_0}$ of the attention mechanism allows the similarity scores to further adapt to any novel path by extracting sample-positions on the novel paths.}
    \end{subfigure}
    \caption[1]{Our \textit{pathwise attention mechanism} $\operatorname{attn}_t^{\theta}$, in \Cref{defn:PathAttention}, relies on two methods for adaptive feature extraction.  The first component of the mechanism, in~\ref{fig:Similarity_Score}, in is a similarity score which ranks the similarity of any novel path $x_{\cdot}$ to a dictionary of ``recognized/saved'' reference paths.  The second component of the mechanism, in~\ref{fig:PositionalEncoding}, samples points on the novel path $x_{\cdot}$ and perturbs the similarity scores by them.  The adeptness of the pathwise attention mechanism, unlike basis-based methods, is rooted in the fact that nearly all of its parameters are trainable, which allows it to be tasks-specific (similarly to the advantage of deep MLPs have over regression via basis-functions).}
    \label{fig:Attention_Is_All_You_Need}
\end{figure}

\begin{definition}[Similarity Score]
\label{defn:Sim_Score}
Fix positive integers $\rdim$, $\sdim$, $d_Y$, and a ``time horizon'' $T> 0$.  
Given a set of $\rdim$ distinct paths $\{y^{(n)}\}_{n=1}^\rdim$ in $C([0,1],\mathbb{R}^{d_Y})$, an $\sdim \times \rdim$-matrix  $B$, $b\in \mathbb{R}^{\sdim}$, an $\sdim \times \sdim$-matrix $A$, and $a\in \mathbb{R}^{\sdim}$.  Denote $\theta_0\eqdef (A,B,a,b,\{y^{(n)}\}_{n=1}^\rdim)$.

The \textit{similarity score}, is a map $\operatorname{sim}_T^{\theta_0}:C([0,1],\mathbb{R}^{d_Y})\rightarrow\mathbb{R}^\sdim$, mapping any $y_{\cdot}\in C([0,1],\mathbb{R}^{d_Y})$ to
\begin{equation}
\label{eq:sim_score}
    \operatorname{sim}_T^{\theta_0}:
        y_{\cdot} 
    \mapsto 
        \operatorname{Softmax}
            \Big(
                    A\operatorname{ReLU}\bullet 
                \Big(
                        B
                        \big(\|y_{\cdot}-y^{(n)}_{\cdot}\|_t\big)_{n=1}^\rdim
                    +
                        b
                \Big)
                    +
                        a
            \Big)
    .
\end{equation}
\end{definition}

Illustrated in \Cref{fig:Attention_Is_All_You_Need}, our pathwise attention mechanism extracts features by reinterpreting two of the key features of attention mechanisms from natural language processing (NLP).  First, it utilizes \textit{similarity scores} which rank the likeness of any novel path against a dictionary of references paths.  These reference paths can either arise from real/stress historical (market) scenarios, they can be generated synthetically, or a combination of either.  These similarity scores are illustrated in \Cref{fig:Similarity_Score}, and they serve as pathwise analogues of contextual keys and queries in attention mechanisms in NLP.

\begin{definition}[Positional Encoding]
\label{defn:PositionalEncoding}
In the notation of \Cref{defn:Sim_Score}, fix a positive integer $\pdim$, and set of query times $0\le t_1<\dots< t_\tdim\le T$, an $\pdim\times {d_Y}$ matrix $V$, and an $\pdim\times \tdim$ matrix $U$.  Set $\theta_1\eqdef (V,U,\{t_n\}_{n=1}^\tdim)$.
\noindent The \textit{positional encoding}, is a map $\operatorname{post}_T^{\theta_0}:C([0,1],\mathbb{R}^{d_Y})\rightarrow\mathbb{R}^{\pdim \times d_Y}$, defined by 
\begin{equation}
\label{eq:pos_encoding}
    \operatorname{post}_T^{\theta_1}
        (y_{\cdot} )
    \eqdef 
        U \,
        (\oplus_{j=1}^\tdim \, y_{t_j})
        + V
    ,
\end{equation}
for any $y_{\cdot}\in C([0,1],\mathbb{R}^{d_Y})$.
\end{definition}

The \textit{positional encoding} of any path, illustrated in \Cref{fig:PositionalEncoding}, encodes snapshots of it at various times.  These indicate changes in any sample path, and they fully determine piecewise linear paths with a finite number of pieces (as are generated in simulation studies).  This then combines with the weights of the similarity scores to produce the feature encodings of our pathwise attention mechanism.

\begin{definition}[Pathwise Attention]
\label{defn:PathAttention}
In the notation of \Cref{defn:Sim_Score,defn:PositionalEncoding}, set $\sdim=\pdim$.  A \textit{pathwise attention}, with parameter $\theta\eqdef (\theta_0,\theta_1,C)$, is the map $\operatorname{attn}_T^{\theta}:[0,T]\times C([0,T],\mathbb{R}^{d_Y})\rightarrow \mathbb{R}^{\edim + 1}$ with representation
\begin{align}
    \label{eq:featuremap}
        \operatorname{attn}_T^{\theta}(t,y_{\cdot})
    \eqdef 
        \Big(
                t
            ,
                C\,
                \operatorname{vec}\big(
                        \operatorname{sim}_T^{\theta_0}(y_{\cdot}) 
                    \odot
                        \operatorname{post}_T^{\theta_1}(y_{\cdot})
                \big)
        \Big)
\end{align}
for any $y_{\cdot}\in C([0,1],\mathbb{R}^{d_Y})$ and $0\le t\le T$, where $C$ is an $\edim \times (\sdim d_Y)$-dimensional matrix.
\end{definition}

\begin{proposition}[Pathwise Attention Losslessly Encodes Regular Domains]
\label{prop:Lossless_Encoding}
\sloppy Let $K\subseteq C([0:T],\mathbb{R}^{d_Y})$ satisfy \Cref{ass:reg_compact}.  Then, there exists an $\edim \in \mathbb{N}_+$ and a parameter $\theta$ as in \Cref{defn:PathAttention} such that, $\operatorname{attn}_T^{\theta}$ restricts to a bi-Lipschitz embedding of $(K,\|\cdot\|_T)$ into $(\mathbb{R}^{\edim+1},\|\cdot\|_2)$.

Estimates for the encoding dimension $\edim$ are recorded in \Cref{tab:ATTENTION_Rates} on a case-by-base basis.
\end{proposition}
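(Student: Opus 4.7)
The plan is to handle each of the three cases of \Cref{ass:reg_compact} separately, exploiting different features of the attention map in each case. Throughout, the upper Lipschitz constant is immediate: the positional encoding $\operatorname{post}_T^{\theta_1}$ is a linear combination of path evaluations (which are $1$-Lipschitz in the uniform norm); the similarity score $\operatorname{sim}_T^{\theta_0}$ is a composition of Lipschitz operations (the distance functionals $y\mapsto \|y-y^{(n)}\|_T$, affine maps, ReLU, and softmax); and the Hadamard product of two uniformly bounded Lipschitz maps on a compact set remains Lipschitz. The main difficulty is therefore to prescribe $\theta=(\theta_0,\theta_1,C)$ so that the attention map admits a \emph{lower} Lipschitz bound on $K$.

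In case \ref{ass:reg_compact_finite}, $K=\{y^{(1)},\dots,y^{(R)}\}$ is finite. I would choose the reference paths in the similarity score to be exactly the elements of $K$; then distinct $y^{(r)},y^{(s)}\in K$ produce distinct distance vectors (the $r$-th coordinate of the first is $0$ while that of the second is strictly positive). With $A,B,a,b$ tuned so that softmax routes these vectors into disjoint open cells of the simplex, injectivity of the attention map is automatic, and on a finite domain injectivity plus the upper Lipschitz bound already gives bi-Lipschitzness. In case \ref{ass:reg_compact_linear}, I would set the query times in $\operatorname{post}_T^{\theta_1}$ to be the breakpoints $\{t_i\}_{i=0}^{\pldim}$. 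Because the paths in $K$ are piecewise linear on the intervals $[t_i,t_{i+1}]$, the difference $y-\tilde y$ of any two such paths is piecewise linear with the same breakpoints, so the convex function $s\mapsto|y(s)-\tilde y(s)|$ attains its supremum on each piece at an endpoint, yielding $\|y-\tilde y\|_T=\max_{0\le i\le \pldim}|y(t_i)-\tilde y(t_i)|$. Thus the positional encoding alone is already a bi-Lipschitz embedding into $\R^{\pldim\, d_Y}$ with explicit constants, and one picks $C$ and the similarity-score parameters so that the Hadamard product does not destroy this (for instance by forcing the softmax branch to output a uniformly bounded, strictly positive multiplicative factor per coordinate).

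Case \ref{ass:reg_compact_riemann} is where the essential work lies. Since $(K,\|\cdot\|_T)$ is isometric to a compact Riemannian manifold $(\mathcal{M},g)$ of dimension $\ldim$, classical results from metric geometry---e.g.\ the Kuratowski-style construction combined with Assouad's embedding theorem, or explicit bi-Lipschitz embedding bounds for aspherical manifolds---furnish a bi-Lipschitz embedding $\Phi:K\to\R^{\edim}$ whose dimension $\edim$ depends only on $\ldim$ (and in the aspherical case admits the explicit bounds tabulated in \Cref{tab:ATTENTION_Rates}). I would then choose reference paths $\{y^{(n)}\}_{n=1}^{\rdim}$ to form a sufficiently fine $\varepsilon$-net in $K$, so that the distance-feature map $y\mapsto(\|y-y^{(n)}\|_T)_{n=1}^{\rdim}$ is a bi-Lipschitz Kuratowski-type embedding. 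The MLP wrapped around these distances inside $\operatorname{sim}_T^{\theta_0}$ (affine, ReLU, affine, softmax) has enough expressive freedom to realise a smooth modification of this distance-feature map while keeping its image in a compact subregion of the open simplex where softmax is itself bi-Lipschitz. Combining with a non-degenerate $C$ and tuning the positional encoding to contribute a uniformly positive multiplicative factor then yields the bi-Lipschitz property of $\operatorname{attn}_T^{\theta}$ on $K$.

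The main obstacle is ensuring that the softmax nonlinearity does not collapse distances: softmax is globally Lipschitz but its inverse fails to be Lipschitz near the boundary of the simplex. One must therefore tune $A,B,a,b$ so that the intermediate vectors lie inside a compact subregion of the open simplex where softmax is bi-Lipschitz with controllable constants, while simultaneously preserving the lower Lipschitz bound coming from the Kuratowski-type embedding. A secondary subtlety is that the Hadamard-then-vectorise construction can introduce spurious cancellations; this will be avoided by choosing the offset $V$ in the positional encoding so that every multiplicative entry is uniformly bounded below in absolute value on $K$, preventing the componentwise product from annihilating the similarity contribution.
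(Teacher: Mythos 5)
Your overall strategy coincides with the paper's: treat the three cases of \Cref{ass:reg_compact} separately, use the distance-to-reference-paths component as a Kuratowski-type embedding, trivialize whichever of the two attention components is not needed in a given case, and control the softmax by confining its image to a compact subset of the interior of the simplex. Your piecewise-linear argument is essentially identical to the paper's (which simply zeroes out the similarity-score parameters so that the Hadamard product is a constant rescaling of the positional encoding). Within that shared skeleton, however, three steps that you leave as assertions are precisely where the actual work lies.

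First, in the Riemannian case the claim that a sufficiently fine net makes $y\mapsto(\|y-y^{(n)}\|_T)_{n=1}^{\rdim}$ bi-Lipschitz into $(\mathbb{R}^{\rdim},\|\cdot\|_2)$ is the heart of the matter, and it does not follow from Assouad's theorem, which only embeds the snowflaked metric $d^{\alpha}$ with $\alpha<1$ and therefore gives no lower Lipschitz bound for the original metric. The paper instead invokes the Katz--Katz theorem that the distance map to a $\delta$-net on a closed Riemannian manifold is bi-Lipschitz once $\delta=\operatorname{sys}(\mathcal{M})/10$, together with Gromov's systolic inequality to bound $\delta$ by $\operatorname{Vol}(\mathcal{M},g)^{1/\ldim}$ in the aspherical case; without some such input your lower bound is unproven. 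Second, your finite-case construction uses all $\#K$ elements as reference paths and so yields $\edim\in\mathcal{O}(\#K)$, whereas \Cref{tab:ATTENTION_Rates} claims $\mathcal{O}(\log\#K)$; the paper obtains this by composing the Kuratowski map with a Johnson--Lindenstrauss projection, a step absent from your argument. Third, softmax is invariant under adding multiples of the all-ones vector, so it is not injective on any open subset of $\mathbb{R}^{\sdim}$; confining the pre-softmax vectors to a region whose \emph{image} is compact in the open simplex does not by itself yield invertibility. The paper resolves this by using the ReLU layer to place the pre-softmax vectors on the affine hyperplane $\{(u,1):u\in\mathbb{R}^{\rdim}\}$, which is transverse to the diagonal direction and on which softmax is a continuous bijection with a locally Lipschitz inverse; some such domain restriction must be added to your construction in both the finite and Riemannian cases.
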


\begin{table}[!ht]
    \centering
    \caption{{Complexity Estimates for Lossless Pathwise Attention Encoding of Regular Compact Domains.}}
    \label{tab:ATTENTION_Rates}
    \begin{tabular}{lllll}
    \toprule
        Type of Compact Domain 
        & Encoding Dimension ($\edim$) & Ass.~\ref{ass:reg_compact} \\
    \midrule
        Finite
        & $\mathcal{O}(\log(\# K))$ 
        & \ref{ass:reg_compact_finite}
        \\
        P.W. Lin. ($\pldim$ Pieces) 
        & $\mathcal{O}(\pldim d)$
        & \ref{ass:reg_compact_linear}
        \\
        Iso. Comp. Riemann.
        & Finite
        & \ref{ass:reg_compact_riemann}
        \\
        Iso. Comp. Riemann $\&$ Aspherical 
        & $\smash{\mathcal{O}\big(\mathfrak{N}_{d_g}(\mathcal{M},\operatorname{Vol}(\mathcal{M},g)^{1/\ldim} )\big)}$
        & \ref{ass:reg_compact_riemann}
        \\
    \bottomrule
    \end{tabular}
    \caption*{Fix an $A\subset \mathcal{M}$ and $\delta>0$.  The quantity $\mathfrak{N}_{d_g}(A,\delta)$ denotes the minimum number, $I$, of points $\{x_i\}_{i=1}^I \subseteq A$ for which every $x\in A$ is contained in a geodesic ball of radius $\delta$ about some $x_i$, for $i\in \{1,\dots,I\}$.}
\end{table}

\subsection{Phase 2 - Multi-Layered Perceptrons (MLPs) Transformation}
\label{s:Model__ss:Review__MLPs}
Let us briefly recall the structure of a standard MLP, before formalizing our FF. 
In what follows, we fix a real-valued \textit{activation function} $\sigma$ defined on $\mathbb{R}$ satisfying:
\begin{assumption}[{\cite{pmlr-v125-kidger20a}~Condition}]
\label{ass:KLCondition}
The activation function $\sigma$ is continuous, non-affine, and there exists an $t\in \mathbb{R}$ such that $\sigma$ is differentiable at $t$ and $\sigma^{\prime}(t)\neq 0$.
\end{assumption}
Examples of activation functions satisfying this condition are the $\operatorname{PReLU}$ function $t\mapsto \max\{0,t\}+a\min\{0,t\}$, for a hyperparameter $a\in \mathbb{R}$, the $\tanh$ function used in the numerical PDE literature \cite{de2021approximation}, the sine function used in SIRENs \cite{sitzmann2020implicit}, and the Swish map $t\mapsto \frac{t}{1+e^{-\beta t}}$ of \cite{ramachandran2017searching}, where $0\le \beta$ is a hyperparameter.  

Fix an encoding dimension $\edim$ and a target dimension $\ddim$; both of which are positive integers.  A \textit{multi-layer perceptron} (MLP), also called feedforward neural network, from $\mathbb{R}^\edim$ to $\mathbb{R}^\ddim$ with activation function $\sigma$ is a map $\hat{f}:\mathbb{R}^\edim\rightarrow\mathbb{R}^\ddim$ with iterative representation: for each $x\in \mathbb{R}^\edim$
\begin{equation}
\label{eq:MLP}
    \begin{aligned}
    \hat{f}(x) & \eqdef A^{(J)}\,x^{(J)}+b^{(J)},
    \\
    x^{(j+1)} & \eqdef  
        \sigma
        \bullet 
        \big( 
            A^{(j)}
            \,
            x^{(j)}
            +
        b^{(j)}\big)
        \qquad 
            \mbox{for }  
        j=0,\dots,J-1,
    \\
    x^{(0)} & \eqdef  x.
    \end{aligned}
\end{equation}
where for $j=0,\dots,J-1$, each $A^{(j)}$ is a $d_{j+1}\times d_j$ matrix, $b^{(j)}\in \mathbb{R}^{d_{j+1}}$, $d_0=\edim$ and $d_{J+1}=\ddim$.

\subsection{Phase 3 - Decoding via Geometric Attention Mechanism}
\label{s:Model__ss:GeometricAttentionMechanism}

Fix $\ddim \in \mathbb{N}_+$.  In what follows, we denote the orthogonal projection of $\mathbb{R}^\ddim$ onto the $\ddim$-simplex 
$
    \Delta_\ddim
    \eqdef 
    \{
        w \in [0,1]^\ddim:\,
        \sum_{n=1}^\ddim\,w_n = 1
    \}
$
by $P_{\Delta_\ddim}$. The closedness and convexity of $\Delta_\ddim$ implies that $P_{\Delta_\ddim}$ is a well-defined $1$-Lipschitz map, see e.g.~\citep[Proposition 4.8]{CombettesBauschke_Book_ConvexAnalysisMonotonOperators_2017}.  

\begin{definition}[Geometric Attention Mechanism]
\label{defn:GeometricAttentionMechanism}
Let $\ddim,d_X \in \mathbb{N}_+$.  A geometric attention mechanism is a map $\operatorname{g-attn}_\ddim: \mathbb{R}^\ddim \rightarrow \mathcal{N}_{d_X}$ with representation
\[
        \operatorname{g-attn}_\ddim^{\vartheta}(v)
    =
        \mathcal{N}_{d_X}\Biggl(
                \sum_{n=1}^\ddim\,
                    P_{\Delta_\ddim}(v)_n
                    \cdot
                    m^{(n)}
            ,
                \sum_{n=1}^\ddim\,
                    P_{\Delta_\ddim}(v)_n
                    \cdot
                    (A^{(n)})^{\top}A^{(n)}
        \Biggr)
\]
where $m^{(1)},\dots,m^{(\ddim)}\in \mathbb{R}^{d_X}$, $A^{(1)},\dots,A^{(\ddim)} \in \R^{d_X\times d_X}$; and $\vartheta\eqdef (m^{(n)},A^{(n)})_{n=1}^\ddim$.
\end{definition}

\subsection{The FF Model}
\label{s:Model__ss:FFModel}
We may now formalize the Filterformer model of \Cref{fig:FFModel}.  
\begin{definition}[Filterformer (FF)]
Let $d_X, d_Y \in\mathbb{N}_+$, $T>0$, and an activation function $\sigma:\mathbb{R}\rightarrow \mathbb{R}$.  
A function $\hat{F}:C([0:T],\mathbb{R}^{d_Y})\rightarrow \mathcal{N}_{d_X}$ is called a Filterformer if it admits the representation
\begin{equation}
\label{eq:FF_formalization}
        \hat{F}
    =
            \operatorname{g-attn}_\ddim^{\vartheta}
        \circ 
            \hat{f}
        \circ 
            \operatorname{attn}_T^{\theta}
\end{equation}
where $\operatorname{g-attn}_\ddim^{\vartheta}$ is as in \Cref{defn:GeometricAttentionMechanism}, $\operatorname{attn}_T^{\theta}$ is as in \Cref{defn:PathAttention}, and $\hat{f}$ is an MLP as in~\eqref{eq:MLP}; such that the composition~\eqref{eq:FF_formalization} is well-defined.
\end{definition}

\section{Main Result}
\label{s:Main_Results}

We are now ready to state our main theorem, which shows that the FF~\eqref{eq:FF_formalization} is indeed capable of asymptotically optimally filtering the coupled system~\eqref{eq:filtering_dynamics__signal}-\eqref{eq:filtering_dynamics__observation}.  Our guarantees are of a non-asymptotic form; in that they depend on the complexity of the network.
\begin{theorem}[The FF Can Approximate the Optimal Filter]
\label{thrm:MAIN}
\sloppy Let $d_X, d_Y \in \mathbb{N}_+$ and $K\subset C([0:T],\mathbb{R}^{d_Y})$ satisfy \Cref{ass:reg_compact}.  Suppose that the coupled system~\eqref{eq:filtering_dynamics__signal}-\eqref{eq:filtering_dynamics__observation} satisfies \Cref{ass:technical_evolution_conditions}.  
For every $T>0$ there exists a FF $\hat{F}:[0:T]\times C([0:T]\times\mathbb{R}^{d_Y})\rightarrow\mathcal{N}_{d_X}$ satisfying the uniform estimate
\[
    \max_{0\le t\le T,\,y_{\cdot}\in K}\,
        \mathcal{W}_p\big(
                \mathbb{P}(X_t \in \cdot \vert y_{[0:t]})
            ,
                \hat{F}(t,y_{\cdot})
        \big)
    < 
        \varepsilon
    ,
\]
for every $1\le p\le 2$.  Furthermore, \Cref{tab:ComplexityMain} records the complexity estimates of $\hat{F}$.
\end{theorem}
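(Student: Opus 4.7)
The plan is to assemble the three ``phases'' of the Filterformer architecture against the three structural results stated earlier: the Lipschitz stability of the robust optimal filter in $\mathcal{W}_2$ (Proposition~\ref{prop:LocLipReg_OptimFilter}), the lossless bi-Lipschitz pathwise-encoding of regular compact path domains (Proposition~\ref{prop:Lossless_Encoding}), and the universal approximation result for path-to-Gaussian maps (Proposition~\ref{prop:Universal_Approximation_Theorem__PathToNd}). First, under Assumption~\ref{ass:technical_evolution_conditions}, the classical result in~\citep[Chapter 12]{LipShiryaev_II_2001} ensures that the conditional law $\mathbb{P}(X_t\in\cdot\mid\mathcal{F}_t^Y)$ is almost surely Gaussian, so the target function
\[
    f:[0:T]\times K \to \mathcal{N}_{d_X},\qquad (t,y_\cdot)\mapsto \mathbb{P}(X_t\in\cdot\mid y_{[0:t]})
\]
is a well-defined map into the target space of the Filterformer, and by Proposition~\ref{prop:LocLipReg_OptimFilter} it is (locally) Lipschitz when $\mathcal{N}_{d_X}$ is equipped with the $\mathcal{W}_2$-distance inherited from the Wasserstein space on $\mathbb{R}^{d_X}$. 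Together with the compactness of $[0:T]\times K$, this upgrades $f$ to a globally Lipschitz (hence uniformly continuous) function on the relevant domain.

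Next, I would exploit Proposition~\ref{prop:Lossless_Encoding} to pick parameters $\theta$ for which the pathwise attention $\operatorname{attn}_T^\theta:[0:T]\times K \to \mathbb{R}^{\edim+1}$ is a bi-Lipschitz embedding (the time coordinate is carried across verbatim by the first coordinate of \eqref{eq:featuremap}). Because $\operatorname{attn}_T^\theta$ is bi-Lipschitz on a compact set, its image $E\eqdef\operatorname{attn}_T^\theta([0:T]\times K)\subset\mathbb{R}^{\edim+1}$ is compact and the pushed-forward target
\[
    \tilde f \eqdef f\circ (\operatorname{attn}_T^\theta)^{-1}:E\to(\mathcal{N}_{d_X},\mathcal{W}_2)
\]
is Lipschitz, hence in particular continuous. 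Proposition~\ref{prop:Universal_Approximation_Theorem__PathToNd} (i.e., the universal approximation of continuous $\mathcal{N}_{d_X}$-valued maps from a Euclidean compact set by the composition $\operatorname{g-attn}_\ddim^\vartheta\circ\hat f$ of an MLP with the geometric attention decoder) then produces, for any prescribed $\varepsilon>0$, choices of $\hat f$ and $\vartheta$ such that
\[
    \sup_{z\in E}\,\mathcal{W}_2\bigl(\tilde f(z),\,\operatorname{g-attn}_\ddim^\vartheta\circ\hat f(z)\bigr) \;<\;\varepsilon.
\]
Setting $\hat F \eqdef \operatorname{g-attn}_\ddim^\vartheta\circ\hat f\circ \operatorname{attn}_T^\theta$ gives a genuine Filterformer in the sense of~\eqref{eq:FF_formalization}, and pulling the estimate back through the bi-Lipschitz homeomorphism $\operatorname{attn}_T^\theta$ yields the desired uniform $\mathcal{W}_2$ bound over $[0:T]\times K$.

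To pass from $p=2$ to the range $1\le p\le 2$ claimed in the statement, I would use the standard monotonicity of Wasserstein distances, $\mathcal{W}_p\le \mathcal{W}_2$ for $p\le 2$ (a one-line application of Jensen's inequality to any optimal coupling), so the single bound in $\mathcal{W}_2$ produced above implies the estimate for every $p\in[1,2]$ simultaneously. The complexity estimates in \Cref{tab:ComplexityMain} would follow by tracking, for each case of Assumption~\ref{ass:reg_compact}, the encoding dimension $\edim$ given in \Cref{tab:ATTENTION_Rates}, the MLP-size required for the $\varepsilon$-approximation of the (Lipschitz) function $\tilde f$ on the compact set $E\subset\mathbb{R}^{\edim+1}$ (via quantitative universal-approximation bounds under Assumption~\ref{ass:KLCondition}), and the number of reference means/covariances $\ddim$ demanded by the geometric-attention decoder in Proposition~\ref{prop:Universal_Approximation_Theorem__PathToNd}.

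The main obstacle is a delicate interplay between two of the intermediate results: the Lipschitz modulus of $f$ obtained from Proposition~\ref{prop:LocLipReg_OptimFilter} is stated in $\mathcal{W}_2$, and must be carried through to the geometric attention decoder, whose output resides in the subset $\mathcal{N}_{d_X}$ of $\mathcal{P}_2(\mathbb{R}^{d_X})$, without leaving the Gaussian submanifold. Any naive approximation scheme on means and covariances separately would either (i) produce possibly non-positive-semidefinite covariance matrices, or (ii) incur an ``information-geometric'' error mismatch with the intrinsic $\mathcal{W}_2$ metric. The convex-combination structure of $\operatorname{g-attn}_\ddim^\vartheta$, which outputs $(\sum_n w_n m^{(n)},\sum_n w_n (A^{(n)})^\top A^{(n)})$ with $w\in\Delta_\ddim$, is precisely what avoids both failure modes, and verifying that this indeed provides an $\varepsilon$-dense covering of the compact image $f([0:T]\times K)\subset\mathcal{N}_{d_X}$ in $\mathcal{W}_2$ is the technical crux that Proposition~\ref{prop:Universal_Approximation_Theorem__PathToNd} is responsible for; the rest of the proof is then a routine assembly of the three phases.
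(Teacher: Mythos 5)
Your proposal is correct and follows essentially the same route as the paper: establish that the optimal filter is (locally, hence on the compact set $[0:T]\times K$ globally) Lipschitz into $(\mathcal{N}_{d_X},\mathcal{W}_2)$ via Proposition~\ref{prop:LocLipReg_OptimFilter}, then invoke Proposition~\ref{prop:Universal_Approximation_Theorem__PathToNd} to produce the approximating Filterformer, with $\mathcal{W}_p\le\mathcal{W}_2$ handling $1\le p\le 2$. The only cosmetic difference is that you explicitly factor the approximation through the bi-Lipschitz encoding of Proposition~\ref{prop:Lossless_Encoding}, whereas the paper treats that step as internal to the proof of Proposition~\ref{prop:Universal_Approximation_Theorem__PathToNd} and applies it as a black box.
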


\begin{table}[ht!]
    \centering
    \caption{{Complexity Estimates for the FF model $\hat{F}$ in~\Cref{thrm:MAIN}.}}
    \label{tab:ComplexityMain}
    \begin{tabular}{lllll}
    \toprule
        $\sigma$ Regularity & Depth & Width & Encode ($\edim$) & Decode ($\ddim$)\\
    \midrule
        $\operatorname{ReLU}$ & 
        $
        \smash{
            \mathcal{O}\big(
                \varepsilon^{-\ddim}
            \big)
        }
    	$
        & 
        $        
        \smash{
            \mathcal{O}\big(
                \varepsilon^{-\ddim}
            \big)
        }
        $
        & $\mathcal{O}(1)$ & $\mathcal{O}\big(\varepsilon^{-1}\big)$
        \\
        Smooth \& Non-poly.
         & 
         $
         \smash{
         \mathcal{O}\big(
            \varepsilon^{-4\ddim-1}
         \big)
    	 }
         $
         &
         $ \ddim + \edim +3$
         & $\mathcal{O}(1)$ & $\mathcal{O}\big(\varepsilon^{-1}\big)$
         \\
         Poly. $\&$ Non-affine
         & 
         $
         \smash{
         \mathcal{O}\big(
            \varepsilon^{-8\ddim-6}
         \big)
    	 }
         $
         & 
         $\ddim+\edim+4$
         & $\mathcal{O}(1)$ & $\mathcal{O}\big(\varepsilon^{-1}\big)$
         \\
         Non-Smooth $\&$ Non-poly.
            &
        Finite
        &
        $\ddim+\edim+3$
         & $\mathcal{O}(1)$ & $\smash{\mathcal{O}\big(\varepsilon^{-1}\big)}$
         \\
    \bottomrule
    \end{tabular}
\end{table}

\subsection{Proof Overview}
\label{s:Theoretical_Analysis}
This section overviews the derivation of \Cref{thrm:MAIN}.

Results in~\cite{LipShiryaev_II_2001} to show that  $\mathbb{P}(X_t\in \cdot|\mathcal{Y}_t)$ is conditionally Gaussian, they denote an explicit formulation of the resulting distribution which can then analyze directly. 
In particular, \Cref{ass:technical_evolution_conditions} allows the application of \citep[Theorem 12.6]{LipShiryaev_II_2001} and we obtain that for each $0\le t\le T$ and each path $y_{[0:t]}\in C([0:t], \R^{d_Y})$ the probability measure $f_t(y_{[0:t]})$ is a $d_X$-dimensional Gaussian measure.  We denote the mean and covariance of this measure by $\mu(y_{[0:t]})$ and $\Sigma(y_{[0:t]})$.  Thus, 
\begin{align}
        \mathcal{N}(\mu(y_{[0:t]}),\Sigma(y_{[0:t]})) 
    \sim 
        \mathbb{P}(X_t\in \cdot|y_{[0:t]})
    \eqdef 
        f_t(y_{[0:t]})
    , \label{id:LipsterShiryaevRepresentation}
\end{align}
for each $0\le t\le T$ and each path $y_{[0:t]}\in C([0:t], \R^{d_Y})$.

A function is uniformly approximable by a deep learning model if it is continuous.  If there is no clear favourable structure in the function being approximated, e.g.\ smoothness \cite{yarotsky2020phase,gonon2023approximation} or neural network-like structure \cite{cheridito2021efficient,mhaskar2016deep}, then the best available approximation rates are those corresponding to those where the target function is Lipschitz; see \cite{DaubechiesDeVoreFoucartHaninPetrova_CA_KnotFreeReLU_2022}. 
Thus, the first step to obtaining approximability guarantees for the optimal filter by deep neural networks, which depend on relatively few trainable parameters, is to show that the optimal filter is a locally-Lipschitz function of observed paths.

\begin{proposition}[Local Lipschitz-Continuity of the Optimal Filter]
\label{prop:LocLipReg_OptimFilter}
    Under \Cref{ass:technical_evolution_conditions}, $f_t$ from \eqref{goal:Robust_Filtering} is locally Lipschitz-continuous. In particular, for every time $t \in [0:T]$, path $y^{(1)}_\cdot \in C^{\color{blue} 1}([0:T], \R^{d_Y})$ and $\epsilon > 0$ there exists constant $C \ge 0$ such that for all times $s \in [0:T]$ and paths $y^{(2)}_\cdot \in C^{\color{blue} 1}([0:T], \R^{d_Y})$ with $\vert t -s \vert < \epsilon$, $\| y^{(1)}_{[0:T]} - y^{(2)}_{[0:T]} \|_T < \epsilon$ holds
    \begin{align*}
        \mathcal{W}_2(\mathbb{P}(X_t \in \cdot \vert y^{(1)}_{[0:t]}), \mathbb{P}(X_s \in \cdot \vert y^{(2)}_{[0:s]}))
        \leq C 
            \big(
                \Vert y^{(1)}_{[0:T]} - y^{(2)}_{[0:T]} \Vert_T 
            + 
                \vert t - s \vert
            \big).
    \end{align*}
\end{proposition}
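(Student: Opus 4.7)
The plan is to exploit the finite-dimensional conditional-Gaussian representation \eqref{id:LipsterShiryaevRepresentation} afforded by \citep[Theorem 12.7]{LipShiryaev_II_2001}, reduce the $\mathcal{W}_2$ estimate to separate quantitative stability bounds for the maps $y_{[0:t]}\mapsto \mu(y_{[0:t]})$ and $y_{[0:t]}\mapsto \Sigma(y_{[0:t]})$, and then close the estimate with Gr\"onwall's inequality applied to the pathwise (robust) Kalman--Bucy-type filtering equations satisfied by $\mu$ and $\Sigma$. The first reduction step is the standard closed form (or its upper bound via the Bures metric)
\[
\mathcal{W}_2^{2}\!\bigl(\mathcal{N}(\mu_1,\Sigma_1),\mathcal{N}(\mu_2,\Sigma_2)\bigr)\le \|\mu_1-\mu_2\|_2^{2}+\bigl\|\Sigma_1^{1/2}-\Sigma_2^{1/2}\bigr\|_F^{2},
\]
so it suffices to bound the mean and covariance parts separately.

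First I would recall from \citep[Theorem 12.7]{LipShiryaev_II_2001} that $\Sigma_t$ solves a matrix Riccati equation whose coefficients are built from $a_1$, $b\circ b$, $b\circ B$, $A_1$ and $(B\circ B)^{-1}$, while $\mu_t$ solves a linear-in-$\mu$ equation driven by $a_0,a_1,\Sigma_t$ together with an innovations-type pathwise integral against $y_{\cdot}$. Under \Cref{ass:technical_evolution_conditions}, item \ref{asm:Regulartiy__1} makes all of these coefficients locally Lipschitz in the path (uniformly in time), \ref{asm:Regulartiy__2} ensures well-posedness of the Riccati flow, and \ref{asm:Regularity__3} together with the positive-definite initial condition $\Sigma_0$ yields uniform two-sided bounds $0\prec c\,I\preceq \Sigma_t\preceq C\,I$ along the whole trajectory; in particular the matrix square root restricted to such matrices is globally Lipschitz, so the Bures term above is controlled by $\|\Sigma_1-\Sigma_2\|_F$.

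Next, fix $y^{(1)}\in C^1([0:T],\R^{d_Y})$ and $\epsilon>0$, and for $y^{(2)}$ with $\|y^{(1)}-y^{(2)}\|_T<\epsilon$ subtract the two Riccati equations associated with $y^{(1)}$ and $y^{(2)}$. Using the Lipschitz bounds from \ref{asm:Regulartiy__1}, the uniform bound on $\Sigma_t$ from \ref{asm:Regularity__3}, and the Lipschitz continuity of the initial data $\Sigma_0$ in $Y_0$ from \ref{asm:Regularity__LipInit}, a Gr\"onwall argument gives
\[
\sup_{0\le t\le T}\bigl\|\Sigma_t(y^{(1)}_{[0:t]})-\Sigma_t(y^{(2)}_{[0:t]})\bigr\|_F\le C_\Sigma\,\|y^{(1)}-y^{(2)}\|_T.
\]
Plugging this into the linear equation satisfied by $\mu_t$ and applying Gr\"onwall once more, together with \ref{asm:Regularity__LipInit} for $\mu_0$, yields an analogous bound $\sup_{t}\|\mu_t(y^{(1)})-\mu_t(y^{(2)})\|_2\le C_\mu\,\|y^{(1)}-y^{(2)}\|_T$. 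Combining both bounds with the Bures estimate above gives the Lipschitz part of the claim in the path variable. For the time variable, continuity of $t\mapsto(\mu_t,\Sigma_t)$ is automatic from the ODE/SDE form of the filter equations, and on the compact neighbourhood $\{|t-s|<\epsilon\}$ the right-hand sides are uniformly bounded, so a direct estimate yields $\mathcal{W}_2\le C_t|t-s|$; adding this to the path estimate produces the claimed joint bound.

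\paragraph{Main obstacle.} The delicate step is handling the innovations-type term in the equation for $\mu_t$, which is a priori a stochastic integral against $Y_{\cdot}$ and therefore does not admit pathwise estimates in the uniform norm. The strategy is to pass to the robust/pathwise formulation in the spirit of Clark \cite{clark1978design,clark2005robust} and Davis \cite{DavisFiltering,Davis_1980MultiplicativeFunctionalNonLinearFiltering}, in which this term is rewritten via integration by parts (using the matrix $G_t$ of \ref{asm:Regularity__3}) as a Riemann--Stieltjes integral whose integrand has bounded variation. This rewriting is what converts the Gr\"onwall estimate from an $L^2(\mathbb{P})$ bound into the desired uniform-norm (sup-norm) bound, and is the only place where the $C^1$-regularity assumption on the reference path $y^{(1)}$ is genuinely used.
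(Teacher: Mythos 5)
Your proposal is correct and follows essentially the same route as the paper: reduce $\mathcal{W}_2$ between the conditional Gaussians to stability of $(\mu_t,\Sigma_t)$ (the paper uses a Pinelis-type two-sided estimate requiring the uniform lower Loewner bound on $\Sigma_t$, equivalent to your Bures-plus-Lipschitz-square-root step), then run Gr\"onwall on the Liptser--Shiryaev equations for $\mu$ and $\Sigma$ separately in the path and time variables, handling the innovations integral pathwise by integration by parts against the $C^1$ reference path, and finally combine the time and path estimates by the triangle inequality. The only cosmetic discrepancy is that the matrix $G_t$ of Assumption~\ref{ass:technical_evolution_conditions}\ref{asm:Regularity__3} enters the paper's argument in the lower Loewner bound for $\Sigma_t$ (Lemma~\ref{lem:LoewnerBound}), not in the integration-by-parts step, which uses only the time-Lipschitz continuity of the integrand.
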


Having established the regularity of the optimal filter for the coupled system~\eqref{eq:filtering_dynamics__signal}-\eqref{eq:filtering_dynamics__observation}, we need to verify that maps sharing the same domain (input space), codomain (output space), and regularity as the optimal filter are approximable by the proposed FF model.  This is the content of the next proposition, which acts as its standalone approximation theorem for the FF.

\begin{table}[!ht]
    \centering
    \caption{{Complexity Estimates for transformer-type model $\hat{F}$ in \Cref{prop:Universal_Approximation_Theorem__PathToNd}.}}
    \label{tab:approximationrates__nonFlatVersion}
    \begin{tabular}{@{}lllll@{}}
    \toprule
        $\sigma$ Regularity & Depth & Width & Encode ($\edim$) & Decode ($\ddim$) \\
    \midrule
        $\operatorname{ReLU}$ & 
        $
        \smash{
            \mathcal{O}\big(
                (LV( L))^{-\ddim}\,\varepsilon^{-\ddim}
            \big)
        }
        $
        & 
        $
        \smash{
            \mathcal{O}\big(
                (LV(L))^{-\ddim}\,\varepsilon^{-\ddim}
            \big)
        }
        $
        & $\mathcal{O}(1)$ & $\mathcal{O}\big(L\varepsilon^{-1}\big)$
        \\
        Smooth $\&$ Non-poly.
         & 
         $
         \smash{
        \mathcal{O}\big(
            L^{4\ddim+1}\, \varepsilon^{-4\ddim-1}
        \big)
    	}
         $
         &
         $ \ddim + \edim +2$
         & $\mathcal{O}(1)$ & $\mathcal{O}\big(L\varepsilon^{-1}\big)$
         \\
         Poly. $\&$ Non-affine
         & 
         $
         \smash{
        \mathcal{O}\big(
            L^{8\ddim+6}\, \varepsilon^{-8\ddim-6}
        \big)
    	}
         $
         & 
         $\ddim+\edim+3$
         & $\mathcal{O}(1)$ & $\mathcal{O}\big(L\varepsilon^{-1}\big)$
         \\
         Non-Sm. $\&$ Non-poly.
            &
        Finite
        &
        $\ddim+\edim+2$
         & $\mathcal{O}(1)$ & $\smash{\mathcal{O}\big(L\varepsilon^{-1}\big)}$
         \\
    \bottomrule
    \end{tabular}
    \caption*{Where $V(t)$ is the inverse of $s\mapsto s^4\,\log_3(t+2)$ on $[0,\infty)$ evaluated at $131 t$.}
\end{table}

\begin{proposition}[Approximation Capacity of FFs]
\label{prop:Universal_Approximation_Theorem__PathToNd}
Let $d_X, d_Y \in \mathbb{N}_+$, $L>0$, and $K\subset C([0:T],\mathbb{R}^{d_Y})$ satisfying \Cref{ass:reg_compact}.  
For every $0<\varepsilon< 1/2$ and every $L$-Lipschitz function $f:([0,T]\times K,|\cdot| \times \|\cdot\|_{\infty})\rightarrow (\mathcal{N}_{d_X},\mathcal{W}_2)$ there exists a FF $\smash{\hat{F}}$ 
satisfying the uniform estimate 
\[
    \sup_{t, x\in [0:T]\times K}\,
        \mathcal{W}_p\big(
        f(t, x_\cdot) ,
        \hat{F}(t,x_\cdot) 
        \big)
    < 
        \varepsilon
    ,
\]
for every $1\le p\le 2$.  Furthermore, the depth, width, encoding dimension $(\edim)$ and Decoding dimension $(\ddim)$ are all recorded in \Cref{tab:approximationrates__nonFlatVersion} depending on the activation function $\sigma$.
\end{proposition}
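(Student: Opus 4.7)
The strategy is to exploit the modular encoder--MLP--decoder structure of the Filterformer and to decompose the total error into three pieces: (i) a lossless bi-Lipschitz encoding of the path domain into Euclidean space, (ii) a quantization of the target Wasserstein-Gaussian space by the $\ddim$ anchor measures of the geometric attention mechanism, and (iii) a quantitative Euclidean MLP approximation of the Lipschitz weight selection map. The first ingredient is delivered directly by Proposition~\ref{prop:Lossless_Encoding}: choose parameters $\theta$ and an integer $\edim$ so that $\iota\eqdef\operatorname{attn}_T^\theta$ bi-Lipschitzly embeds $([0:T]\times K,|\cdot|\times\|\cdot\|_T)$ into $(\R^{\edim+1},\|\cdot\|_2)$. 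Defining $\tilde f\eqdef f\circ\iota^{-1}$ on the compact image $\iota([0:T]\times K)$ yields a Lipschitz map into $(\mathcal{N}_{d_X},\mathcal{W}_2)$ whose Lipschitz constant is controlled by $L$ together with the co-Lipschitz constant of $\iota$. This reduces the problem to approximating a Lipschitz Euclidean-to-Wasserstein map.

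\textbf{Quantization and Lipschitz weight selection.} The image $\tilde f(\iota([0:T]\times K))$ is a compact subset of $(\mathcal{N}_{d_X},\mathcal{W}_2)$, so I can choose a finite $(\varepsilon/3)$-net $\{\mathcal{N}(m^{(n)},(A^{(n)})^\top A^{(n)})\}_{n=1}^{\ddim}$ whose cardinality $\ddim$ scales as in \Cref{tab:approximationrates__nonFlatVersion}; these will serve as the anchors of $\operatorname{g-attn}_\ddim^{\vartheta}$. Next, I construct a Lipschitz partition of unity $\{\phi_n\}_{n=1}^\ddim$ on $\iota([0:T]\times K)$ subordinate to the pullback of the Wasserstein balls around each anchor, and set $w(z)\eqdef(\phi_n(z))_{n=1}^\ddim\in\Delta_\ddim$. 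Using the explicit Bures--Wasserstein formula on $\mathcal{N}_{d_X}$ together with convexity of the mean and covariance in the weights, one verifies that $\operatorname{g-attn}_\ddim^{\vartheta}\circ w$ approximates $\tilde f$ uniformly within $\varepsilon/3$: on the support of $\phi_n$ the $n$-th anchor is already within $\varepsilon/3$ of $\tilde f(z)$, and the simplex-weighted mean and covariance move linearly in the weights and so stay close in $\mathcal{W}_2$.

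\textbf{MLP approximation and error propagation.} Extend $w$ to a Lipschitz map $\bar w:\R^{\edim+1}\to\R^\ddim$ via a Kirszbraun/McShane extension, composed if necessary with the $1$-Lipschitz projection $P_{\Delta_\ddim}$. The quantitative universal approximation theorems appropriate to the regularity of $\sigma$---Yarotsky-type depth/width bounds for $\operatorname{ReLU}$, and the Kidger--Lyons-type results and polynomial and non-smooth non-polynomial variants for \Cref{ass:KLCondition}---produce an MLP $\hat f$ with complexity as in \Cref{tab:approximationrates__nonFlatVersion} satisfying $\|\hat f-\bar w\|_\infty<\eta$ on $\iota([0:T]\times K)$. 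A separate Lipschitz estimate for the decoder, namely
\begin{align*}
\mathcal{W}_2\bigl(\operatorname{g-attn}_\ddim^{\vartheta}(u),\operatorname{g-attn}_\ddim^{\vartheta}(v)\bigr)
\le C_\vartheta\,\|P_{\Delta_\ddim}(u)-P_{\Delta_\ddim}(v)\|_2
\le C_\vartheta\,\|u-v\|_2,
\end{align*}
follows from the Bures--Wasserstein formula and the linearity of the decoder in the weights; this controls how the MLP error propagates through $\operatorname{g-attn}_\ddim^{\vartheta}$. Choosing $\eta$ so that the propagated error is at most $\varepsilon/3$, the triangle inequality applied to the composition $\hat F\eqdef\operatorname{g-attn}_\ddim^{\vartheta}\circ\hat f\circ\operatorname{attn}_T^\theta$ yields the claimed uniform bound $\varepsilon/3+\varepsilon/3+\varepsilon/3<\varepsilon$ over $[0:T]\times K$, simultaneously in $1\le p\le 2$ since $\mathcal{W}_p\le\mathcal{W}_2$.

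\textbf{Main obstacle.} The delicate part is the interaction between the quantization of the Wasserstein target and the decoder. One needs to pick the $\mathcal{W}_2$-cover and the partition of unity so that the Lipschitz constant of $w$ depends only on $L$, on the co-Lipschitz constant of $\iota$, and on $\varepsilon^{-1}$, and not on a potentially large covering number of the ambient Gaussian space; otherwise the approximation rate degrades and the MLP complexity blows up. Equivalently, one must exploit that the image $\tilde f(\iota([0:T]\times K))$ lies along a Lipschitz curve/surface inside $\mathcal{N}_{d_X}$ rather than filling it, which is what pins down the $\ddim=\mathcal{O}(L\varepsilon^{-1})$ scaling in \Cref{tab:approximationrates__nonFlatVersion}. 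A secondary technical point is arranging the Bures--Wasserstein bound to remain dimension-free with respect to the anchor parameters; here passing through the linearity of the mean and covariance in the simplex weights is what makes the argument work, and is the reason the decoder is formulated as in \Cref{defn:GeometricAttentionMechanism}.
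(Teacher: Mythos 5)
Your encoder step matches the paper (both rest on Proposition~\ref{prop:Lossless_Encoding}), but from there you diverge: you quantize the image of $f$ directly in $(\mathcal{N}_{d_X},\mathcal{W}_2)$ with an $\varepsilon$-net and a Lipschitz partition of unity, whereas the paper linearizes first, transporting the problem to the parameter space $(\mathbb{R}^{d_X}\times \operatorname{Sym}_{0,d_X},d_{2,F})$ via the map $\varrho:\mathcal{N}(m,\Sigma)\mapsto(m,\Sigma)$, approximating there with the QAS-space machinery of Lemma~\ref{lem:approximation_Sym}, and pulling back through $\varrho^{-1}$. The divergence exposes the main gap in your argument: every place where you convert closeness of means and covariances into closeness in $\mathcal{W}_2$ --- the claim that the convex combination $\mathcal{N}\big(\sum_n w_n m^{(n)},\sum_n w_n\Sigma^{(n)}\big)$ stays within $\varepsilon/3$ of $\tilde f(z)$, and the decoder estimate $\mathcal{W}_2(\operatorname{g-attn}^{\vartheta}_\ddim(u),\operatorname{g-attn}^{\vartheta}_\ddim(v))\le C_\vartheta\|u-v\|_2$ --- implicitly uses that $\Sigma\mapsto\mathcal{N}(0,\Sigma)$ is Lipschitz from $(\operatorname{Sym}_{0,d_X},\|\cdot\|_F)$ into $(\mathcal{P}_2,\mathcal{W}_2)$. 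This fails near singular covariances (the modulus degenerates to H\"older-$1/2$, as $\mathcal{W}_2(\mathcal{N}(0,\sigma^2),\delta_0)=\sigma$ shows), so $C_\vartheta$ is not finite for free. The paper's Lemma~\ref{lem:LipschitzManifold_Nd} makes this precise: the equivalence of $d_{2,F}$ and $\mathcal{W}_2$ holds with constant $\max\{1,\sqrt{d}/(2\sqrt{r})\}$ only on sets of Gaussians whose covariances satisfy $\Sigma\succeq r I_d$, and Step~1 of the paper's proof is devoted to extracting such an $r>0$ from the compactness of $f(K)$ and the continuity of $\lambda_{\min}$. You need the same input: choose the anchors from $f$'s image so that all $(A^{(n)})^{\top}A^{(n)}\succeq rI_{d_X}$ (then so is every convex combination), and only then do your two Lipschitz claims hold. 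Your closing remark about keeping the bound ``dimension-free'' misidentifies the obstruction; it is non-degeneracy, not dimension.

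A second, independent shortfall is that the complexity estimates of \Cref{tab:approximationrates__nonFlatVersion} are asserted but not derived. Your construction pins $\ddim$ to the $\mathcal{W}_2$-covering number of $f([0,T]\times K)$, which for a domain satisfying \Cref{ass:reg_compact}~(iii) generically scales like $(L/\varepsilon)^{\ldim+1}$ rather than $\mathcal{O}(L\varepsilon^{-1})$, and the Lipschitz constant of your partition-of-unity map $w$ (hence the MLP depth/width through the quantitative universal approximation theorems) inherits this blow-up. You flag this yourself as the ``main obstacle'' without resolving it; the paper sidesteps it by routing through \citep[Lemma 5.10]{kratsios2023transfer}, whose decoding-dimension bound comes from the barycentric QAS structure of Lemma~\ref{lem:QAS_Space_Sym0d} rather than from a net of the target. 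As written, your argument establishes the qualitative approximation statement only modulo the non-degeneracy fix, and does not establish the quantitative part of the proposition.
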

\begin{proof}\textbf{of \Cref{thrm:MAIN}}\hfill\\
Since the coupled system~\eqref{eq:filtering_dynamics__signal}-\eqref{eq:filtering_dynamics__observation} satisfies \Cref{ass:technical_evolution_conditions}, \Cref{prop:LocLipReg_OptimFilter} implies that the optimal filter $f_T$ in \eqref{goal:Robust_Filtering} 
is a locally-Lipschitz map from $([0:T],  C([0:T],\mathbb{R}^{d_Y}),\vert \cdot \vert \times \|\cdot\|_T)$ to $(\mathcal{N}_{d_X},\mathcal{W}_2)$.  Since $K\subseteq C([0:T],\mathbb{R}^{d_Y})$ is compact then, $f|_K$ is Lipschitz.  
Since $K\subset C([0:T],\mathbb{R}^{d_Y})$ satisfies \Cref{ass:reg_compact} and $0<\varepsilon<1/2$ then, \Cref{prop:Universal_Approximation_Theorem__PathToNd} implies that there is a FF $\hat{F}:C([0:T],\mathbb{R}^{d_Y})\rightarrow\mathcal{N}_{d_X}$ with representation~\eqref{eq:FF_formalization} and complexity recorded in \Cref{tab:approximationrates__nonFlatVersion} satisfying
\[
	\sup_{t, y\in [0:T]\times K}\,
		\mathcal{W}_p\big(
			f(t,y)
		,
			\hat{F}(t,y)
		\big)
	< 
		\varepsilon
	.
\]
This completes the proof of \Cref{thrm:MAIN}.
\end{proof}

\section{Experimental Ablation}
\label{s:Experiments}
We benchmark Filterformer against the natural filtering baselines in the conditionally Gaussian regime: the Kalman filter~\cite{Kalman60}, the ensemble Kalman filter~\cite{houtekamer2005ensemble}, and the particle filter~\cite{del1997nonlinear}.  All parameters are estimated either using the classical EM algorithm (Kalman-type filters) or maximum likelihood (particle filters) from the training data.
Code is publically available at: \url{https://github.com/AnastasisKratsios/Filterformer_Demo}.

\paragraph{Experimental setup.}
We work in the scalar case $d_X=d_Y=1$ on $[0,T]$ with $T=1$. Fix constants
$
a_0,A_0,B_0,\alpha,\beta\in\mathbb{R},
\qquad
B_0\neq 0,
$
and, for a context length $0\le \varkappa<T$, define
$
I_t^\varkappa(y_{[0:t]})
 \eqdef 
\int_{(t-\varkappa)\vee 0}^t e^{t-s}y_s\,ds.
$
We then consider the path-dependent system
\begin{align}
dX_t
&=
\big(
a_0+\alpha I_t^\varkappa(Y_{[0:t]})X_t
\big)\,dt
+
\beta I_t^\varkappa(Y_{[0:t]})\,dW_t,
\\
dY_t
&=
A_0\,dt+B_0\,dW_t.
\end{align}
In this setting, by~\citep[Theorem 12.7]{LipShiryaev_II_2001}, the (optimal) conditional covariance is available in closed form and the (optimal) conditional mean in semi-closed form, providing exact targets for an interpretable evaluation; namely, at any time $0\le t\le T$, the optimal $\mu_t\eqdef \mu_t(y_{[0:t]})$ and $\Sigma_t\eqdef \Sigma_t(y_{[0:t]})$ map any continuous path $y_{[0:t]}$ to
\begin{equation}
\label{eq:optimal_dynamics}
\begin{aligned}
d\mu_t(y_{[0:t]})
&=
\Big[
a_0+\alpha\,I_t^\varkappa(y_{[0:t]})\,\mu_t(y_{[0:t]})
\Big]dt 
+
\frac{
\beta B_0\,I_t^\varkappa(y_{[0:t]})+A_1\Sigma_t(y_{[0:t]})
}{
B_0^2
}
\\
&\,
\times
\Big[
dy_t-\big(A_0+A_1\mu_t(y_{[0:t]})\big)dt
\Big]
\\
\Sigma_t(y_{[0:t]})
&=
\Sigma_0\exp\Big(2\alpha\int_0^t I_r^\varkappa(y_{[0:r]})\,dr\Big)
\end{aligned}
\end{equation}
where
$
I_t^\varkappa(y_{[0:t]})
 \eqdef 
\int_{(t-\varkappa)\vee 0}^t e^{t-s}\,y_s\,ds
$,
and where the above formula for $\Sigma_t$ holds in the special case $A_1=0$.

\paragraph{Details.}
Default training setup consist of $N=10^3$ i.i.d.\ training samples simulated observation paths, while the test set consists of $10^2$ independently generated paths, all sampled on the uniform grid $t_m=m/M$ with $M=5\cdot 10^2$; all dynamics are sampled from the true dynamics~\eqref{eq:filtering_dynamics__signal}-\eqref{eq:filtering_dynamics__observation} according to the standard Euler-Maruyama scheme with $1/M$-time discretization step-size.
Throughout, the data-generating model uses the parameter values
$
C=0.2,
\,
\eta=0.75,
\,
a_0=0.1,
\,
A_0=0,
\,
B_0=0.5,
\,
\alpha=0.4,
\,
\beta=0.2,
\,
\mu_0=0,
\,
\Sigma_0=1,
\,
x_0^{\mathrm{std}}=1,
\,
y_0=0.
$
At each time $t_m$, the model receives the truncated observation window
$
(t_m,Y_{[(t_m-C)\vee 0:t_m]})
$
and predicts the corresponding filter statistics
$
(\mu_{t_m},\Sigma_{t_m}).
$
Test data are generated independently in the same manner, with the effective evaluation horizon controlled by the parameter $\eta=0.75$. We report test performance only, since our interest is in the model's ability to predict beyond its training data, i.e.\ to generalize.

\begin{figure}[H]
    \centering
    \includegraphics[width=1.0\linewidth]{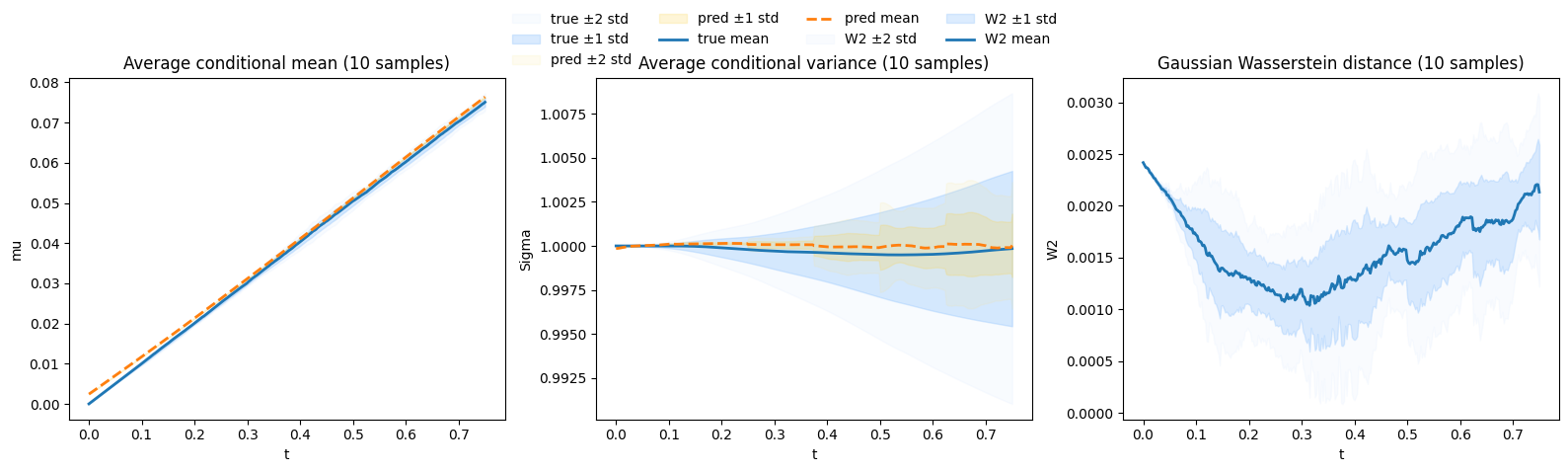}
    \caption{Typical mean and covariance tracking under the default experimental configuration, with two standard deviations.}
    \label{fig:typical_fig}
\end{figure}

\begin{figure}[H]
    \centering
    \includegraphics[width=1.0\linewidth]{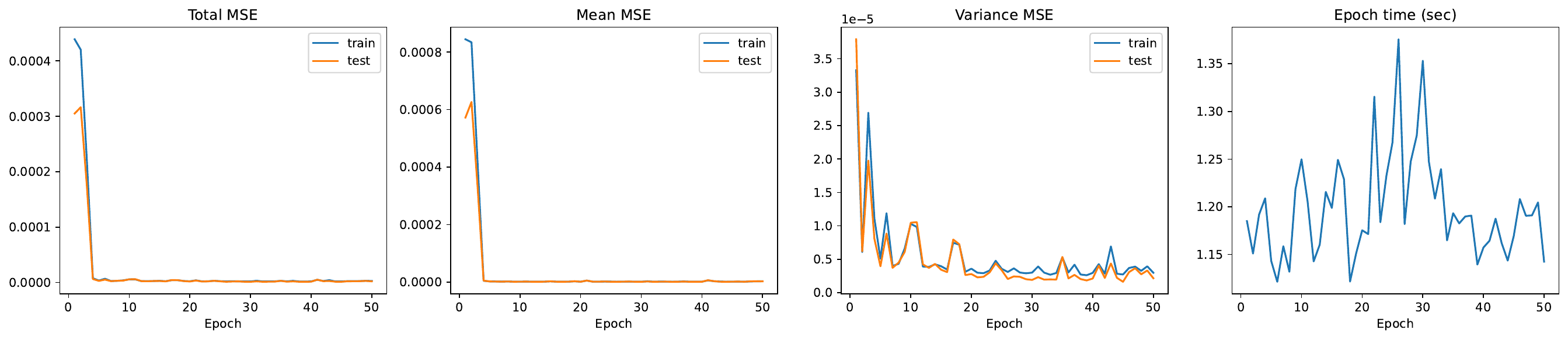}
    \caption{Typical reduction in the training (in-sample) mean, covariance, and $W_2$ errors of the FilterFormer model on Gaussian training data, over the training set. We also report the evolution of the FilterFormer's predictive performance as a function of the training epochs.
    \hfill\\
    We emphasize that the test data is never seen during training; it is used only for evaluation and illustrative purposes.}
    \label{fig:typical_training}
\end{figure}

\subsection{Signal-to-Noise Ratio (SnR) Ablation}
In this experiment, we vary $B_0$ while keeping the remaining model and training parameters fixed, thereby moving from a high signal-to-noise regime (small $|B_0|$) to a low signal-to-noise regime (large $|B_0|$).  This ablation scrutinizes the robustness of Filterformer to increasingly noisy observations, and quantifies how accurately it can recover the conditional mean and covariance as the observation channel becomes less informative.

\begin{table}[ht!]
\centering
\renewcommand{\arraystretch}{1.15}
\setlength{\tabcolsep}{8pt}
\begin{adjustbox}{max width=\textwidth}
\begin{tabular}{>{\raggedright\arraybackslash}p{2.5cm}cccc}
\toprule
Model & $\mathrm{MSE}_{\mu}$ ($\pm 1$ std) & $\mathrm{MSE}_{\Sigma}$ ($\pm 1$ std) & $W_2$ ($\pm 1$ std) & Training Time (s) \\
\midrule
\rowcolor[HTML]{E5EFE5} FilterFormer & 0.000045 $\pm$ 0.000075 & 0.000031 $\pm$ 0.000027 & 0.005060 $\pm$ 0.002705 & 117.94 \\
\rowcolor[HTML]{EAECE5} Particle Filter  & 0.000443 $\pm$ 0.000611 & 0.001011 $\pm$ 0.001363 & 0.023149 $\pm$ 0.012659 & 0.35 \\
\rowcolor[HTML]{EEE8E5} Ensemble KF  & 0.003905 $\pm$ 0.000281 & 0.007869 $\pm$ 0.000609 & 0.067522 $\pm$ 0.001821 & 0.12 \\
\rowcolor[HTML]{F3E5E5} Kalman Filter & 0.343675 $\pm$ 0.373735 & 1.002088 $\pm$ 0.035195 & 1.126412 $\pm$ 0.134505 & 0.13 \\
\bottomrule
\end{tabular}
\end{adjustbox}
\caption{\textbf{Low SnR $B_0=1$:} Test-set summary across all models, ordered by increasing $W_2$.}
\label{tab:sorted_model_summary_w2__largenoise}
\end{table}

For small observation noise ($B_0=0.1$), all methods benefit from the more informative observation channel, but FilterFormer still achieves the best overall reconstruction of the conditional law, with the smallest $W_2$, mean, and covariance errors.

\begin{table}[ht!]
\centering
\renewcommand{\arraystretch}{1.15}
\setlength{\tabcolsep}{8pt}
\begin{adjustbox}{max width=\textwidth}
\begin{tabular}{>{\raggedright\arraybackslash}p{2.5cm}cccc}
\toprule
Model & $\mathrm{MSE}_{\mu}$ ($\pm 1$ std) & $\mathrm{MSE}_{\Sigma}$ ($\pm 1$ std) & $W_2$ ($\pm 1$ std) & Training Time (s) \\
\midrule
\rowcolor[HTML]{E5EFE5} FilterFormer & 0.000001 $\pm$ 0.000002 & 0.000039 $\pm$ 0.000022 & 0.003105 $\pm$ 0.000735 & 114.69 \\
\rowcolor[HTML]{EAECE5} Particle Filter  & 0.000443 $\pm$ 0.000612 & 0.001006 $\pm$ 0.001352 & 0.023140 $\pm$ 0.012656 & 0.32 \\
\rowcolor[HTML]{EEE8E5} Ensemble KF  & 0.003901 $\pm$ 0.000272 & 0.007850 $\pm$ 0.000527 & 0.067494 $\pm$ 0.001695 & 0.12 \\
\rowcolor[HTML]{F3E5E5} Kalman Filter & 0.005134 $\pm$ 0.006407 & 1.000155 $\pm$ 0.003532 & 1.000568 $\pm$ 0.002602 & 0.11 \\
\bottomrule
\end{tabular}
\end{adjustbox}
\caption{\textbf{High SnR $B_0=0.1$:} Test-set summary across all models, ordered by increasing $W_2$.}
\label{tab:sorted_model_summary__smallnoise}
\end{table}

For large observation noise ($B_0=1$), the filtering task becomes harder across the board, yet FilterFormer remains the most accurate method, indicating strong robustness to degradation in the signal-to-noise ratio.

\subsection{Non-Markovianness Ablation}
In this experiment, we vary the context length $\varkappa$ while keeping the remaining model and training parameters fixed, thereby moving from a regime which is close to Markovian when $\varkappa$ is small to a strongly path-dependent regime when $\varkappa$ is large.  This ablation examine whether Filterformer is genuinely exploiting historical information in the observation path, rather than merely learning a short-memory surrogate of the filter.

\begin{table}[ht!]
\centering
\renewcommand{\arraystretch}{1.15}
\setlength{\tabcolsep}{8pt}
\begin{adjustbox}{max width=\textwidth}
\begin{tabular}{>{\raggedright\arraybackslash}p{2.5cm}cccc}
\toprule
Model & $\mathrm{MSE}_{\mu}$ ($\pm 1$ std) & $\mathrm{MSE}_{\Sigma}$ ($\pm 1$ std) & $W_2$ ($\pm 1$ std) & Training Time (s) \\
\midrule
\rowcolor[HTML]{E5EFE5} FilterFormer & 0.000001 $\pm$ 0.000001 & 0.000001 $\pm$ 0.000001 & 0.000884 $\pm$ 0.000273 & 106.44 \\
\rowcolor[HTML]{EAECE5} Particle Filter  & 0.000443 $\pm$ 0.000612 & 0.001005 $\pm$ 0.001352 & 0.023139 $\pm$ 0.012656 & 0.37 \\
\rowcolor[HTML]{EEE8E5} Ensemble KF  & 0.003901 $\pm$ 0.000272 & 0.007849 $\pm$ 0.000526 & 0.067492 $\pm$ 0.001692 & 0.16 \\
\rowcolor[HTML]{F3E5E5} Kalman Filter & 0.086661 $\pm$ 0.096147 & 0.999902 $\pm$ 0.001331 & 1.031959 $\pm$ 0.042739 & 0.13 \\
\bottomrule
\end{tabular}
\end{adjustbox}
\caption{\textbf{Short Memory $\varkappa=0.01$ ($\approx$ Markovian):} Test-set summary across all models, ordered by increasing $W_2$.}
\label{tab:sorted_model_summary_w2__memory_short}
\end{table}

For short memory ($\varkappa=0.01$), where the problem is close to Markovian, FilterFormer attains near-exact recovery of the filter statistics, showing that it does not sacrifice performance even in the weakly path-dependent regime.

\begin{table}[ht!]
\centering
\renewcommand{\arraystretch}{1.15}
\setlength{\tabcolsep}{8pt}
\begin{adjustbox}{max width=\textwidth}
\begin{tabular}{>{\raggedright\arraybackslash}p{2.5cm}cccc}
\toprule
Model & $\mathrm{MSE}_{\mu}$ ($\pm 1$ std) & $\mathrm{MSE}_{\Sigma}$ ($\pm 1$ std) & $W_2$ ($\pm 1$ std) & Training Time (s) \\
\midrule
\rowcolor[HTML]{E5EFE5} FilterFormer & 0.000064 $\pm$ 0.000110 & 0.000043 $\pm$ 0.000032 & 0.005717 $\pm$ 0.002885 & 128.56 \\
\rowcolor[HTML]{EAECE5} Particle Filter  & 0.000443 $\pm$ 0.000612 & 0.001010 $\pm$ 0.001360 & 0.023147 $\pm$ 0.012659 & 0.32 \\
\rowcolor[HTML]{EEE8E5} Ensemble KF  & 0.003904 $\pm$ 0.000279 & 0.007865 $\pm$ 0.000583 & 0.067516 $\pm$ 0.001785 & 0.12 \\
\rowcolor[HTML]{F3E5E5} Kalman Filter & 0.086949 $\pm$ 0.099083 & 1.001964 $\pm$ 0.028719 & 1.032481 $\pm$ 0.042009 & 0.13 \\
\bottomrule
\end{tabular}
\end{adjustbox}
\caption{\textbf{Long Memory $\varkappa=1$:} Test-set summary across all models, ordered by increasing $W_2$.}
\label{tab:sorted_model_summary_w2__memory_long}
\end{table}

For long memory ($\varkappa=1$), where the latent dynamics depend on a much longer observation history, FilterFormer's error increases modestly but it still clearly outperforms all baselines, consistent with its ability to exploit genuinely non-Markovian structure.

\section{Discussion and Future Work}
\label{s:Discussion}
Our main result, namely Theorem~\ref{thrm:MAIN}, showed that there are Filterformers which can solve the approximately stochastic filtering problem in continuous time for the coupled system~\eqref{eq:filtering_dynamics__signal}-\eqref{eq:filtering_dynamics__observation}, to arbitrary accuracy.  This complements the experimental work of \cite{KrishnanShalitSontag15,KrishnanShalitSontag17,Balint_FinalPaper_2023,bach2024filtering}, which shows that deep learning models seemingly can approximately filter and that such models can be trained to offer state-of-the-art empirical performance.  

\section{Proofs}
\label{s:Proof}
This section contains the derivations of our paper's results.

\subsection{Stability of the Optimal Filter - Proof of \texorpdfstring{\Cref{prop:LocLipReg_OptimFilter}}{Lipschitz Continuity}}
We now prove the local Lipschitz stability of the optimal filter~\eqref{goal:Robust_Filtering}, for the coupled system~\eqref{eq:filtering_dynamics__signal}-\eqref{eq:filtering_dynamics__observation}, as a function of its observed/input path.  The proof builds on a series of Lemmata, which demonstrate the local Lipschitz continuity of the parameters defining the optimal filter.  

We first observe that, under the dynamics~\eqref{eq:filtering_dynamics__signal}-\eqref{eq:filtering_dynamics__observation}, and \Cref{ass:technical_evolution_conditions} grants us access to \citep[Theorem 12.7]{LipShiryaev_II_2001}, which implies that $\mu_t\eqdef \mu(y_{[0:t]})$ and $\Sigma_t\eqdef \Sigma(y_{[0:t]})$ are
\allowdisplaybreaks
\begin{align}
    d\mu_t 
        &=[a_0(t,y_{[0:t]}) + a_1(t,y_{[0:t]}) \mu_t]dt +[(b\circ B)(t,y_{[0:t]})+\Sigma_t A_1^{\top}(t,y_{[0:t]})]\label{id:ClosedFormMu}
        \\
        & \pushright{
             \times (B\circ B)^{-1}(t,y_{[0:t]}) [dy_t - (A_0(t,y_{[0:t]}) + A_1(t,y_{[0:t]}))\mu_t dt]  \notag
        }
        \\
    \partial_t\Sigma_t
        & = a_1(t,y_{[0:t]}) \Sigma_t + \Sigma_t a_1^{\top}(t,y_{[0:t]}) + (b\circ b)(t,y_{[0:t]})
        \label{id:ClosedFormSigma} \\
        & \pushright{
              - [(b\circ B)(t,y_{[0:t]}) + \Sigma_t A_1^{\top}(t,y_{[0:t]}) ]
              (B\circ B)^{-1} (t,y_{[0:t]})} \notag \\
        & \pushright{\times [(b\circ B)(t,y_{[0:t]}) + \Sigma_t A_1^{\top}(t,y_{[0:t]})]^{\top} }
         \notag
\end{align}
with initial conditions $\mu_0\eqdef\mathbb{E}[X_0|Y_0]$ and $\Sigma_0\eqdef \mathbb{E}[(X_0-\mu_0)(X_0-\mu_0)^{\top}|Y_0]$; where
\begin{align*}
    B \circ B \eqdef B_1B_1^\top + B_2B_2^\top, \quad 
    b \circ B \eqdef b_1B_1^\top + b_2B_2^\top, \quad 
    b \circ b \eqdef b_1b_1^\top + b_2b_2^\top.
\end{align*}
Furthermore, \citep[Theorem 12.7]{LipShiryaev_II_2001} implies that $\Sigma_t$ is positive definite for all $t \in [0:T]$ since we have assume that $\Sigma_0$ in \Cref{ass:technical_evolution_conditions}.
To clarify the argument's direction, we first record the proof of \Cref{prop:LocLipReg_OptimFilter}.  We subsequently derive the lemmata on which it relies.

\begin{proof}\textbf{of \Cref{prop:LocLipReg_OptimFilter}}\hfill\\
    Fix $t \in [0:T]$, a path $y^{(1)}_\cdot \in C^{\color{blue} 1}([0:T], \R^{d_Y})$ and $\epsilon > 0$.
    By \Cref{prop:LocLipReg_OptimFilter_Time} we can fix a $K_1 > 0$ such that for every time $0 < s < T$ holds
    \begin{align*}
        \mathcal{W}_2(\mathbb{P}(X_t \in \cdot \vert y^{(1)}_{[0:t]}), \mathbb{P}(X_s \in \cdot \vert y^{(1)}_{[0:s]})) \leq K_1 \vert t-s \vert_t.
    \end{align*}
    By \Cref{prop:LocLipReg_OptimFilter_Path} we can fix a $K_2 > 0$ such that for every path $y^{(2)}_\cdot \in C^{\color{blue} 1}([0:T], \R^{d_Y})$ with $\| y^{(1)}_{[0:T]} - y^{(2)}_{[0:T]} \|_T < \epsilon$ holds
    \begin{align*}
            \mathcal{W}_2(
                \mathbb{P}(X_u \in \cdot \vert y^{(1)}_{[0:u]})
            ,
                \mathbb{P}(X_u \in \cdot \vert y^{(2)}_{[0:u]})
            )
        \leq
            K_2 \Vert y^{(1)}_{[0:T]} - y^{(2)}_{[0:T]} \Vert_T.
    \end{align*}
    
    We see by the triangular inequality of the $\mathcal{W}_2$-distance (that holds as we handle normal distributions, see \eqref{id:LipsterShiryaevRepresentation}),
    \begin{align*}
        &
            \mathcal{W}_2(\mathbb{P}(X_t \in \cdot \vert y^{(1)}_{[0:t]}), \mathbb{P}(X_s \in \cdot \vert y^{(2)}_{[0:s]})) \\
        & \quad \leq 
                \mathcal{W}_2(
                    \mathbb{P}(X_t \in \cdot \vert y^{(1)}_{[0:t]})
                ,
                    \mathbb{P}(X_s \in \cdot \vert y^{(1)}_{[0:s]})
                )
            + 
                \mathcal{W}_2(
                    \mathbb{P}(X_s \in \cdot \vert y^{(1)}_{[0:s]})
                ,
                    \mathbb{P}(X_s \in \cdot \vert y^{(2)}_{[0:s]})
                )
            \\
        & \quad \leq \max\{ K_1, K_2 \} 
            \big(
                \Vert y^{(1)}_{[0:T]} - y^{(2)}_{[0:T]} \Vert_T 
            + 
                \vert t - s \vert
            \big)
    \end{align*}
    what concludes the proof.
\end{proof}

\subsubsection{Lipschitz-Continuity in the Path-Component of the Optimal Filter}

\begin{lemma}[Local Lipschitz - the Path-Component of the Optimal Filter]
\label{prop:LocLipReg_OptimFilter_Path}
    \sloppy Under \Cref{ass:technical_evolution_conditions}, $f$ from \eqref{goal:Robust_Filtering} is locally Lipschitz-continuous uniformly in time. In particular, for every path $y^{(1)}_\cdot \in C^{\color{blue} 1}([0:T], \R^{d_Y})$ and $\epsilon > 0$ there exists constant $C \ge 0$ such that for all paths $y^{(2)}_\cdot \in C^{\color{blue} 1}([0:T], \R^{d_Y})$ with $\| y^{(1)}_{[0:t]} - y^{(2)}_{[0:t]} \|_t < \epsilon$ holds
    \begin{align*}
        \mathcal{W}_2(\mathbb{P}(X_t \in \cdot \vert y^{(1)}_{[0:t]}), \mathbb{P}(X_t \in \cdot \vert y^{(2)}_{[0:t]})) \leq C \Vert y^{(1)}_{[0:T]} - y^{(2)}_{[0:T]} \Vert_T, \quad \forall t \in [0:T].
    \end{align*}
\end{lemma}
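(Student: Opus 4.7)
The plan is to leverage the closed-form representation~\eqref{id:LipsterShiryaevRepresentation} of the conditional law as a Gaussian $\mathcal{N}(\mu_t(y),\Sigma_t(y))$, together with the mean/covariance evolution~\eqref{id:ClosedFormMu}--\eqref{id:ClosedFormSigma}. The starting point is the standard upper bound for the $2$-Wasserstein distance between Gaussian measures,
\[
    \mathcal{W}_2^2\big(\mathcal{N}(\mu_1,\Sigma_1),\mathcal{N}(\mu_2,\Sigma_2)\big)
    \le
    \|\mu_1-\mu_2\|_2^2 + \|\Sigma_1^{1/2}-\Sigma_2^{1/2}\|_F^2,
\]
which reduces the problem to uniform-in-$t$ Lipschitz bounds on $y\mapsto\mu_t(y)$ and $y\mapsto\Sigma_t(y)^{1/2}$ with respect to the uniform norm $\|\cdot\|_T$.

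\textbf{Step 1 (covariance).} The matrix Riccati ODE~\eqref{id:ClosedFormSigma} has coefficients that are non-anticipative functionals of $y$ which are locally Lipschitz in the path uniformly in time by \Cref{ass:technical_evolution_conditions}\ref{asm:Regulartiy__1}. \Cref{ass:technical_evolution_conditions}\ref{asm:Regulartiy__2} keeps the flow inside the positive semi-definite cone, while \ref{asm:Regularity__3} controls the fundamental matrix $G_t$ and hence bounds $\Sigma_t$ away from the boundary of the positive-definite cone, uniformly in $t$ and in $y$ within the $\epsilon$-ball around $y^{(1)}$. Writing the integral equation for the difference $\Sigma_t(y^{(1)})-\Sigma_t(y^{(2)})$, substituting the local Lipschitz estimates of the coefficients, and applying Gr\"onwall yields $\sup_{t\in[0:T]}\|\Sigma_t(y^{(1)})-\Sigma_t(y^{(2)})\|_F \le C_1\|y^{(1)}-y^{(2)}\|_T$. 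On the compact set of uniformly bounded and uniformly positive-definite matrices thus produced, the matrix square root is Lipschitz, so the bound transfers to $\Sigma_t^{1/2}$.

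\textbf{Step 2 (mean).} Equation~\eqref{id:ClosedFormMu} has the form $d\mu_t = F_s(y,\mu_s,\Sigma_s)\,ds + H_s(y,\Sigma_s)\,dy_s$ with $F$ affine in $\mu$ and both $F,H$ Lipschitz in the path uniformly in $s$. Since $y\in C^1$, the second term is a Riemann--Stieltjes integral; to avoid picking up a $C^1$-norm of $y$, I would integrate by parts,
\[
    \int_0^t H_s(y,\Sigma_s)\,dy_s
    =
    H_t(y,\Sigma_t)\,y_t - H_0(y,\Sigma_0)\,y_0 - \int_0^t y_s\, dH_s(y,\Sigma_s),
\]
trading dependence on $y'$ for dependence on the bounded variation of $s\mapsto H_s(y,\Sigma_s)$, which is controlled by the global Lipschitz-in-time assumption in \Cref{ass:technical_evolution_conditions}\ref{asm:Regulartiy__1} and the already-established Lipschitz dependence of $\Sigma$ on $y$ from Step~1. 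Combining with the affineness of $F$ in $\mu$ yields an integral inequality for $\mu_t(y^{(1)})-\mu_t(y^{(2)})$, and a second Gr\"onwall gives $\sup_{t\in[0:T]}\|\mu_t(y^{(1)})-\mu_t(y^{(2)})\|_2 \le C_2\|y^{(1)}-y^{(2)}\|_T$. Inserting both bounds into the Gaussian $\mathcal{W}_2$ inequality finishes the proof with constant $C=\sqrt{C_1^2+C_2^2}$ depending on $y^{(1)}$ and $\epsilon$ only.

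I expect the integration-by-parts / pathwise Stieltjes step to be the main obstacle: one has to verify that the integrand $H_s(y,\Sigma_s)$ has sufficient bounded variation in $s$, uniformly over the $\epsilon$-ball of $y^{(1)}$, so that $dH_s(y,\Sigma_s)$ is a well-defined signed measure whose total variation is itself Lipschitz in $y$. This requires carefully combining \Cref{ass:technical_evolution_conditions}\ref{asm:Regulartiy__1} in the time component with the Step~1 regularity of $\Sigma$ (which appears inside $H$), and unrolling the estimates so that constants remain $t$-uniform and depend only on $\epsilon$ and $y^{(1)}$. Everything else is a fairly direct Gr\"onwall bookkeeping exercise.
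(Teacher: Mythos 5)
Your proposal follows essentially the same route as the paper: reduce the Gaussian $\mathcal{W}_2$ distance to Euclidean distances between the filter's mean and covariance, then establish path-Lipschitz bounds on $\mu_t$ and $\Sigma_t$ via Gr\"onwall applied to the integral forms of \eqref{id:ClosedFormMu}--\eqref{id:ClosedFormSigma}, handling the $dy$ term by integration by parts exactly as in \Cref{lem:LipschitzMu}. The only cosmetic difference is that you compare covariances through their square roots (synchronous-coupling bound), whereas the paper compares the covariances directly via the bi-Lipschitz parameterization of \Cref{lem:LipschitzManifold_Nd}; both require the same uniform eigenvalue bound on $\Sigma_t$ supplied by \Cref{lem:LoewnerBound}.
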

\begin{proof}\textbf{of Lemma~\ref{prop:LocLipReg_OptimFilter_Path}}
    By \eqref{id:LipsterShiryaevRepresentation}, \Cref{lem:LoewnerBound}, and \Cref{lem:LipschitzManifold_Nd} there exists a non-negative constant $\bar{K} < \infty$ satisfying the following for every pair of paths $y^{(1)}_{\cdot},y^{(2)}_{\cdot} \in C([0:T],\|\cdot\|_2)$ and all time $t \in [0:T]$
    \begin{equation*}
        \mathcal{W}_2(\mathbb{P}(X_t \in \cdot \vert y^{(1)}_{[0:t]}), \mathbb{P}(X_t \in \cdot \vert y^{(2)}_{[0:t]}))
        \leq 
            \bar{K}
                \,
            \sqrt{\| \mu(y_{[0:t]}^{(1)}) - \mu(y_{[0:t]}^{(2)}) \|_2^2 + \|\Sigma(y_{[0:t]}^{(1)})-\Sigma(y_{[0:t]}^{(2)})\|_2^2}.
    \end{equation*}
    We see from \Cref{lem:LipschitzMu} and \Cref{lem:LipschitzSigma}, there exists a non-negative constant $K < \infty$, satisfying
    \begin{align*}
        \begin{split}
                \|\Sigma(y_{[0:t]}^{(1)})-\Sigma(y_{[0:t]}^{(2)})\|_2^2
            & \leq
                K^2 \Vert y^{(1)}_{[0:t]} - y^{(2)}_{[0:t]} \Vert_t^2, 
            \\ 
                \|\mu(y_{[0:t]}^{(1)})-\mu(y_{[0:t]}^{(2)})\|_2^2
            & \leq
                K^2 \Vert y^{(1)}_{[0:t]} - y^{(2)}_{[0:t]} \Vert_t^2, 
        \end{split}
        \qquad \forall t \in [0:T]
    \end{align*}
    Therefore, for each $y_{\cdot}^{(1)},y^{(2)}_{\cdot}\in C([0:T],\mathbb{R}^{d_Y})$ we have  
    \begin{align*}
            \mathcal{W}_2(\mathbb{P}(X_t \in \cdot \vert y^{(1)}_{[0:t]}), \mathbb{P}(X_t \in \cdot \vert y^{(2)}_{[0:t]}))
        \leq
            \sqrt{2} K \bar{K}\Vert y^{(1)}_{[0:t]} - y^{(2)}_{[0:t]} \Vert_t,
        \qquad \forall t \in [0:T]
    \end{align*}
    and the statement follows. 
\end{proof}

\begin{lemma}[Local Lipschitz-continuity of $\mu$ in the path]
\label{lem:LipschitzMu}
    Under \Cref{ass:technical_evolution_conditions} \ref{asm:Regulartiy__1} $\mu_\cdot$ from \eqref{id:ClosedFormMu} is locally Lipschitz-continuous uniformly in time. This means that for every path $y^{(1)}_\cdot \in C^{\color{blue} 1}([0:T], \R^{d_Y})$ there exists constant $K \in \R$, $\epsilon > 0$ such that for all paths $y^{(2)}_\cdot \in C^{\color{blue} 1}([0:T], \R^{d_Y})$ with $\| y^{(1)}_{[0:t]} - y^{(2)}_{[0:t]} \|_t < \epsilon$ holds
    \begin{align*}  
            \big\| \mu^{(1)}_t - \mu^{(2)}_t \big\|_2 
        \leq
            K \big\| y^{(1)}_{[0:t]} - y^{(2)}_{[0:t]} \big\|_t,
        \qquad \forall t \in [0:T].
    \end{align*} 
\end{lemma}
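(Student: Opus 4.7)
The plan is to exploit that for $C^1$ input paths the It\^o differential in \eqref{id:ClosedFormMu} collapses to the Riemann--Stieltjes differential $dy^{(i)}_s = \dot y^{(i)}_s\,ds$, so that $\mu^{(1)}_\cdot$ and $\mu^{(2)}_\cdot$ each solve a deterministic, pathwise integral equation. Writing $e_t := \|\mu^{(1)}_t - \mu^{(2)}_t\|_2$ and subtracting the two equations, I aim to reach an estimate of Gronwall type
\begin{equation*}
    e_t
    \leq
        C_1\,\|y^{(1)}_{[0:T]} - y^{(2)}_{[0:T]}\|_T
        +
        C_2 \int_0^t e_s\,ds,
    \qquad t \in [0:T],
\end{equation*}
whose constants $C_1, C_2$ depend only on $y^{(1)}$, the neighborhood radius $\epsilon$, and the constants in \Cref{ass:technical_evolution_conditions}. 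An application of Gronwall's lemma then delivers the stated Lipschitz bound with $K := C_1 e^{C_2 T}$, uniform in $t$.

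The crux is controlling the $dy$-contribution
\begin{equation*}
    I_t
    :=
        \int_0^t H(s, y^{(1)})\,\dot y^{(1)}_s\,ds
        -
        \int_0^t H(s, y^{(2)})\,\dot y^{(2)}_s\,ds,
\end{equation*}
where $H(s, y) := [(b \circ B)(s, y_{[0:s]}) + \Sigma_s(y)\, A_1(s, y_{[0:s]})^\top](B \circ B)^{-1}(s, y_{[0:s]})$. I would split $I_t$ as
\begin{equation*}
    I_t
    =
        \int_0^t [H(s, y^{(1)}) - H(s, y^{(2)})]\,\dot y^{(1)}_s\,ds
        +
        \int_0^t H(s, y^{(2)})\,d(y^{(1)} - y^{(2)})_s.
\end{equation*}
The first piece is bounded by a constant multiple of $\|y^{(1)} - y^{(2)}\|_t$, using local Lipschitz continuity of $b \circ B$, $A_1$, and $(B \circ B)^{-1}$ in the path (\Cref{ass:technical_evolution_conditions}\ref{asm:Regulartiy__1}), the local Lipschitz bound for $\Sigma$ furnished by \Cref{lem:LipschitzSigma}, and the fact that $\|\dot y^{(1)}\|_\infty$ is a finite constant depending only on the fixed reference path. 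The second piece is the main obstacle, because a naive estimate requires $\|\dot y^{(1)} - \dot y^{(2)}\|_\infty$, which is not controlled by the sup-norm distance. I would resolve this by integrating by parts,
\begin{equation*}
    \int_0^t H(s, y^{(2)})\,d(y^{(1)} - y^{(2)})_s
    =
        H(t, y^{(2)})(y^{(1)}_t - y^{(2)}_t)
        - H(0, y^{(2)}_0)(y^{(1)}_0 - y^{(2)}_0)
        - \int_0^t (y^{(1)}_s - y^{(2)}_s)\,dH(s, y^{(2)}),
\end{equation*}
which is well-defined because $s \mapsto H(s, y^{(2)}_{[0:s]})$ is of bounded variation on $[0:T]$: this follows from the global time-Lipschitz regularity of $b \circ B$, $A_1$, and $(B \circ B)^{-1}$ in \Cref{ass:technical_evolution_conditions}\ref{asm:Regulartiy__1}, the $C^1$ regularity of $y^{(2)}$, and the ODE \eqref{id:ClosedFormSigma} which renders $\Sigma^{(2)}_\cdot$ a $C^1$ curve. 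All three resulting terms are bounded by $\|y^{(1)} - y^{(2)}\|_t$ times a constant depending only on $y^{(1)}$ and $\epsilon$.

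The remaining terms are routine. The difference of the pure drift integrands $a_0 + a_1\mu - H(A_0 + A_1\mu)$ splits into (i) a piece bounded by local Lipschitz estimates of $a_0, a_1, A_0, A_1, H$ in the path times $\|y^{(1)} - y^{(2)}\|_t$, for which I first need a uniform-in-$s$ bound on $\mu^{(i)}_s$ obtained via a preliminary application of Gronwall to each $\mu$-ODE (with the $dy$ term handled by the same integration-by-parts trick, giving a bound depending only on $\|y^{(i)}\|_T$ and the total variation of $H(\cdot, y^{(i)})$), and (ii) a piece of the form $\int_0^t (\text{bounded})\,e_s\,ds$ arising from the explicit $\mu$-dependence of the drift, with coefficient bounded by $L$ from \Cref{ass:technical_evolution_conditions} plus bounded contributions from $H$. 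The initial condition contributes $\|\mu^{(1)}_0 - \mu^{(2)}_0\|_2 \leq C\,\|y^{(1)}_0 - y^{(2)}_0\|_2 \leq C\,\|y^{(1)} - y^{(2)}\|_t$ by \Cref{ass:technical_evolution_conditions}\ref{asm:Regularity__LipInit}. Assembling the bounds yields the advertised Gronwall-type inequality, and the proof concludes with the standard Gronwall estimate.
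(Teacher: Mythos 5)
Your proposal is correct and follows essentially the same route as the paper's proof: integral representation of $\mu_t$, a symmetric splitting of the $dy$-contribution, integration by parts to avoid needing control of $\dot y^{(1)}-\dot y^{(2)}$, the Lipschitz lemmata for the coefficient maps and for $\Sigma$, an a priori bound on $\mu$, and a concluding Gr\"onwall argument. The only notable difference is cosmetic: you bound the term $\int_0^t[H(s,y^{(1)})-H(s,y^{(2)})]\,\dot y^{(1)}_s\,ds$ directly via $\|\dot y^{(1)}\|_\infty$, whereas the paper bounds the analogous term through an isometry-type identity followed by Riemann--Stieltjes partition estimates; your treatment is arguably the cleaner of the two for deterministic $C^1$ integrators.
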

\begin{proof}\textbf{of Lemma~\ref{lem:LipschitzMu}}
Define 
\allowdisplaybreaks
\begin{align} 
  \Phi_1(s, y_{[0:s]}) & \eqdef a_1(s, y_{[0:s]}) 
   - [(b\circ B)(s, y_{[0:s]}) + \Sigma_t A_1^{{\top}}(s, y_{[0:s]}) ] \label{id:Phi1} \\ 
 & \hspace{3cm} \times  (B\circ B)^{-1}(s, y_{[0:s]}) [A_0(s, y_{[0:s]}) + A_1(s, y_{[0:s]})], \notag \\
  \Phi_2(s, y_{[0:s]}) & \eqdef a_0(s, y_{[0:s]}) + [(b\circ B)(s, y_{[0:s]}) + \Sigma_t A_1^{{\top}}(s, y_{[0:s]}) ] (B\circ B)^{-1}(s, y_{[0:s]}) \label{id:Phi2}. 
\end{align}
Note that conditions of \Cref{lem:LipschitzPhi1} and \Cref{lem:LipschitzPhi2} are satisfied, therefore, $\Phi_1$ and $\Phi_2$ are globally Lipschitz w.r.t. time and locally Lipschitz w.r.t. the path. 

By \eqref{id:ClosedFormMu} and for $t \in [0,T]$, $\mu_t$ can be written as 
\begin{align} 
 \mu_t = & \mu_0 + \int_0^t a_0(s, y_{[0:s]}) ds  + \int_0^t \Phi_2(s, y_{[0:s]}) d y_{[0:s]} 
 + \int_0^t \Phi_1(s, y_{[0:s]}) \mu_s ds .   \notag 
\end{align}
We further have that  
\begin{align*}
    \mu^{(1)}_t - \mu^{(2)}_t 
& =   
        \mu_0^{(1)} - \mu_0^{(2)} 
    + 
        \int_0^t a_0(s, y_{[0:s]}^{(1)}) -  a_0(s, y_{[0:s]}^{(2)} )  ds \\
    & \quad +   
        \int_0^t \Phi_2(s, y^{(1)}_{[0: s]}) d y^{(1)}_{[0:s]}
    - 
        \int_0^t \Phi_2(s, y^{(2)}_{[0 : s]})  d  y^{(2)}_{[0 : s]} \\
    & \quad +
        \int_0^t \Phi_1(s, y^{(1)}_{[0 : s]}) -   \Phi_1(s, y^{(2)}_{[0 : s]}) \mu^{(2)}_s ds
    +
        \int_0^t  \Phi_1(s, y^{(1)}_{[0 : s]})\, (\mu^{(1)}_s - \mu^{(2)}_s ) ds .
\end{align*} 
By the triangle inequality, Jensen's inequality, and Cauchy-Schwarz inequality, we have 
\begin{align} 
 \big\| \mu^{(1)}_t - \mu^{(2)}_t \big\|_2 \leq & \gamma(t) + \int_0^t \big\| \Phi_1(s, y^{(1)}_{[0 : s]}) \big\|_2 \cdot 
 \big\| \mu^{(1)}_s - \mu^{(2)}_s \big\|_2 ds ,\notag 
\end{align} 
where 
\begin{align*} 
\gamma(t) \eqdef & \big\| \mu_0^{(1)} - \mu_0^{(2)} \big\|_2 
  + \underbrace{\Big\| \int_0^t a_0(s, y_{[0:s]}^{(1)}) -  a_0(s, y_{[0:s]}^{(2)} )  ds \Big\|_2}_{\term{t:gammaA}} \\  
& + \underbrace{ \Big\| \int_0^t \Phi_2(s, y^{(1)}_{[0: s]}) d y^{(1)}_{[0:s]}
  - \int_0^t \Phi_2(s, y^{(2)}_{[0 : s]})  d  y^{(2)}_{[0 : s]} \Big\|_2 }_{\term{t:gammaB}} \\ 
  & + \underbrace{\Big\| \int_0^t \Phi_1(s, y^{(1)}_{[0 : s]}) -   \Phi_1(s, y^{(2)}_{[0 : s]}) \mu^{(2)}_s ds \Big\|_2}_{\term{t:gammaC}} .
\end{align*} 
For an application of Gr\"onwall's inequality, it is sufficient to show $\Vert \gamma(t) \Vert_2 < \infty$. We have due to $a_0$ being Lipschitz with constant $K_1 < \infty$,
\begin{align*}
    \eqref{t:gammaA} 
    \leq \int_0^t \Vert a_0(s, y_{[0:s]}^{(1)}) - a_0(s, y_{[0:s]}^{(2)}) \Vert_2 ds  
    \leq \int_0^t K_1 \Vert y_{[0:s]}^{(1)} - y_{[0:s]}^{(2)} \Vert_s ds  
    \leq t K_1 \Vert y^{(1)}_{[0:t]} - y^{(2)}_{[0:t]} \Vert_t.
\end{align*}
Further, we see that 
\begin{align} 
 \eqref{t:gammaB}
 \leq & 
  \underbrace{\Big\| \int_0^t \Phi_2(s, y^{(2)}_{[0:s]}) - \Phi_2(s, y^{(1)}_{[0:s]}) d y^{(2)}_{[0:s]} \Big\|_2}_{\term{t:gammaE}}
  + 
  \underbrace{ \Big\| \int_0^t \Phi_2(s, y^{(1)}_{[0:s]}) 
   d ( y^{(2)}_{[0:s]} - y^{(1)}_{[0:s]} ) \Big\|_2 }_{\term{t:gammaD}} . \notag 
\end{align} 
Using It\^{o}'s isometry, we have 
\allowdisplaybreaks 
\begin{align*} 
    \eqref{t:gammaE}^2 
    & =  \int_0^t 
        \big\| \Phi_2(s, y^{(2)}_{[0:s]}) - \Phi_2(s, y^{(1)}_{[0:s]})\big\|_2^2 
    d \, (B\circ B)(s, y^{(2)}_{[0:s]}) \\
    & \leq 
        K_2^2 \Vert y^{(2)}_{[0:t]} - y^{(1)}_{[0:t]} \Vert^2_t
        \int_0^t d \, (B\circ B)(s, y^{(2)}_{[0:s]})
\end{align*} 
where the last inequality holds due to (path-)Lipschitz continuity of $\Phi_2$ uniformly in time with constant $K_2 < \infty$, given by \Cref{lem:LipschitzPhi2}. Next, let $\lbrace [s_{i-1}:s_{i}] \vert i \in \myset{1}{I}\rbrace$, $I \in \N$ be a fine enough partition of $[0:t]$, we see due to the (time-) Lipschitz continuity of $B \circ B$ together with the definition of the Riemann-Stieltjes integral that 
\begin{align*}
    \eqref{t:gammaE}^2 
    & \leq \epsilon_I + K_2^2 \Vert y^{(1)}_{[0:t]} - y^{(2)}_{[0:t]} \Vert^2_t \cdot \sum_{i = 1}^I \Vert (B\circ B)(y^{(2)}_{[0:s_i]}) - (B\circ B)(y^{(2)}_{[0:s_{i-1}]}) \Vert_2  \\
    & \leq \epsilon_I + K_2^2 T \Vert y^{(1)}_{[0:t]} - y^{(2)}_{[0:t]} \Vert_t^2.
\end{align*}
As $\epsilon_I \to 0$ for $I \to \infty$, we obtain $\eqref{t:gammaE} \leq K_2\sqrt{T} \Vert y^{(1)}_{[0:t]} - y^{(2)}_{[0:t]} \Vert_t$.

%

Integrating by parts, we find that
\begin{align} 
    \eqref{t:gammaD}  
    & = \Big\| \Phi_2(t, y^{(1)}_{[0:t]}) (y^{(2)}_{[0:t]} - y^{(1)}_{[0:t]}) 
        - \Phi_2(0, y^{(1)}_0) (y^{(2)}_0 - y^{(1)}_0)    
        \notag 
    {- \int_0^t ( y^{(2)}_{[0:s]} - y^{(1)}_{[0:s]} ) \, d \Phi_2(s, y^{(1)}_{[0:s]})  \Big\|_2} \notag \\ 
    & \leq \Big\| \Phi_2(t, y^{(1)}_{[0:t]}) (y^{(2)}_{[0:t]} - y^{(1)}_{[0:t]}) 
        - \Phi_2(0, y^{(1)}_0) (y^{(2)}_0 - y^{(1)}_0) \Big\|_2 
        \notag 
    {+ \Big\|  \int_0^t ( y^{(2)}_{[0:s]} - y^{(1)}_{[0:s]} ) \, d \Phi_2(s, y^{(1)}_{[0:s]})  \Big\|_2} 
    \notag \\ 
    & \leq 2 \max_{0 \leq s \leq T} \Big\| \Phi_2(s, y^{(1)}_{[0:s]}) \Big\|_2 
        \cdot \Big\|  y^{(2)}_{[0:s]} - y^{(1)}_{[0:s]} \Big\|_t 
        + \underbrace{\Big\|  \int_0^t ( y^{(2)}_{[0:s]} - y^{(1)}_{[0:s]} ) \, d \Phi_2(s, y^{(1)}_{[0:s]})  \Big\|_2}_{\term{t:gammaF}}. \notag 
\end{align} 
\Cref{lem:LipschitzPhi2} yields (time-)Lipschitz continuity of $\Phi_2$, implying $\max_{s \in [0:T]} \| \Phi_2(s, y^{(1)}_{[0:s]}) \|_2 =: K_3 < \infty$, together with (path-) Lipschitz continuity of $\Phi_2$ uniformly in time with constant $K_4 < \infty$ yields
\begin{align*}
    \eqref{t:gammaF} 
    & \leq \epsilon_I + \sum_{i = 1}^I \big\| y^{(2)}_{[0:s_i]} - y^{(1)}_{[0:s_i]} \big\|_2  \big\Vert \Phi_2(s_i, y^{(1)}_{[0:s_i]}) - \Phi_2(s_{i-1}, y^{(1)}_{[0:s_{i-1}]}) \big\Vert_2 
    \leq \epsilon_I + K_4 t \Vert y^{(2)}_{[0:t]} - y^{(1)}_{[0:t]} \Vert_t ,
\end{align*}
which holds for $\epsilon_I \to 0$ for $I \to \infty$; we conclude that 
\begin{align*}
\eqref{t:gammaB} \leq \left(K_2 \sqrt{T} + K_3 + K_4 T\right) \Vert y^{(2)}_{[0:t]} - y^{(1)}_{[0:t]} \Vert_t.
\end{align*}
Next, we consider
\begin{align*}
    \eqref{t:gammaC} 
    \leq \int_0^t \big\| \Phi_1(s, y^{(1)}_{[0 : s]}) - \Phi_1(s, y^{(2)}_{[0 : s]}) \big\|_2 \big\|\mu^{(2)}_s \big\|_2 ds 
    \leq  K_5 t \Vert y^{(1)}_{[0:t]} - y^{(2)}_{[0:t]} \Vert_t \max_{s \in [0:t]} \big\|\mu^{(2)}_s \big\|_2
\end{align*}
what holds due to (path-)Lipschitz continuity of $\Phi_1$ uniformly in time with constant $K_5$ which is obtained by \Cref{lem:LipschitzPhi1}. We conclude, since $\max_{s \in [0:T]} \|\mu^{(2)}_s \|_2  \eqdef  K_6 < \infty$ due to \Cref{lem:TimeLipschitzMu}, that $\gamma(t) \leq K \Vert y^{(1)}_{[0:t]} - y^{(2)}_{[0:t]} \Vert_t$ with $K  \eqdef  K_1 + K_2 \sqrt{T} + K_3 + K_4 T + K_5 K_6 T$.

With that, we are able to apply Gr\"{o}nwall's inequality and obtain
\begin{align} 
\big\| \mu^{(1)}_t - \mu^{(2)}_t \big\|_2  
    & \leq \gamma(t) + \int_0^t \gamma(s) \big\| \Phi_1(s, y^{(1)}_{[0 : s]}) \big\|_2 
    \exp\Big( \int_s^t \big\| \Phi_1(r, y^{(1)}_{[0 : r]}) \big\|_2 d r \Big) ds  
    \notag   \\
    & \leq K \Vert y^{(1)}_{[0:t]} - y^{(2)}_{[0:t]} \Vert_t 
        {\left( 
            1
            + \int_0^t \big\| \Phi_1(s, y^{(1)}_{[0 : s]}) \big\|_2 
              \exp\Big( \int_s^t \big\| \Phi_1(r, y^{(1)}_{[0 : r]}) \big\|_2 d r \Big) ds   
        \right)}
     \notag  
\end{align} 
Note that $\Phi_1$ is (time-)Lipschitz continuous, as shown in \Cref{lem:LipschitzPhi1}, therefore implying that $\max_{s \in [0:T]} \| \Phi_1(s, y^{(1)}_{[0:s]}) \|_2 < \infty$. As constants $K_1$, $K_2$, $K_3$, $K_3$, $K_4$, $K_5$, and $K_6$ were chosen independently of $t \in [0:T]$ and \Cref{ass:technical_evolution_conditions} \ref{asm:Regularity__LipInit} holds, we conclude the proof. 
\end{proof}\setcounter{termcounter}{0}

\begin{lemma}\label{lem:LipschitzPhi1}
    Let \Cref{ass:technical_evolution_conditions} \ref{asm:Regulartiy__1} hold. Then,
    \begin{enumerate}[label=(\roman*),leftmargin=1cm]
        \item for any path $y \in C([0:T]), \R^d)$ there is a constant $K \in \R$ s.t. for all $t > s \in [0:T]$ holds
        \begin{align*}
            \| \Phi_1(t, y_{[0:t]}) - \Phi_1(s, y_{[0:s]}) \|_2 \leq K \vert t-s \vert. 
        \end{align*}
        where $\Phi_1$ is defined as in \eqref{id:Phi1}.
        \item Also, for every path $y^{(1)}_\cdot \in C^{\color{blue} 1}([0:T], \R^{d_Y})$ there exists a constant $C\ge 0$ such that for all $\epsilon > 0$, paths $y^{(2)}_\cdot \in C^{\color{blue} 1}([0:T], \R^{d_Y})$ with $\| y^{(1)}_{[0:T]} - y^{(2)}_{[0:T]} \|_T < \epsilon$, and times $t \in [0:T]$ holds
        \begin{align*}  
            \big\| \Phi_1(t, y^{(1)}_{[0:t]}) - \Phi_1(t, y^{(2)}_{[0:t]}) \big\|_2 
            \leq C \big\| y^{(1)}_{[0:t]} - y^{(2)}_{[0:t]} \big\|_t.
        \end{align*} 
    \end{enumerate}
\end{lemma}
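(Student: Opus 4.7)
The plan is to reduce the problem to the observation that $\Phi_1$ is built from a finite list of ``atoms'' -- namely $a_1$, $b\circ B$, $A_0$, $A_1$, $A_1^{\top}$, the Riccati solution $\Sigma$, and $(B\circ B)^{-1}$ -- each of which is either Lipschitz in time (globally) or in the path (locally, uniformly in time), and each of which is uniformly bounded on the relevant domain. The elementary identity
\begin{equation*}
    f_1 f_2 \cdots f_n - g_1 g_2 \cdots g_n
    = \sum_{k=1}^{n}\, f_1\cdots f_{k-1}\,(f_k - g_k)\, g_{k+1}\cdots g_n
\end{equation*}
then transfers Lipschitz continuity from the atoms to $\Phi_1$, provided each atom's value and Lipschitz constant are controlled on the set in question.

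For part (i), I would fix the reference path $y\in C([0:T],\R^{d_Y})$ and note that the set $\{y_{[0:s]} : s\in [0:T]\}$ is bounded. Assumption~\ref{ass:technical_evolution_conditions}\ref{asm:Regulartiy__1} directly supplies global time-Lipschitz constants for $a_1, b\circ B, A_0, A_1, (B\circ B)^{-1}$, while the Riccati equation \eqref{id:ClosedFormSigma} combined with \ref{asm:Regularity__3} and \ref{asm:Regulartiy__2} (and a Gr\"onwall argument applied to the difference $\Sigma_t - \Sigma_s$) gives time-Lipschitz continuity and uniform boundedness of $\Sigma_\cdot$. Plugging these into the telescoping identity above then yields the required estimate with a constant $K$ depending only on $y$ and the structural constants of \Cref{ass:technical_evolution_conditions}.

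For part (ii), fix $y^{(1)}\in C^1([0:T],\R^{d_Y})$ and choose any $\varepsilon>0$; the open $\varepsilon$-ball around $y^{(1)}$ is bounded, and \Cref{ass:technical_evolution_conditions}\ref{asm:Regulartiy__1} gives a single local Lipschitz constant -- valid uniformly in $t\in[0:T]$ -- for each of the coefficient atoms restricted to this ball. The matrix inverse $(B\circ B)^{-1}$ is handled via $M^{-1}-N^{-1} = M^{-1}(N-M)N^{-1}$ together with uniform non-singularity, which preserves local Lipschitz continuity. Local path-Lipschitz continuity of $\Sigma$ itself (the analogue of \Cref{lem:LipschitzSigma} for the Riccati solution) is obtained by subtracting the two Riccati equations driven by $y^{(1)}$ and $y^{(2)}$, bounding the difference of each coefficient by the path-Lipschitz constants just discussed, and invoking Gr\"onwall. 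The telescoping identity then transfers these local path-Lipschitz bounds to $\Phi_1$, and the constant is independent of $t$ because every ingredient is uniform in $t$.

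The main obstacle is not the elementary algebraic step but the handling of the implicit path-dependence of $\Sigma_\cdot$: since $\Sigma$ is only defined through the matrix-valued ODE \eqref{id:ClosedFormSigma}, its Lipschitz properties in the path cannot be read off from the assumptions directly and must be derived by a Gr\"onwall argument on the Riccati equation. Assumptions \ref{asm:Regulartiy__2} and \ref{asm:Regularity__3} are the key hypotheses that make this Gr\"onwall step work -- the former keeps the nonlinear drift of the Riccati equation dissipative, and the latter bounds $\Sigma_\cdot$ uniformly away from blow-up. Once that piece is in place, the rest of the proof is purely algebraic bookkeeping.
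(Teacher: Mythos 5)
Your proposal is correct and follows essentially the same route as the paper: the paper's proof likewise reduces the claim to the boundedness and (time- resp.\ path-) Lipschitz continuity of the factors $a_1$, $b\circ B$, $(B\circ B)^{-1}$, $A_0$, $A_1$, and $\Sigma_\cdot$, with the Lipschitz properties of $\Sigma_\cdot$ delegated to \Cref{lem:TimeLipschitzSigma} and \Cref{lem:LipschitzSigma}, the latter of which is exactly the Gr\"onwall argument on the difference of Riccati equations that you describe. Your telescoping product identity simply makes explicit the ``product of bounded Lipschitz functions is Lipschitz'' step that the paper states without computation.
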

\begin{proof}\textbf{of Lemma~\ref{lem:LipschitzPhi1}}
\begin{enumerate}[wide]
    \item The statement holds if $a_1$, $ (b\circ B)(B\circ B)^{-1}A_0$, $ (b\circ B)(B\circ B)^{-1}A_1$, $ \Sigma_\cdot A_1(B\circ B)^{-1}A_0$, $ \Sigma_\cdot A_1(B\circ B)^{-1}A_1$ are globally Lipschitz continuous w.r.t. the time component. As $a_1$, $ (b \circ B)$, $ \Sigma_\cdot$, $ (B\circ B)^{-1}$, $ A_0$, $ A_1$ satisfy this already (see \Cref{lem:TimeLipschitzSigma}) and with this are also bounded on the interval $[0:T]$, their product is globally Lipschitz continuous w.r.t. time as well. 
    \item The assertion holds if $a_1$, $ (b\circ B)(B\circ B)^{-1}A_0$, $ (b\circ B)(B\circ B)^{-1}A_1$, $ \Sigma_\cdot A_1(B\circ B)^{-1}A_0$, $ \Sigma_\cdot A_1(B\circ B)^{-1}A_1$ are locally Lipschitz continuous w.r.t. the path component uniformly in time. As $a_1$, $ (b \circ B)$, $ \Sigma_\cdot$, $ (B\circ B)^{-1}$, $ A_0$, $ A_1$ are locally Lipschitz continuous w.r.t. the path component (see \Cref{lem:LipschitzSigma}) uniformly in time, this also follows for their product. 
\end{enumerate}
\end{proof}

\begin{lemma}\label{lem:LipschitzPhi2}
    Let \Cref{ass:technical_evolution_conditions} \ref{asm:Regulartiy__1} be fulfilled. Then,
    \begin{enumerate}[label=(\roman*),leftmargin=1cm]
        \item for any path $y \in C([0:T]), \R^d)$ there is a constant $C\ge 0$ s.t. for all $t > s \in [0:T]$ holds
        \begin{align*}
            \| \Phi_2(t, y_{[0:t]}) - \Phi_2(s, y_{[0:s]}) \|_2 \leq C \vert t-s \vert. 
        \end{align*}
        where $\Phi_2$ is defined as in \eqref{id:Phi2}.
        \item Additionally, for every path $y^{(1)}_\cdot \in C^{\color{blue} 1}([0:T], \R^{d_Y})$ there exists constant $C\ge 0$, such that for all $\epsilon > 0$,  paths $y^{(2)}_\cdot \in C^{\color{blue} 1}([0:T], \R^{d_Y})$ with $\| y^{(1)}_{[0:t]} - y^{(2)}_{[0:t]} \|_t < \epsilon$, and times $t \in [0:T]$ holds
        \begin{align*}  
            \big\| \Phi_2(t, y^{(1)}_{[0:t]}) - \Phi_2(t, y^{(2)}_{[0:t]}) \big\|_2 
            \leq C \big\| y^{(1)}_{[0:t]} - y^{(2)}_{[0:t]} \big\|_t.
        \end{align*} 
    \end{enumerate}
\end{lemma}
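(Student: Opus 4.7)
\textbf{of Lemma~\ref{lem:LipschitzPhi2}}

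The plan is to treat $\Phi_2$ as a composition built from the ``atoms'' $a_0$, $b\circ B$, $\Sigma_\cdot$, $A_1^\top$ and $(B\circ B)^{-1}$ via one matrix product and one sum, and then to push the Lipschitz regularity of these atoms through the arithmetic operations. By \Cref{ass:technical_evolution_conditions}~\ref{asm:Regulartiy__1}, each of $a_0$, $b \circ B$, $A_1$ and $(B \circ B)^{-1}$ is globally Lipschitz in the time component and locally Lipschitz in the path component uniformly in time. For $\Sigma_\cdot$, the analogous regularity in time has already been established in \Cref{lem:TimeLipschitzSigma}, and the local Lipschitz continuity in the path has been established in \Cref{lem:LipschitzSigma}. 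Thus the ``atoms'' admit both of the two regularity properties that we must verify for $\Phi_2$.

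For part~(i), fix a path $y \in C([0:T], \R^{d_Y})$ and consider the quantity $\Phi_2(t, y_{[0:t]}) - \Phi_2(s, y_{[0:s]})$. Since each atom is continuous in time on the compact interval $[0:T]$, each atom is bounded on $[0:T]$ by a finite constant depending only on $y$. The standard estimate
\[
    \|F_1(t)F_2(t) - F_1(s)F_2(s)\|_2
    \le
        \|F_1(t)\|_2\,\|F_2(t) - F_2(s)\|_2
    +
        \|F_2(s)\|_2\,\|F_1(t) - F_1(s)\|_2
\]
together with the triangular inequality then shows that the product $[(b\circ B) + \Sigma_\cdot A_1^\top] (B \circ B)^{-1}$ is globally Lipschitz in $s\in [0:T]$ at the fixed path $y$, since the factors are bounded and individually Lipschitz in time. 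Adding $a_0$, which is itself time-Lipschitz, preserves this property, yielding a constant $C$ such that $\|\Phi_2(t, y_{[0:t]}) - \Phi_2(s, y_{[0:s]})\|_2 \le C |t-s|$ as required.

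For part~(ii), fix $y^{(1)} \in C^{1}([0:T], \R^{d_Y})$ and $\epsilon > 0$. The local Lipschitz continuity of each atom in the path component uniformly in time furnishes Lipschitz constants valid for all $y^{(2)}$ with $\|y^{(1)}_{[0:T]} - y^{(2)}_{[0:T]}\|_T < \epsilon$ and all $t \in [0:T]$. Moreover, the same atoms are uniformly bounded on this $\epsilon$-tube around $y^{(1)}$, again by continuity together with compactness of $[0:T]$ and the locally Lipschitz property (which transfers boundedness from $y^{(1)}$ to its $\epsilon$-tube up to a constant). The same product/sum estimate as in part~(i), now applied with the $t$ variable frozen and the path variable perturbed, then propagates the path-Lipschitz property through the multiplication $[(b\circ B) + \Sigma_\cdot A_1^\top](B\circ B)^{-1}$ and through the addition of $a_0$, producing a constant $C$ such that $\|\Phi_2(t, y^{(1)}_{[0:t]}) - \Phi_2(t, y^{(2)}_{[0:t]})\|_2 \le C\|y^{(1)}_{[0:t]} - y^{(2)}_{[0:t]}\|_t$ uniformly in $t \in [0:T]$.

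The only genuine subtlety is the role played by $\Sigma_\cdot$, which is not an ``input'' of the filtering system but rather a derived process. We anticipate that this is not a real obstacle because the required regularity of $\Sigma_\cdot$ is exactly what is supplied by \Cref{lem:TimeLipschitzSigma} and \Cref{lem:LipschitzSigma}; the remaining steps are purely algebraic manipulations of Lipschitz and bounded quantities. The proof of this lemma therefore parallels the proof of \Cref{lem:LipschitzPhi1} almost verbatim, with $a_1$ replaced by $a_0$ and without the composite $(B \circ B)^{-1}[A_0 + A_1]$ structure on the right.
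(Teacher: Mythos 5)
Your proposal is correct and follows essentially the same route as the paper: decompose $\Phi_2$ into the atoms $a_0$, $b\circ B$, $\Sigma_\cdot$, $A_1$, $(B\circ B)^{-1}$, invoke \Cref{lem:TimeLipschitzSigma} and \Cref{lem:LipschitzSigma} for the regularity of $\Sigma_\cdot$, and propagate Lipschitzness through products of bounded Lipschitz factors. Your write-up is in fact more explicit than the paper's own two-line argument, since you state the product estimate and justify boundedness on the $\epsilon$-tube rather than leaving both implicit.
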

\begin{proof}\textbf{of Lemma~\ref{lem:LipschitzPhi2}}
    \begin{enumerate}[wide]
        \item The assertion holds if $a_0$, $(b \circ B)(B\circ B)^{-1}$, $\Sigma_\cdot A_1(B\circ B)^{-1}$ are globally Lipschitz continuous w.r.t. the time component. As $a_0$, $(b \circ B)$, $\Sigma_\cdot$, $ (B\circ B)^{-1}$, $ A_1$ satisfy this already (see \Cref{lem:TimeLipschitzSigma}) and with this are also bounded on the interval $[0:T]$, their product is globally Lipschitz continuous w.r.t. time as well. 
        \item The statement holds if $a_0$, $(b \circ B)(B\circ B)^{-1}$, $\Sigma_\cdot A_1(B\circ B)^{-1}$ are locally Lipschitz continuous w.r.t. the path component. As $a_0$, $(b \circ B)$, $\Sigma_\cdot$, $ (B\circ B)^{-1}$, $ A_1$ are locally Lipschitz continuous w.r.t. the path component (see \Cref{lem:LipschitzSigma}) this also follows for their product. 
    \end{enumerate}
\end{proof}

\begin{lemma}[Local Lipschitz-continuity of $\Sigma$]\label{lem:LipschitzSigma}
    Under \Cref{ass:technical_evolution_conditions} \ref{asm:Regulartiy__1}, $\Sigma_\cdot$ from \eqref{id:ClosedFormSigma} is locally Lipschitz-continuous uniformly in time. This means, for every path $y^{(1)}_\cdot \in C^{\color{blue} 1}([0:T], \R^{d_Y})$ there exists constant $C \in \R$ such that for all $\epsilon > 0$, paths $y^{(2)}_\cdot \in C^{\color{blue} 1}([0:T], \R^{d_Y})$ with $\| y^{(1)}_{[0:t]} - y^{(2)}_{[0:t]} \|_t < \epsilon$, and times $t \in [0:T]$ holds
\begin{align*}  \big\| \Sigma^{(1)}_t - \Sigma^{(2)}_t \big\|_2 
 \leq C \big\| y^{1}_{[0:t]} - y^{2}_{[0:t]} \big\|_t ,
 \end{align*} 
 where $\Vert \cdot \Vert_2$ on the left-hand side refers to the Frobenius norm.
\end{lemma}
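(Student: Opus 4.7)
The approach is to write $\Sigma_t$ in its integral form from \eqref{id:ClosedFormSigma}, form the matrix difference $\Sigma^{(1)}_t - \Sigma^{(2)}_t$, bound it via telescoping sums for each of the five terms on the right-hand side, and finally appeal to Grönwall's inequality. More precisely, integrating \eqref{id:ClosedFormSigma} gives
\begin{align*}
    \Sigma_t = \Sigma_0 + \int_0^t \big[ a_1 \Sigma_s + \Sigma_s a_1^\top + (b\circ b) - (b\circ B + \Sigma_s A_1^\top)(B\circ B)^{-1}(b\circ B + \Sigma_s A_1^\top)^\top \big](s,y_{[0:s]}) \, ds,
\end{align*}
so taking differences and applying the triangle inequality yields $\|\Sigma^{(1)}_t - \Sigma^{(2)}_t\|_2 \le \eta(t) + \int_0^t \Psi(s,y^{(1)},y^{(2)}) \, \|\Sigma^{(1)}_s - \Sigma^{(2)}_s\|_2 \, ds$, where $\eta(t)$ collects the purely coefficient-driven Lipschitz errors and $\Psi$ collects the coefficients in front of the linear-in-$\Sigma$ terms.

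The plan for bounding $\eta(t)$ proceeds coefficient-by-coefficient using the standard decomposition $f^{(1)}g^{(1)} - f^{(2)}g^{(2)} = (f^{(1)} - f^{(2)})g^{(1)} + f^{(2)}(g^{(1)} - g^{(2)})$ inside each integrand. By \Cref{ass:technical_evolution_conditions} \ref{asm:Regulartiy__1}, each of $a_1$, $b\circ b$, $b\circ B$, $A_1$, $(B\circ B)^{-1}$ is locally Lipschitz in the path uniformly in time (and the time-Lipschitz assumption together with compactness of $[0:T]$ yields uniform boundedness), so each factor satisfies $\|f(s,y^{(1)}_{[0:s]}) - f(s,y^{(2)}_{[0:s]})\|_2 \le L \|y^{(1)}_{[0:s]} - y^{(2)}_{[0:s]}\|_s$ on a neighborhood of $y^{(1)}$. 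Assumption \ref{asm:Regularity__LipInit} controls $\|\Sigma_0^{(1)}-\Sigma_0^{(2)}\|_2$. Integrating the resulting pointwise bounds against $ds$ and taking the supremum over $s\le t$ gives $\eta(t) \le C t \|y^{(1)}_{[0:t]} - y^{(2)}_{[0:t]}\|_t$.

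The main obstacle is the quadratic dependence on $\Sigma$ through the term $\Sigma_s A_1^\top (B\circ B)^{-1} A_1 \Sigma_s^\top$: when I split this via telescoping I pick up factors of $\Sigma^{(1)}_s$ and $\Sigma^{(2)}_s$, which must be bounded \emph{a priori} and uniformly in $s \in [0:T]$ in order for the linear Grönwall bound to close. This uniform bound on $\sup_{s\in[0:T]} \|\Sigma^{(i)}_s\|_2$ is exactly the role played by the structural condition \Cref{ass:technical_evolution_conditions} \ref{asm:Regularity__3}, which guarantees that $\tr(G_t)$ and $\tr(A_1^\top (B\circ B)^{-1} A_1)$ are uniformly bounded and hence that the Riccati flow \eqref{id:ClosedFormSigma} stays in a compact set; in the write-up this will be invoked (presumably as an analogue of \Cref{lem:TimeLipschitzMu}, i.e.\ a lemma \texttt{lem:TimeLipschitzSigma} providing time-Lipschitzness and a priori boundedness of $\Sigma$) and the bound on $\Psi$ reduces to a uniform constant on $[0:T]$.

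Once $\eta(t) \le C\|y^{(1)}_{[0:t]} - y^{(2)}_{[0:t]}\|_t$ and $\sup_{s \in [0:T]}\Psi(s,\cdot) \le M$ are established, Grönwall's inequality gives $\|\Sigma^{(1)}_t - \Sigma^{(2)}_t\|_2 \le C e^{M t} \|y^{(1)}_{[0:t]} - y^{(2)}_{[0:t]}\|_t$ uniformly in $t \in [0:T]$, which is exactly the claim. The structure of the argument mirrors the proof of \Cref{lem:LipschitzMu}, with the simplification that there is no stochastic integral to handle (all terms are Lebesgue integrals in time) but the complication that the nonlinearity in $\Sigma$ makes the a priori boundedness step essential rather than cosmetic.
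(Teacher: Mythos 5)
Your proposal follows essentially the same route as the paper: integral representation of \eqref{id:ClosedFormSigma}, a telescoping split into a coefficient-difference term (the paper's $\Xi_1$, controlled by the local path-Lipschitzness of $a_1$, $b\circ b$, $b\circ B$, $A_1$, $(B\circ B)^{-1}$) and a term linear in $\Sigma^{(1)}_s-\Sigma^{(2)}_s$ (the paper's $\Xi_2$), a priori uniform boundedness of $\Sigma^{(i)}_s$ to close the quadratic Riccati term, and finally Gr\"onwall plus \Cref{ass:technical_evolution_conditions}~\ref{asm:Regularity__LipInit} for the initial data. The one minor misattribution is that the paper derives the uniform upper bound on $\|\Sigma^{(i)}_s\|_2$ from continuity/time-Lipschitzness of $\Sigma_\cdot$ together with compactness of $[0:T]$ and of the closed $\epsilon$-neighbourhood of $y^{(1)}$ (see \Cref{lem:LipschitzA1,lem:TimeLipschitzSigma}), whereas \Cref{ass:technical_evolution_conditions}~\ref{asm:Regularity__3} and \Cref{lem:LoewnerBound} are used only for the \emph{lower} Loewner bound on $\Sigma_t$, not the upper bound you invoke them for.
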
 
\begin{proof}\textbf{of Lemma~\ref{lem:LipschitzSigma}}
From \eqref{id:ClosedFormSigma} and for $t \in [0:T]$, we obtain the integral representation of $\Sigma_t$
\allowdisplaybreaks
\begin{align} 
        \Sigma_t
    =
        & \Sigma_0 + \int_0^t
                a_1(t,y_{[0:s]}) \Sigma_s
            +
                \Sigma_s a_1^{\top}(s,y_{[0:s]})
            +
                (b\circ b)(s,y_{[0:s]}) \hspace{3cm}
            \notag\\
                &\pushright{- [
                    (b\circ B)(s,y_{[0:s]})
                +
                    \Sigma_s A_1^{\top}(s,y_{[0:s]})
                ]
                (B\circ B)^{-1} (s,y_{[0:s]}) } \hspace{1cm} \notag \\
                &\pushright{\times[
                    (b\circ B)(s,y_{[0:s]}) 
                +
                    \Sigma_s A_1^{\top}(s,y_{[0:s]})
                ]^{\top}
        ds.} \notag  
\end{align} 
Let $\Sigma^{(1)}_t$, $\Sigma^{(2)}_t$ be the covariance matrices corresponding to paths $y^{(1)}_\cdot, y^{(2)}_\cdot \in C^{\color{blue} 1}([0:T], \R^{d_Y})$ where $y^{(1)}_\cdot$ is arbitrary and $y^{(2)}_\cdot$ s.t. $\| y^{(1)}_{[0:t]} - y^{(2)}_{[0:t]} \|_t < \epsilon$. The difference $\Sigma^{(1)}_t - \Sigma^{(2)}_t$ satisfies the integral representation
\allowdisplaybreaks 
\begin{multline*} 
\Sigma^{(1)}_t - \Sigma^{(2)}_t
 = \Sigma^{(1)}_0 - \Sigma^{(2)}_0 
  + \int_0^t 
 \Xi_1(s, y^{(1)}_{[0:s]}, y^{(2)}_{[0:s]}, \Sigma^{(1)}_s, \Sigma^{(2)}_s)  d s 
\\ 
 + \int_0^t (\Sigma^{(1)}_s - \Sigma^{(2)}_s) 
\Xi_2(s, y^{(1)}_{[0:s]}, y^{(2)}_{[0:s]}, \Sigma^{(1)}_s, \Sigma^{(2)}_s) 
 + 
\Xi_2^\top (s, y^{(1)}_{[0:s]}, y^{(2)}_{[0:s]}, \Sigma^{(1)}_s, \Sigma^{(2)}_s)  (\Sigma^{(1)}_s - \Sigma^{(2)}_s) ds , 
\end{multline*} 
where we denote 
\allowdisplaybreaks
\begin{align} 
        & \Xi_1(s, y^{(1)}_{[0:s]}, y^{(2)}_{[0:s]}, \Sigma^{(1)}_s, \Sigma^{(2)}_s)\notag \\ 
    & \quad\quad\quad \eqdef  
            (a_1(s, y^{(1)}_{[0:s]}) - a_1(s, y^{(2)}_{[0:s]}) ) \Sigma^{(1)}_s
        +
            \Sigma^{(1)}_s (a_1^\top(s, y^{(1)}_{[0:s]}) - a_1^\top(s, y^{(2)}_{[0:s]})) \notag \\
        & \quad\quad\quad\quad +
            (b\circ b)(s, y^{(1)}_{[0:s]}) - (b\circ b) (s, y^{(2)}_{[0:s]}) \notag\\ 
        & \quad\quad\quad\quad -
            \Sigma^{(2)}_s [
                    A_1^\top (B\circ B)^{-1} (b\circ B)(s, y^{(1)}_{[0:s]}) 
                - 
                    A_1^\top (B\circ B)^{-1} (b\circ B)(s, y^{(2)}_{[0:s]})
            ] \notag\\ 
        & \quad\quad\quad\quad - 
            [
                    (b\circ B) (B\circ B)^{-1} A_1(s, y^{(1)}_{[0:s]}) 
                -
                    (b\circ B) (B\circ B)^{-1} A_1(s, y^{(2)}_{[0:s]}) 
             ] \Sigma^{(2)}_s \notag\\ 
        & \quad\quad\quad\quad - 
            \Sigma^{(2)}_s [
                    A_1^\top (B\circ B)^{-1} A_1(s,y^{(1)}_{[0:s]}) 
                -
                    A_1^\top (B\circ B)^{-1} A_1(s,y^{(2)}_{[0:s]})
            ] \Sigma^{(1)}_s \notag\\ 
        & \quad\quad\quad\quad - 
            [
                    (b\circ B) (B\circ B)^{-1} (b\circ B)^\top(s, y^{(1)}_{[0:s]})
                -
                    (b\circ B) (B\circ B)^{-1} (b\circ B)^\top(s, y^{(2)}_{[0:s]}) 
            ] \label{Xi1}
\end{align} 
and 
\allowdisplaybreaks 
\begin{multline} 
\Xi_2(s, y^{(1)}_{[0:s]}, y^{(2)}_{[0:s]}, \Sigma^{(1)}_s, \Sigma^{(2)}_s) \eqdef  a_1^\top(s, y^{(2)}_s) 
 + A_1^\top(s, y^{(1)}_{[0:s]}) 
 (B\circ B)^{-1}(s, y^{(1)}_{[0:s]})  \\ 
 \times(b\circ B) (s, y^{(1)}_{[0:s]})  
 + A_1^\top(s, y^{(1)}_{[0:s]}) 
 (B\circ B)^{-1}(s, y^{(1)}_{[0:s]}) A_1(s, y^{(1)}_{[0:s]}) \Sigma^{(1)}_s
 .   \label{Xi2} 
\end{multline} 
By the triangular inequality, Jensen's inequality, and Cauchy-Schwarz inequality, we have 
\begin{align} 
 \big\| \Sigma^{(1)}_t - \Sigma^{(2)}_t \big\|_2 
 \leq & \big\| \Sigma^{(1)}_0 - \Sigma^{(2)}_0 \big\|_2 
  + \int_0^t \big\| \Xi_1(s, y^{(1)}_{[0:s]}, y^{(2)}_{[0:s]}, \Sigma^{(1)}_s, \Sigma^{(2)}_s) \big\|_2 ds \notag \\ 
  & + 2 \int_0^t \big\| \Sigma^{(1)}_s - \Sigma^{(2)}_s \big\|_2 \cdot \big\| \Xi_2(s, y^{(1)}_{[0:s]}, y^{(2)}_{[0:s]}, \Sigma^{(1)}_s, \Sigma^{(2)}_s)  \big\|_2 
   ds . \notag 
\end{align}  
By Gr\"{o}nwall's inequality, it holds that 
\begin{multline} 
    \big\| \Sigma^{(1)}_t - \Sigma^{(2)}_t \big\|_2 
    \leq  
    \big\| \Sigma^{(1)}_0 - \Sigma^{(2)}_0 \big\|_2 + \int_0^t \big\| \Xi_1(s, y^{(1)}_{[0:s]}, y^{(2)}_{[0:s]}, \Sigma^{(1)}_s, \Sigma^{(2)}_s) \big\|_2 ds 
     \\ 
  + \int_0^t 
 \Big( \big\| \Sigma^{(1)}_0 - \Sigma^{(2)}_0 \big\|_2 + \int_0^s \big\| \Xi_1(r, y^{(1)}_{[0:r]}, y^{(2)}_{[0:r]}, \Sigma^{(1)}_r, \Sigma^{(2)}_r) \big\|_2 d r 
 \Big)  \cdot  \\ 
 \cdot 2 \big\|  \Xi_2(s, y^{(1)}_{[0:s]}, y^{(2)}_{[0:s]}, \Sigma^{(1)}_s, \Sigma^{(2)}_s) \big\|_2 \cdot \\ \cdot \exp \Big( 2 \int_s^t \big\|  \Xi_2(r, y^{(1)}_{[0:r]}, y^{(2)}_{[0:r]}, \Sigma^{(1)}_r, \Sigma^{(2)}_r) \big\|_2 
 d r  \Big) d s . \label{Gronwall:Sigma1-Sigma2} 
\end{multline}  
From \Cref{lem:LipschitzA1}, we know that there exists a constant $K \in \R$ such that
\begin{align} 
 \big\|  \Xi_1(s, y^{(1)}_{[0:s]}, y^{(2)}_{[0:s]}, \Sigma^{(1)}_s, \Sigma^{(2)}_s)  \big\|_2 
  \leq K \big\| y^{(1)}_{[0:s]} - y^{(2)}_{[0:s]}  \big\|_s . 
  \label{Xi1Lip}
\end{align} 
Combining \eqref{Gronwall:Sigma1-Sigma2} and \eqref{Xi1Lip}, we obtain
\allowdisplaybreaks 
\begin{multline}
        \big\| \Sigma^{(1)}_t - \Sigma^{(2)}_t \big\|_2  
    \leq  
        \big( \big\| \Sigma^{(1)}_0 - \Sigma^{(2)}_0 \big\|_2  + KT \big\| y^{(1)} - y^{(2)} \big\|_T \big)  \cdot 
        \Big[ 1+  \int_0^t 2 \big\| \Xi_2(s, y^{(1)}_{[0:s]}, y^{(2)}_{[0:s]}, \Sigma^{(1)}_s, \Sigma^{(2)}_s)  \big\|_2 \cdot 
        \\
        \cdot \exp \Big( 2 \int_s^t \big\|  \Xi_2(r, y^{(1)}_{[0:r]}, y^{(2)}_{[0:r]}, \Sigma^{(1)}_r, \Sigma^{(2)}_r) \big\|_2 
 d r  \Big) d s \Big] .  
 \label{Sigma1-2bddXi2}
\end{multline} 
Since $a_1$, $A_1$, $(B\circ B)^{-1}$ and $b\circ B$ are globally (time-) Lipschitz continuous, $\Xi_2$ is uniformly bounded, according to \eqref{Xi2}. Therefore, from \eqref{Sigma1-2bddXi2} we obtain a constant $C \in \R$ such that 
\begin{align} 
 \big\| \Sigma^{(1)}_t - \Sigma^{(2)}_t \big\|_2  
\leq 
C ( \big\| \Sigma^{(1)}_0 - \Sigma^{(2)}_0 \big\|_2 + \big\| y^{(1)} - y^{(2)} \big\|_T ) .  \notag  
\end{align}
We conclude the proof due to \Cref{ass:technical_evolution_conditions} \ref{asm:Regularity__LipInit}.
\end{proof} 

\begin{lemma}\label{lem:LipschitzA1}
Let $\Xi_1$ be defined as in \eqref{Xi1}, and let \Cref{ass:technical_evolution_conditions} \ref{asm:Regulartiy__1} hold.
Then, for every path $y^{(1)}_\cdot \in C^{\color{blue} 1}([0:T], \R^{d_Y})$ there exists constant $K \in \R$, such that for all  $\epsilon > 0$, paths $y^{(2)}_\cdot \in C^{\color{blue} 1}([0:T], \R^{d_Y})$ with $\| y^{(1)}_{[0:t]} - y^{(2)}_{[0:t]} \|_t < \epsilon$, and times $s \in [0:T]$ holds
\begin{align*} 
         \big\|  \Xi_1(s, y^{(1)}_{[0:s]}, y^{(2)}_{[0:s]}, \Sigma^{(1)}_s, \Sigma^{(2)}_s)  \big\|_2 
     \leq
         K \big\| y^{(1)}_{[0:s]} - y^{(2)}_{[0:s]}  \big\|_s . 
\end{align*} 
\end{lemma}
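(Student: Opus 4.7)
\textbf{Proof plan for Lemma~\ref{lem:LipschitzA1}.} The plan is to split $\Xi_1$ via the triangle inequality into the six summands appearing in its defining formula~\eqref{Xi1} and bound each one separately. Each summand has the schematic form $M_1(s, y^{(1)}_{[0:s]}) \cdots M_k(s, y^{(1)}_{[0:s]}) - M_1(s, y^{(2)}_{[0:s]}) \cdots M_k(s, y^{(2)}_{[0:s]})$, possibly multiplied on the left or right by $\Sigma^{(1)}_s$ or $\Sigma^{(2)}_s$, where each $M_j$ is one of $a_1$, $b \circ b$, $b \circ B$, $A_1$, or $(B \circ B)^{-1}$. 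By Assumption~\ref{ass:technical_evolution_conditions}~\ref{asm:Regulartiy__1}, each such $M_j$ is locally Lipschitz continuous in the path component uniformly in time (with respect to the Frobenius or $\ell^2$ norm), and is therefore uniformly bounded on the $\epsilon$-neighbourhood of the fixed path $y^{(1)}_\cdot$ in $C([0:T], \R^{d_Y})$.

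The key step is then a standard telescoping argument: for two families of matrices $\{M_j^{(1)}\}_{j=1}^k$ and $\{M_j^{(2)}\}_{j=1}^k$ one uses
\begin{align*}
    \Big\| \prod_{j=1}^k M_j^{(1)} - \prod_{j=1}^k M_j^{(2)} \Big\|_2
    \leq
    \sum_{j=1}^k \Big( \prod_{i<j} \|M_i^{(1)}\|_2 \Big)\, \| M_j^{(1)} - M_j^{(2)} \|_2 \, \Big( \prod_{i>j} \|M_i^{(2)}\|_2 \Big),
\end{align*}
together with the uniform bounds on the factors and the uniform-in-time path Lipschitz property of each $M_j$. This shows that each difference $M_1(s, y^{(1)}_{[0:s]}) \cdots M_k(s, y^{(1)}_{[0:s]}) - M_1(s, y^{(2)}_{[0:s]}) \cdots M_k(s, y^{(2)}_{[0:s]})$ is bounded by a constant times $\|y^{(1)}_{[0:s]} - y^{(2)}_{[0:s]}\|_s$.

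To handle the summands that also carry factors of $\Sigma^{(1)}_s$ or $\Sigma^{(2)}_s$, I will use that $\Sigma^{(1)}_\cdot$ and $\Sigma^{(2)}_\cdot$ are uniformly bounded in the Frobenius norm on $[0:T]$; this uniform bound is already implicit in the surrounding analysis (it follows from the time-Lipschitz regularity of $\Sigma_\cdot$ on $[0:T]$ established elsewhere via Assumption~\ref{ass:technical_evolution_conditions}~\ref{asm:Regularity__3} and the deterministic Riccati equation~\eqref{id:ClosedFormSigma}). Once this uniform bound is available, multiplication on either side of a bounded Lipschitz expression only changes the constant. Summing the six resulting bounds and setting $K$ to be the maximum of the constants yields the claim.

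\textbf{Main obstacle.} The routine part is the telescoping; the only genuine difficulty is ensuring that the covariance factors $\Sigma^{(i)}_s$ appearing inside $\Xi_1$ are uniformly bounded on $[0:T]$ independently of the particular path $y^{(i)}_\cdot$ (within the $\epsilon$-neighbourhood). This requires invoking the a priori bound on $\operatorname{tr}(G_t(y_{[0:t]}))$ from Assumption~\ref{ass:technical_evolution_conditions}~\ref{asm:Regularity__3} together with the Lipschitz continuity of $\Sigma_0$ in the initial observation to conclude that $\sup_{s \in [0:T]} \|\Sigma^{(i)}_s\|_2 < \infty$ uniformly. Once this is in hand, the rest of the proof is a direct computation with the triangle inequality and the Lipschitz assumptions.
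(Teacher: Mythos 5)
Your overall strategy coincides with the paper's: split $\Xi_1$ by the triangle inequality into its six summands, observe that each summand is a difference of products of the coefficient maps $a_1$, $b\circ b$, $b\circ B$, $A_1$, $(B\circ B)^{-1}$ evaluated along the two paths (possibly multiplied by $\Sigma^{(1)}_s$ or $\Sigma^{(2)}_s$), use the local path-Lipschitz continuity and boundedness of these factors from Assumption~\ref{ass:technical_evolution_conditions}~\ref{asm:Regulartiy__1} to conclude the products are path-Lipschitz, and then absorb the $\Sigma$-factors into the constant via a uniform bound $\sup_{s}\|\Sigma^{(i)}_s\|_2<\infty$. The telescoping identity you write out is exactly what the paper leaves implicit when it asserts that local Lipschitz continuity of the factors "also follows for their product," and you correctly identify the uniform bound on the covariance factors as the only non-routine point.

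There is, however, one concrete misdirection in how you propose to obtain that bound. You invoke the a priori control on $\tr(G_t(y_{[0:t]}))$ from Assumption~\ref{ass:technical_evolution_conditions}~\ref{asm:Regularity__3}. In this paper that hypothesis is used (in \Cref{lem:LoewnerBound}) to bound $\tr(\Sigma_t^{-1})$ from \emph{above}, i.e.\ to bound the eigenvalues of $\Sigma_t$ from \emph{below}; it does not deliver the upper bound $\|\Sigma^{(i)}_s\|_2\le K_2$ that your argument needs. If you carried out your plan literally, this step would fail. What is needed instead is an upper bound on $\Sigma_t$ uniformly over the paths in the $\epsilon$-neighbourhood of $y^{(1)}_\cdot$; the paper gets this for $\Sigma^{(1)}$ from continuity of $t\mapsto\Sigma^{(1)}_t$ on the compact interval $[0:T]$, and for $\Sigma^{(2)}$ by appealing to continuity of $\Sigma$ in the path together with a compactness claim for the closure of the $\epsilon$-ball (itself a delicate point in an infinite-dimensional path space). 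A self-contained alternative that avoids both your misattribution and the paper's compactness claim is to run Gr\"onwall directly on the Riccati integral representation~\eqref{id:ClosedFormSigma} for $\Sigma^{(2)}$, using the uniform boundedness of the coefficients on the $\epsilon$-neighbourhood and the Lipschitz continuity of $\Sigma_0$ in $Y_0$ from Assumption~\ref{ass:technical_evolution_conditions}~\ref{asm:Regularity__LipInit}; note you cannot instead cite \Cref{lem:LipschitzSigma} for this bound, since that lemma's proof relies on the present one.
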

\begin{proof}\textbf{of Lemma~\ref{lem:LipschitzA1}}
    Since $a_1$, $b \circ b$, $b \circ B$, $A_1$, $(B \circ B)^{-1}$ are locally Lipschitz continuous in the path component uniformly over all times, we conclude that $a_1$, $b \circ b$, $A_1^\top(B \circ B)^{-1}(b \circ B)$, $(b \circ B)(B \circ B)^{-1}A_1$, $A_1^\top (B \circ B)^{-1}A_1$, and $(b \circ B)(B \circ B)^{-1}(b \circ B)^\top$ have the same property. By this, together with the triangular inequality, there exists a constant $K_1 \in \R$
    \begin{align*}
    &
            \big\Vert \Xi_1(s, y^{(1)}_{[0:s]}, y^{(2)}_{[0:s]}, \Sigma^{(1)}_s, \Sigma^{(2)}_s) \big \Vert  
    \\& \quad
        \leq  
                2K_1 \big\Vert y^{(1)}_{[0:s]} - y^{(2)}_{[0:s]}  \big\Vert_s \Big(\big\Vert\Sigma^{(1)}_s\big\Vert_2 + \big\Vert\Sigma^{(2)}_s \big\Vert_2\Big) 
            +
                K_1 \big\Vert y^{(1)}_{[0:s]} - y^{(2)}_{[0:s]}  \big\Vert_s \big\Vert\Sigma^{(1)}_s\big\Vert_2\big\Vert\Sigma^{(2)}_s\big\Vert_2
    \\ &
            \pushright{
            +
                K_1 \big\Vert y^{(1)}_{[0:s]} - y^{(2)}_{[0:s]}  \big\Vert_s 
            }
    \\& \quad  
        =
            K_1 \big\Vert y^{(1)}_{[0:s]} - y^{(2)}_{[0:s]}  \big\Vert_s
            \Big(
                    1
                +
                    2 \Vert\Sigma^{(1)}_s \big\Vert_2
                +
                    2 \Vert\Sigma^{(2)}_s \big\Vert_2 
                + 
                    \big\Vert\Sigma^{(1)}_s\big\Vert_2 \big\Vert\Sigma^{(2)}_s \big\Vert_2
            \Big).
    \end{align*}
With that, it is left to show that $\Vert\Sigma^{(1)}_s \Vert_2 < \infty$ and $\Vert\Sigma^{(2)}_s \Vert_2 < {K}_2$ for some $K_2 \in \R$. The former is trivially fulfilled due to the continuity of $a_1, b \circ b, b \circ B, A_1, (B \circ B)^{-1}$. 
Since $\Sigma_t$ is continuous in the path-component and the closure of the set 
 $
    \{y \in C^{1}([0:T]) \,:\, \Vert y_{[0:t]} - y^{(1)}_{[0:t]} \Vert_2 < \epsilon \}
$ 
is compact, we know $\Vert\Sigma^{(2)}_s \Vert_2$ is bounded.
\end{proof}

\subsubsection{Lipschitz-Continuity in the Time-Component of the Optimal Filter}

\begin{lemma}[Global Lipschitz-Cont. in the Time-Comp. of the Optimal Filter]
\label{prop:LocLipReg_OptimFilter_Time}
    \sloppy Under \Cref{ass:technical_evolution_conditions}, $f_t$ from \eqref{goal:Robust_Filtering} is locally Lipschitz-continuous in the time-component. In particular, for every path $y_\cdot \in C^{\color{blue} 1}([0:T], \R^{d_Y})$, $t \in [0:T]$, and  $\epsilon > 0$ there exists constant $C \ge 0$ such that for all times $s \in [0:T]$ with $\| t - s \|_t < \epsilon$ holds
    \begin{align*}
        \mathcal{W}_2(\mathbb{P}(X_t \in \cdot \vert y_{[0:t]}), \mathbb{P}(X_s \in \cdot \vert y_{[0:s]})) \leq C \vert t - s \vert.
    \end{align*}
\end{lemma}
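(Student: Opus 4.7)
The plan is to mirror the argument used in the proof of Lemma~\ref{prop:LocLipReg_OptimFilter_Path}, but now varying the time index while keeping the observation path fixed. By the conditional Gaussianity identity~\eqref{id:LipsterShiryaevRepresentation}, both measures in question are Gaussian, so I can reduce the $\mathcal{W}_2$ distance between them to a control on the difference of their mean vectors and covariance matrices.

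First, I would invoke Lemma~\ref{lem:LoewnerBound} together with Lemma~\ref{lem:LipschitzManifold_Nd} (applied exactly as in the path-Lipschitz proof) to obtain a constant $\bar{K}<\infty$ such that, for all $t,s\in[0:T]$,
\begin{equation*}
    \mathcal{W}_2\bigl(\mathbb{P}(X_t\in\cdot\mid y_{[0:t]}),\mathbb{P}(X_s\in\cdot\mid y_{[0:s]})\bigr)
    \leq \bar{K}\sqrt{\|\mu_t-\mu_s\|_2^2+\|\Sigma_t-\Sigma_s\|_2^2}.
\end{equation*}
It therefore suffices to show that, for the fixed $C^1$ path $y_\cdot$, both $t\mapsto\mu_t$ and $t\mapsto\Sigma_t$ are Lipschitz in $t$ on $[0:T]$ with a constant depending only on $y_\cdot$ and $\epsilon$.

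For $\Sigma_t$, I would use the integral representation derived from \eqref{id:ClosedFormSigma} appearing in the proof of Lemma~\ref{lem:LipschitzSigma}. Under Assumption~\ref{ass:technical_evolution_conditions}~\ref{asm:Regulartiy__1} and Assumption~\ref{ass:technical_evolution_conditions}~\ref{asm:Regularity__3}, the coefficients $a_1,b\circ b,b\circ B,A_1,(B\circ B)^{-1}$ are jointly continuous and in particular bounded on $[0:T]$ (this is exactly what Lemma~\ref{lem:TimeLipschitzSigma} formalises, and its conclusion can be quoted directly). The right-hand side of \eqref{id:ClosedFormSigma} is then bounded uniformly in $t$, giving $\|\Sigma_t-\Sigma_s\|_2\leq K_\Sigma|t-s|$. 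For $\mu_t$, the integral form displayed in the proof of Lemma~\ref{lem:LipschitzMu} splits into $\int_0^{\cdot} a_0\,ds$, $\int_0^{\cdot}\Phi_1\mu_s\,ds$, and the pathwise integral $\int_0^{\cdot}\Phi_2\,dy_s$. The first two integrands are bounded on $[0:T]$ (using Lemma~\ref{lem:LipschitzPhi1} for $\Phi_1$ and Lemma~\ref{lem:TimeLipschitzMu} to bound $\sup_s\|\mu_s\|_2$), so they contribute Lipschitz terms in $t$. For the third, the assumption $y_\cdot\in C^1$ turns the Riemann--Stieltjes integral into $\int_0^{\cdot}\Phi_2(s,y_{[0:s]})\,y'_s\,ds$ with bounded integrand (using Lemma~\ref{lem:LipschitzPhi2} and boundedness of $y'_\cdot$ on the compact interval $[0:T]$), yielding a Lipschitz bound $\|\mu_t-\mu_s\|_2\leq K_\mu|t-s|$.

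Combining the two bounds with the $\mathcal{W}_2$ estimate above gives
\begin{equation*}
    \mathcal{W}_2\bigl(\mathbb{P}(X_t\in\cdot\mid y_{[0:t]}),\mathbb{P}(X_s\in\cdot\mid y_{[0:s]})\bigr)
    \leq \bar{K}\sqrt{K_\mu^2+K_\Sigma^2}\,|t-s|,
\end{equation*}
which is the desired estimate with $C\eqdef\bar{K}\sqrt{K_\mu^2+K_\Sigma^2}$. The main obstacle I anticipate is bookkeeping on the pathwise stochastic-type integral $\int\Phi_2\,dy_s$: one needs the $C^1$ regularity of $y_\cdot$ (as flagged in the statement by the restriction to $C^1$ paths) to interpret this as a standard Lebesgue integral and to ensure the integrand's supremum can be controlled on a small neighbourhood of the reference path uniformly over $[0:T]$; everything else reduces to quoting the uniform boundedness results already encapsulated in Lemmas~\ref{lem:LipschitzPhi1}, \ref{lem:LipschitzPhi2}, \ref{lem:TimeLipschitzMu}, and \ref{lem:TimeLipschitzSigma}.
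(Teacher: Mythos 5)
Your proposal is correct and follows essentially the same route as the paper: reduce the $\mathcal{W}_2$ distance between the two conditional Gaussians to the differences of their means and covariances via Lemma~\ref{lem:LoewnerBound} and Lemma~\ref{lem:LipschitzManifold_Nd}, then invoke the time-Lipschitz continuity of $\mu$ and $\Sigma$ from Lemmata~\ref{lem:TimeLipschitzMu} and~\ref{lem:TimeLipschitzSigma}. The only cosmetic difference is that you partially re-derive the bound on $\|\mu_t-\mu_s\|_2$ (treating the $\int\Phi_2\,dy_s$ term via the $C^1$ regularity of the path rather than via the integration-by-parts/Riemann--Stieltjes argument), whereas the paper simply cites the two time-Lipschitz lemmata.
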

\begin{proof}\textbf{of Lemma~\ref{prop:LocLipReg_OptimFilter_Time}}
    As in the proof of \Cref{prop:LocLipReg_OptimFilter_Path}, we argue that by \eqref{id:LipsterShiryaevRepresentation}, \Cref{lem:LoewnerBound}, and \Cref{lem:LipschitzManifold_Nd} there exists a non-negative constant $\bar{K} < \infty$ satisfying the following for every path $y_{\cdot} \in C([0:T],\|\cdot\|_2)$ and all times $t, s \in [0:T]$
    \begin{equation*}
        \mathcal{W}_2(\mathbb{P}(X_t \in \cdot \vert y_{[0:t]}), \mathbb{P}(X_s \in \cdot \vert y_{[0:s]}))
        \leq 
            \bar{K}
                \,
            \sqrt{\| \mu(y_{[0:t]}) - \mu(y_{[0:s]}) \|_2^2 + \|\Sigma(y_{[0:t]})-\Sigma(y_{[0:s]} \|_2^2}.
    \end{equation*}
    We see from \Cref{lem:TimeLipschitzMu} and \Cref{lem:TimeLipschitzSigma} that there exists a non-negative Lipschitz-constant $K < \infty$ for both $\mu$ and $\Sigma$ such that $
        \mathcal{W}_2(\mathbb{P}(X_t \in \cdot \vert y_{[0:t]}), \mathbb{P}(X_s \in \cdot \vert y_{[0:s]})) \leq \sqrt{2}\bar{K}K \vert t - s \vert,
    $ 
    and the statement follows. 
\end{proof}

\begin{lemma}[Global Lipschitz-continuity of $\mu$ in time]\label{lem:TimeLipschitzMu}
    Let the \Cref{ass:technical_evolution_conditions} \ref{asm:Regulartiy__1} be fulfilled. Then, for any path $y \in C([0:T]), \R^d)$ there is a constant $K \in \R$ s.t. for all $t > s \in [0:T]$ holds: 
    $
        \| \mu_t - \mu_s \|_2 \leq K \vert t-s \vert 
    $
    where $\mu$ is defined as in \eqref{id:ClosedFormMu}.
\end{lemma}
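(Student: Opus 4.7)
The plan is to read off the result directly from the integral form of \eqref{id:ClosedFormMu}. Fix the path $y \in C^1([0:T], \R^{d_Y})$ (the statement presumably intends $C^1$, matching the hypothesis used throughout this section, since $\int dy_r$ must make sense). For $s \le t$ we have
\begin{align*}
    \mu_t - \mu_s
    &= \int_s^t \bigl[a_0(r,y_{[0:r]}) + a_1(r,y_{[0:r]})\mu_r\bigr] dr \\
    &\quad + \int_s^t \bigl[(b\circ B)(r,y_{[0:r]}) + \Sigma_r A_1^\top(r,y_{[0:r]})\bigr] (B\circ B)^{-1}(r,y_{[0:r]}) \\
    &\quad\qquad\times \bigl[y'_r - A_0(r,y_{[0:r]}) - A_1(r,y_{[0:r]})\mu_r\bigr] dr.
\end{align*}
So it suffices to produce a uniform bound $M<\infty$ on the integrand over $r \in [0:T]$, after which $\|\mu_t - \mu_s\|_2 \le M|t-s|$ follows immediately by the triangle inequality for integrals.

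The uniform bound is obtained from three ingredients. First, by \Cref{ass:technical_evolution_conditions}\ref{asm:Regulartiy__1}, each of the coefficient functionals $a_0, a_1, b\circ B, (B\circ B)^{-1}, A_0, A_1$ is continuous in time on the compact interval $[0:T]$ along the fixed path $y$, hence bounded there; and $\|y'\|_\infty<\infty$ since $y \in C^1([0:T])$. Second, $\Sigma_\cdot$ is a continuous $\R^{d_X\times d_X}$-valued solution of the matrix Riccati equation~\eqref{id:ClosedFormSigma} on the compact interval $[0:T]$, and therefore $\sup_{r\in[0:T]}\|\Sigma_r\|_2<\infty$ (this is used implicitly in Lemmas~\ref{lem:LipschitzPhi1}--\ref{lem:LipschitzPhi2} and can equally be extracted from the Loewner bounds of \Cref{lem:LoewnerBound}). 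Third, we need $\sup_{r\in[0:T]}\|\mu_r\|_2<\infty$.

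For this last bound, plug the previous two bounds into \eqref{id:ClosedFormMu} to obtain, for constants $C_1, C_2$ depending on $y$ but not on $r$,
\begin{align*}
    \|\mu_r\|_2 \le \|\mu_0\|_2 + \int_0^r \bigl(C_1 + C_2 \|\mu_u\|_2\bigr) du,
\end{align*}
and apply Gr\"onwall's inequality to obtain $\sup_{r\in[0:T]}\|\mu_r\|_2 \le (\|\mu_0\|_2 + C_1 T)e^{C_2 T}$, which is finite. Combining all three ingredients yields the desired uniform bound $M$ on the integrand and hence the Lipschitz estimate with $K \eqdef M$.

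The main obstacle is the a priori boundedness of $\mu_\cdot$ and $\Sigma_\cdot$ along the fixed path; everything else is routine. For $\Sigma_\cdot$ this is essentially a Riccati-equation well-posedness fact under \Cref{ass:technical_evolution_conditions}, and for $\mu_\cdot$ it is a one-line Gr\"onwall argument once the coefficients are bounded. Both bounds depend on $y$, which is acceptable since the lemma is a \emph{local} (path-by-path) statement; and neither depends on $t$, so the resulting Lipschitz constant is uniform over $[0:T]$ as required.
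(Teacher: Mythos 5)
Your proof is correct and follows the same overall architecture as the paper's: write $\mu_t-\mu_s$ in integral form, reduce to a uniform bound on the integrand, obtain boundedness of the coefficients from \Cref{ass:technical_evolution_conditions}~\ref{asm:Regulartiy__1}, boundedness of $\Sigma_\cdot$ from continuity of the Riccati solution on $[0:T]$, and boundedness of $\mu_\cdot$ by exactly the Gr\"onwall argument the paper uses. The one place you genuinely diverge is the treatment of the term $\int_s^t \Phi_2(r,y_{[0:r]})\,dy_{[0:r]}$: you assume $y\in C^1$ and write $dy_r=y_r'\,dr$, after which the term is trivially $\mathcal{O}(|t-s|)$ since $\Phi_2$ and $y'$ are bounded; the paper instead integrates by parts to move the differential onto $\Phi_2$ and controls the resulting Riemann--Stieltjes integral via the time-Lipschitz continuity of $\Phi_2$. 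Your route is cleaner and entirely adequate for the $C^1$ paths used throughout this section (you correctly flag that the lemma's stated hypothesis $y\in C([0:T],\R^d)$ is inconsistent with the surrounding lemmata), whereas the paper's integration-by-parts device is what one would need if the integrator $y$ were merely continuous and the integral were interpreted against the BV integrand $\Phi_2$. So: same skeleton, with your version buying simplicity at the cost of the (already implicitly assumed) $C^1$ regularity of the observation path.
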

\begin{proof}\textbf{of Lemma~\ref{lem:TimeLipschitzMu}}
    With the notation of \Cref{lem:LipschitzMu}, we have
    \begin{equation*}
            \| \mu_t - \mu_s \|_2 
        \leq 
            \underbrace{\Big\Vert \int_s^t a_0(u, y_{[0:u]})du \Big\Vert_2}_{\term{t:muA}}
          + \underbrace{\Big\Vert\int_s^t \Phi_2(u, y_{[0:u]})dy_{[0:u]} \Big\Vert_2}_{\term{t:muB}}
          + \underbrace{\Big\Vert \int_s^t \Phi_1(u, y_{[0:u]})\mu_u du \Big \Vert_2}_{\term{t:muC}}.
    \end{equation*}
    We see that $\eqref{t:muA} \leq K \vert t - s \vert$ as $\max_{u \in [0:T]}\Vert a_0(u, y_{[0:u]}) \Vert_2 < \infty$ due to (time-)Lipschitz-continuity of $a_0$.

    By the use of integration by parts,
    \begin{align*}
        \eqref{t:muB}  
        & = 
        \Big\| 
              \Phi_2(t, y_{[0:t]}) y_{[0:t]} 
            - \Phi_2(s, y_{[0:s]}) y_{[0:s]} 
            - \int_0^t y_{[0:u]} \, d \Phi_2(u, y_{[0:u]})
        \Big\|_2 
        \\
        & \leq 
            \underbrace{
            \Big\| 
                  \Phi_2(t, y_{[0:t]}) y_{[0:t]} 
                - \Phi_2(0, y_0) y_0 
            \Big\|_2
            }_{\term{t:muB1}}
        +
            \underbrace{
            \Big\|
                \int_0^t y_{[0:u]} \, d \Phi_2(u, y_{[0:u]})
            \Big\|_2 
            }_{\term{t:muB2}},
    \end{align*}
    we see that $\eqref{t:muB1} \leq K \Vert y^{(1)}_{[0:t]} - y^{(2)}_{[0:t]} \Vert_t$ due to (path-)Lipschitz continuity of $\Phi_2$. 
    Let $\mySet{[u_{i-1}:u_{i}]}{i \in \myset{1}{I}}, I \in \N$ be a fine enough partition of $[s:t]$, then we see due to global (time-)Lipschitz continuity of $\Phi_2$ and the definition of the Riemann-Stieltjes integral that 
    \begin{align*}
        \eqref{t:muB2} 
        & \leq \epsilon_I + \sum_{i = 1}^I   \Vert y_{[0:u_i]} \Vert_2  \big\| \Phi_2(u_i, y_{[0:u_i]}) - \Phi_2(u_{i-1}, y_{[0:u_{i-1}]})  \big\|_2 
        \leq \epsilon_I + \Vert y_{[0:t]} \Vert_t \sum_{i = 1}^I K \vert u_i - u_{i-1} \vert
    \end{align*}
    and, since $\epsilon_I \to 0$ for $I \to \infty$, it follows $\eqref{t:muB} \leq K \vert t - s \vert$.  Since $\eqref{t:muA} < \infty$ and $\eqref{t:muB} < \infty$, we can apply Gr\"onwall's inequality to 
    \begin{align*}
            \| \mu_t \|_2 
        \le &
            \overbrace{\Big\Vert \int_0^t a_0(u, y_{[0:u]})du \Big\Vert_2
          + \Big\Vert\int_0^t \Phi_2(u, y_{[0:u]})dy_{[0:u]} \Big\Vert_2}^{ \eqdef  \alpha(t)} 
          + \int_0^t \big\Vert \Phi_1(u, y_{[0:u]})\big\Vert_2 \big\Vert \mu_u \big\Vert_2 du,
    \end{align*}
    which yields
    \begin{align*}
            \| \mu_t \|_2 
        & \leq 
            \alpha(t) + \int_0^t \alpha(s)\big\Vert \Phi_1(s, y_{[0:s]}) \big \Vert_2
            \exp\left(
                \int_s^t \big\Vert \Phi_1(u, y_{[0:u]}) \big \Vert_2 du 
            \right) ds 
        \leq 
            K t 
    \end{align*}
    where the last equality holds due to $\alpha(s) \leq K s$ for all $s \in [0:T]$ and (time-)Lipschitz continuity of $\Phi_1$, implying $\max_{u \in [0:t]}\Vert \Phi_1(u, y_{[0:u]}) \Vert_2 < \infty$. 
    With that, we have
    \begin{align*}
        \eqref{t:muC} \leq  \int_s^t \big\Vert \Phi_1(u, y_{[0:u]})\big\Vert_2 \big\Vert \mu_u \big \Vert_2du   \leq \max_{u \in [0:t]}\Vert \Phi_1(u, y_{[0:u]}) \Vert_2 K T \vert t - s \vert
    \end{align*}
    what proves the claim.
\end{proof}

\begin{lemma}[Global Lipschitz-continuity of $\Sigma$ in time]\label{lem:TimeLipschitzSigma}
    Let \Cref{ass:technical_evolution_conditions} \ref{asm:Regulartiy__1} hold. Then, 
    having $\Sigma$ is defined as in \eqref{id:ClosedFormSigma}, 
    for any path $y \in C([0:T]), \R^d)$ there is a constant $K \in \R$ s.t. for all $t > s \in [0:T]$ holds
    \begin{align*}
        \| \Sigma_t - \Sigma_s \|_2 \leq K \vert t-s \vert. 
    \end{align*}
\end{lemma}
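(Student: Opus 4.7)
The plan is to start from the integral representation of $\Sigma_t$ obtained from \eqref{id:ClosedFormSigma}, namely
\[
    \Sigma_t - \Sigma_s
    = \int_s^t \big[a_1(u,y_{[0:u]})\Sigma_u + \Sigma_u a_1^\top(u,y_{[0:u]}) + (b\circ b)(u,y_{[0:u]}) - R(u,y_{[0:u]},\Sigma_u)\big] du,
\]
where $R(u,y_{[0:u]},\Sigma_u) \eqdef [(b\circ B) + \Sigma_u A_1^\top](B\circ B)^{-1}[(b\circ B) + \Sigma_u A_1^\top]^\top$ evaluated at $(u,y_{[0:u]})$. Once this representation is in hand, the Lipschitz estimate reduces to showing the integrand is uniformly bounded in $u \in [0:T]$, so that
\[
    \|\Sigma_t - \Sigma_s\|_2 \le \Big(\sup_{u\in[0:T]} \|\text{integrand}(u)\|_2\Big)\,|t-s|.
\]

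First, I would argue that each of $a_1$, $b\circ b$, $b\circ B$, $A_1$, $(B\circ B)^{-1}$ is uniformly bounded on $[0:T]$ for the fixed path $y$. This is immediate: these maps are continuous in $u$ (indeed globally Lipschitz in time by \Cref{ass:technical_evolution_conditions}\ref{asm:Regulartiy__1}), so their $\|\cdot\|_2$-norms attain a maximum on the compact interval $[0:T]$. Hence there is a constant $M_1<\infty$ bounding all of them, and in particular every composition appearing in $R$ is bounded by a polynomial in $M_1$ and $\|\Sigma_u\|_2$.

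Second, I would establish $\sup_{u\in[0:T]} \|\Sigma_u\|_2 < \infty$. Two routes are available: either invoke continuity of $u \mapsto \Sigma_u$ (it is differentiable on $[0:T]$ with the ODE coefficients being continuous in $u$, so continuity on the compact $[0:T]$ gives boundedness), or derive a Grönwall-type estimate directly from the ODE by bounding $\partial_u \|\Sigma_u\|_2 \lesssim 1 + \|\Sigma_u\|_2 + \|\Sigma_u\|_2^2$ and noting that $\Sigma_u$ exists for all $u \in [0:T]$ (guaranteed by \citep[Theorem 12.7]{LipShiryaev_II_2001}), which rules out blow-up. Call this bound $M_2$.

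Combining these two uniform bounds, the integrand is dominated by a constant $K$ depending only on $M_1, M_2$, and the dimensions $d_X, d_Y$, yielding $\|\Sigma_t - \Sigma_s\|_2 \le K|t-s|$ for all $0\le s<t\le T$. I do not expect any serious obstacle here since all necessary uniform bounds have essentially already been used in earlier lemmata (e.g.\ the uniform boundedness of $\Xi_2$ was invoked in \Cref{lem:LipschitzSigma} and the continuity/boundedness of $\Sigma$ was used in \Cref{lem:LipschitzA1}). The mildest point requiring care is justifying the a priori bound on $\|\Sigma_u\|_2$ without circular reasoning; the cleanest path is via continuity on the compact interval $[0:T]$ rather than a Grönwall argument, which avoids fighting the quadratic term in $R$.
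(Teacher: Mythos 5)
Your proposal is correct and follows essentially the same route as the paper: write $\Sigma_t-\Sigma_s$ as an integral over $[s:t]$, observe that every coefficient and $\Sigma_u$ itself are continuous in $u$ (hence bounded on the compact interval $[0:T]$), and bound the integrand uniformly to get the factor $|t-s|$. Your extra care about obtaining the a priori bound on $\sup_u\|\Sigma_u\|_2$ via continuity rather than a Grönwall argument is exactly the (implicit) reasoning in the paper's one-line justification.
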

\begin{proof}\textbf{of Lemma~\ref{lem:TimeLipschitzSigma}}
    Note that $\Sigma_\cdot$ is continuous in time. From \eqref{id:ClosedFormSigma}, we obtain
\allowdisplaybreaks
\begin{align*} 
\Vert \Sigma_t  - \Sigma_s \Vert_2 
    = & \int_s^t
          \Vert a_1(t,y_{[0:u]}) \Sigma_u \Vert_2
        + \Vert \Sigma_u a_1^{\top}(u,y_{[0:u]}) \Vert_2
        + \Vert (b\circ b)(u,y_{[0:u]}) \Vert_2 \\
      & \quad \quad
        + \Vert  (b\circ B)(u,y_{[0:u]})
                 (B\circ B)^{-1} (u,y_{[0:u]})
                 (b\circ B)(u,y_{[0:u]})^{\top} \Vert_2 \\
      & \quad \quad
        + \Vert  (b\circ B)(u,y_{[0:u]})
                 (B\circ B)^{-1} (u,y_{[0:u]})
                 \Sigma_u A_1^{\top}(u,y_{[0:u]})^{\top} \Vert_2 \\
      & \quad \quad
        + \Vert  \Sigma_u A_1^{\top}(u,y_{[0:u]}) 
                 (B\circ B)^{-1} (u,y_{[0:u]})
                 (b\circ B)(u,y_{[0:u]})^{\top} \Vert_2 \\
      & \quad \quad
        + \Vert  \Sigma_u A_1^{\top}(u,y_{[0:u]}) 
                 (B\circ B)^{-1} (u,y_{[0:u]})
                 \Sigma_u A_1^{\top}(u,y_{[0:u]})^{\top} \Vert_2
    du .
\end{align*} 
As all components are continuous, we can bound them by their maximal value on $[0:T]$. 
\end{proof}

\subsubsection{Lower Loewner order bound}

\begin{lemma}[Lower Loewner order bound]\label{lem:LoewnerBound}
    Let $\Sigma$ be defined as in \Cref{id:ClosedFormSigma} and \Cref{ass:technical_evolution_conditions} \ref{asm:Regulartiy__2}, \ref{asm:Regularity__3}, hold. There is an $r > 0$, s.t. for all times $t \in [0:T]$ and paths  $y_\cdot \in C^{\color{blue} 1}([0:T], \R^{d_Y})$ holds that 
     $r I_d \preccurlyeq \Sigma_t$ (where $\preccurlyeq$ is the Loewner bound).
\end{lemma}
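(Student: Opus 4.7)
The plan is to reduce the nonlinear matrix Riccati equation \eqref{id:ClosedFormSigma} to a linear Lyapunov equation by a matrix comparison argument, and then to bound the resulting Lyapunov solution using the flow estimates provided by Assumption~\ref{ass:technical_evolution_conditions}\ref{asm:Regularity__3}. First, I expand the quadratic bracket in \eqref{id:ClosedFormSigma} and collect terms to rewrite the Riccati ODE in the form
$\partial_t \Sigma_t = \tilde{a}_1^\top \Sigma_t + \Sigma_t \tilde{a}_1 + Q_t - \Sigma_t R_t \Sigma_t$,
where $\tilde{a}_1^\top \eqdef a_1 - (b\circ B)(B\circ B)^{-1} A_1$ as in Assumption~\ref{asm:Regularity__3}, the matrix $Q_t \eqdef (b\circ b) - (b\circ B)(B\circ B)^{-1}(b\circ B)^\top$ is positive semi-definite by Assumption~\ref{ass:technical_evolution_conditions}\ref{asm:Regulartiy__2}, and $R_t \eqdef A_1^\top (B\circ B)^{-1} A_1$ is positive semi-definite with uniformly bounded trace $\tr(R_t) \le K_3$.

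Next, I apply matrix Riccati comparison: since $-\Sigma_t R_t \Sigma_t \preccurlyeq 0$, a standard argument (differentiating $\bar{\Sigma}_t - \Sigma_t$ and invoking a matrix Gr\"{o}nwall-type estimate for the associated linear drift) shows that $\Sigma_t \preccurlyeq \bar{\Sigma}_t$ for every $t \in [0:T]$, where $\bar{\Sigma}_t$ solves the affine Lyapunov equation $\partial_t \bar{\Sigma}_t = \tilde{a}_1^\top \bar{\Sigma}_t + \bar{\Sigma}_t \tilde{a}_1 + Q_t$ with $\bar{\Sigma}_0 = \Sigma_0$. The variation-of-constants formula then yields the explicit representation
$\bar{\Sigma}_t = \Phi_{t,0}\, \Sigma_0\, \Phi_{t,0}^\top + \int_0^t \Phi_{t,s}\, Q_s\, \Phi_{t,s}^\top\, ds$,
where $\Phi_{t,s}$ denotes the propagator of the forward flow $\partial_t x = \tilde{a}_1^\top(t) x$ from time $s$ to time $t$.

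To conclude, I need a uniform operator-norm bound on $\Phi_{t,s}$. Here the trace bound $K_1 \le \tr(G_t) \le K_2$ from Assumption~\ref{asm:Regularity__3} enters: by the Liouville identity, $\det(\Phi_{t,s}) = \det(G_{t,s}) = \exp(\int_s^t \tr(\tilde{a}_1(u, y_{[0:u]}))\, du)$, which together with the pinched trace bound on $G$ yields spectral control of $\Phi_{t,s}$ uniformly in $s, t$ and in $y$. Combining this with the uniform boundedness of $Q_t$ (from the $L^4$ integrability in Assumption~\ref{ass:technical_evolution_conditions}) and the bound on $\Sigma_0$ (from \ref{asm:Regularity__LipInit}) produces a universal constant $r > 0$ with $\bar{\Sigma}_t \preccurlyeq r I_{d_X}$, hence $\Sigma_t \preccurlyeq r I_{d_X}$ as claimed.

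The hardest step will be turning the trace bound on the fundamental matrix $G$ of $\partial_t G = \tilde{a}_1 G$ into the operator-norm bound on the propagator $\Phi$ of the transposed flow $\partial_t x = \tilde{a}_1^\top x$, since for time-varying coefficients one cannot simply identify $\Phi_{t,0}$ with $G_t^\top$; the transfer will rely on Liouville's determinant formula together with the positivity/non-degeneracy enforced by the two-sided trace bound $K_1 \le \tr(G_t) \le K_2$ imposed in Assumption~\ref{asm:Regularity__3}.
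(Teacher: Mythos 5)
You have proved the wrong inequality. Although the displayed statement reads $\Sigma_t \preccurlyeq r I_d$, the lemma is titled ``Lower Loewner order bound,'' and both the paper's proof and every downstream use of the lemma establish and require the \emph{opposite} bound $\Sigma_t \succcurlyeq r I_{d_X}$, i.e.\ $\lambda_{\min}(\Sigma_t)\ge r>0$ uniformly in $t$ and in the path. (This is what feeds into Lemma~\ref{lem:LipschitzManifold_Nd}, where the hypothesis ``$A-r\cdot I_d$ positive semi-definite'' yields the Lipschitz constant $\max\{1,\sqrt{d}/(2\sqrt{r})\}$ for passing from the $\mathcal{W}_2$-distance to the Euclidean/Frobenius distance on $(\mu,\Sigma)$; without a strictly positive lower spectral bound the parametrization of Gaussians by their covariance is not Lipschitz-invertible and the proofs of Lemmata~\ref{prop:LocLipReg_OptimFilter_Path} and~\ref{prop:LocLipReg_OptimFilter_Time} break.) The paper's argument accordingly runs through $\Sigma_t^{-1}$: it writes the Riccati ODE satisfied by $\Sigma_t^{-1}$, discards the term $-\Sigma_t^{-1}\bigl[(b\circ b)-(b\circ B)(B\circ B)^{-1}(b\circ B)^\top\bigr]\Sigma_t^{-1}\preccurlyeq 0$ using Assumption~\ref{ass:technical_evolution_conditions}\ref{asm:Regulartiy__2}, integrates the resulting differential inequality for $\tr(G_t\Sigma_t^{-1}G_t^\top)$, and invokes the two-sided trace bounds of Assumption~\ref{asm:Regularity__3} to conclude $\tr(\Sigma_t^{-1})\le K$ uniformly, whence $\lambda_{\max}(\Sigma_t^{-1})\le K$ and $\Sigma_t\succcurlyeq K^{-1}I_{d_X}$.

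Your Riccati-comparison scheme is structurally incapable of delivering this: dropping the nonpositive term $-\Sigma_t R_t\Sigma_t$ produces a Lyapunov solution $\bar{\Sigma}_t$ with $\Sigma_t\preccurlyeq\bar{\Sigma}_t$, i.e.\ an \emph{upper} envelope, and no control of $\lambda_{\min}(\Sigma_t)$ from below can follow (the comparison can only push $\Sigma_t$ toward zero). Your expansion of \eqref{id:ClosedFormSigma} into $\partial_t\Sigma=\tilde a_1^\top\Sigma+\Sigma\tilde a_1+Q-\Sigma R\Sigma$ is correct and is in fact the same algebra the paper performs (after inverting), and your worry about converting the trace bound on $G$ into an operator-norm bound on the propagator is well placed --- but it is directed at the wrong target. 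If one did want the upper bound as literally written, one would additionally need uniform boundedness of $Q_t$ and of $\Sigma_0$ in operator norm, neither of which is what Assumption~\ref{asm:Regularity__3} is designed to supply; the trace hypotheses there ($K_1\le\tr(G_t)\le K_2$ and $\tr(A_1^\top(B\circ B)^{-1}A_1)\le K_3$) are tailored precisely to bounding $\tr(\Sigma_t^{-1})$ via the submultiplicativity estimates in the paper's proof. To repair your attempt, redo the computation for $\Sigma_t^{-1}$ (which exists since $\Sigma_0\succ 0$ and \citep[Theorem~12.7]{LipShiryaev_II_2001} propagates positive definiteness) and bound its trace from above rather than bounding $\Sigma_t$ itself.
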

\begin{proof}\textbf{of Lemma~\ref{lem:LoewnerBound}}
    By the Courant-Fisher Theorem, see \cite{hornjohnson2012}, the statement is equivalent to finding a lower bound on the eigenvalues of $\Sigma$, which we will show by finding an upper bound on the Eigenvalues of $\Sigma^{-1}$, which exists as it is positive definite.
    Following the proof of \cite[Theorem 12.7]{LipShiryaev_II_2001}, we note that 
    \begin{align*}
            \partial_t\Sigma_t^{-1} 
        = &
            - \Tilde{a}_1^\top(t, y_{[0:t]})\Sigma_t^{-1}
            - \Sigma_t^{-1}\Tilde{a}_1(t, y_{[0:t]})
            + A_1^\top(B \circ B)^{-1}A_1(t, y_{[0:t]}) \\
        &   - \Sigma_t^{-1}\left[
                  (b \circ b)(t, y_{[0:t]})
                - (b \circ B)(B \circ B)(b \circ B)^\top(t, y_{[0:t]})
            \right]\Sigma_t^{-1}
    \end{align*}
    with $\Tilde{a}_1^\top(t, y_{[0:t]}) \eqdef a_1(t, y_{[0:t]}) - (b \circ B)(B \circ B)A_1(t, y_{[0:t]})$. 
    Now, let $G_t(y_{[0:t]})$ be a solution of $\partial_tG_t(y_{[0:t]}) = \Tilde{a}_1(t, y_{[0:t]})G_t(y_{[0:t]})$ with $G_0(y_{[0:t]}) = I_{d_X}$. 
    Then, as pointed out in \cite[Theorem 12.7]{LipShiryaev_II_2001}, we arrive due to $(b \circ b)(t, y_{[0:t]})- (b \circ B)(B \circ B)(b \circ B)^\top(t, y_{[0:t]})$ being positive semi-definite at 
    \begin{equation*}
            \tr( G_t(y_{[0:t]}) \Sigma_t^{-1} G_t(y_{[0:t]})^\top ) 
        \leq 
                \tr(\Sigma_0^{-1})
            +
                \int_0^T
                    \tr\left(
                        G_s(y_{[0:s]})
                        A_1^\top(B \circ B)^{-1}A_1(s, y_{[0:s]})
                        G_s(y_{[0:s]})^\top
                    \right)
                ds.
    \end{equation*}
    As $G_t(y_{[0:t]})$ is positive definite, we conclude by submultiplicativity of positive semi-definite matrices that 
    \begin{align*}
            \tr(\Sigma_t^{-1})
        &\leq
            \tr(
                G_t(y_{[0:t]})^{-1}
                G_t(y_{[0:t]}) \Sigma_t^{-1} G_t(y_{[0:t]})^\top
                (G_t(y_{[0:t]})^{-1})^\top
            ) \\
        &\leq 
            \tr(G_t(y_{[0:t]})^{-1})^2
            \tr(
                G_t(y_{[0:t]}) \Sigma_t^{-1} G_t(y_{[0:t]})^\top
            )
    \end{align*}
    and $%
            \tr\left(
                G_s(y_{[0:s]})
                A_1^\top(B \circ B)^{-1}A_1(s, y_{[0:s]})
                G_s(y_{[0:s]})^\top
            \right)
        \leq 
            \tr\left(
                G_s(y_{[0:s]})
            \right)^2
            \tr\left(
                A_1^\top(B \circ B)^{-1}A_1(s, y_{[0:s]})
            \right)
    $.  
    As we assumed that there exist constants $K_1, K_2 > 0$ s.t. uniformly $K_1 \leq \tr(G_t(y_{[0:t]})) \leq K_2$ as well as $K_3 > 0$ s.t. $\tr(A_1^\top(B \circ B)^{-1}A_1(t, y_{[0:t]})) \leq K_3$ we conclude $\tr(\Sigma_t^{-1}) < K$ uniformly for some constant $K > 0$. As the trace is the sum of all Eigenvalues, this proves the claim.
\end{proof}


\subsection{Stable and Lossless Encoding by Pathwise Attention - Proof of \texorpdfstring{\Cref{prop:Lossless_Encoding}}{Proposition 2}}
\label{s:Proof__ss:Encoding}

In the case where the ``latent geometry'' of $K$ has additional structure, we may guarantee that the attention mechanism $\operatorname{attn}_T$ is linearly-stable with linearly-stable inverse, how to generate the reference paths $y^{(1)}$, $\dots$, $y^{(\rdim)}$ in $C([0:T],\mathbb{R}^{d_Y})$, and quantitative estimates on how many paths must be generated.  

\begin{lemma}[Stable Lossless Feature Maps -- Riemannian Case]
\label{lem:Good_Feature_Map}
\sloppy Under \Cref{ass:reg_compact} \ref{ass:reg_compact_riemann}, there exists a constant $C_{\ldim}>0$, depending only on $\ldim$, and a $\delta>0$ depending only on $(M,g)$ such that any $\delta$-packing $\{y^{(n)}\}_{n=1}^\ldim$ of $K$, there are matrices $A,B,b,V,U,C$ and vectors $a,b$ for which the parameter $\theta$, as in \Cref{defn:PathAttention}, defining a pathwise attention mechanism $\operatorname{attn}_T^{\theta}:C([0:T],\mathbb{R}^{d_Y})\rightarrow \mathbb{R}^{\edim}$ which restricts to a bi-Lipschitz embedding of $(K,\|\cdot\|_T)$ into $(\mathbb{R}^{\edim+1},\|\cdot\|_2)$.
Furthermore, $\|\theta\|_0 \in \mathcal{O}(\edim^2)$.

\noindent Moreover, if $(\mathcal{M},g)$ is aspherical then:
\begin{enumerate}[label=(\roman*),leftmargin=1cm]
    \item[(i)] $\delta \in \mathcal{O}( \operatorname{Vol}(\mathcal{M},g)^{1/\ldim} )$
    \item[(ii)] $\|\theta\|_0 \in \mathcal{O}(\edim^2)$,
    \item[(iii)] $C_{\ldim}\in \mathcal{O}(\ldim^{3(\ldim+1)/2})$
\end{enumerate}
The attention mechanism $\operatorname{attn}_T$ satisfies \citep[Setting 3.6 (i)]{kratsios2023transfer} for any Borel probability measure 
on $K$.
\end{lemma}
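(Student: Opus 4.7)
The plan is to exhibit attention parameters so that $\operatorname{attn}_T^\theta$ realizes, up to a bi-Lipschitz transformation, the distance-vector embedding
\begin{equation*}
    \Psi: K \to \R^\rdim, \qquad \Psi(y) \eqdef \big(\|y - y^{(n)}\|_T\big)_{n=1}^\rdim,
\end{equation*}
induced by a $\delta$-packing $\{y^{(n)}\}_{n=1}^\rdim$ of the Riemannian domain $(K, \|\cdot\|_T) \cong (\mathcal{M}, g)$.

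I would first show that $\Psi$ is bi-Lipschitz for $\delta$ smaller than a fixed constant multiple of the injectivity radius of $(\mathcal{M},g)$. The upper bound $\|\Psi(y_1)-\Psi(y_2)\|_\infty \le \|y_1-y_2\|_T$ is the triangle inequality. The lower bound is established in two regimes. When $\|y_1-y_2\|_T \gtrsim \delta$, the packing supplies beacons $y^{(i)}, y^{(j)}$ with $\|y_1-y^{(i)}\|_T < \delta$ and $\|y_2-y^{(j)}\|_T < \delta$; the triangle inequality then forces $(\Psi(y_1)-\Psi(y_2))_i$ and $(\Psi(y_1)-\Psi(y_2))_j$ to carry opposite signs with absolute value at least $\|y_1-y_2\|_T-2\delta$. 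When $\|y_1-y_2\|_T \lesssim \delta$, a maximal $\delta$-packing is automatically a $\delta$-cover, so around any $y_1$ there are multiple beacons within distance $\mathcal{O}(\delta)$; because $\mathcal{M}$ is $\ldim$-dimensional these beacons are forced into general position, and Riemannian normal coordinates centered on them exhibit $\ldim+1$ among the $\Psi_i$ as local bi-Lipschitz coordinates near $y_1$. Compactness of $K$ patches these local estimates into a global bi-Lipschitz constant depending only on $(\mathcal{M}, g)$.

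Next I would encode $\Psi$ inside the attention layer. Take $\sdim = \pdim = \rdim$, set $B = I$ and $b = (1+\mathrm{diam}(K))\mathbf{1}$ so that $\operatorname{ReLU}(B\Psi(y)+b) = \Psi(y)+b$ (as $\Psi(y)\ge 0$ entrywise), and choose $A = \varepsilon P$ together with $a = -Ab$ for a small $\varepsilon > 0$, where $P \eqdef I - \tfrac{1}{\rdim}\mathbf{1}\mathbf{1}^\top$ is the centering projector. This yields pre-softmax vector $\varepsilon P\Psi(y)$. Since Step~1 rules out $\Psi(y_1)-\Psi(y_2)$ being parallel to $\mathbf{1}$ (the opposite-sign beacons cannot align), $P\Psi$ remains bi-Lipschitz on $K$; combined with the fact that the Jacobian of $\operatorname{Softmax}$ at $0$ is proportional to $P$, the inverse function theorem on the compact image $\varepsilon P\Psi(K)$ implies $y \mapsto \operatorname{Softmax}(\varepsilon P\Psi(y))$ is bi-Lipschitz for all sufficiently small $\varepsilon$. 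I neutralize the positional-encoding branch by taking $U = 0$ and $V$ equal to the $\pdim \times d_Y$ all-ones matrix, so that $\operatorname{sim}(y)\odot\operatorname{post}(y)$ simply broadcasts $\operatorname{sim}(y)$ across columns; a matrix $C$ that reads off the first $\rdim$ coordinates of the vectorized product recovers $\operatorname{sim}(y)$, delivering the desired bi-Lipschitz embedding $(t,y)\mapsto (t,\operatorname{Softmax}(\varepsilon P\Psi(y)))$ into $\R^{\edim+1}$ with $\edim = \rdim$.

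For the aspherical refinements I would invoke Gromov's systolic inequality: for aspherical closed $\ldim$-manifolds, $\mathrm{sys}(\mathcal{M},g)^\ldim \le C_\ldim \mathrm{Vol}(\mathcal{M},g)$ with the explicit Gromov constant $C_\ldim \in \mathcal{O}(\ldim^{3(\ldim+1)/2})$. This justifies the choice $\delta \in \mathcal{O}(\mathrm{Vol}(\mathcal{M},g)^{1/\ldim})$ while keeping the beacon argument of Step~1 valid. A standard covering estimate bounds $\rdim$ by $\mathfrak{N}_{d_g}(\mathcal{M}, \mathrm{Vol}(\mathcal{M},g)^{1/\ldim})$, which controls $\edim$; the sparsity bound $\|\theta\|_0 \in \mathcal{O}(\edim^2)$ follows because only $A$ scales quadratically in $\edim$ while every other parameter ($B=I$, $U=0$, $V$, $C$, $a$, $b$) is sparser by construction. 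The concluding Setting~3.6(i) assertion is immediate from continuity of $\operatorname{attn}_T^\theta$, which pushes any Borel probability measure on $K$ forward to a Borel measure. The principal obstacle I anticipate is the global lower bi-Lipschitz estimate of Step~1: patching the local Riemannian normal-coordinate bound with the long-range beacon bound into a single constant depending only on $(\mathcal{M},g)$ requires a careful compactness argument, and the sharpness of the aspherical quantitative rates rests on the nontrivial constant in Gromov's systolic inequality.
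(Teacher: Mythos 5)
Your overall architecture matches the paper's: encode each path by its vector of distances to the reference paths, trivialize the positional-encoding branch ($U=0$, $V$ all ones, $C$ a selection matrix), and then arrange for the softmax to be injective with Lipschitz inverse on the compact image. Two of your steps diverge from the paper, one harmlessly and one with a real gap. The harmless divergence is the softmax step: the paper lifts the similarity vector onto the affine hyperplane $\{(u,1):u\in\mathbb{R}^{\rdim}\}$ via a ReLU identity block and inverts softmax there with the explicit map $v\mapsto(\ln v_i-\ln v_{\sdim}+1)_i$, whereas you center the pre-softmax vector with $P=I-\tfrac{1}{\rdim}\mathbf{1}\mathbf{1}^\top$ and invoke the inverse function theorem at $0$. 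Your route works in principle (softmax restricted to $\mathbf{1}^{\perp}$ is a diffeomorphism onto its image, and on $\varepsilon P\Psi(K)$ the bi-Lipschitz constants are controlled for small $\varepsilon$), but it adds a burden the paper's construction avoids: you must show that $\|P(\Psi(y_1)-\Psi(y_2))\|$ is uniformly comparable to $\|\Psi(y_1)-\Psi(y_2)\|$. ``Not parallel to $\mathbf{1}$'' for each pair is not enough; you need a uniform angle bound, and for nearby pairs $y_1\to y_2$ this reduces again to quantitative control of $d\Psi$, i.e.\ to the same local analysis as your Step 1.

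The genuine gap is your Step 1. The paper does not prove that the distance-vector map is bi-Lipschitz; it cites the bi-Lipschitz embedding theorem of Katz--Wilson (\cite{katz2011bi}, Theorem 1), which states that for $\delta=\operatorname{sys}(\mathcal{M})/10$ and any $\delta$-net, the map $p\mapsto(d_g(p,u))_{u}$ is bi-Lipschitz, and then transports this through the isometry $\varphi_1$ between $(\mathcal{M},d_g)$ and $(K,\|\cdot\|_T)$. Your long-range (``opposite-sign beacons'') estimate is fine, but your short-range argument --- that the net points near $y_1$ are ``forced into general position'' so that $\ldim+1$ of the coordinates $\Psi_i$ give local bi-Lipschitz normal coordinates --- is precisely the nontrivial content of that theorem and does not follow from the stated facts: a maximal $\delta$-packing gives a $\delta$-cover, but nothing prevents the nearby net points from being nearly degenerate (e.g.\ close to a lower-dimensional configuration) around a given point, so a uniform lower bound on the local distortion requires the systolic/curvature control that Katz--Wilson supply. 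As written, the ``careful compactness argument'' you defer is the proof. Either cite \cite{katz2011bi} as the paper does, or supply the quantitative general-position and patching arguments in full; note also that the relevant scale for the aspherical estimates is the systole (which Gromov's inequality bounds by $\operatorname{Vol}(\mathcal{M},g)^{1/\ldim}$), not the injectivity radius you use at the start of Step 1. The remaining items --- the systolic bound on $\delta$ and $C_{\ldim}$, the covering-number bound on $\rdim$, the sparsity count, and the Setting 3.6(i) verification --- agree with the paper.
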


\begin{proof}\textbf{of Lemma~\ref{lem:Good_Feature_Map}}\hfill
\begin{enumerate}[label=\emph{Step \arabic*:}, ref=\arabic*, wide]
    \item \label{step:GFM_1}\emph{Estimating $\rdim$ on the Riemannian Manifold $(\mathcal{M},g)$.}
Since $(K,\|\cdot\|_T)$ be isometric to $(\mathcal{M},d_g)$ where $d_g$ is the geodesic distance on an aspherical compact $\ldim$-dimensional Riemannian manifold $(\mathcal{M},g)$, then Gromov's systolic inequality \citep[Theorem 0.1.A]{gromov1983filling} implies that the systole of $\mathcal{M}$ denoted by $\operatorname{sys}(\mathcal{M})$ satisfies
\begin{equation}
\label{eq:Systolic}
        \operatorname{sys}(\mathcal{M}) 
    \le
    \,\tilde{C}_{\ldim}
    \,
        \operatorname{Vol}(\mathcal{M},g)^{1/\operatorname{dim}(\mathcal{M})}
    ,
\end{equation}
where $\operatorname{Vol}(\mathcal{M},g)$ denotes the Riemannian volume of $(\mathcal{M},g)$ and $\tilde{C}_{\ldim}>0$ is a universal constant only depending on $\ldim$ satisfying
$
        0
    <
        \Tilde{C}_{\ldim}
    <
        6 (\ldim+1)\ldim^{\ldim}\,\sqrt{(\ldim+1)!}
\,
$
.
By Stirling's approximation, $\sqrt{(\ldim+1)!} \in \mathcal{O}(\ldim^{(\ldim+1)/2})$; whence it follows that $\tilde{C}_{\ldim}\in 
\mathcal{O}(\ldim^{3(\ldim+1)/2})$.

In the proof of \cite[Theorem 1]{katz2011bi} (circa \cite[Equation (1.1)]{katz2011bi}) we see that if $\delta = \operatorname{Sys}(\mathcal{M})/10$ then given any $\delta$-net $\widetilde{\mathbb{X}}$ in $(\mathcal{M},g)$, the map $\varphi_2:(\mathcal{M},d_g)\mapsto (\mathbb{R}^{\#\widetilde{\mathbb{X}}},\|\cdot\|_2)$ given for any $p\in M$ by
\[
    \varphi_2:p\mapsto \big(d_g(p,u)\big)_{u\in \widetilde{\mathbb{X}}}
\]
is a bi-Lipschitz embedding.  Set $C_{\ldim}\eqdef \tilde{C}_{\ldim}/10$ and $\rdim\eqdef \#\widetilde{\mathbb{X}}$.  Therefore,
\[
        \delta 
    \le
        C_{\ldim}
        \,
        \operatorname{Vol}(\mathcal{M},g)^{1/\ldim}
    .
\]
Enumerate $\widetilde{\mathbb{X}}\eqdef \{u_n\}_{n=1}^\rdim$.
We note that if $K$ is not aspherical but if it is only isometric to a closed Riemannian manifold, then the conclusion still holds, however, without this explicit upper bound on $\delta$.
\item \label{step:GFM_2} \emph{Building the Feature Map with Well-posed Inverse.}
Let $\varphi_1:(\mathcal{M},d_g)\rightarrow (K,\|\cdot\|_T)$ by an any isometry, which we have postulated to exist.  Set $\smash{\mathbb{X}\eqdef \{\varphi(u):\,u\in \widetilde{\mathbb{X}}\}=\{y^{(n)}\}_{n=1}^\rdim}$ where, for $n=1,\dots,\rdim$ we define $y^{(n)}\eqdef \phi_1(u_n)$.  
Define $\smash{\varphi: C([0:T],\mathbb{R}^{d_Y})\rightarrow \mathbb{R}^\rdim}$ by $\smash{y_{\cdot}\mapsto (\|y_{\cdot}-y^{(n)}_{\cdot}\|_{T})_{n=1}^\rdim}$.  Observe that, for every path $y\in K$, the following holds
\allowdisplaybreaks
\begin{align}
\nonumber
        \varphi(y_{\cdot})
    \eqdef 
        \Big(
            \|y_{\cdot}-y^{(n)}_{\cdot}\|_{T}
        \Big)_{n=1}^\rdim
= 
&
        \Big(
            \big\|
                \varphi_1\circ \varphi_1|_{\varphi_1(K)}^{-1}(y_{\cdot})
                -
                \varphi_1\circ \varphi_1|_{\varphi_1(K)}^{-1}(y^{(n)}_{\cdot})
            \big\|_T
        \Big)_{n=1}^\rdim
\\
\label{proofFeatureMap_isometry_assumption}
= &
        \Big(
            d_g\big(
                \varphi_1|_{\varphi_1(K)}^{-1}(y_{\cdot})
                    , 
                \varphi_1|_{\varphi_1(K)}^{-1}(y^{(n)}_{\cdot})
            \big)
        \Big)_{n=1}^\rdim
\\
= &
\label{proofFeatureMap_injectivity}
        \Big(
            d_g\big(
                \varphi_1|_{\varphi_1(K)}^{-1}(y_{\cdot})
                    , 
                u^n
            \big)
        \Big)_{n=1}^\rdim
= 
    \varphi_2\circ \varphi_1|_{\varphi_1(K)}^{-1}(y_{\cdot})
,
\end{align}
where~\eqref{proofFeatureMap_isometry_assumption} holds since $\varphi_2$ is an isometry and~\eqref{proofFeatureMap_injectivity} holds unambiguously since $\varphi_1$ is a bijection from $\mathcal{M}$ to $K$.  Since every isometry is a bi-Lipschitz map, the compositions of bi-Lipschitz maps is again bi-Lipschitz, and since we have just shown that $\varphi|_{K}=\varphi_2\circ \varphi_1|_{\phi(K)}^{-1}$ then, $\phi|_{K}$ is a bi-Lipschitz embedding of $K$ into the $\rdim$-dimensional Euclidean space.

\item \label{step:GFM_3} \emph{Aligning to a Hyperplane in $\mathbb{R}^{\sdim}$ with a Shallow ReLU Neural Network.}
Set $\sdim \eqdef \rdim+1$.  Let $b$ be the zero vector in $\mathbb{R}^{2\rdim}$.
We now consider a variation of the example on~\citep[page 3]{cheridito2021efficient}, the respective $\rdim \times 2\rdim$ and $2\rdim\times \rdim$ block-matrices $A_1$ and $B$
\[
        A_1
    =
        \begin{pmatrix}
            I_\rdim & ~ -I_\rdim
        \end{pmatrix}
    \mbox{ and }
        B
    =
        \begin{pmatrix}
            I_\rdim \\
            -I_\rdim
        \end{pmatrix}
\]
are such that the ReLU neural network $\tilde{\psi}:\mathbb{R}^\rdim\rightarrow \mathbb{R}^\rdim$, with $\tilde{\psi}(u)\eqdef A_1\operatorname{ReLU}\bullet(Bu+b)$ satisfies $\tilde{\psi}(u)=u$, for each $u\in \mathbb{R}^\rdim$.  Consider the $\sdim\times \rdim$ block-matrix $A_2$ and the vector $a\in \mathbb{R}^{\sdim}$ given by
\[
        A_2
    =
        \begin{pmatrix}
            I_\rdim \\
            0 \\
        \end{pmatrix}
    \mbox{ and }
        a_i 
    =
        \begin{cases}
            0 & \mbox{ if } i=1,\dots,\rdim\\
            1 & \mbox{ if } i=\sdim
    .
        \end{cases}
\]
Set $A\eqdef A_2A_1$, $\psi\eqdef A\operatorname{ReLU}\bullet(Bu+b)+a$, and note that $\psi$ is a ReLU neural network.  A direct computation shows that, $\|A\|_0=2\rdim$, $\|B\|_0=2\rdim$, $\|a\|_0=1$, and $\|b\|_0=0$.  
\item \label{step:GFM_4} \emph{Injectivity of Softmax Function on Hyperplane.}
Note that, for each $u\in \mathbb{R}^\rdim$
\[
    \psi(u) = (u_1,\dots,u_N,1)^{\top}
,
\]
therefore bijectively $\psi$ maps $\mathbb{R}^\rdim$ onto the $\rdim$-dimensional hyperplane $H\eqdef \{(u,1):\,u\in \mathbb{R}^\rdim\}$ in $\mathbb{R}^{\sdim}$.  Observe that the Softmax function, given for any $u\in \mathbb{R}^{\sdim}$ by $\smash{\operatorname{Softmax}:u\mapsto (e^{u_j}/\sum_{k=1}^\rdim\,e^{u_k})_{j=1}^\sdim}$, maps $H$ surjectively and continuously onto the image set 
\begin{align*}
    \operatorname{Softmax}(\mathbb{R}^{\sdim})\subseteq \operatorname{int}(\Delta_\sdim)\eqdef \Big\{v\in [0,1]^{\sdim}:\,\sum_{i=1}^{\sdim}\,v_i=1\Big\}; 
\end{align*}
i.e.\ of the interior of the $\sdim$-simplex.  Since, 
$
    R:
        v
    \mapsto
        \big(\ln(v_i)-\ln(v_{\sdim}) + 1\big)_{i=1}^{\sdim}
$ 
is a continuous right-inverse of $\operatorname{Softmax}$ on the interior of the image set $\operatorname{Softmax}(\mathbb{R}^{\sdim})$, the softmax function $\operatorname{Softmax}$ defines a continuous bijection from $H$ onto $\operatorname{Softmax}(\mathbb{R}^{\sdim})$.  
Therefore, the map $\Psi\eqdef \operatorname{Softmax}\circ \psi\circ \phi$ is a continuous and injective when restricted to $K$. 
Since $K$ is compact and $\Psi$ is continuous then $\Psi(K)$ is a compact subset of the interior $\operatorname{int}(\Delta_\sdim)$ of the $\sdim$-simplex, then, 
$
	 C^{\star}\eqdef \max_{u\in \Phi(K)}~\min_{v\in \Delta_{\sdim}\setminus \operatorname{int}(\Delta_{\sdim})}\,\|u-v\|>0.
$
  Therefore, $\Psi(K)$ is contained in the set
$
        \Delta_{\sdim}^{\star}
    \eqdef 
        \{
            t^{\star}(u-\bar{\Delta}_{\sdim})
            +
            \bar{\Delta}_{\sdim}
        \}
$
where $\bar{\Delta}_{\sdim}\eqdef (1/\sdim,\dots,1/\sdim)$ is the barycenter of the $\sdim$-simplex $\Delta_{\sdim}$ for some $t^{\star}\in [0,1)$ such that $\Psi(K)\subseteq \Delta_{\sdim}^{\star}$ (which is possible since $C^{\star}>0$).  Since the map $R$
is locally Lipschitz on $\operatorname{int}(\Delta_\sdim)$, it is Lipschitz on the compact set $\Delta_{\sdim}^{\star}$ then $\Psi$ has a Lipschitz inverse on its image since it is the composition of Lipschitz functions with Lipschitz inverses.  
\item \label{step:GFM_5} \emph{Representation as an Attention Mechanism.}
Set $\theta_0 = (A,B,a,b, \{y^{(n)}_{\cdot}\}_{n=1}^\rdim)$ and observe that the map $\Psi$ is of the form~\eqref{eq:sim_score}.  
We continue by choosing a map of the form \eqref{eq:pos_encoding}. For this, let $\tdim \in \N$, set $\pdim \eqdef \sdim$ and let $U$ be the $\pdim \times \tdim$-dimensional zero matrix, and let $V$ be the $\pdim \times d_Y$-dimensional matrix whose entries are all equal to $1$.  Set $\theta_1\eqdef (U,V,\{n/{\pdim}\}_{n=1}^{\tdim})$.  Define the encoding dimension $\edim \eqdef \sdim$ and let $C$ be the $\edim \times \edim\cdot  d_Y$-dimensional matrix given by $Cx = (x_{n \cdot d_Y})_{n=1}^{\edim}$ for every $x\in \mathbb{R}^{\edim d_Y}$.  Set $\theta\eqdef (\theta_0,\theta_1,C)$ and observe that 
\[
        \operatorname{attn}_T^{\theta}(t,y_{\cdot}) 
    \eqdef
        \Big(
                t
            ,
                C\,
                \operatorname{vec}\big(
                        \operatorname{sim}_T^{\theta_0}(y_{\cdot}) 
                    \odot
                        \operatorname{post}_T^{\theta_1}(y_{\cdot})
                \big)
        \Big)
    = 
        \big(1_{[0:T]}\times \Psi\big)
            (t,y_{\cdot})
\]
for each $y_{\cdot}\in K.$ and every $y\in [0:T]$. 
Consequentially, $\operatorname{attn}_T^{\theta_{\cdot}}$ is a bi-Lipschitz embedding of $[0:T]\times K$ to $\mathbb{R}^{1 + \edim}$.  Note that $\|C\|_0= \pdim$, $\|U\|_0=0$, and $\|V\|_0= \pdim d_Y$.  Thus, $\|\theta\|_0\in \mathcal{O}(\pdim \cdot d_Y + \rdim)$.
\end{enumerate}
\end{proof}

\begin{lemma}[Stable Lossless Feature Maps -- Finite Case]
\label{lem:Good_Feature_Map__FiniteVariant}
Under \Cref{ass:reg_compact} \ref{ass:reg_compact_finite} and let $\rdim \eqdef \#K$.  
Set $\edim=192\lceil\log(\rdim)\rceil$ and $\{y_i\}_{i=1}^k$ enumerate $K$.  
There is a parameter $\theta$ as in \Cref{defn:PathAttention} defining a pathwise attention mechanism $\operatorname{attn}_T^{\theta}:C([0:T],\mathbb{R}^{d_Y})\rightarrow \mathbb{R}^{\edim}$ which restricts to a $(2^{-1/2},(3\rdim/6)^{1/2})$-bi-Lipschitz embedding of $(K,\|\cdot\|_T)$ into $(\mathbb{R}^{1+\edim},\|\cdot\|_2)$.
In particular, $\operatorname{attn}_T$ satisfies \citep[Setting 3.6 (i)]{kratsios2023transfer}. 
\end{lemma}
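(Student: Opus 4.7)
The plan is to combine a Fréchet-type distance encoding of $K$ with a Johnson--Lindenstrauss random projection, and then to package the composition as a pathwise attention mechanism along the lines of the proof of Lemma~\ref{lem:Good_Feature_Map}. This is the natural ``finite version'' of that lemma, with the advantage that injectivity-to-bi-Lipschitz arguments are automatic because $K$ is finite.

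First, enumerate $K = \{y^{(1)},\dots,y^{(\rdim)}\}$ and take those paths as the reference paths in \Cref{defn:Sim_Score}. Define $\varphi:C([0:T],\R^{d_Y})\to\R^\rdim$ by $\varphi(y_\cdot) \eqdef (\|y_\cdot - y^{(n)}_\cdot\|_T)_{n=1}^\rdim$. For any distinct $y_j,y_k\in K$, the $j$-th coordinate of $\varphi(y_j)-\varphi(y_k)$ equals $\|y_j-y_k\|_T$, while the reverse triangle inequality bounds every other coordinate by $\|y_j-y_k\|_T$. Hence $\varphi|_K$ is $(1,\sqrt{\rdim})$-bi-Lipschitz into $\R^\rdim$. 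Next, to compress from $\rdim$ down to $\edim = 192\lceil\log\rdim\rceil$ dimensions, apply the Dasgupta--Gupta form of Johnson--Lindenstrauss to the finite set $\varphi(K)\subset\R^\rdim$: one obtains a linear map $\Pi:\R^\rdim\to\R^\edim$ preserving every pairwise distance within a multiplicative factor $(1\pm\epsilon_0)$, where $\epsilon_0$ is chosen so that the Dasgupta--Gupta dimension bound $4/(\epsilon_0^2/2-\epsilon_0^3/3)$ rounds up to $192$. A uniform scalar rescaling $c>0$ then tunes the composition $\Psi \eqdef c\,\Pi\circ\varphi|_K$ to be $(2^{-1/2},(3\rdim/6)^{1/2})$-bi-Lipschitz from $(K,\|\cdot\|_T)$ into $\R^\edim$.

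Finally, I realize $\Psi$ as an attention mechanism by reusing Steps~3--5 of the proof of Lemma~\ref{lem:Good_Feature_Map}: a shallow ReLU block followed by the $\operatorname{Softmax}$ layer produces an injective image of $\varphi(K)$ in the interior of a simplex, and finiteness of the image makes it automatically bi-Lipschitz on that image with finite constants. The positional encoding is taken trivially (set $U=0$ and let $V$ be a constant matrix, so $\operatorname{post}_T^{\theta_1}$ is constant), so that the Hadamard product in~\eqref{eq:featuremap} reduces to a column-rescaling; the combining matrix $C$ is then tuned to simultaneously invert the log-softmax distortion, implement $\Pi$, and apply the normalization $c$. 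This yields a parameter $\theta$ such that $\operatorname{attn}_T^{\theta}$ coincides with $(1_{[0:T]}\times\Psi)$ on $[0:T]\times K$, providing the required embedding of $(K,\|\cdot\|_T)$ into $(\R^{1+\edim},\|\cdot\|_2)$. The \cite[Setting 3.6 (i)]{kratsios2023transfer} condition is automatic from bi-Lipschitzness and finiteness of $K$. The main obstacle in the write-up will be purely arithmetic bookkeeping: propagating the JL distortion together with the softmax Lipschitz constants on the finite image and the rescaling $c$ so as to land exactly on the pair $(2^{-1/2},(3\rdim/6)^{1/2})$; conceptually, however, the argument is just Fréchet embedding plus JL, packaged into the attention template.
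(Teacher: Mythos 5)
Your overall route is the same as the paper's: enumerate $K$, use the distance-to-reference-paths (Kuratowski/Fr\'echet) embedding, which you correctly show is $(1,\rdim^{1/2})$-bi-Lipschitz into Euclidean space, compress with Johnson--Lindenstrauss to $\mathcal{O}(\log \rdim)$ dimensions, and then package the result via Steps 3--5 of the proof of \Cref{lem:Good_Feature_Map}. The paper does exactly this (it passes through $\ell_\infty$ and norm equivalence rather than bounding coordinates directly, and uses the Dubhashi--Panconesi form of JL with $\epsilon=1/2$, whose constants $2^{-1/2}$ and $(3/2)^{1/2}$ produce the pair $(2^{-1/2},(3\rdim/2)^{1/2})$ directly, with no rescaling).

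There is, however, one step in your packaging that would fail as written. You place the JL matrix $\Pi$ and the normalization inside the output matrix $C$ and assert that $C$ can also ``invert the log-softmax distortion.'' A fixed linear map cannot invert the softmax, whose inverse on the simplex interior is the nonlinear map $v\mapsto(\ln(v_i)-\ln(v_{\sdim})+1)_i$; moreover, in \Cref{defn:PathAttention} the matrix $C$ acts only \emph{after} $\operatorname{sim}_T^{\theta_0}$, i.e.\ after the softmax, so the JL guarantee (which you established for the raw distance vectors $\varphi(K)$) no longer applies to the point set $C$ sees. The paper avoids this by folding the JL matrix $A_0$ into the weight $B$ of the ReLU block inside the similarity score (setting $B\eqdef B_1A_0$), so the compression happens before the softmax and the distortion bound transfers verbatim; the softmax then only needs to be injective with Lipschitz inverse on a compact subset of the simplex interior, as in Step 4 of \Cref{lem:Good_Feature_Map}. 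Relatedly, your plan to reach the stated constants by a uniform rescaling $c$ cannot work in principle: rescaling multiplies both bi-Lipschitz constants by $c$ and so preserves their ratio, whereas any nontrivial JL distortion strictly increases the ratio above $\rdim^{1/2}$; the constant $(3\rdim/6)^{1/2}$ in the statement is best read as a typo for the $(3\rdim/2)^{1/2}$ that the paper's proof actually derives.
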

\begin{proof}\textbf{of Lemma~\ref{lem:Good_Feature_Map__FiniteVariant}}
We argue similarly to the proof of \Cref{lem:Good_Feature_Map}, with only Steps \ref{step:GFM_1} and \ref{step:GFM_2} of its proof being replaced with the following argument.  
\begin{enumerate}[label=\emph{Step \arabic*:}, ref=\arabic*, wide]
\item[\emph{Steps 1-2}] \textit{(Modified): Building the Feature Map.}\hfill\\
Enumerate $\smash{K=\{x_n\}_{n=1}^\rdim}$.  The map $\smash{\phi_1:(K,\|\cdot\|_T)\rightarrow (\mathbb{R}^\rdim,\|\cdot\|_{\infty})}$ given for every $x\in K$ by $\smash{x\mapsto \big(\|x-x_n\|_t\big)_{n=1}^\rdim}$ is an isometric embedding, called the \textit{Kuratowski embedding} (see \cite[page 99]{Heinonen_2001_BookAnalMetSpace}).  The optimal constants for the equivalence of the Euclidean $\|\cdot\|_2$ and max $\|\cdot\|_{\infty}$ norms are given by
\begin{equation}
\label{eq:equivalence_2infty_norms}
    \|u\|_{\infty} \le \|u\|_2
    \quad\mbox{ and }\quad
    \|u\|_2\le \rdim^{1/2}\,\|u\|_{\infty}
\end{equation}
for each $u\in\mathbb{R}^\rdim$.  Therefore, the ``set theoretic identity map'' $\phi_2:(\mathbb{R}^\rdim,\|\cdot\|_{\infty})\rightarrow (\mathbb{R}^\rdim,\|\cdot\|_2)$ is bi-Lipschitz with optimal (shrinking and expansion) constants given by~\eqref{eq:equivalence_2infty_norms}.  
Define $\sdim \eqdef \lceil48\,\ln(\rdim)\rceil$.
By the Johnson-Lindenstrauss Lemma, with ``small'' constant given in the derivation of \citep[Theorem 2.1]{DubhashiPanconesi_2009_Concentration}
\footnote{We use this formulation since the constant in $4*48$ as opposed to $4*200$ in the standard derivation; found for example in \cite{JohnsonLindenstrauss_1984_originalpaper}.}
, exists a linear map $\phi_3:(\mathbb{R}^\rdim,\|\cdot\|_2)\rightarrow (\mathbb{R}^{\sdim},\|\cdot\|_2)$, i.e.\ $\varphi_3(u)=A_0\,u$ for some $\sdim \times \rdim$-matrix $A_0$, satisfying: For every $u,v\in \mathbb{R}^\rdim$
\[
    2^{-1/2}
    \,
    \|u-v\|_2
    \le 
        \|A_0\,u-A_0\,v\|_2
    \le 
    (3/2)^{1/2}
    \,
    \|u-v\|_2
    .
\]
Consequentially, the map $\varphi:(K,\|\cdot\|_T)\rightarrow (\mathbb{R}^{\sdim},\|\cdot\|_2)$ given by $\varphi\eqdef \varphi_3\circ \varphi_2\circ \varphi_1$ is $(2^{-1/2},(3\rdim/2)^{1/2})$-bi-Lipschitz, since the Kuratowksi embedding $\varphi_1$ is an isometry and $\varphi_2$ satisfies~\eqref{eq:equivalence_2infty_norms}; this is because for each $x,\tilde{x}\in K$ we have
\allowdisplaybreaks
\begin{align*}
    2^{-1/2}
    \,
        \|x-\tilde{x}\|_T
    = &
        2^{-1/2}
    \,
        \|\varphi_1(x)-\varphi_1(\tilde{x})\|_{\infty}
\\
    \le & 
        2^{-1/2}
    \,
        \|\varphi_2\circ \varphi_1(x)-\varphi_2\circ \varphi_1(\tilde{x})\|_2
\\
    \le & 
        \|\varphi_3\circ \varphi_2\circ \varphi_1(x)-\varphi_3\circ \varphi_2\circ \varphi_1(\tilde{x})\|_2
\\
    \eqdef &
        \|\varphi(x)-\varphi(\tilde{x})\|_2  
\\
    \eqdef &
        \|\varphi_3\circ \varphi_2\circ \varphi_1(x)-\varphi_3\circ \varphi_2\circ \varphi_1(\tilde{x})\|_2
\\
    \le &
        (3/2)^{1/2}
        \,
        \|\varphi_2\circ \varphi_1(x)-\varphi_2\circ \varphi_1(\tilde{x})\|_2
\\
    \le & 
        (3/2)^{1/2}
        \,
        \rdim^{1/2}\,
        \| \varphi_1(x)-\varphi_1(\tilde{x}) \|_{\infty} 
    = 
        (3\rdim/2)^{1/2}
        \,
        \| x-\tilde{x} \|_T 
.
\end{align*}
\item[\emph{Remaining Steps.}]
The rest of the proof is identical to Steps \labelcref{step:GFM_3,step:GFM_4,step:GFM_5}\footnote{Here $\sdim$ corresponds to $\sdim - 1$ in Step \ref{step:GFM_3}.} of the proof of \Cref{lem:Good_Feature_Map} but with $B$ defined instead as $B\eqdef B_1A_0$, where $B_1=  (I_\sdim\, \,I_\sdim)^{\top}$.
\end{enumerate}
This concludes the proof.
\end{proof}

\begin{lemma}[Stable Lossless Feature Maps -- Linear Case]
\label{lem:biLip_pw_lin}
Fix $\pldim \in \mathbb{N}_+$, $0=t_0<\dots<t_\pldim=T$, and a constant $C_K>0$.
Let $K\subset C([0:T],\mathbb{R}^{d_Y})$ satisfy \Cref{ass:reg_compact} \ref{ass:reg_compact_linear}. 
Then, there is a parameter $\theta$, as in \Cref{defn:PathAttention}, defining a pathwise attention mechanism $\operatorname{attn}_T^{\theta}:C([0:T],\mathbb{R}^{d_Y})\rightarrow \mathbb{R}^{\edim + 1}$, with $\edim \eqdef \pldim \cdot d_Y$ which restricts to a bi-Lipschitz embedding of $(K,\|\cdot\|_T)$ into $(\mathbb{R}^{\edim + 1},\|\cdot\|_2)$.
Furthermore, $\|\theta\|_0\in \mathcal{O}(\pldim \cdot d_Y)$.

In particular, $\operatorname{attn}_T$ satisfies \citep[Setting 3.6 (i)]{kratsios2023transfer}.
\end{lemma}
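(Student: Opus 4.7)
The plan is to realize the knot-sampling map $y_\cdot \mapsto (y(t_1), \ldots, y(t_\pldim))$ as a pathwise attention mechanism and then to show this map is bi-Lipschitz on $K$, in the same spirit as Steps \ref{step:GFM_3}--\ref{step:GFM_5} of \Cref{lem:Good_Feature_Map} but with a much simpler underlying encoding.

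First, I would establish the bi-Lipschitz property of the sampling map. For any two paths $y_\cdot, y'_\cdot \in K$, which by \Cref{ass:reg_compact} \ref{ass:reg_compact_linear} share the knots $0 = t_0 < t_1 < \cdots < t_\pldim = T$ and satisfy $y_0 = y'_0 = 0$, the difference $y_\cdot - y'_\cdot$ is piecewise linear with the same knots. On each piece $[t_i, t_{i+1}]$, the map $t \mapsto \|y(t) - y'(t)\|_2$ is the Euclidean norm of an affine map, hence convex, so its maximum on $[t_i, t_{i+1}]$ is attained at an endpoint. This yields
\begin{align*}
  \|y_\cdot - y'_\cdot\|_T
  = \max_{0 \le i \le \pldim} \|y(t_i) - y'(t_i)\|_2
  = \max_{1 \le i \le \pldim} \|y(t_i) - y'(t_i)\|_2,
\end{align*}
and the standard equivalence of $\ell^\infty$ and $\ell^2$ gives $\|y_\cdot - y'_\cdot\|_T \le \|\phi(y_\cdot) - \phi(y'_\cdot)\|_2 \le \sqrt{\pldim}\,\|y_\cdot - y'_\cdot\|_T$ for the sampling map $\phi(y_\cdot) \eqdef (y(t_1), \ldots, y(t_\pldim)) \in \mathbb{R}^{\pldim d_Y}$.

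Second, I would realize $\phi$ by the attention mechanism. Set $\sdim = \pdim = \tdim = \pldim$, pick the query times in the positional encoding to be $t_1, \ldots, t_\pldim$, and choose $U = I_\pldim$ and $V = 0$, so that $\operatorname{post}_T^{\theta_1}(y_\cdot)$ is precisely the $\pldim \times d_Y$ matrix whose $i$-th row is $y(t_i)^\top$. For the similarity score I would set $A = 0$, $B = 0$, $a = 0$, $b = 0$ (with any single arbitrary reference path included to fill in the data structure), so that the softmax argument is identically zero and $\operatorname{sim}_T^{\theta_0}(y_\cdot) = \frac{1}{\pldim}(1, \ldots, 1)^\top$ is constant. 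The Hadamard product then rescales each row of $\operatorname{post}_T^{\theta_1}(y_\cdot)$ by $1/\pldim$, and taking $C = \pldim \cdot I_{\pldim d_Y}$ with $\edim \eqdef \pldim d_Y$ cancels this factor, giving
\begin{align*}
  \operatorname{attn}_T^{\theta}(t, y_\cdot)
  = \bigl(t,\, y(t_1),\, \ldots,\, y(t_\pldim)\bigr) \in \mathbb{R}^{1 + \pldim d_Y}.
\end{align*}

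Combining the two steps, the restriction of $\operatorname{attn}_T^\theta$ to $[0:T] \times K$ is bi-Lipschitz into $(\mathbb{R}^{\edim + 1}, \|\cdot\|_2)$ with constants controlled by those of $\phi$, which is exactly what is needed for \cite[Setting 3.6 (i)]{kratsios2023transfer}. Counting the nonzero entries of the chosen parameters — $\pldim$ entries in $U$, $\pldim d_Y$ entries in $C$, and $\pldim$ query times — yields $\|\theta\|_0 \in \mathcal{O}(\pldim d_Y)$. The only mildly delicate point is the convexity argument reducing the uniform norm of a piecewise-linear difference to a knot-wise maximum; everything else is a dimension-matching exercise that collapses the attention mechanism to the knot-sampling map.
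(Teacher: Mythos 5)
Your construction of the attention parameters is the same as the paper's (trivialize the similarity score so that the softmax outputs uniform weights, let the positional encoding sample the path at the knots $t_1,\dots,t_\pldim$, and absorb the constant factor into $C$), and your proposal is correct. Where you genuinely diverge is in how bi-Lipschitzness of the knot-sampling map $\phi(y_\cdot)=(y(t_1),\dots,y(t_\pldim))$ is established. The paper argues abstractly: $\phi$ is a linear bijection between $K$ and a subset of the finite-dimensional space $\mathbb{R}^{\pldim\times d_Y}$, and all norms on a finite-dimensional normed space are equivalent, hence $\phi$ is bi-Lipschitz --- a qualitative argument with no explicit constants. You instead observe that for $y_\cdot,y'_\cdot\in K$ the difference is piecewise affine with the shared knots, so $t\mapsto\|y(t)-y'(t)\|_2$ is convex on each piece and attains its maximum at a knot, giving the identity $\|y_\cdot-y'_\cdot\|_T=\max_{1\le i\le\pldim}\|y(t_i)-y'(t_i)\|_2$ and hence the explicit bi-Lipschitz constants $(1,\sqrt{\pldim})$. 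Your route buys quantitative control of the distortion (which matters downstream, since the encoder's bi-Lipschitz constants enter the approximation rates), at the cost of needing the small convexity observation; the paper's route is shorter but non-constructive in the constants. As a minor additional point, your choice $C=\pldim\cdot I_{\pldim d_Y}$ correctly compensates for $\operatorname{Softmax}(0)=\tfrac1\pldim(1,\dots,1)$ and preserves injectivity, which is cleaner than the paper's choice of an all-ones matrix for $C$.
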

\begin{proof}\textbf{of Lemma~\ref{lem:biLip_pw_lin}}\hfill\\
Let $\rdim \in \N_+$, set $\sdim \eqdef \pldim$, and define $A, B$ as zero-matrices as well as $a, b$ as zero-vectors according to the dimensions in \Cref{defn:Sim_Score}.
By fixing $\theta_0=(A,B,a,b)$, we observe that: for all $y\in C([0:T],\mathbb{R}^{d_Y})$
\begin{align}
\label{eq:bijection_samplingA}
    \operatorname{sim}_T^{\theta_0}(y_{\cdot}) = 1 \in \R^{\pldim}
.
\end{align}
Set $\tdim \eqdef \pdim \eqdef \pldim$. Then, let $V$ be the zero matrix and $U$ the identity matrix according to the dimensions in \Cref{defn:PositionalEncoding}\footnote{The point $t_0$ is not sampled since every path $y_{\cdot}$ in $K$ satisfies $y_0=0$ and, thus, there is no need to sample it at time $0$.}. Set $\theta_1=(U,V,\{t_n\}_{n=1}^\pldim)$ and with that, 
\begin{equation}
\label{eq:bijection_samplingB}
        \operatorname{pos}_T^{\theta_1}(y_{\cdot})
    =
        (y_1,\dots,y_\pldim).
\end{equation}
In particular, the map $\operatorname{pos}_T^{\theta_1}$ in \eqref{eq:bijection_samplingB} defines a linear bijection between $K$ and the set $B_{C_K}(0)^\pldim$, where $B_{C_K}(0)  \eqdef  \lbrace z \in \R^{d_Y} : \vert z \vert \leq C_K\rbrace$.  Since $B_{C_K}(0)^\pldim$ is a subset of the finite-dimensional Banach space $(\mathbb{R}^{\pldim \times d_Y},\|\cdot\|_2)$ and all norms on a finite-dimensional normed are equivalent, then, $\operatorname{pos}_T^{\theta_1}$ is a bi-Lipschitz embedding of $K$ into $(\mathbb{R}^{\pldim \times d_Y},\|\cdot\|_2)$.  

Fix $\edim \eqdef \pldim \cdot d_Y$ and let $C$ be the $\edim \times \edim$-dimensional matrix with $C_{i,j}=1$ for $i,j=1,\dots,\edim$.  Set $\theta=(\theta_0,\theta_1,C)$.  By construction $\|\theta\|_0\in \mathcal{O}(\pldim \cdot d_Y)$.  
Then, 
\begin{equation}
\label{eq:bijection_sampling__RelabelingStuff}
        \operatorname{attn}_T^{\theta}(t,y_{\cdot})
    \eqdef 
        \Big(
                t
            ,
                C\,
                \operatorname{vec}\big(
                        \operatorname{sim}_T^{\theta_0}(y_{\cdot}) 
                    \odot
                        \operatorname{post}_T^{\theta_1}(y_{\cdot})
                \big)
        \Big)
    =
        \big(t,(y_1,\dots,y_\pldim)\big)
    .
\end{equation}
Together, \labelcref{eq:bijection_samplingA,eq:bijection_samplingB} imply that $\operatorname{attn}_T^{\theta}$ defines a bi-Lipschitz embedding of $K$ into the $\edim+1$-dimensional Euclidean space.
\end{proof}

\begin{proof}\textbf{of \Cref{prop:Lossless_Encoding}}\hfill\\
Follows directly from \Cref{lem:Good_Feature_Map,lem:Good_Feature_Map__FiniteVariant,lem:biLip_pw_lin}.
\end{proof}

\subsection{Approximability of Locally Lipschitz Maps By FFs - Proof of \texorpdfstring{\Cref{prop:Universal_Approximation_Theorem__PathToNd}}{Proposition 4}}
We now prove our main universal approximation theorem, \Cref{prop:Universal_Approximation_Theorem__PathToNd}.  We show that the target space/codomain of any considered FF is geometrically regular, in the sense of \cite{Acciaio2022_GHT,kratsios2023transfer}.  Using this fact, we then combine and apply the results of \cite{kratsios2021universal,kratsios2023transfer} to deduce the result.

\subsubsection{Polish QAS Space Structure on the space of Positive Semi-Definite Matrices}

\label{s:Proof__ss:Universality___sss:Outputs}
Let $d_X \in \mathbb{N}_+$, and $\operatorname{Sym}_{0,d_X}$ denote the set of $d_X\times d_X$ symmetric positive semi-definite matrices and let $\|\cdot\|_F$ denote the Frobenius norm on the set of $d_X\times d_X$ matrices.\footnote{The $0$ emphasizes that there is no rank-restriction on the matrices in $\operatorname{Sym}_{0,d_X}$ unlike, for example, in \cite{herrera2023Denise_TMLR,neuman2023restricted}.}  Consider the $2$-product metric on $\mathbb{R}^{d_X}\times \operatorname{Sym}_{0,d_X}$ given for any $(m^{(1)},A),(m^{(2)},B)$ by 
$
        \operatorname{d}_{2,F}\big(
            (m^{(1)},A),(m^{(2)},B)
        \big)^2
    \eqdef 
        \|m^{(1)}-m^{(2)}\|^2+\|A-B\|_F^2
$.
We show that $(\mathbb{R}^{d_X}\times \operatorname{Sym}_{0,d_X},\operatorname{d}_{2,F})$ satisfies the conditions of \citep[Theorem 3.7]{kratsios2023transfer}; namely \citep[Setting 3.6 (iii)]{kratsios2023transfer}.
This requires defining a few maps first.  
For $q\in \mathbb{N}$ define the map $\mathcal{Q}_q:\mathbb{R}^{d_X+d_X^2}\rightarrow \mathbb{R}^{d_X}\times \operatorname{Sym}_{0,d_X}$ by sending any $(m,A)\in \mathbb{R}^{d_X}\times \operatorname{Sym}_{0,d_X}$ to
$
        \mathcal{Q}_q((m,A)) 
    \eqdef 
        (m,A^{\top}A)
$. 
The family $\mathcal{Q}\eqdef (\mathcal{Q}_q)_{q\in \mathbb{N}_+}$ quantizes $(\mathbb{R}^{d_X}\times \operatorname{Sym}_{0,d_X},\operatorname{d}_{2,F})$, in the sense of \citep[Definition 3.2]{Acciaio2022_GHT}.

Next, we consider the so-called \textit{mixing function }$\eta:\cup_{N\in \mathbb{N}_+}\,\Delta_N\times (\mathbb{R}^{d_X}\times \operatorname{Sym}_{0,d_X})^{N} \rightarrow 
\mathbb{R}^{d_X}\times \operatorname{Sym}_{0,d_X}$, where $N\in\mathbb{N}_+$ and $\Delta_N$ is the $N$-simplex, defined for any $N\in\mathbb{N}_+$, $w\in \Delta_N$, and $(m^{(1)},A^{(1)}),\dots,(m^{(N)},A^{(N)})\in \mathbb{R}^{d_X}\times \mathbb{R}^{d_X^2}$ by $
    \eta\big(
            w
        ,
            \{(m^{(n)},A^{(n)}\}_{n=1}^N
    \big)
        \eqdef 
            \sum_{n=1}^N\,
                w_n
                \cdot
                (m^{(n)},A^{(n)})
$.  
Note that by the convexity of $\mathbb{R}^{d_X}\times \operatorname{Sym}_{0,d_X}$ and the fact that each $w\in \Delta_N$, for some $N\in \mathbb{N}_+$, then $\eta$ does indeed take values in $\mathbb{R}^{d_X}\times \operatorname{Sym}_{0,d_X}$.  The mixing function $\eta$ will be used to inscribe ``abstract geodesic simplices'' in $(\mathbb{R}^{d_X}\times \operatorname{Sym}_{0,d_X},\operatorname{d}_{2,F})$ thereby endowing it with the structure of an approximately simplicial space, in the sense of \citep[Definition 3.1]{Acciaio2022_GHT}.

\begin{lemma}
\label{lem:QAS_Space_Sym0d}
$(\mathbb{R}^{d_X}\times \operatorname{Sym}_{0,d_X},\mathcal{Q},\eta)$ is a barycentric QAS space, in the sense of \citep[Definition 3.4]{Acciaio2022_GHT}.  In particular, it satisfies \citep[Setting 3.6 (iii)]{kratsios2023transfer}.
\end{lemma}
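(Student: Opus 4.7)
\textbf{of Lemma~\ref{lem:QAS_Space_Sym0d} (Plan)}\hfill\\

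The plan is to verify each piece of the barycentric QAS structure directly, exploiting the fact that $(\mathbb{R}^{d_X}\times \operatorname{Sym}_{0,d_X},\operatorname{d}_{2,F})$ is a closed, convex subset of a finite-dimensional Euclidean space. In particular, this already guarantees that the ambient metric space is Polish, so the only substantive work is to verify the quantization and mixing axioms of \citep[Definitions 3.2, 3.4]{Acciaio2022_GHT} and deduce \citep[Setting 3.6 (iii)]{kratsios2023transfer}.

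First, I would check that $\mathcal{Q}=(\mathcal{Q}_q)_{q\in\mathbb{N}_+}$ is a bona fide quantization family. Each $\mathcal{Q}_q$ is the restriction of the globally defined polynomial map $(m,A)\mapsto (m, A^\top A)$, which is continuous from $\mathbb{R}^{d_X+d_X^2}$ into $\mathbb{R}^{d_X}\times\operatorname{Sym}_{0,d_X}$ (since $A^\top A$ is symmetric and positive semi-definite for every $A$). Surjectivity onto $\mathbb{R}^{d_X}\times \operatorname{Sym}_{0,d_X}$ follows from the existence of a symmetric positive-semi-definite square root for any matrix in $\operatorname{Sym}_{0,d_X}$ (via the spectral theorem), which in turn yields an exact representation of every point as $\mathcal{Q}_q(m,A)$. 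Local-Lipschitz continuity of $\mathcal{Q}_q$ on bounded subsets of its Euclidean domain (with an explicit Lipschitz constant in $\|A\|_F$) suffices to match the quantization axiom; because the representation is exact, no residual ``quantization error'' enters into later approximation arguments.

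Next, I would verify that $\eta$ is a valid mixing function. Continuity in $(w,\{(m^{(n)},A^{(n)})\}_{n=1}^N)$ is immediate since $\eta$ is a multilinear combination. Convexity of $\operatorname{Sym}_{0,d_X}$ — a convex combination of symmetric positive semi-definite matrices is symmetric positive semi-definite because $\sum_n w_n\,x^\top A^{(n)}x\ge 0$ for every $x\in\mathbb{R}^{d_X}$ — ensures that the output lies in $\mathbb{R}^{d_X}\times\operatorname{Sym}_{0,d_X}$. Then I would check the inscribed ``geodesic simplex'' condition: since $\operatorname{d}_{2,F}$ coincides with the Euclidean metric on $\mathbb{R}^{d_X+d_X(d_X+1)/2}$ via the isometric identification of $\operatorname{Sym}_{0,d_X}$ as a convex cone in a Euclidean space, Euclidean straight-line segments are $\operatorname{d}_{2,F}$-geodesics. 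Thus the simplices inscribed by $\eta$ are actual Euclidean simplices and the ``approximately simplicial'' defect vanishes. The barycentric condition reduces to checking that, for weights $w\in\Delta_N$, the point $\sum_n w_n (m^{(n)},A^{(n)})$ minimizes the weighted sum of squared distances $\sum_n w_n\,\operatorname{d}_{2,F}^2(\cdot,(m^{(n)},A^{(n)}))$, which is a classical fact in Euclidean barycenter theory.

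Finally, I would conclude by transferring the three preceding verifications into the hypotheses of \citep[Setting 3.6 (iii)]{kratsios2023transfer}, which simply amounts to rewriting the above pieces in their notation. The step I anticipate to be the most delicate is not any single computation but, rather, carefully aligning conventions (i.e.\ which Euclidean identification of $\operatorname{Sym}_{0,d_X}$ is used, and the normalization of $\mathcal{Q}_q$ across $q$), since an inconsistent identification can spuriously change the Lipschitz constants appearing downstream in \Cref{tab:approximationrates__nonFlatVersion}. Once those conventions are fixed, no further analytic input is needed, as the entire argument is driven by the Euclidean convexity of $\operatorname{Sym}_{0,d_X}$ together with the spectral square-root representation.
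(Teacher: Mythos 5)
Your plan follows essentially the same route as the paper: both arguments run on the Euclidean convexity of $\mathbb{R}^{d_X}\times \operatorname{Sym}_{0,d_X}$, surjectivity of $(m,A)\mapsto (m,A^{\top}A)$ via the symmetric square root, and convexity of the positive semi-definite cone to show $\eta$ lands in the right set. The mixing-function axiom, which you dispatch by observing that straight lines are $d_{2,F}$-geodesics so the ``approximately simplicial'' defect vanishes, is verified in the paper by the equivalent explicit estimate $d_{2,F}(\eta(w,\{y_n\}_n),y_i)\le \sum_n w_n\, d_{2,F}(y_n,y_i)$ using $\sum_n w_n=1$; these are the same computation. Your remark that exactness of the representation kills any quantization error matches the paper's observation that the modulus of quantizability is constant.

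The one place where your plan does not line up with what actually needs to be proved is barycentricity. You reduce it to the statement that $\sum_n w_n(m^{(n)},A^{(n)})$ minimizes the weighted sum of squared distances, i.e.\ to the existence of Fr\'{e}chet means of finitely supported measures. But ``barycentric'' in the sense used here (\citep[Section 3.2]{basso2018fixed}) requires a \emph{contracting $1$-barycenter map}: a map defined on all of $\mathcal{P}_1(\mathbb{R}^{d_X}\times \operatorname{Sym}_{0,d_X})$ that is $1$-Lipschitz with respect to the $1$-Wasserstein distance and inverts $\delta_{(\cdot)}$. The paper supplies this by taking the Bochner expectation $\mathbb{P}\mapsto \mathbb{E}_{X\sim\mathbb{P}}[X]$, citing that this is the contracting barycenter map on a normed linear space, and then invoking Jensen's inequality for closed convex sets to confirm the expectation stays inside $\mathbb{R}^{d_X}\times \operatorname{Sym}_{0,d_X}$. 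Your Fr\'{e}chet-mean check establishes neither the contraction property nor closure under integration, so as written this step is a gap — albeit one that is repaired by the same Euclidean-convexity philosophy you are already using, once the correct definition is targeted.
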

\begin{proof}\textbf{of Lemma~\ref{lem:QAS_Space_Sym0d}}\hfill\\
We first show that $(\mathbb{R}^{d_X}\times \operatorname{Sym}_{0,d_X},\mathcal{Q},\eta)$ is a QAS space, as defined in \citep[Definition 3.4]{Acciaio2022_GHT}.  We also show that it is barycentric, meaning that it admits a $1$-barycenter map as defined, for example, in \citep[Section 3.2]{basso2018fixed}.
\hfill\\
\textit{QAS Space Structure}
\hfill\\
Since $\operatorname{Sym}_{0,d_X}$ is a closed convex subset of space $(\mathbb{R}^{d\times d},\|\cdot\|_F)$.  Since the Cartesian product of closed subsets is a closed subset of $(\mathbb{R}^{d^2+d},d_{2,F})$ and since the product of convex sets is again convex by \citep[Proposition 3.6]{CombettesBauschke_Book_ConvexAnalysisMonotonOperators_2017} then, $\mathbb{R}^{d_X}\times \operatorname{Sym}_{0,d_X}$ is a closed and convex subset of the normed linear space $(\mathbb{R}^{d+d^2},d_{2,F})$.  
Now, for $q\in \mathbb{N}$ define the map $\mathcal{Q}_q:\mathbb{R}^{d+d^2}\rightarrow \mathbb{R}^{d_X}\times \operatorname{Sym}_{0,d_X}$ as sending any $(m,A)\in \mathbb{R}^{d_X}\times \operatorname{Sym}_{0,d_X}$ to
$
        \mathcal{Q}_q((m,A)) 
    \eqdef 
        (m,A^{\top}A)
    .
$
The map $\mathcal{Q}_q$ is a surjection since every symmetric matrix is the square of some $d\times d$ matrix \cite{TBD}.  
Therefore, $\mathcal{Q}_{\cdot}=(\mathcal{Q}_q)_{q\in\mathbb{N}_+}$ trivially satisfies \citep[Definition 3.2]{Acciaio2022_GHT} with modulus of quantizability, defined on \citep[page 12]{Acciaio2022_GHT}, given by $\mathcal{Q}_{K}(\varepsilon)=D_1\eqdef d(d+1)$ for every compact subset $K$ of $\mathbb{R}^{d_X}\times \operatorname{Sym}_{0,d_X}$ and for every $\varepsilon>0$.  
For any $N\in \mathbb{N}_+$, $w\in \Delta_N$, and $(m^{(1)},A^{(1)}),\dots,(m^{(N)},A^{(N)})\in \mathbb{R}^{d_X}\times \mathbb{R}^{d\times d}$ we have
\allowdisplaybreaks
\begin{align}
\nonumber
        & d_{2,F}\Big(
            \eta\big(
                w
            ,
                \{(m^{(n)},A^{(n)}\}_{n=1}^N
            \big)
            ,
                y_i
        \Big) \\
    \notag
    & \quad = 
    \Big(
            \big\|
                \big(\sum_{n=1}^N\,w_n\,m^{(n)}\big) - m^{(i)}
            \big\|^2
        +
            \big\|
                \big(\sum_{n=1}^N\,w_n\,A^{(n)}\big) - A^{(i)}
            \big\|_F^2
    \big)^{1/2}
    \\
    \label{eq:Because_Simplex}
    & \quad = 
    \Big(
            \big\|
                \sum_{n=1}^N\,w_n\,m^{(n)} 
                    - 
                \sum_{n=1}^N\,w_n\,m^{(i)}
            \big\|^2
        +
            \big\|
                \sum_{n=1}^N\,w_n\,A^{(n)}
                - 
                \sum_{n=1}^N\,w_n\,A^{(i)}
            \big\|_F^2
    \big)^{1/2}
    \\
\nonumber
    & \quad \le
        \Big(\sum_{n=1}^N\,w_n\Big)^2\,
            \big\|
                m^{(n)} 
                    - 
                m^{(i)}
            \big\|^2
        +
        \Big(\sum_{n=1}^N\,w_n\Big)^2\,
            \big\|
                A^{(n)}) 
                - 
                A^{(i)}) 
            \big\|_F^2
    \big)^{1/2}
\\
\nonumber
    & \quad \le
    \sum_{n=1}^N\,w_n
    \,
    \Big(
            \big\|
                m^{(n)} 
                    - 
                m^{(i)}
            \big\|^2
        +
            \big\|
                A^{(n)}) 
                - 
                A^{(i)}) 
            \big\|_F^2
    \big)^{1/2}
\\
\nonumber
    & \quad = 
    1\cdot
    \sum_{n=1}^N\,w_n
    \,
        d_{2,F}\big(
            (m^{(n)},A^{(n)})
            ,
            (m^{(i)},A^{(i)})
            \big)^1
,
\end{align}
for $i=1,\dots,N$,
where~\eqref{eq:Because_Simplex} holds since $\sum_{n=1}^N\,w_n=1$ since $w\in \Delta_N$.  Thus, $\eta$ is a mixing function and therefore $(\mathbb{R}^{d_X}\times \operatorname{Sym}_{0,d_X},d_{2,F})$ is approximately simplicial, as defined in \citep[Definition 3.1]{Acciaio2022_GHT}.
Consequentially, $\smash{(\mathbb{R}^{d_X}\times \operatorname{Sym}_{0,d_X},d_{2,F},\mathcal{Q},\eta)}$ is a QAS space; as defined in \citep[Definition 3.4]{Acciaio2022_GHT}.  
\hfill\\
\textit{Barycentricity}
\hfill\\
Since $(\mathbb{R}^{d_X}\times \operatorname{Sym}_d,d_{2,F})$ is a normed linear space then \cite{BruHeinicheLootgieter1993} shows that the only contracting barycenter map is given by $\mathcal{P}_1(\mathbb{R}^{d_X}\times \operatorname{Sym}_{d},d_{2,F})\ni \mathbb{P}\rightarrow \mathbb{E}_{X\sim \mathbb{P}}[X]\in \mathbb{R}^{d_X}\times \operatorname{Sym}_{d}$. 
Since $\operatorname{Sym}_{0,d_X}$ is a closed convex subset of the normed linear space $\operatorname{Sym}_d$ then Jensen's inequality, as formulated in \citep[Theorem 10.2.6]{DudleyRealAnalysisandPRobabilityBook1989Revised2002}, implies that for each $\mathbb{P}\in \mathcal{P}_1(\mathbb{R}^{d_X}\times \operatorname{Sym}_{0,d_X},d_{2,F})$ we have $\mathbb{E}_{X\sim \mathbb{P}}[X]\in \mathbb{R}^{d_X}\times \operatorname{Sym}_{0,d_X}$; where $\mathbb{E}_{X\sim \mathbb{P}}[X]$ denote the Bochner integral of a random variable with law $\mathbb{P}$.  Consequentially, 
$\mathcal{P}_1(\mathbb{R}^{d_X}\times \operatorname{Sym}_{0,d_X},d_{2,F})\ni \mathbb{P}\rightarrow \mathbb{E}_{X\sim \mathbb{P}}[X]\in \mathbb{R}^{d_X}\times \operatorname{Sym}_{0,d_X}$ is a contracting barycenter map.  Thus, $(\mathbb{R}^{d_X}\times \operatorname{Sym}_{0,d_X},d_{2,F})$ is barycentric metric space.
Since this is a barycentric QAS space then, \citep[Setting 3.6 (iii)]{kratsios2023transfer} is satisfied.
\end{proof}
\begin{lemma}
\label{lem:approximation_Sym}
Fix an activation function $\sigma$ satisfying \Cref{ass:KLCondition}.
For every $K\subset C([0:T],\mathbb{R}^{d_X})$ satisfying \Cref{ass:reg_compact}, each $0<\delta,\alpha\le 1$, $0 \le L$, and every $(L,\alpha)$-H\"{o}lder
\footnote{That is, $f$ is $\alpha$ H\"{o}lder with optimal H\"{o}lder coefficient $L$.} 
function $f:[0:T]\times K \rightarrow \mathbb{R}^{d_X}\times \operatorname{Sym}_{0,d_X}$ there is a map $\hat{g}: [0:T]\times C([0:T],\mathbb{R}^{d_Y}) \rightarrow \mathbb{R}^{d_X}\times \operatorname{Sym}_{0,d_X}$ satisfying the uniform estimate
\begin{equation}
\label{lem:approximation_Sym__approximation}
    \max_{(t,y_{\cdot}) \in [0:T]\times K}\,
        d_{2,F}(\hat{g}(t,y_{\cdot}),f(t, y_{\cdot})) < \delta
\end{equation}
with representation 
$
        \hat{g}(t, y_{\cdot}) 
    =
        \sum_{i=1}^\ddim\,
                P_{\Delta_\ddim}(\hat{f}\circ \operatorname{attn}^\theta(t, y_{\cdot}))_i
                \cdot
                (m^{(i)},(A^{(i)})^{\top}A^{(i)})
,
$
where $\ddim \in \mathbb{N}_+$, $m^{(1)},\dots,m^{(\ddim)}\in \mathbb{R}^{d_X}$, $A^{(1)},\dots,A^{(\ddim)}\in \mathbb{R}^{d\times d}$, $P_{\Delta_\ddim}:\mathbb{R}^\ddim\rightarrow\Delta_\ddim$ is the Euclidean (orthogonal) projection onto the $\ddim$-simplex, and an MLP $\hat{f}:\mathbb{R}^\edim\rightarrow \mathbb{R}^\ddim$ with activation function $\sigma$.  The depth, width, encoding dimension ($\edim$), and decoding dimension ($\ddim$) are recorded in \Cref{tab:approximationrates}.
\end{lemma}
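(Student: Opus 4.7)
The plan is to reduce the approximation problem on the infinite-dimensional path space to a classical finite-dimensional approximation problem onto a barycentric QAS target, and then to invoke the transfer universality theorem of \cite{kratsios2023transfer}.

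First, I would apply \Cref{prop:Lossless_Encoding} to $K$ to produce a pathwise attention parameter $\theta$ such that $\operatorname{attn}_T^{\theta}$ restricts to a bi-Lipschitz embedding of $([0:T]\times K, |\cdot|\times \|\cdot\|_T)$ into $(\mathbb{R}^{\edim+1},\|\cdot\|_2)$, with encoding dimension $\edim$ recorded in \Cref{tab:ATTENTION_Rates}. Write $(c_-,c_+)$ for the lower and upper bi-Lipschitz constants. Let $\Phi \eqdef \operatorname{attn}_T^{\theta}$ and $\tilde{K}\eqdef \Phi([0:T]\times K)\subset \mathbb{R}^{\edim+1}$; then $\tilde{K}$ is compact and $\Phi^{-1}:\tilde{K}\to [0:T]\times K$ is $1/c_-$-Lipschitz. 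Defining $\tilde{f}\eqdef f\circ \Phi^{-1}:\tilde{K}\to \mathbb{R}^{d_X}\times \operatorname{Sym}_{0,d_X}$, the composition inherits $(L\,c_-^{-\alpha},\alpha)$-H\"{o}lder regularity.

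Second, since $(\mathbb{R}^{d_X}\times \operatorname{Sym}_{0,d_X},\mathcal{Q},\eta)$ is a barycentric QAS space by \Cref{lem:QAS_Space_Sym0d}, and $(\mathbb{R}^{\edim+1},\|\cdot\|_2)$ trivially satisfies the input side of \citep[Setting 3.6 (i)]{kratsios2023transfer}, I would invoke the quantitative transfer universality theorem (the relevant instance of \citep[Theorem 3.7]{kratsios2023transfer}, specialized to \Cref{ass:KLCondition} activation functions) to obtain, for a prescribed approximation tolerance $\delta/2$, an integer $\ddim$, points $(m^{(i)},A^{(i)})_{i=1}^{\ddim}\subset \mathbb{R}^{d_X}\times \mathbb{R}^{d_X\times d_X}$, and an MLP $\hat{f}:\mathbb{R}^{\edim+1}\to \mathbb{R}^{\ddim}$ with activation $\sigma$ such that
\begin{equation*}
    \max_{v\in \tilde{K}}\, d_{2,F}\Big(\sum_{i=1}^{\ddim} P_{\Delta_{\ddim}}(\hat{f}(v))_i \cdot \mathcal{Q}_{\ddim}(m^{(i)},A^{(i)}),\ \tilde{f}(v)\Big) < \delta.
\end{equation*}
The quantizer being $\mathcal{Q}_{\ddim}(m,A)=(m,A^{\top}A)$ is precisely what produces the required form of the barycenter $(m^{(i)},(A^{(i)})^{\top}A^{(i)})$. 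The depth/width and decoding dimension $\ddim$ follow directly from the complexity estimates in \citep[Theorem 3.7]{kratsios2023transfer} combined with the MLP approximation rates of \cite{pmlr-v125-kidger20a,yarotsky2018optimal,kratsios2021universal} according to the regularity class of $\sigma$, giving \Cref{tab:approximationrates}.

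Finally, defining $\hat{g}(t,y_{\cdot})\eqdef \sum_{i=1}^{\ddim} P_{\Delta_{\ddim}}(\hat{f}\circ \operatorname{attn}_T^{\theta}(t,y_{\cdot}))_i \cdot (m^{(i)},(A^{(i)})^{\top}A^{(i)})$, and using that $\Phi:[0:T]\times K\to \tilde{K}$ is a bijection, the uniform estimate~\eqref{lem:approximation_Sym__approximation} follows by pushing the previous display through $\Phi$. The main obstacle I anticipate is bookkeeping of constants: the H\"{o}lder constant of $\tilde{f}$ degrades by a factor depending on $c_-$, which in turn depends on the geometry of $K$ (through $\operatorname{Vol}(\mathcal{M},g)^{1/\ldim}$ or $\pldim$ in \Cref{tab:ATTENTION_Rates}), and this multiplicative factor must be propagated cleanly into the width/depth estimates so as to match \Cref{tab:approximationrates__nonFlatVersion}; technically the H\"{o}lder-to-Lipschitz specialization $\alpha=1$ is what is needed for the downstream application in \Cref{prop:Universal_Approximation_Theorem__PathToNd}, so no extra care is needed beyond applying the cited MLP rates with the degraded Lipschitz constant $L\,c_-^{-1}$.
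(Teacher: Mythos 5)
Your proposal is correct and follows essentially the same route as the paper: a bi-Lipschitz pathwise-attention encoding of $[0:T]\times K$ via \Cref{prop:Lossless_Encoding}, the barycentric QAS structure on $\mathbb{R}^{d_X}\times \operatorname{Sym}_{0,d_X}$ from \Cref{lem:QAS_Space_Sym0d} with quantizer $(m,A)\mapsto (m,A^{\top}A)$, and an application of the quantitative transfer theorem of \cite{kratsios2023transfer} together with the cited MLP rates. The paper merely organizes this as an explicit verification of all four conditions of \citep[Setting 3.6]{kratsios2023transfer} (including the doubling property of the domain and the trivial bounded-approximation property of the Euclidean feature space) rather than pulling $f$ back through $\Phi^{-1}$, but the two formulations are equivalent.
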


\begin{table}[!ht]
    \centering
    \scriptsize
    \caption{{Complexity Estimates for transformer-type model $\hat{g}$ in \Cref{lem:approximation_Sym}.}}
    \label{tab:approximationrates}
    \begin{tabular}{lllll}
    \toprule
        $\sigma$ Regularity & Depth & Width & Encode ($\edim$) & Decode ($\ddim$)\\
    \midrule
        $\operatorname{ReLU}$ & 
        $
            \mathcal{O}\big(
                (LV( L))^{-\edim-1}\,\varepsilon^{-\edim-1}
            \big)
        $
        & 
        $
            \mathcal{O}\big(
                (LV(L))^{-\edim-1}\,\varepsilon^{-\edim-1}
            \big)
        $
        & $\mathcal{O}(1)$ & $\mathcal{O}\big(L\varepsilon^{-1}\big)$
        \\
        Smooth $\&$ Non-poly.
         & 
         $
        \mathcal{O}\left(
            L^{4\edim+5}\, \varepsilon^{-4\edim-5}
        \right)
         $
         &
         $ \mathcal{O}(L\epsilon^{-1} + \edim +3)$
         & $\mathcal{O}(1)$ & $\mathcal{O}\big(L\varepsilon^{-1}\big)$
         \\
         Poly. $\&$ Non-affine
         & 
         $
        \mathcal{O}\big(
            L^{8\edim+14}\, \varepsilon^{-8m-14} 
        \big)
         $
         & 
         $\mathcal{O}(L\epsilon^{-1} + \edim +4)$
         & $\mathcal{O}(1)$ & $\mathcal{O}\big(L\varepsilon^{-1}\big)$
         \\
         $C(\rr)$ $\&$ Non-poly.
            &
        Finite
        &
        $\mathcal{O}(L\epsilon^{-1} + \edim +3)$
         & $\mathcal{O}(1)$ & $\mathcal{O}\big(L\varepsilon^{-1}\big)$
         \\
    \bottomrule
    \end{tabular}
    \caption*{Where $V(t)$ is the inverse of $s\mapsto s^4\,\log_3(t+2)$ on $[0,\infty)$ evaluated at $131 t$.}
\end{table}

\begin{proof}\textbf{of Lemma~\ref{lem:approximation_Sym}}\hfill\\
We work in the notation of \citep[Theorem 3.7]{kratsios2023transfer}, or rather, its quantitative version \citep[Lemma 5.10]{kratsios2023transfer}.  
Our objective is to apply \citep[Theorem 3.7]{kratsios2023transfer} by verifying each of the conditions of \citep[Setting 3.6]{kratsios2023transfer}.

\begin{enumerate}[label=\emph{Step \arabic*:}, wide]
    \item \emph{Implementing a Bi-Lipschitz Feature Map with the Attention Layer.}
        \begin{remark}
            We first show that the conditions of {\citep[Setting 3.6 (i)]{kratsios2023transfer}} are met, by verifying that the parameters of the attention layer~\eqref{eq:featuremap}  can be chosen such that $\operatorname{attn}^\theta$ is a suitable feature map.  
        \end{remark}

        When convenient, let $\operatorname{attn}^{\theta}$ be as in either of \Cref{lem:Good_Feature_Map,lem:Good_Feature_Map__FiniteVariant,lem:biLip_pw_lin} depending on which assumption of Assumptions~\ref{ass:reg_compact} (i), (ii), or (iii) holds.  For convenience, we denote the map $\operatorname{attn}$ by $\varphi$.  These lemmata show that the map $\varphi$ is a bi-Lipschitz embedding of $([0:T]\times K,\|\cdot\|\times \|\cdot\|_T)$ into a Euclidean space $(\mathbb{R}^N,\|\cdot\|_2)$.
    
        We also observe that every bi-Lipschitz map is a quasi-symmetric map\footnote{See \citep[page 78]{Heinonen_2001_BookAnalMetSpace}.}, as defined on \citep[page 78]{Heinonen_2001_BookAnalMetSpace}.  Thus, \citep[Theorem 12.1]{Heinonen_2001_BookAnalMetSpace} implies that $(\mathcal{M},d_g)$ is a doubling metric space, as defined on \citep[page 81]{Heinonen_2001_BookAnalMetSpace}.  Since $(\mathcal{M},d_g)$ and $(K,\|\cdot\|_T)$ since both are isometric then $(K,\|\cdot\|_T)$ is also a doubling metric space (see \citep[Lemma 9.6 (v)]{RobinsonDimEmbAt_2011Book}).  Thus, $([0:T]\times K,\|\cdot\|\times \|\cdot\|_T)$ is a doubling metric space.  We have thus verified \citep[Setting 3.6 (i)]{kratsios2023transfer}.
    \item \emph{Feature Space Geometry.}
        Since the codomain of $\phi$ is simply a Euclidean space, then, the constant sequence of identity maps $\{T^i\eqdef 1_{\mathbb{R}^{\edim+1}}\}_{i=1}^{\infty}$ are trivially finite-rank linear operators realizing the bounded approximation property on any compact subset of $\mathbb{R}^{\edim+1}$.  That is, for each non-empty compact $A\subseteq\mathbb{R}^{\edim+1}$, 
        \[
            \lim\limits_{i\mapsto \infty}\,\max_{u\in A}\,\|u-T^i(u)\|_2 = \lim\limits_{i\mapsto \infty}\,\max_{u\in A}\,\|u-1_{\mathbb{R}^N}(u)\|_2 = 0
        \]
        and the operator norm $\|1_{\mathbb{R}^{\edim+1}}\|_{\mathrm{op}}=1$. Thus, $(T^i)_{i\in \mathbb{N}}$ implements the $1$-BAP ($1$-bounded approximation property) on $\mathbb{R}^{\edim+1}$ for every $i$.  Therefore, \citep[Setting 3.6 (ii)]{kratsios2023transfer} holds.

\item \emph{Geometry of $(\mathbb{R}^{d_X}\times \operatorname{Sym}_{0,d_X},\mathcal{Q},\eta)$.}
\Cref{lem:QAS_Space_Sym0d} shows that $(\mathbb{R}^{d_X}\times \operatorname{Sym}_{0,d_X},\mathcal{Q},\eta)$ is barycentric and it is a QAS space with quantized mixing function, see \citep[page 7]{kratsios2023transfer}, given for any $\ddim \in\mathbb{N}_+$, $u\in \mathbb{R}^\ddim$, and $(m^{(1)},A^{(1)}),\dots,(m^{(N)},A^{(\ddim)})\in \mathbb{R}^{d_X}\times \mathbb{R}^{d_X^2}$ by
\begin{equation}
\label{eq:QMF}
    \hat{\eta}\big(
            w
        ,
            \{(m^{(n)},A^{(n)}\}_{n=1}^\ddim
    \big)
        \eqdef 
            \sum_{n=1}^\ddim\,
                P_{\Delta_\ddim}(w_n)
                \cdot
                (m^{(n)},(A^{(n)})^{\top}A^{(n)})
.
\end{equation}
This verifies \citep[Setting 3.6 (iii)]{kratsios2023transfer}.

\item \emph{Determining The Euclidean Universal Approximator.}
\begin{remark}
    We now verify that the class of all MLPs with activation function $\sigma$ satisfying \Cref{ass:KLCondition}, thus assumption \citep[Setting 3.6 (iv)]{kratsios2023transfer} holds.  The case where $\sigma=\operatorname{ReLU}$ and $\sigma\neq \operatorname{ReLU}$ are treated separately.
\end{remark}
First, consider the case where $\sigma\neq \operatorname{ReLU}$.
Then, for each $\edim,\ddim,c\in \mathbb{N}_+$ let $\mathcal{F}_{\edim, \ddim, c}$ denote the family of maps $f:\mathbb{R}^\edim\rightarrow \mathbb{R}^\ddim$ with representation~\eqref{eq:MLP} and satisfying
\begin{equation}
\label{eq:GeneralSigma_Version}
        J \le c
    \mbox{ and }
        \max_{j=0,\dots,J}\,
            d_{j} 
        \le \edim+\ddim+3
.
\end{equation}
Since $\sigma$ was assumed to satisfy \Cref{ass:KLCondition} then by \citep[Theorem 9]{kratsios2022universal}, as formulated in \citep[Proposition 53]{kratsios2022universal}, implies that $\mathcal{F}_{\cdot}$ is a universal approximator in the sense of \citep[Definition 2.11]{kratsios2023transfer}.  Moreover, its rate function is recorded in \citep[Proposition 53]{kratsios2022universal}.  Therefore, $\mathcal{F}_{\cdot}$, as defined in~\eqref{eq:GeneralSigma_Version}, verifies \citep[Setting 3.6 (iv)]{kratsios2023transfer}.

Next, suppose that $\sigma=\operatorname{ReLU}$.  Then, for each $\edim,\ddim,c\in \mathbb{N}_+$ let $\mathcal{F}_{\edim,\ddim,c}$ denote the family of maps $f:\mathbb{R}^\edim\rightarrow \mathbb{R}^\ddim$ with representation~\eqref{eq:MLP} and satisfying
\begin{equation}
\label{eq:ReLU_Version}
        J \le c
    \mbox{ and }
        \max_{j=0,\dots,J}\,
            d_{j} 
        \le c
.
\end{equation}
By \citep[Theorem 1.1]{pmlr-v125-kidger20a}, $\mathcal{F}_{\cdot}$ is a universal approximator in the sense of \citep[Definition 2.11]{kratsios2023transfer}.  Moreover, its rate function is given in \citep[Theorem 1]{galimberti2022designing}, as recorded in \citep[Table 1]{galimberti2022designing}.  In either case, $\mathcal{F}_{\cdot}$, as defined in~\eqref{eq:ReLU_Version}, verifies \citep[Setting 3.6 (iv)]{kratsios2023transfer}.

\item \emph{Applying \citep[Theorem 3.7]{kratsios2023transfer}.}
Steps $1$ though $5$ verify that the conditions of \citep[Theorem 3.7]{kratsios2023transfer} are indeed met.  Furthermore, we have just shown that we are in the special case where the feature decomposition $\{([0,T]\times K,\varphi)\}$ of $([0,T]\times K,\|\cdot\|_T)$, as defined in \citep[Definition 3.4]{kratsios2023transfer}, is a singleton.  
By~\citep[Lemma 5.10]{kratsios2023transfer} for every $\varepsilon>0$ there is a map $\hat{F}:[0,T]\times K\rightarrow \mathbb{R}^{d_X}\times \operatorname{Sym}_{0,d_X}$ with representation
\footnote{Since the barycentric decomposition $\{(K,\varphi)\}$ of $(K,\|\cdot\|_T,\mu)$ has exactly one part then the partition of unity \citep[Setting 3.7]{kratsios2023transfer} is trivial and $\psi_1(x_{\cdot})=1$ for each $x_{\cdot}\in K$.}
\begin{align}
\label{eq:transformer_pre_mapped_BEGIN}
        \hat{F}(y_{\cdot})
    = &
        \beta_{\mathbb{R}^{d_X}\times \operatorname{Sym}_{0,d_X}}
        \big(
            \delta_{\hat{\eta}(P_{\Delta_\ddim}\circ \hat{f}_n\circ \phi(y_{\cdot}),(Z_n)_{n=1}^\edim)}
        \big)
    \\
\label{eq:transformer_barycentercancellation}
    = & 
        \hat{\eta}(P_{\Delta_\ddim}\circ \hat{f}_n\circ \phi(y_{\cdot}),(Z_n)_{n=1}^\edim)
    \\
\label{eq:transformer_pre_mapped_END}
        = & 
            \sum_{n=1}^\ddim
            \,
                \big(
                    P_{\Delta_\ddim}\circ \hat{f}_n\circ \phi(y_{\cdot})
                \big)
                \cdot
                (m^{(n)},(A^{(n)})^{\top}A^{(n)})
\end{align}
for each $y_{\cdot}\in K$ satisfying 
\[
    \sup_{y_{\cdot}\in K}\,
        d_{2,F}\big(
            \hat{F}(y_{\cdot})
        ,
            f(y_{\cdot})
        \big)
    <
        \varepsilon_A + \varepsilon_Q + \varepsilon_E = \varepsilon
,
\]
where $\varepsilon_E=0$ and $\varepsilon_A\eqdef \varepsilon_Q\eqdef \varepsilon/2$ and where $\hat{f}\in \mathcal{F}_{d_n,N_n,c_n}$, $Z_n\in \mathbb{R}^{N_n\times D_n}$, for some positive integers $d_n,c_n,D_n$, and $N_1,\dots,N_N\in \mathbb{N}_+$ recorded in \citep[Table 3]{kratsios2023transfer}, and $\smash{\beta_{\mathbb{R}^{d_X}\times \operatorname{Sym}_{0,d_X}}}$ is a $1$-Lipschitz barycenter map on $(\mathbb{R}^{d_X}\times \operatorname{Sym}_{0,d_X},d_{2,F})$ (which exists by \Cref{lem:QAS_Space_Sym0d}).  We observe that that~\eqref{eq:transformer_barycentercancellation} follows from the fact that the barycenter map $\beta_{\mathbb{R}^{d_X}\times \operatorname{Sym}_{0,d_X}}$ is a right-inverse of the map $\mathbb{R}^{d_X}\times \operatorname{Sym}_{0,d_X}\ni (m,B)\mapsto \delta_{(m,B)}\in \mathcal{P}_1(\mathbb{R}^{d_X}\times \operatorname{Sym}_{0,d_X})$, where $\mathcal{P}_1(\mathbb{R}^{d_X}\times \operatorname{Sym}_{0,d_X})$ denotes the $1$-Wasserstein space on $(\mathcal{P}_1(\mathbb{R}^{d_X}\times \operatorname{Sym}_{0,d_X}),d_{2,F})$ and~\eqref{eq:transformer_pre_mapped_END} follows from the expression for $\hat{\eta}$ given in~\eqref{eq:QMF}.  Consequentially, \labelcref{eq:transformer_pre_mapped_BEGIN,eq:transformer_barycentercancellation,eq:transformer_pre_mapped_END} reduces to 
 of $\hat{g}$ in Lemma~\ref{lem:approximation_Sym}
.

\item \emph{Tallying Parameters.}
Since the quantitative version of \citep[Theorem 3.7]{kratsios2023transfer} held, namely \citep[Lemma 5.10]{kratsios2023transfer}, then we obtain the following parameter estimates
\begin{enumerate}[label=(\roman*),leftmargin=1cm]
    \item $\edim=\mathcal{N}_{\operatorname{pack}}\big(K,C_{\ldim}\,\operatorname{Vol}(\mathcal{M},g)^{1/\ldim}\big) \in \mathcal{O}(1)$,
    \item $c$ is recorded in \Cref{tab:approximationrates} as the depth of the network $\hat{f}$,
    \item The expression of $\ddim$ is recorded, in detail, atop \citep[page 46]{kratsios2023transfer} and is
    \[
            \ddim
        \le 
            \big(
                C_{K,1}^{\lceil \frac1{4\alpha}\rceil}
            \big)^{
                \log_2(\operatorname{diam}(K)) - \frac1{\alpha}\log_2(\epsilon_A/(2L C_{K:2}))
            }
        \in 
        \mathcal{O}_{K}\big(
            L/\varepsilon
        \big)
        .
    \]
    for constants $C_{K,1},C_{K,2},C_K>0$ depending only on the compact set $K$ and on the mixing function $\eta$, $\alpha=1$ as $f$ is $1$-H\"{o}lder, and where $\mathcal{O}$ suppresses a constant depending only on $K$ and on the mixing function $\eta$. 
\end{enumerate}
\end{enumerate}
\end{proof}

\subsubsection{Proof of the Main Approximation Lemma}
\label{s:Proof__ss:Universality___sss:Theorem}
For any $d\in \mathbb{N}_+$, let $\overline{\mathcal{N}_d}$ denotes the set of Gaussian measure on $\mathbb{R}^d$ equipped with the $2$-Wasserstein metric $\mathcal{W}_2$.  
The Lemmata in the previous sections, together with the main results of \cite{kratsios2021universal} and \cite{kratsios2023transfer}, are used to deduce our main approximation theoretic tool, namely, \Cref{prop:Universal_Approximation_Theorem__PathToNd}.   

\begin{proof}\textbf{of \Cref{prop:Universal_Approximation_Theorem__PathToNd}}\hfill
\begin{enumerate}[label=\emph{Step \arabic*:}, wide]
\item \emph{Bounded The Local Lipschitz Stability of $\rho$ on $f(K)$.}
By \Cref{lem:LipschitzManifold_Nd}, the map $\varrho:(\mathcal{N}_{d_X},\mathcal{W}_2)\rightarrow (\mathbb{R}^{d_X}\times \operatorname{Sym}_{0,d_X},d_{2,F})$ is continuous.  By definition of the product topology, see \citep[page 114]{MunkresTopology_2000}, the projection map $\pi:(\mathbb{R}^{d_X},\operatorname{Sym}_{0,d_X},d_{2,F})\ni (\mu,\Sigma)\rightarrow \Sigma \in (\operatorname{Sym}_{0,d_X},\|\cdot\|_F)$ is continuous.  
Since the composition of continuous functions is again continuous, then the map
\[
    g: ([0,T]\times K, 
    | \cdot | \times \|\cdot\|_T)\ni (t,y_{\cdot}) \mapsto 
    \pi\circ \varrho\circ f(t,y_{\cdot})
    \in (\mathbb{R},|\cdot|)
\]
is continuous.  By \citep[Theorem 26.5]{MunkresTopology_2000}, $\tilde{K}\eqdef g(K)$ is a compact subset of $(\mathbb{R},|\cdot|)$.

By \citep[Theorem 9.2.6 - page 130]{LaxBookLinearAlgebra_2007} the map $\lambda_{\text{min}}:(\operatorname{Sym}_{0,d_X},\|\cdot\|_F)\rightarrow (\mathbb{R},|\cdot|)$ which sends any $d_X \times d_X$ square matrix $\Sigma$ to its minimal eigenvalue $\lambda_{\text{min}}(\Sigma)$ is continuous.  Since every continuous function with compact domain achieves its minimum on its domain then, there exists some $\Sigma_0\in \tilde{K}$ minimizing $\lambda_{\text{min}}$; by which we mean that 
\begin{equation}
\label{eq:prop:Universal_Approximation_Theorem__PathToNd___minimaleigenvalueachieved}
        \lambda_{\text{min}}(\Sigma_0)
    =
        \min_{\Sigma\in g(K)}\,\lambda_{\text{min}}(\Sigma)
    <
        \infty
.
\end{equation}
Since $f$ takes values in $\mathcal{N}_{d_X}$ then $\pi\circ \varrho(f(x))$ is positive definite, for each $x\in K$.  In particular, $\lambda_{\text{min}}(\Sigma_0)>0$.  Consequentially,~\eqref{eq:prop:Universal_Approximation_Theorem__PathToNd___minimaleigenvalueachieved} implies that
\begin{equation}
\label{eq:prop:Universal_Approximation_Theorem__PathToNd___minimaleigenvalue}
        0
    <
        r
    \eqdef 
        \lambda_{\text{min}}(\Sigma_0)
    =
        \min_{\Sigma\in g(K)}\,\lambda_{\text{min}}(\Sigma)
    <
        \infty
.
\end{equation}
By \Cref{lem:LoewnerBound}, we have that there is an $r>0$ such that $r=\lambda_{\operatorname{min}}(\Sigma_t)$ for all $0\le t\le T$. 
Consequentially, \Cref{lem:LipschitzManifold_Nd} implies that the map $\varrho:(\mathcal{N}_{d_X},\mathcal{W}_2)\rightarrow (\mathbb{R}^{d_X}\times \operatorname{Sym}_{0,d_X},d_{2,F})$ is Lipschitz on $\tilde{K}$ with Lipschitz constant bounded-above by $\operatorname{Lip}\big(\varrho|f(K)\big)$.  Consequentially, we have that
\begin{equation}
\label{eq:prop:Universal_Approximation_Theorem__PathToNd___Lipschitzvarrho}
        \operatorname{Lip}\big(\varrho\circ f | K\big)
    \le 
        \operatorname{Lip}\big(\varrho|f(K)\big)
        \,
        \operatorname{Lip}\big(f|K\big)
    \le 
        \max\big\{1,\frac{\sqrt{d}}{2\sqrt{r}}\big\}
        \,
        L
.
\end{equation}
\item \emph{Approximating $\varrho \circ f$ on $K$.}
Fix $\varepsilon>0$ and fix the ``perturbed approximation error''
\begin{equation}
\label{eq:prop:Universal_Approximation_Theorem__VALUEOFDelta}
        \delta
    \eqdef 
        \min\biggl\{
                1
            ,
                \frac{\varepsilon
                }{
                \sqrt{d}2\sqrt{
                    L
    \,\sqrt{T^2+\operatorname{diam}(K)^2} + 1
                    }}
        \biggr\}
.
\end{equation}

We apply \Cref{lem:approximation_Sym} to deduce that there exists a map $\hat{F}: [0:T]\times K \rightarrow \mathbb{R}^{d_X}\times \operatorname{Sym}_{0,d_X}$ with representation of $\hat{g}$ in Lemma~\ref{lem:approximation_Sym} 
satisfying the uniform estimate\begin{equation}
\label{eq:prop:Universal_Approximation_Theorem__PathToNd___uniformestimate}
    \max_{(t,y_{\cdot}) \in [0:T]\times K }\,
        d_{2,F}(\hat{F}(t, y_\cdot),\varrho\circ f(t,y_\cdot))
    <
        \delta
.
\end{equation}
Since \Cref{lem:LipschitzManifold_Nd} showed that $\varrho$ has a locally bi-Lipschitz homeomorphism then, in particular, $\varrho^{-1}$ exists and it is Lipschitz continuous.  
Consider the $1$-thickening of $\varrho\circ f(K)$ defined by 
\[
        \bar{B}_1
    \eqdef 
        \{(m,\Sigma)\in \mathbb{R}^{d_X}\times \operatorname{Sym}_{0,d_X}:\,(\exists (t,y_{\cdot} )\in [0:T]\times K )\, d_{2,F}(f(t, y_\cdot),(m,\Sigma))\le 1\}
.
\]
Since $\delta$ was defined, in~\eqref{eq:prop:Universal_Approximation_Theorem__VALUEOFDelta}, to be at-most $1$, then~\eqref{eq:prop:Universal_Approximation_Theorem__VALUEOFDelta} implies that $
    \hat{F}(K)\subseteq \bar{B}_1
    $.
From~\eqref{eq:prop:Universal_Approximation_Theorem__PathToNd___uniformestimate} and \Cref{lem:LipschitzManifold_Nd} we deduce the uniform estimate
\allowdisplaybreaks
\begin{align}
\label{eq:Approxim_LEMMA__BEGIN}
    & \max_{(t,y_{\cdot})\in [0:T]\times K}\,
        \mathcal{W}_2\big(
                \rho^{-1}\circ F(t,y_{\cdot})
            ,
                f(t,y_{\cdot})
        \big)
\\
\nonumber
= & 
    \max_{(t,y_{\cdot})\in [0:T]\times K}\,
        \mathcal{W}_2\big(
                \varrho^{-1}\circ \hat{F}(t,y_{\cdot})
            ,
                \varrho^{-1}\circ \varrho\circ f(t,y_{\cdot})
        \big)
\\
\nonumber
\le &
    \operatorname{Lip}\big(
            \varrho^{-1}
        |
            \bar{B}_1
    \big)
    \,
    \max_{(t,y_{\cdot})\in [0:T]\times K}\,
        d_{2,F}\big(
                \hat{F}(t,y_{\cdot})
            ,
                \varrho\circ f(t,y_{\cdot})
        \big)
\\
\nonumber
\le &
    \operatorname{Lip}\big(
            \varrho^{-1}
        |
            \bar{B}_1
    \big)
    \,
    \delta
\\
\label{eq:prop:Universal_Approximation_Theorem__PathToNd___BOUND_A}
\le &
    \operatorname{Lip}\big(
            \varrho^{-1} 
        |
            \bar{B}_1
    \big)
    \,
    \frac{\varepsilon
    }{
    \sqrt{d}2\sqrt{
        L
\,\sqrt{T^2+\operatorname{diam}(K)^2} + 1
    }
    }
\\
\label{eq:Approxim_LEMMA__END}
\le & 
    \varepsilon
,
\end{align}
where~\eqref{eq:prop:Universal_Approximation_Theorem__PathToNd___BOUND_A} followed from \Cref{lem:LipschitzManifold_Nd} and the definition of $\bar{B}_1$ 
and bound~\eqref{eq:Approxim_LEMMA__END} followed from the estimate~\eqref{eq:prop:Universal_Approximation_Theorem__PathToNd___Lipschitzvarrho} for $\operatorname{Lip}( \varrho^{-1}  | \bar{B}_1 )$.  Since $\mathcal{W}_p\le \mathcal{W}_2$ for all $1\le p\le 2$, see \citep[Remark 6.6]{VillaniBook_2009} then, the estimates in~\eqref{eq:Approxim_LEMMA__BEGIN}-\eqref{eq:Approxim_LEMMA__END} imply that
\begin{align}
    \max_{(t,y_{\cdot}) \in [0:T]\times K}\,
        \mathcal{W}_p\big(
                \hat{F}(t,y), f(y)
        \big)
\le & 
    \varepsilon
,
\end{align}
\hfill\\
for all $1\le p\le 2$; as claimed.  
\item \emph{Counting Parameters.}
Using $\delta$, as defined in~\eqref{eq:prop:Universal_Approximation_Theorem__PathToNd___uniformestimate}, in the place of $\varepsilon$ in \Cref{tab:approximationrates} and noting that $\delta^{-1} \in \mathcal{O}(\varepsilon^{-1})$ yields the conclusion.  
\end{enumerate}
\end{proof}

\section*{Acknowledgments}
AK acknowledges financial support from an NSERC Discovery Grant No.\ RGPIN-2023-04482 and their McMaster Startup Funds. AK and XY acknowledge that resources used in preparing this research were provided, in part, by the Province of Ontario, the Government of Canada through CIFAR, and companies sponsoring the Vector Institute.


\begin{appendices}  
\section{Supplementary Material}
\label{S:Supplement}


\subsection{Summary of Notation}
This section serves as a reference, which records the notation used throughout our manuscript.  In what follows, $N,I,J\in \mathbb{N}_+$, $A$ is an arbitrary $I\times J$ matrix, $x,x_1,\dots,x_J\in \mathbb{R}^N$, $f$ is an ar\-bi\-trary real-valued function on $\mathbb{R}$, and $t\in \mathbb{R}$.
\begin{enumerate}[itemsep=3pt]
    \item \textit{Componentwise Composition:} $\smash{f\bullet x\eqdef (f(x_n))_{n=1}^N}$ for any $N\in \mathbb{N}_+$.
    \item \textit{Rectified Linear Unit (ReLU):} $\operatorname{ReLU}:\mathbb{R}\rightarrow\mathbb{R}$ given by $\smash{\operatorname{ReLU}(t)\eqdef \max\{0,t\}}$.
    \item \textit{Rowwise Product:} $\smash{v\odot X\eqdef (v_i\,X_{i,j})_{i=1,\dots,N,\,j=1,\dots,d}}$.
    \item \textit{Softmax Function:} $\smash{\operatorname{Softmax}(x)\eqdef (e^{x_n}/\sum_{i=1}^N\,e^{x_i})_{n=1}^N}$.
    \item \textit{Sparsity:} $\smash{\|(A_{i,j})_{i,j=1}^{I,J}\|_0\eqdef \#\{A_{i,j}\neq 0:\,i,j=1,\dots,I\}}$.
    \item \textit{Vector Concatenation:} $\smash{\oplus_{j=1}^N x_j\eqdef (x_1,\dots,x_J)^{\top}}$ is the $J\times N$-matrix whose $j^{th}$ row is $x_j$.
    \item \textit{Vectorization:} $\smash{\operatorname{vec}((A_{i,j})_{i,j=1}^{I,J})\eqdef (A_{1,1},\dots,,A_{I,1},\dots,A_{1, J}, \dots, A_{I,J})}$.
    \item \textit{Euclidean Norm:} 
    $\smash{\|A\| = (\sum_{i,j=1}^{I,J} |A_{i,j}|^2 )^{1/2}}$. 
    \item \textit{Supremum Norm:} 
    $\smash{\|A\|_\infty=\sup \{ A_{i,j} : \, i=1,..., I, j=1, ..., J \}}$. 
\end{enumerate}

\subsection{Additional Background}
\label{s:AdditionalBackground}
\subsubsection{Target Space - The \texorpdfstring{$2$-Wasserstein}{2-Wasserstein} Space of Probability Measures}
\label{s:Setting__ss:OutputSpace}

Fix $1 \leq p \leq 2$. The $p$-Wasserstein space $\mathcal{P}_p(\mathbb{R}^{d_X})$ consists of all probability measures on $\mathbb{R}^{d_X}$ with finite second moment; i.e.\ $\mathbb{P}\in \mathcal{P}_p(\mathbb{R}^{d_X})$ if
$
\mathbb{E}_{X\sim \mathbb{P}}[\|X\|^p]<\infty
.
$
The $2$-Wasserstein metric $\mathcal{W}_p$ on $\mathbb{P}_p(\mathbb{R}^{d_X})$ is given by minimizing the optimal cost of transporting mass between any two measures $\mathbb{P}$ and $\mathbb{Q}$ via randomized transport plans.  This can be formalized by the Kanotovich problem
\[
        \mathcal{W}_p(\mathbb{P}, \mathbb{Q})^p
    \eqdef
        \inf_{(X_1,X_2)\sim \pi;\,X_1\sim \mathbb{P},\,X_2\sim \mathbb{Q}}\,
            \mathbb{E}_{\pi}[\|X_1-X_2\|^p]
    .
\]
Generally, the $p$-Wasserstein distance between measures can be computationally taxing, requiring a super-quadratic complexity to compute \cite{TBD}.  However, the $2$-Wasserstein distance is not always computationally intractable as for instance in the case where $\mathbb{P}=N(\mu^{(1)},\Sigma^{(1)})$ and $\mathbb{Q}=N(\mu^{(2)},{\Sigma^{(2)}})$ are Gaussian measures on $\mathbb{R}^{d_X}$, \cite{dowson1982frechet} showed that it admits the following closed-form expression
\begin{align}
        \mathcal{W}_2(\mathbb{P},\mathbb{Q})^2
    =
            \| \mu^{(1)} - \mu^{(2)} \|_2^2 
        + 
            \operatorname{tr}(\Sigma^{(1)}) + \operatorname{tr}(\Sigma^{(2)})
        -  
            2
            \,
            \operatorname{tr}
            \big(
                {\Sigma^{(2)}}^{1/2}
                \Sigma^{(1)}
                {\Sigma^{(2)}}^{1/2}
            \big)^{1/2}
    , \label{id:WassersteinExplicit}
\end{align}
provided that $\Sigma^{(1)}$ and $\Sigma^{(2)}$ are invertible; in particular, they are positive-definite.

\subsection{Auxiliary Results}
\label{s:Proof__ss:AuxResults}

\subsubsection{Isometric Copies of Every Compact Riemannian Manifold in the Path Space}

\begin{proposition}[Isometric Copies of Compact Riemannian Manifolds]
\label{prop:Non_Vaccousness}
\sloppy Let $\mathcal{X}$ be a compact metric space.  For every $T>0$ and each $d\in \mathbb{N}_+$, there is a compact subset $K\subseteq C([0:T],\mathbb{R}^{d_Y})$ and an isometry from $\mathcal{X}$ onto $K$.
\end{proposition}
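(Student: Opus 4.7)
The plan is to exhibit $K$ as the image of $\mathcal{X}$ under an isometry built as a composition of four standard maps, each easily verified to be an isometric embedding. Since compactness of $K$ then follows for free (continuous image of a compact space), the content of the proposition is the construction of an isometry from $\mathcal{X}$ into $C([0:T],\mathbb{R}^{d_Y})$.

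\emph{Step 1 (Kuratowski embedding).} Fix an arbitrary base point $x_0\in\mathcal{X}$ and define $\iota_1:\mathcal{X}\to C(\mathcal{X})$ by $\iota_1(x)(z)\eqdef d_{\mathcal{X}}(x,z)-d_{\mathcal{X}}(x_0,z)$, where $C(\mathcal{X})$ carries the uniform norm. Compactness of $\mathcal{X}$ ensures $\iota_1(x)\in C(\mathcal{X})$ (bounded and continuous in $z$), and the triangle inequality gives both $\|\iota_1(x)-\iota_1(y)\|_\infty\le d_{\mathcal{X}}(x,y)$ and, evaluating at $z=y$, $\|\iota_1(x)-\iota_1(y)\|_\infty\ge d_{\mathcal{X}}(x,y)$. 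So $\iota_1$ is an isometry onto its image.

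\emph{Step 2 (Banach--Mazur).} Since $\mathcal{X}$ is compact, $C(\mathcal{X})$ is a separable Banach space. By the Banach--Mazur theorem (as cited in the paper, and e.g.\ \cite{BanachMazurTheoremBook1955}) there is a linear isometry $\iota_2:C(\mathcal{X})\to C([0,1],\mathbb{R})$.

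\emph{Step 3 (Time rescaling).} The map $\iota_3:C([0,1],\mathbb{R})\to C([0:T],\mathbb{R})$ defined by $\iota_3(f)(s)\eqdef f(s/T)$ is a linear isometry, as the uniform norm is invariant under a bijective time change on the domain.

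\emph{Step 4 (Lifting to $\mathbb{R}^{d_Y}$).} Define $\iota_4:C([0:T],\mathbb{R})\to C([0:T],\mathbb{R}^{d_Y})$ by $\iota_4(f)(s)\eqdef (f(s),0,\dots,0)$. Under the norm $\|y\|_T=\max_{0\le s\le T}\|y(s)\|_2$, the identity $\|(v,0,\dots,0)\|_2=|v|$ for $v\in\mathbb{R}$ immediately yields $\|\iota_4(f)\|_T=\|f\|_\infty$; hence $\iota_4$ is an isometry.

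Set $\iota\eqdef \iota_4\circ\iota_3\circ\iota_2\circ\iota_1$ and $K\eqdef \iota(\mathcal{X})$. Each factor is an isometric embedding, so $\iota:\mathcal{X}\to(K,\|\cdot\|_T)$ is a surjective isometry. Finally, $K$ is compact since $\iota$ is continuous and $\mathcal{X}$ is compact, so $K\subseteq C([0:T],\mathbb{R}^{d_Y})$ is the required compact subset.

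There is no real obstacle: the proposition is essentially a repackaging of two classical embedding theorems (Kuratowski and Banach--Mazur), padded by two trivial isometries. If one wished to avoid invoking Banach--Mazur, an alternative would be to directly construct an isometry from $C(\mathcal{X})$ into $C([0,1])$ using a surjective continuous map $[0,1]\twoheadrightarrow\mathcal{X}$ (which exists whenever $\mathcal{X}$ is a compact metric space, by Hahn--Mazurkiewicz, provided $\mathcal{X}$ is connected and locally connected; the general compact metric case is exactly where Banach--Mazur is needed).
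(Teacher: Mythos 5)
Your proof is correct and follows essentially the same route as the paper's: Kuratowski embedding into $C(\mathcal{X})$, Banach--Mazur into $C([0,1])$, then time-rescaling and zero-padding into $C([0:T],\mathbb{R}^{d_Y})$ (the paper merges your Steps 3 and 4 into a single map). The base-point normalization in your Step 1 is an unnecessary but harmless variant of the Kuratowski embedding used in the paper.
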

\begin{proof}\textbf{of \Cref{prop:Non_Vaccousness}}\hfill\\
Since $(\mathcal{X},d_{\mathcal{X}})$ is compact then the Kuratowksi embedding $\phi_1:\,x\mapsto d_{\mathcal{X}}(\cdot,x)$ is an isometric embedding of $\mathcal{X}$ into the Banach space $C(\mathcal{X})$ with its uniform norm (since $\mathcal{X}$ is compact); where $d_{\mathcal{X}}$ denotes the metric on $\mathcal{X}$.  By the Banach-Mazur theorem, there exists an isometric embedding of $\phi_2:C(\mathcal{X})\rightarrow C([0,1])$.  
Since the map $\psi_3:C([0:1])\rightarrow C([0:T],\mathbb{R}^{d_Y})$, given by $f\mapsto (f(\cdot /T),0,\dots,0)$, is an isometric embedding; then $\smash{\phi\eqdef \phi_3\circ\phi_2\circ \phi_1}$ is an isometric embedding of $\mathcal{X}$ into $C([0:T],\mathbb{R}^{d_Y})$.
\end{proof}

This section records an auxiliary lemma due to Iosif Pinelis, \cite{LocalLipschitz_Pinelis}.  We include the result and its proof here to keep our manuscript self-contained.  
\begin{lemma}[\cite{LocalLipschitz_Pinelis}]
\label{lem:LipschitzManifold_Nd}
Fix $d\in \mathbb{N}_+$ and $R,r>0$.  For every $m^{(1)},m^{(2)}\in \mathbb{R}^d$ and each $d\times d$ symmetric positive semi-definite matrix $A,B$ satisfying: $\|A\|_F,\|B\|_F\le R$ and $A-r\cdot I_d$ and $B-r\cdot I_d$ are positive semi-definite
\footnote{
    I.e. $A,B\ge r\cdot I_d$ where $\ge $ is the partial ordering on the set of $d\times d$-dimensional symmetric positive-definite matrices given by $A\ge B$ if and only if $A-B$ is positive semi-definite.
} 
then the following lower-bound holds
\begin{align*}
            \frac1{
                \min\{1,\sqrt{d}(2\sqrt{R})\}
            }
            \,
            \sqrt{\|m^{(1)}-m^{(2)}\|^2+\|A-B\|_F^2}
        \le &
            \mathcal{W}_2\big(
                \mathcal{N}(m^{(1)},A)
            ,
                \mathcal{N}(m^{(2)},B)
            \big)
    .
\end{align*}
Moreover, the following upper-bound also holds
\begin{align*}
            \mathcal{W}_2\big(
                \mathcal{N}(m^{(1)},A)
            ,
                \mathcal{N}(m^{(2)},B)
            \big)
        \le &
            \max\Big\{1,\frac{\sqrt{d}}{2\sqrt{r}}\Big\}
                \,
            \sqrt{\|m^{(1)}-m^{(2)}\|^2+\|A-B\|_F^2}
.
\end{align*}
In particular, the map $\varrho:(\overline{\mathcal{N}}_d,\mathcal{W}_2)
\rightarrow
(\mathbb{R}^d\times \operatorname{Sym}_{0,d},\|\cdot\|\times \|\cdot\|_F)
$ is locally-Lipschitz; where $\|\cdot\|\times \|\cdot\|_F$ denotes the product of the Euclidean norm on $\mathbb{R}^d$ and the Fr\"{o}benius norm on the space of $d\times d$-dimensional symmetric positive semi-definite matrices $\operatorname{Sym}_{0,d}$.
\end{lemma}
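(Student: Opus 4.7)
The plan is to reduce both inequalities to purely matrix-theoretic estimates by invoking the closed-form expression \eqref{id:WassersteinExplicit} for the $2$-Wasserstein distance between non-degenerate Gaussians, namely
\begin{equation*}
    \mathcal{W}_2(\mathcal{N}(m^{(1)},A),\mathcal{N}(m^{(2)},B))^2 = \|m^{(1)}-m^{(2)}\|^2 + B_W(A,B)^2,
\end{equation*}
where $B_W(A,B)^2 \eqdef \operatorname{tr}\!\bigl(A+B-2(A^{1/2}BA^{1/2})^{1/2}\bigr)$ is the Bures--Wasserstein pseudo-metric on symmetric positive semi-definite matrices. This cleanly decouples the mean and covariance components, so the task reduces to sandwiching $B_W(A,B)$ by $\|A-B\|_F$ with constants depending only on $r$ and $R$, and then absorbing the sum via the elementary inequality $\sqrt{\alpha^2+\beta^2} \le \max(c_1,c_2)\sqrt{\|m^{(1)}-m^{(2)}\|^2+\|A-B\|_F^2}$ applied to the two coefficients $1$ (mean part) and the covariance coefficient.

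The first key step is to compare $B_W(A,B)$ with $\|A^{1/2}-B^{1/2}\|_F$. Using the variational identity $B_W(A,B) = \min_{U\in O(d)} \|A^{1/2}-UB^{1/2}\|_F$, the choice $U=I$ yields the free upper bound $B_W(A,B)\le \|A^{1/2}-B^{1/2}\|_F$; the matching lower bound is extracted from the polar decomposition of $A^{1/2}B^{1/2}$, which gives $\operatorname{tr}(A^{1/2}B^{1/2}) \le \operatorname{tr}((A^{1/2}BA^{1/2})^{1/2})$, with the gap being responsible for the dimensional factor $\sqrt{d}$ in the stated constants. The second key step is the Frobenius comparison between $A^{1/2}-B^{1/2}$ and $A-B$, which I would handle via a Lyapunov trick: setting $X \eqdef A^{1/2}-B^{1/2}$ and $Y \eqdef A^{1/2}+B^{1/2}$, a direct expansion gives $XY+YX = 2(A-B)$; since $A,B\succcurlyeq rI$ forces $Y\succcurlyeq 2\sqrt{r}\,I$, this Lyapunov equation has the explicit solution $X = 2\int_0^\infty e^{-tY}(A-B)e^{-tY}\,dt$, and $\|e^{-tY}\|_{\mathrm{op}}\le e^{-2t\sqrt{r}}$ yields
\begin{equation*}
    \|A^{1/2}-B^{1/2}\|_F \;\le\; \tfrac{1}{2\sqrt{r}}\,\|A-B\|_F .
\end{equation*}
For the opposite direction, the algebraic factorization $A-B = A^{1/2}X + XB^{1/2}$ combined with $\|A^{1/2}\|_{\mathrm{op}}\le \sqrt{\lambda_{\max}(A)} \le \sqrt{\|A\|_F}\le \sqrt{R}$ (and likewise for $B$) gives $\|A-B\|_F \le 2\sqrt{R}\,\|A^{1/2}-B^{1/2}\|_F$.

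Chaining these comparisons in each direction produces the claimed upper and lower bounds, with the $\max\{1,\sqrt{d}/(2\sqrt{r})\}$ and $\min\{1,\sqrt{d}(2\sqrt{R})\}$ factors appearing after passing once between operator and Frobenius norms (the source of the $\sqrt{d}$) and taking a conservative maximum with $1$ so that the mean component is not worsened. The local bi-Lipschitzness of $\varrho$ then follows at once: compact subsets of $(\overline{\mathcal{N}}_d,\mathcal{W}_2)$ are contained in sub-level sets where both $\|A\|_F\le R$ and $A\succcurlyeq rI$ hold uniformly (so both constants are finite), and the two displayed inequalities give the Lipschitz estimates for $\varrho$ and $\varrho^{-1}$ respectively. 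The main technical obstacle I foresee is the lower comparison $B_W(A,B)\gtrsim \|A^{1/2}-B^{1/2}\|_F$, whose proof must control the nonnegative gap $2\bigl(\operatorname{tr}((A^{1/2}BA^{1/2})^{1/2})-\operatorname{tr}(A^{1/2}B^{1/2})\bigr)$ using the polar factor of $A^{1/2}B^{1/2}$ together with a singular-values-vs-eigenvalues comparison; this is precisely where the dimensional factor $\sqrt{d}$ (rather than a dimension-free constant) enters the bound.
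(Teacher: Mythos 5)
Your overall architecture --- decoupling the mean term from the covariance term and then sandwiching the covariance contribution by $\|A-B\|_F$ --- matches the paper's, and your treatment of the \emph{upper} bound is both complete and a genuinely different (arguably cleaner) route than what the paper gestures at: the identity $(A^{1/2}-B^{1/2})(A^{1/2}+B^{1/2})+(A^{1/2}+B^{1/2})(A^{1/2}-B^{1/2})=2(A-B)$ together with the integral solution of the Sylvester equation and $A^{1/2}+B^{1/2}\succcurlyeq 2\sqrt{r}\,I_d$ correctly gives $\|A^{1/2}-B^{1/2}\|_F\le \tfrac{1}{2\sqrt{r}}\|A-B\|_F$, and combined with $\mathcal{W}_2(\mathcal{N}(0,A),\mathcal{N}(0,B))\le\|A^{1/2}-B^{1/2}\|_F$ (the coupling $X=A^{1/2}Z$, $Y=B^{1/2}Z$) this even yields a dimension-free constant $\max\{1,1/(2\sqrt r)\}$, which implies the stated bound. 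Likewise your factorization $A-B=A^{1/2}(A^{1/2}-B^{1/2})+(A^{1/2}-B^{1/2})B^{1/2}$ correctly gives $\|A-B\|_F\le 2\sqrt{R}\,\|A^{1/2}-B^{1/2}\|_F$.

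The genuine gap is in the lower bound. Your chain requires the comparison $B_W(A,B)\ge c_d\,\|A^{1/2}-B^{1/2}\|_F$, which you yourself flag as "the main technical obstacle" and never establish; saying it should follow from "the polar factor of $A^{1/2}B^{1/2}$ together with a singular-values-vs-eigenvalues comparison" is not a proof, and this is exactly the hard direction, since $B_W(A,B)\le\|A^{1/2}-B^{1/2}\|_F$ always holds and the nonnegative gap $2\bigl(\|A^{1/2}B^{1/2}\|_{*}-\operatorname{tr}(A^{1/2}B^{1/2})\bigr)$ is not controlled by $(d-1)B_W(A,B)^2$ by any routine estimate (crude bounds such as $\|M\|_{*}\le\tfrac12(\operatorname{tr}A+\operatorname{tr}B)$ do not close the argument). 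The paper's proof of this direction avoids matrix square roots entirely: for any coupling of $\mathcal{N}(0,A)$ and $\mathcal{N}(0,B)$ and any unit vector $u$, $\mathbb{E}\|X-Y\|^2\ge\mathbb{E}(u^\top X-u^\top Y)^2\ge(\sqrt{u^\top Au}-\sqrt{u^\top Bu})^2=\bigl(u^\top(A-B)u\bigr)^2/(\sqrt{u^\top Au}+\sqrt{u^\top Bu})^2$, and optimizing $u$ over eigenvectors of $A-B$ gives $\|A-B\|_{\mathrm{op}}\le(\sqrt{\|A\|}+\sqrt{\|B\|})\,\mathcal{W}_2$, after which only the norm equivalence $\|\cdot\|_F\le\sqrt{d}\,\|\cdot\|_{\mathrm{op}}$ is needed. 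If you want to salvage your route, the honest fix is to prove the lower bound directly on $\|A-B\|_F$ by this marginal argument (or, equivalently, note that under your hypotheses $\|A^{1/2}-B^{1/2}\|_F\le\tfrac{1}{2\sqrt r}\|A-B\|_F$ makes the square-root difference the wrong intermediate quantity for this direction).
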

\begin{proof}\textbf{of Lemma~\ref{lem:LipschitzManifold_Nd}}\hfill\\
By \citep[Proposition 7]{GivensShortt_ClassofWassersteinMetricforProbDists_1984} the $2$-Wasserstein distance between $\mathcal{N}(m^{(1)},A)$ and $\mathcal{N}(m^{(2)},B)$ satisfies
\begin{equation*}
	\mathcal{W}_2(\mathcal{N}(m^{(1)},A),\mathcal{N}(m^{(2)},B))=\sqrt{\|m^{(1)}-m^{(2)}\|^2+W_2(\mathcal{N}(0,A),\mathcal{N}(0,B))^2}.
 \label{10}
\end{equation*}
Thus, $\|m^{(1)}-m^{(2)}\|\le \mathcal{W}_2(\mathcal{N}(m^{(1)},A),\mathcal{N}(m^{(2)},B))$.  Moreover, for any unit vector $u\in\R^n=\R^{n\times1}$ we have
\begin{align*}
	\mathbb{E}_{X\sim \mathcal{N}(0,A),\,Y\sim \mathcal{N}(0,B)}\|X-Y\|^2
    \ge &
        \mathbb{E}_{X\sim \mathcal{N}(0,A),\,Y\sim \mathcal{N}(0,B)}(u^\top X-u^\top Y)^2
    \\
    \ge& 
        (\sqrt{u^\top A u}-\sqrt{u^\top B u})^2,
\end{align*}
where the last inequality holds since $u^\top X$ and $u^\top Y$ have Gaussian law with zero-mean random with respective variances $u^\top A u$ and $u^\top B u$.
Again using the inequality $\|m^{(1)}-m^{(2)}\|\le \mathcal{W}_2(\mathcal{N}(m^{(1)},A),\mathcal{N}(m^{(2)},B))$, for any unit vector $u\in\R^n=\R^{n\times1}$, we have
\begin{equation*}
\begin{aligned}
	\mathcal{W}_2(\mathcal{N}(a,A),\mathcal{N}(b,B))&\ge \mathcal{W}_2(\mathcal{N}(0,A),\mathcal{N}(0,B)) \\ 
	&\ge|\sqrt{u^\top A u}-\sqrt{u^\top B u}| \\ 
	&=\frac{|u^\top A u-u^\top B u|}{\sqrt{u^\top A u}+\sqrt{u^\top B u}} \\ 
	&\ge\frac{|u^\top(A-B)u|}{\sqrt{\|A\|}+\sqrt{\|B\|}} \\  
	&=\frac{\|A-B\|}{\sqrt{\|A\|}+\sqrt{\|B\|}} 
\end{aligned}
\end{equation*}
for some unit vector $u\in\R^n=\R^{n\times1}$, where $\|M\|$ is the spectral norm of a matrix $M$. 
So, 
\begin{equation*}
	\|A-B\|\le(\sqrt{\|A\|}+\sqrt{\|B\|\,})\, \mathcal{W}_2(\mathcal{N}(m^{(1)},A),\mathcal{N}(m^{(2)},B)). \label{30}
\end{equation*}
The conclusion follows upon combining $\|m^{(1)}-m^{(2)}\|\le \mathcal{W}_2(N(m^{(1)},A),N(m^{(2)},B))$ and 
$\|A-B\|\le (\sqrt{\|A\|}+\sqrt{\|B\|}) \mathcal{W}_2(\mathcal{N}(m^{(1)},A),\mathcal{N}(m^{(2)},B))$ together with the observation that $\|\cdot\|_{\lambda:2}\le \|\cdot\|_F\le \sqrt{d}\|\cdot\|_{\lambda:2}$; where 
\[
\|A\|_{\lambda:2}\eqdef \sqrt{\max_{i=1,\dots,d}\,\lambda_{\text{max}}(C^{\top}C})
\]
denotes the spectral norm on the set of $d\times d$ matrices and where $\lambda_{\text{max}}(C^{\top}C)$ denotes the largest eigenvalue $C^{\top}C$ for a given $d\times d$ matrix $C$. 
\end{proof}

\end{appendices}


\bibliography{Bookkeaping/2_References}

@article{houtekamer2005ensemble,
  title={Ensemble kalman filtering},
  author={Houtekamer, Peter L and Mitchell, Herschel L},
  journal={Quarterly Journal of the Royal Meteorological Society: A journal of the atmospheric sciences, applied meteorology and physical oceanography},
  volume={131},
  number={613},
  pages={3269--3289},
  year={2005},
  publisher={Wiley Online Library}
}

@inproceedings{bishop2023recurrent,
  title={Recurrent neural networks and universal approximation of Bayesian filters},
  author={Bishop, Adrian N and Bonilla, Edwin V},
  booktitle={International Conference on Artificial Intelligence and Statistics},
  pages={6956--6967},
  year={2023},
  organization={PMLR}
}

@article{arabpour2024low,
  title={Low-dimensional approximations of the conditional law of Volterra processes: a non-positive curvature approach},
  author={Arabpour, Reza and Armstrong, John and Galimberti, Luca and Kratsios, Anastasis and Livieri, Giulia},
  journal={arXiv preprint arXiv:2405.20094},
  year={2024}
}

@article{hosseinyalamdary2018deep,
  title={{Deep Kalman filter}: Simultaneous multi-sensor integration and modelling; A GNSS/IMU case study},
  author={Hosseinyalamdary, Siavash},
  journal={sensors},
  volume={18},
  number={5},
  pages={1316},
  year={2018},
  publisher={MDPI}
}

@article{jaimungal2022reinforcement,
  title={Reinforcement learning and stochastic optimisation},
  author={Jaimungal, Sebastian},
  journal={Finance and Stochastics},
  volume={26},
  number={1},
  pages={103--129},
  year={2022},
  publisher={Springer}
}

@article{ning2023iterated,
  title={Iterated block particle filter for high-dimensional parameter learning: Beating the curse of dimensionality},
  author={Ning, Ning and Ionides, Edward L},
  journal={Journal of Machine Learning Research},
  volume={24},
  number={82},
  pages={1--76},
  year={2023}
}

@article{gonon2020linearized,
  title={Linearized filtering of affine processes using stochastic Riccati equations},
  author={Gonon, Lukas and Teichmann, Josef},
  journal={Stochastic Processes and their Applications},
  volume={130},
  number={1},
  pages={394--430},
  year={2020},
  publisher={Elsevier}
}

@article{lillacci2010parameter,
  title={Parameter estimation and model selection in computational biology},
  author={Lillacci, Gabriele and Khammash, Mustafa},
  journal={PLoS computational biology},
  volume={6},
  number={3},
  pages={e1000696},
  year={2010},
  publisher={Public Library of Science San Francisco, USA}
}

@inproceedings{sameni2006filtering,
  title={Filtering electrocardiogram signals using the extended Kalman filter},
  author={Sameni, Reza and Shamsollahi, MB and Jutten, Christian},
  booktitle={2005 IEEE Engineering in Medicine and Biology 27th Annual Conference},
  pages={5639--5642},
  year={2006},
  organization={IEEE}
}

@article{javaheri2003filtering,
  title={Filtering in finance},
  author={Javaheri, Alireza and Lautier, Delphine and Galli, Alain},
  journal={Wilmott},
  volume={3},
  pages={67--83},
  year={2003}
}

@book{wells2013kalman,
  title={The Kalman filter in finance},
  author={Wells, Curt},
  address={Dordrecht},
  year={2013},
  publisher={Springer Science \& Business Media}
}

@article{heiss2024nonparametric,
  title={Nonparametric Filtering, Estimation and Classification using Neural Jump ODEs},
  author={Heiss, Jakob and Krach, Florian and Schmidt, Thorsten and Tambe-Ndonfack, F{\'e}lix B},
  journal={arXiv preprint arXiv:2412.03271},
  year={2024}
}

@article{zhang2024numerical,
  title={Numerical analysis of a time discretized method for nonlinear filtering problem with L{\'e}vy process observations},
  author={Zhang, Fengshan and Zou, Yongkui and Chai, Shimin and Cao, Yanzhao},
  journal={Advances in Computational Mathematics},
  volume={50},
  number={4},
  pages={73},
  year={2024},
  publisher={Springer}
}

@article{bach2024filtering,
  title={Filtering dynamical systems using observations of statistics},
  author={Bach, Eviatar and Colonius, Tim and Scherl, Isabel and Stuart, Andrew},
  journal={Chaos: An Interdisciplinary Journal of Nonlinear Science},
  volume={34},
  number={3},
  year={2024},
  publisher={AIP Publishing}
}

@article {MR3939653,
    AUTHOR = {Dupire, Bruno},
     TITLE = {Functional {I}t\^{o} calculus},
   JOURNAL = {Quant. Finance},
  FJOURNAL = {Quantitative Finance},
    VOLUME = {19},
      YEAR = {2019},
    NUMBER = {5},
     PAGES = {721--729},
      ISSN = {1469-7688,1469-7696},
   MRCLASS = {60H05 (60H30)},
  MRNUMBER = {3939653},
MRREVIEWER = {Martin\ Ondrej\'{a}t},
       DOI = {10.1080/14697688.2019.1575974},
       URL = {https://doi.org/10.1080/14697688.2019.1575974},
}

@book {MR2782350,
    AUTHOR = {Fournie, David-Antoine},
     TITLE = {Functional {I}t\^o calculus and applications},
      NOTE = {Thesis (Ph.D.)--Columbia University},
 ADDRESS = {Ann Arbor, MI}, 
 PUBLISHER = {ProQuest LLC},
      YEAR = {2010},
     PAGES = {138},
      ISBN = {978-1124-18578-1},
   MRCLASS = {99-05},
  MRNUMBER = {2782350},
}

@article {MR3059194,
    AUTHOR = {Cont, Rama and Fourni\'{e}, David-Antoine},
     TITLE = {Functional {I}t\^{o} calculus and stochastic integral
              representation of martingales},
   JOURNAL = {Ann. Probab.},
  FJOURNAL = {The Annals of Probability},
    VOLUME = {41},
      YEAR = {2013},
    NUMBER = {1},
     PAGES = {109--133},
      ISSN = {0091-1798,2168-894X},
   MRCLASS = {60H05 (60G44 60H07 60H25)},
  MRNUMBER = {3059194},
MRREVIEWER = {Peter\ Imkeller},
       DOI = {10.1214/11-AOP721},
       URL = {https://doi.org/10.1214/11-AOP721},
}

@article {Balint_FinalPaper_2023,
    AUTHOR = {Cheridito, Patrick and Gersey, Balint},
     TITLE = {Computation of conditional expectations with guarantees},
   JOURNAL = {J. Sci. Comput.},
  FJOURNAL = {Journal of Scientific Computing},
    VOLUME = {95},
      YEAR = {2023},
    NUMBER = {1},
     PAGES = {Paper No. 12, 30},
      ISSN = {0885-7474,1573-7691},
   MRCLASS = {65C05 (62J02 65C20 65G99 68T07)},
  MRNUMBER = {4550478},
       DOI = {10.1007/s10915-023-02130-8},
       URL = {https://doi.org/10.1007/s10915-023-02130-8},
}

@article {Ariel_2023Filtering,
    AUTHOR = {Beck, Christian and Becker, Sebastian and Cheridito, Patrick
              and Jentzen, Arnulf and Neufeld, Ariel},
     TITLE = {An efficient {M}onte {C}arlo scheme for {Z}akai equations},
   JOURNAL = {Commun. Nonlinear Sci. Numer. Simul.},
  FJOURNAL = {Communications in Nonlinear Science and Numerical Simulation},
    VOLUME = {126},
      YEAR = {2023},
     PAGES = {Paper No. 107438},
      ISSN = {1007-5704,1878-7274},
   MRCLASS = {65C30 (45R05 60H35)},
  MRNUMBER = {4630241},
       DOI = {10.1016/j.cnsns.2023.107438},
       URL = {https://doi.org/10.1016/j.cnsns.2023.107438},
}

@article{armstrong2023optimal,
  title={Optimal projection filters with information geometry},
  author={Armstrong, John and Brigo, Damiano and Hanzon, Bernard},
  journal={Information Geometry},
  pages={1--16},
  year={2023},
  publisher={Springer}
}

@article{brigo1999approximate,
  title={Approximate nonlinear filtering by projection on exponential manifolds of densities},
  author={Brigo, Damiano and Hanzon, Bernard and Le Gland, Fran{\c{c}}ois},
  journal={Bernoulli},
  pages={495--534},
  year={1999},
  publisher={JSTOR}
}

@article{brigo1998differential,
  title={A differential geometric approach to nonlinear filtering: the projection filter},
  author={Brigo, Damiano and Hanzon, Bernard and LeGland, Francois},
  journal={IEEE Transactions on Automatic Control},
  volume={43},
  number={2},
  pages={247--252},
  year={1998},
  publisher={IEEE}
}

@article{djuric2003particle,
  title={Particle filtering},
  author={Djuric, Petar M and Kotecha, Jayesh H and Zhang, Jianqui and Huang, Yufei and Ghirmai, Tadesse and Bugallo, M{\'o}nica F and Miguez, Joaquin},
  journal={IEEE signal processing magazine},
  volume={20},
  number={5},
  pages={19--38},
  year={2003},
  publisher={IEEE}
}

@book {DelMoralBook_2013,
    AUTHOR = {Del Moral, Pierre},
     TITLE = {Mean field simulation for {M}onte {C}arlo integration},
    SERIES = {Monographs on Statistics and Applied Probability},
    VOLUME = {126},
    ADDRESS = {Boca Raton, FL},
 PUBLISHER = {Chapman \& Hall/CRC},
      YEAR = {2013},
     PAGES = {xlvii+578},
      ISBN = {978-1-4665-0405-9},
   MRCLASS = {60-02 (60F17 60G35 60J20 60K35 65C05 65D30)},
  MRNUMBER = {3060209},
MRREVIEWER = {Laurent\ Miclo},
}

@article{del1997nonlinear,
  title={Nonlinear filtering: Interacting particle resolution},
  author={Del Moral, Pierre},
  journal={Comptes Rendus de l'Aca\-d{\'e}mie des Sciences-Series I-Mathematics},
  volume={325},
  number={6},
  pages={653--658},
  year={1997},
  publisher={Elsevier}
}

@article{krach2022optimal,
  title={Optimal Estimation of Generic Dynamics by Path-Dependent Neural Jump ODEs},
  author={Krach, Florian and N{\"u}bel, Marc and Teichmann, Josef},
  journal={arXiv preprint arXiv:2206.14284},
  year={2022}
}

@article{de2019gru,
  title={GRU-ODE-Bayes: Continuous modeling of sporadically-observed time series},
  author={De Brouwer, Edward and Simm, Jaak and Arany, Adam and Moreau, Yves},
  journal={Advances in neural information processing systems},
  volume={32},
  year={2019}
}

@InProceedings{pmlr-v80-ryder18a,
  title = 	 {Black-Box Variational Inference for Stochastic Differential Equations},
  author =       {Ryder, Tom and Golightly, Andrew and McGough, A. Stephen and Prangle, Dennis},
  booktitle = 	 {Proceedings of the 35th International Conference on Machine Learning},
  pages = 	 {4423--4432},
  year = 	 {2018},
  editor = 	 {Dy, Jennifer and Krause, Andreas},
  volume = 	 {80},
  series = 	 {Proceedings of Machine Learning Research},
  month = 	 {10--15 Jul},
  address =    {PMLR},
  pdf = 	 {http://proceedings.mlr.press/v80/ryder18a/ryder18a.pdf},
  url = 	 {https://proceedings.mlr.press/v80/ryder18a.html},
  abstract = 	 {Parameter inference for stochastic differential equations is challenging due to the presence of a latent diffusion process. Working with an Euler-Maruyama discretisation for the diffusion, we use variational inference to jointly learn the parameters and the diffusion paths. We use a standard mean-field variational approximation of the parameter posterior, and introduce a recurrent neural network to approximate the posterior for the diffusion paths conditional on the parameters. This neural network learns how to provide Gaussian state transitions which bridge between observations in a very similar way to the conditioned diffusion process. The resulting black-box inference method can be applied to any SDE system with light tuning requirements. We illustrate the method on a Lotka-Volterra system and an epidemic model, producing accurate parameter estimates in a few hours.}
}

@inproceedings{herrera2021neural,
title={Neural Jump Ordinary Differential Equations: Consistent Continuous-Time Prediction and Filtering},
author={Calypso Herrera and Florian Krach and Josef Teichmann},
booktitle={International Conference on Learning Representations},
year={2021},
url={https://openreview.net/forum?id=JFKR3WqwyXR}
}

@article {BruHeinicheLootgieter1993,
    AUTHOR = {Bru, Bernard and Heinich, Henri and Lootgieter, Jean-Claude},
     TITLE = {Distances de {L}\'{e}vy et extensions des th\'{e}orem\`es de la limite
              centrale et de {G}livenko-{C}antelli},
   JOURNAL = {Publ. Inst. Statist. Univ. Paris},
  FJOURNAL = {Publications de l'Institut de Statistique de l'Universit\'{e} de
              Paris},
    VOLUME = {37},
      YEAR = {1993},
    NUMBER = {3-4},
     PAGES = {29--42},
   MRCLASS = {60B10 (46N30 60B12 60F05 60F15)},
  MRNUMBER = {1743775},
}

@book {DudleyRealAnalysisandPRobabilityBook1989Revised2002,
    AUTHOR = {Richard M. Dudley},
     TITLE = {Real analysis and probability},
    SERIES = {Cambridge Studies in Advanced Mathematics},
    VOLUME = {74}, 
    ADDRESS = {Cambridge}, 
 PUBLISHER = {Cambridge University Press, Cambridge},
      YEAR = {2002},
     PAGES = {x+555},
}

@article{benevs1970existence,
  title={Existence of optimal strategies based on specified information, for a class of stochastic decision problems},
  author={Bene{\v{s}}, VE},
  journal={SIAM Journal on control},
  volume={8},
  number={2},
  pages={179--188},
  year={1970},
  publisher={SIAM}
}

@article{zakai1969optimal,
  title={On the optimal filtering of diffusion processes},
  author={Zakai, Moshe},
  journal={Zeitschrift f{\"u}r Wahrscheinlichkeitstheorie und verwandte Gebiete},
  volume={11},
  number={3},
  pages={230--243},
  year={1969},
  publisher={Springer}
}

@article{stratonovich1959optimum,
  title={Optimum nonlinear systems which bring about a separation of a signal with constant parameters from noise},
  author={Stratonovich, Ruslan Leontyevich},
  journal={Radiofizika},
  volume={2},
  number={6},
  pages={892--901},
  year={1959}
}

@article{stratonovich1960application,
  title={Application of the Markov processes theory to optimal filtering},
  author={Stratonovich, Ruslan Leontyevich},
  journal={Radio Engineering and Electronic Physics},
  volume={5},
  pages={1--19},
  year={1960}
}

@book {Sirjaev_1978Filtering,
    AUTHOR = {\v{S}irjaev, A. N.},
     TITLE = {Optimal stopping rules},
    SERIES = {Applications of Mathematics},
    VOLUME = {8},
      NOTE = {Translated from the Russian by A. B. Aries},
   ADDRESS ={Berlin, Heidelberg},
 PUBLISHER = {Springer-Verlag, New York-Heidelberg},
      YEAR = {2007},
     PAGES = {x+217},
      ISBN = {0-387-90256-2},
   MRCLASS = {62L15 (60G40)},
  MRNUMBER = {468067},
}

@article {Sirjaev1965Filtering,
    AUTHOR = {\v{S}irjaev, A. N.},
     TITLE = {Some explicit formulae in a problem on ``disorder''},
   JOURNAL = {Teor. Verojatnost. i Primenen.},
  FJOURNAL = {Akademija Nauk SSSR. Teorija Verojatnoste\u{\i} i ee
              Primenenija},
    VOLUME = {10},
      YEAR = {1965},
     PAGES = {380--385},
      ISSN = {0040-361x},
   MRCLASS = {62.45 (62.85)},
  MRNUMBER = {184791},
}

@article{wonham1964some,
  title={Some applications of stochastic differential equations to optimal nonlinear filtering},
  author={Wonham, W Murray},
  journal={Journal of the Society for Industrial and Applied Mathematics, Series A: Control},
  volume={2},
  number={3},
  pages={347--369},
  year={1964},
  publisher={SIAM}
}

@article{frey2010pricing,
  title={Pricing credit derivatives under incomplete information: a nonlinear-filtering approach},
  author={Frey, R{\"u}diger and Runggaldier, Wolfgang},
  journal={Finance and Stochastics},
  volume={14},
  number={4},
  pages={495--526},
  year={2010},
  publisher={Springer}
}

@book{remillard2013statistical,
  title={Statistical methods for financial engineering},
  author={Remillard, Bruno},
  year={2013}, 
  address={New York}, 
  publisher={CRC press}
}

@article{del1998measure,
  title={Measure-valued processes and interacting particle systems. Application to nonlinear filtering problems},
  author={Del Moral, Pierre},
  journal={The Annals of Applied Probability},
  volume={8},
  number={2},
  pages={438--495},
  year={1998},
  publisher={Institute of Mathematical Statistics}
}

@article{del1996nonlinear,
  title={Nonlinear filtering using random particles},
  author={Del Moral, Pierre},
  journal={Theory of Probability \& Its Applications},
  volume={40},
  number={4},
  pages={690--701},
  year={1996},
  publisher={SIAM}
}

@book{del1999interacting,
  title={Interacting particle filtering with discrete observations},
  author={Del Moral, Pierre and Jacod, Jean},
  volume={8},
  year={1999},
  address={New York}, 
  publisher={Univ. Paul Sabatier}
}

@book {BainFilteirnBook,
    AUTHOR = {Bain, Alan and Crisan, Dan},
     TITLE = {Fundamentals of stochastic filtering},
    SERIES = {Stochastic Modelling and Applied Probability},
    VOLUME = {60},
 PUBLISHER = {Springer, New York},
 ADDRESS = {New York}, 
      YEAR = {2009},
     PAGES = {xiv+390},
      ISBN = {978-0-387-76895-3},
   MRCLASS = {60G35 (60-02 93E11)},
  MRNUMBER = {2454694},
MRREVIEWER = {Monique\ Pontier},
       DOI = {10.1007/978-0-387-76896-0},
       URL = {https://doi.org/10.1007/978-0-387-76896-0},
}

@article {FreySchmidt_2012_PricingHedging,
    AUTHOR = {Frey, R\"{u}diger and Schmidt, Thorsten},
     TITLE = {Pricing and hedging of credit derivatives via the innovations
              approach to nonlinear filtering},
   JOURNAL = {Finance Stoch.},
  FJOURNAL = {Finance and Stochastics},
    VOLUME = {16},
      YEAR = {2012},
    NUMBER = {1},
     PAGES = {105--133},
      ISSN = {0949-2984,1432-1122},
   MRCLASS = {91B70 (60G35 60G55 62P05 91G40 93E11)},
  MRNUMBER = {2872650},
MRREVIEWER = {Wolfgang\ J.\ Runggaldier},
       DOI = {10.1007/s00780-011-0153-0},
       URL = {https://doi.org/10.1007/s00780-011-0153-0},
}

@article{ccetin2018financial,
  title={Financial equilibrium with asymmetric information and random horizon},
  author={{\c{C}}etin, Umut},
  journal={Finance and Stochastics},
  volume={22},
  pages={97--126},
  year={2018},
  publisher={Springer}
}

@article{Cody2007parameter,
  title={Parameter estimation in commodity markets: A filtering approach},
  author={Elliott, Robert J and Hyndman, Cody B},
  journal={Journal of Economic Dynamics and Control},
  volume={31},
  number={7},
  pages={2350--2373},
  year={2007},
  publisher={Elsevier}
}

@book{harvey1990forecasting,
  title={Forecasting, structural time series models and the Kalman filter},
  author={Harvey, Andrew C},
  year={1990},
  address={Cambridge}, 
  publisher={Cambridge University Press}
}

@book{bhar2010stochastic,
  title={Stochastic filtering with applications in finance},
  author={Bhar, Ramaprasad},
  year={2010}, 
  address={Singapore}, 
  publisher={World Scientific}
}

@article{date2011linear,
  title={Linear and non-linear filtering in mathematical finance: a review},
  author={Date, Paresh and Ponomareva, Ksenia},
  journal={IMA Journal of Management Mathematics},
  volume={22},
  number={3},
  pages={195--211},
  year={2011},
  publisher={OUP}
}

@article{schwartz2000short,
  title={Short-term variations and long-term dynamics in commodity prices},
  author={Schwartz, Eduardo and Smith, James E},
  journal={Management Science},
  volume={46},
  number={7},
  pages={893--911},
  year={2000},
  publisher={Informs}
}

@article{schwartz1997stochastic,
  title={The stochastic behavior of commodity prices: Implications for valuation and hedging},
  author={Schwartz, Eduardo S},
  journal={The Journal of finance},
  volume={52},
  number={3},
  pages={923--973},
  year={1997},
  publisher={Wiley Online Library}
}

@article{lakner1998optimal,
  title={Optimal trading strategy for an investor: the case of partial information},
  author={Lakner, Peter},
  journal={Stochastic Processes and their Applications},
  volume={76},
  number={1},
  pages={77--97},
  year={1998},
  publisher={Elsevier}
}

@article{duan1999estimating,
  title={Estimating and testing exponential-affine term structure models by Kalman filter},
  author={Duan, Jin-Chuan and Simonato, Jean-Guy},
  journal={Review of quantitative finance and accounting},
  volume={13},
  pages={111--135},
  year={1999},
  publisher={Springer}
}

@article{chen1993maximum,
  title={Maximum likelihood estimation for a multifactor equilibrium model of the term structure of interest rates},
  author={Chen, Ren-Raw},
  journal={Journal of Fixed Income},
  volume={3},
  pages={14--31},
  year={1993}
}

@article {BjorkDavis,
    AUTHOR = {Bj\"{o}rk, Tomas and Davis, Mark H. A. and Land\'{e}n,
              Camilla},
     TITLE = {Optimal investment under partial information},
   JOURNAL = {Math. Methods Oper. Res.},
  FJOURNAL = {Mathematical Methods of Operations Research},
    VOLUME = {71},
      YEAR = {2010},
    NUMBER = {2},
     PAGES = {371--399},
      ISSN = {1432-2994,1432-5217},
   MRCLASS = {91G10 (49N90 60H30 91G80 93E20)},
  MRNUMBER = {2606929},
MRREVIEWER = {Jean-Luc\ Prigent},
       DOI = {10.1007/s00186-010-0301-x},
       URL = {https://doi.org/10.1007/s00186-010-0301-x},
}

@book {LaxBookLinearAlgebra_2007,
    AUTHOR = {Lax, Peter D.},
     TITLE = {Linear algebra and its applications},
    SERIES = {Pure and Applied Mathematics (Hoboken)},
   EDITION = {second},
   ADDRESS = {Hoboken, NJ}, 
 PUBLISHER = {Wiley-Interscience},
      YEAR = {2007},
     PAGES = {xvi+376},
      ISBN = {978-0-471-75156-4},
   MRCLASS = {15-01 (47-01)},
  MRNUMBER = {2356919},
}

@book {DubhashiPanconesi_2009_Concentration,
	AUTHOR = {Dubhashi, Devdatt P. and Panconesi, Alessandro},
	TITLE = {Concentration of measure for the analysis of randomized
	algorithms},
    ADDRESS = {Cambridge}, 
	PUBLISHER = {Cambridge University Press},
	YEAR = {2009},
	PAGES = {xvi+196},
	ISBN = {978-0-521-88427-3},
	MRCLASS = {68W20 (60C05 60F10 68-02 68W40)},
	MRNUMBER = {2547432},
	MRREVIEWER = {Yannis C. Stamatiou},
	DOI = {10.1017/CBO9780511581274},
	URL = {https://doi-org.libaccess.lib.mcmaster.ca/10.1017/CBO9780511581274},
}

@article{dowson1982frechet,
    AUTHOR = {Dowson, D. C. and Landau, B. V.},
     TITLE = {The {F}r\'{e}chet distance between multivariate normal
              distributions},
   JOURNAL = {J. Multivariate Anal.},
  FJOURNAL = {Journal of Multivariate Analysis},
    VOLUME = {12},
      YEAR = {1982},
    NUMBER = {3},
     PAGES = {450--455},
      ISSN = {0047-259X},
   MRCLASS = {62H05 (60E05)},
  MRNUMBER = {666017},
MRREVIEWER = {I.\ Olkin},
       DOI = {10.1016/0047-259X(82)90077-X},
       URL = {https://doi.org/10.1016/0047-259X(82)90077-X},
}

@incollection {JohnsonLindenstrauss_1984_originalpaper,
    AUTHOR = {Johnson, William B. and Lindenstrauss, Joram},
     TITLE = {Extensions of {L}ipschitz mappings into a {H}ilbert space},
 BOOKTITLE = {Conference in modern analysis and probability ({N}ew {H}aven,
              {C}onn., 1982)},
    SERIES = {Contemp. Math.},
    VOLUME = {26},
     PAGES = {189--206}, 
     ADDRESS = {Providence, RI}, 
 PUBLISHER = {Amer. Math. Soc.},
      YEAR = {1984},
   MRCLASS = {46B20},
  MRNUMBER = {737400},
MRREVIEWER = {Yehoram Gordon},
       DOI = {10.1090/conm/026/737400},
       URL = {https://doi.org/10.1090/conm/026/737400},
}

@article{basso2018fixed,
  title={Fixed point theorems for metric spaces with a conical geodesic bicombing},
  author={Basso, Giuliano},
  journal={Ergodic Theory and Dynamical Systems},
  volume={38},
  number={5},
  pages={1642--1657},
  year={2018},
  publisher={Cambridge University Press}
}

@book {RobinsonDimEmbAt_2011Book,
    AUTHOR = {Robinson, James C.},
     TITLE = {Dimensions, embeddings, and attractors},
    SERIES = {Cambridge Tracts in Mathematics},
    VOLUME = {186},
    ADDRESS = {Cambridge; New York}, 
 PUBLISHER = {Cambridge University Press},
      YEAR = {2011},
     PAGES = {xii+205},
      ISBN = {978-0-521-89805-8},
   MRCLASS = {37C45 (28A78 28A80 37-02 37C70 37L05 37L30)},
  MRNUMBER = {2767108},
MRREVIEWER = {Vaughn\ Climenhaga},
}

@book{Heinonen_2001_BookAnalMetSpace,
    AUTHOR = {Heinonen, Juha},
     TITLE = {Lectures on analysis on metric spaces},
    SERIES = {Universitext}, 
    ADDRESS = {New York}, 
 PUBLISHER = {Springer-Verlag},
      YEAR = {2001},
     PAGES = {x+140},
      ISBN = {0-387-95104-0},
   MRCLASS = {30C65 (28A75 28A78 46E35)},
  MRNUMBER = {1800917},
MRREVIEWER = {Christopher\ Bishop},
       DOI = {10.1007/978-1-4613-0131-8},
       URL = {https://doi.org/10.1007/978-1-4613-0131-8},
}

@article{neuman2023restricted,
  title={Restricted Riemannian geometry for positive semidefinite matrices},
  author={Neuman, A Martina and Xie, Yuying and Sun, Qiang},
  journal={Linear Algebra and its Applications},
  volume={665},
  pages={153--195},
  year={2023},
  publisher={Elsevier}
}

@article {MR4433109,
    AUTHOR = {Cohen, Albert and DeVore, Ronald and Petrova, Guergana and
              Wojtaszczyk, Przemyslaw},
     TITLE = {Optimal stable nonlinear approximation},
   JOURNAL = {Found. Comput. Math.},
  FJOURNAL = {Foundations of Computational Mathematics. The Journal of the
              Society for the Foundations of Computational Mathematics},
    VOLUME = {22},
      YEAR = {2022},
    NUMBER = {3},
     PAGES = {607--648},
      ISSN = {1615-3375,1615-3383},
   MRCLASS = {41A46 (41A65)},
  MRNUMBER = {4433109},
MRREVIEWER = {Dietrich\ Braess},
       DOI = {10.1007/s10208-021-09494-z},
       URL = {https://doi.org/10.1007/s10208-021-09494-z},
}

@article {MR4577400,
    AUTHOR = {Petrova, Guergana and Wojtaszczyk, Przemys\l aw},
     TITLE = {Lipschitz widths},
   JOURNAL = {Constr. Approx.},
  FJOURNAL = {Constructive Approximation. An International Journal for
              Approximations and Expansions},
    VOLUME = {57},
      YEAR = {2023},
    NUMBER = {2},
     PAGES = {759--805},
      ISSN = {0176-4276,1432-0940},
   MRCLASS = {41A46 (41A65 68T05 82C32)},
  MRNUMBER = {4577400},
MRREVIEWER = {Bao\ Huai\ Sheng},
       DOI = {10.1007/s00365-022-09576-3},
       URL = {https://doi.org/10.1007/s00365-022-09576-3},
}

@article {DaubechiesDeVoreFoucartHaninPetrova_CA_KnotFreeReLU_2022,
    AUTHOR = {Daubechies, I. and DeVore, R. and Foucart, S. and Hanin, B. and Petrova, G.},
     TITLE = {Nonlinear approximation and (deep) {R}e{LU} networks},
   JOURNAL = {Constr. Approx.},
  FJOURNAL = {Constructive Approximation. An International Journal for
              Approximations and Expansions},
    VOLUME = {55},
      YEAR = {2022},
    NUMBER = {1},
     PAGES = {127--172},
      ISSN = {0176-4276,1432-0940},
   MRCLASS = {41A30 (41A63 65D15 68T07)},
  MRNUMBER = {4376561},
MRREVIEWER = {Lucian\ Coroianu},
       DOI = {10.1007/s00365-021-09548-z},
       URL = {https://doi.org/10.1007/s00365-021-09548-z},
}

@article{yarotsky2020phase,
  title={The phase diagram of approximation rates for deep neural networks},
  author={Yarotsky, Dmitry and Zhevnerchuk, Anton},
  journal={Advances in neural information processing systems},
  volume={33},
  pages={13005--13015},
  year={2020}
}

@inproceedings{yarotsky2018optimal,
  title={Optimal approximation of continuous functions by very deep ReLU networks},
  author={Yarotsky, Dmitry},
  booktitle={Conference on learning theory},
  pages={639--649},
  year={2018},
  organization={PMLR}
}

@article{mhaskar2016deep,
  title={Deep vs. shallow networks: An approximation theory perspective},
  author={Mhaskar, Hrushikesh N and Poggio, Tomaso},
  journal={Analysis and Applications},
  volume={14},
  number={06},
  pages={829--848},
  year={2016},
  publisher={World Scientific}
}

@article{gonon2023approximation,
  title={Approximation bounds for random neural networks and reservoir systems},
  author={Gonon, Lukas and Grigoryeva, Lyudmila and Ortega, Juan-Pablo},
  journal={The Annals of Applied Probability},
  volume={33},
  number={1},
  pages={28--69},
  year={2023},
  publisher={Institute of Mathematical Statistics}
}

@MISC{LocalLipschitz_Pinelis,
    TITLE = {Local Lipschitzness of parameterization of Gaussians in Wasserstein space},
    AUTHOR = {Pinelis, Iosif},
    HOWPUBLISHED = {MathOverflow},
    NOTE = {URL:https://mathoverflow.net/q/449287 (version: 2023-06-22)},
    URL = {https://mathoverflow.net/q/449287},
    YEAR = {2020}
}

@article {KratsiosDevarnotDokmanic_SmallTransformers_JMLR2023,
    AUTHOR = {Kratsios, Anastasis and Debarnot, Valentin and Dokmani\'{c},
              Ivan},
     TITLE = {Small transformers compute universal metric embeddings},
   JOURNAL = {J. Mach. Learn. Res.},
  FJOURNAL = {Journal of Machine Learning Research (JMLR)},
    VOLUME = {24},
      YEAR = {2023},
     PAGES = {Paper No. [170], 48},
      ISSN = {1532-4435,1533-7928},
   MRCLASS = {68R12 (30L05 49Q22 68T07)},
  MRNUMBER = {4596117},
}

@book {BanachMazurTheoremBook1955,
    AUTHOR = {Banach, Stefan},
     TITLE = {Th\'{e}orie des op\'{e}rations lin\'{e}aires},
 ADDRESS = {New York}, 
 PUBLISHER = {Chelsea Publishing Co.},
      YEAR = {1955},
     PAGES = {vii+254},
   MRCLASS = {46.2X},
  MRNUMBER = {0071726},
}

@article{velivckovic2017graph,
  title={Graph attention networks},
  author={Veli{\v{c}}kovi{\'c}, Petar and Cucurull, Guillem and Casanova, Arantxa and Romero, Adriana and Lio, Pietro and Bengio, Yoshua},
  journal={arXiv preprint arXiv:1710.10903},
  year={2017}
}

@article {LukasJosef_LinearizedFiltering,
    AUTHOR = {Gonon, Lukas and Teichmann, Josef},
     TITLE = {Linearized filtering of affine processes using stochastic {R}iccati equations},
   JOURNAL = {Stochastic Process. Appl.},
  FJOURNAL = {Stochastic Processes and their Applications},
    VOLUME = {130},
      YEAR = {2020},
    NUMBER = {1},
     PAGES = {394--430},
      ISSN = {0304-4149,1879-209X},
   MRCLASS = {60G35 (62M20)},
  MRNUMBER = {4035034},
MRREVIEWER = {Raquel\ Caballero-\'{A}guila},
       DOI = {10.1016/j.spa.2019.03.016},
       URL = {https://doi.org/10.1016/j.spa.2019.03.016},
}

@article{lanthaler2023curse,
  title={The curse of dimensionality in operator learning},
  author={Lanthaler, Samuel and Stuart, Andrew M},
  journal={arXiv preprint arXiv:2306.15924},
  year={2023}
}

@article{adcock2022near,
  title={Near-optimal learning of Banach-valued, high-dimensional functions via deep neural networks},
  author={Adcock, Ben and Brugiapaglia, Simone and Dexter, Nick and Moraga, Sebastian},
  journal={arXiv pre\-print arXiv:\ 2211.12633},
  year={2022}
}

@article{marcati2023exponential,
  title={Exponential relu neural network approximation rates for point and edge singularities},
  author={Marcati, Carlo and Opschoor, Joost AA and Petersen, Philipp C and Schwab, Christoph},
  journal={Foundations of Computational Mathematics},
  volume={23},
  number={3},
  pages={1043--1127},
  year={2023},
  publisher={Springer}
}

@article{siegel2023characterization,
  title={Characterization of the variation spaces corresponding to shallow neural networks},
  author={Siegel, Jonathan W and Xu, Jinchao},
  journal={Constructive Approximation},
  pages={1--24},
  year={2023},
  publisher={Springer}
}

@article{kratsios2023transfer,
       author = {{Kratsios}, Anastasis and {Liu}, Chong and {Lassas}, Matti and {de Hoop}, Maarten V. and {Dokmani{\'c}}, Ivan},
        title = "{An Approximation Theory for Metric Space-Valued Functions With A View Towards Deep Learning}",
      journal = {arXiv e-prints},
     keywords = {Computer Science - Machine Learning, Computer Science - Neural and Evolutionary Computing, Mathematics - Numerical Analysis, Mathematics - Probability, Statistics - Machine Learning, 41A65, 68T07, 60L50, 65N21, 46T99},
         year = 2023,
        month = {April},
          eid = {arXiv:2304.12231},
          doi = {10.48550/arXiv.2304.12231},
archivePrefix = {arXiv},
       eprint = {2304.12231},
 primaryClass = {cs.LG},
}

@article{gromov1983filling,
  title={Filling riemannian manifolds},
  author={Gromov, Mikhael},
  journal={Journal of Differential Geometry},
  volume={18},
  number={1},
  pages={1--147},
  year={1983},
  publisher={Lehigh University}
}

@article{katz2011bi,
  title={Bi-Lipschitz approximation by finite-dimensional imbeddings},
  author={Katz, Karin Usadi and Katz, Mikhail G},
  journal={Geometriae Dedicata},
  volume={150},
  number={1},
  pages={131--136},
  year={2011},
  publisher={Springer}
}

@book{fournierRama,
  title={Stochastic integration by parts and functional It{\^o} calculus},
  author={Bally, Vlad and Caramellino, Lucia and Cont, Rama},
  year={2016}, 
  address = {Cham}, 
  publisher={Birkh\"auser}
}

@book{spanier1989algebraic,
  title={Algebraic topology},
  author={Spanier, Edwin H.},
  year={1989},
  address = {New York, NY}, 
  publisher={Springer Science \& Business Media}
}

@article{cheridito2021efficient,
  title={Efficient approximation of high-dimensional functions with neural networks},
  author={Cheridito, Patrick and Jentzen, Arnulf and Rossmannek, Florian},
  journal={IEEE Transactions on Neural Networks and Learning Systems},
  volume={33},
  number={7},
  pages={3079--3093},
  year={2021},
  publisher={IEEE}
}

@book {LipShiryaev_II_2001,
    AUTHOR = {Liptser, Robert S. and Shiryaev, Albert N.},
     TITLE = {Statistics of random processes. {II}},
    SERIES = {Applications of Mathematics (New York)},
    VOLUME = {6},
   EDITION = {expanded},
      NOTE = {Applications,
              Translated from the 1974 Russian original by A. B. Aries,
              Stochastic Modelling and Applied Probability},
 PUBLISHER = {Springer-Verlag, Berlin},
      YEAR = {2001},
     PAGES = {xvi+402},
      ISBN = {3-540-63928-4},
   MRCLASS = {60-02 (60G35 62-02 62L10 62Mxx 93Exx)},
  MRNUMBER = {1800858},
}

@article{herrera2023Denise_TMLR,
    title={Denise: Deep Robust Principal Component Analysis for Positive Semidefinite Matrices},
    author={Calypso Herrera and Florian Krach and Anastasis Kratsios and Pierre Ruyssen and Josef Teichmann},
    journal={Transactions on Machine Learning Research},
    issn={2835-8856},
    year={2023},
    url={https://openreview.net/forum?id=D45gGvUZp2},
    note={}
}

@article{de2021approximation,
  title={On the approximation of functions by tanh neural networks},
  author={De Ryck, Tim and Lanthaler, Samuel and Mishra, Siddhartha},
  journal={Neural Networks},
  volume={143},
  pages={732--750},
  year={2021},
  publisher={Elsevier}
}

@article{ramachandran2017searching,
  title={Searching for activation functions},
  author={Ramachandran, Prajit and Zoph, Barret and Le, Quoc V},
  journal={arXiv preprint arXiv:1710.05941},
  year={2017}
}

@article{sitzmann2020implicit,
  title={Implicit neural representations with periodic activation functions},
  author={Sitzmann, Vincent and Martel, Julien and Bergman, Alexander and Lindell, David and Wetzstein, Gordon},
  journal={Advances in neural information processing systems},
  volume={33},
  pages={7462--7473},
  year={2020}
}

@book {VillaniBook_2009,
    AUTHOR = {Villani, C\'{e}dric},
     TITLE = {Optimal transport},
    SERIES = {Grundlehren der mathematischen Wissenschaften [Fundamental
              Principles of Mathematical Sciences]},
    VOLUME = {338},
      NOTE = {Old and new}, 
      ADDRESS = {Berlin, Heidlberg}, 
 PUBLISHER = {Springer-Verlag},
      YEAR = {2009},
     PAGES = {xxii+973},
      ISBN = {978-3-540-71049-3},
   MRCLASS = {49-02 (28A75 37J50 49Q20 53C23 58E30)},
  MRNUMBER = {2459454},
MRREVIEWER = {Dario Cordero-Erausquin},
       DOI = {10.1007/978-3-540-71050-9},
       URL = {https://doi.org/10.1007/978-3-540-71050-9},
}

@ARTICLE{Acciaio2022_GHT,
	abstract = {Abstract Several problems in stochastic analysis are defined through their geometry, and preserving that geometric structure is essential to generating meaningful predictions. Nevertheless, how to design principled deep learning (DL) models capable of encoding these geometric structures remains largely unknown. We address this open problem by introducing a universal causal geometric DL framework in which the user specifies a suitable pair of metric spaces X\$\mathcal {X}\$ and Y\$\mathcal {Y}\$ and our framework returns a DL model capable of causally approximating any ``regular'' map sending time series in XZ\$\mathcal {X}^{\mathbb {Z}}\$ to time series in YZ\$\mathcal {Y}^{\mathbb {Z}}\$ while respecting their forward flow of information throughout time. Suitable geometries on Y\$\mathcal {Y}\$ include various (adapted) Wasserstein spaces arising in optimal stopping problems, a variety of statistical manifolds describing the conditional distribution of continuous-time finite state Markov chains, and all Fr{\'e}chet spaces admitting a Schauder basis, for example, as in classical finance. Suitable spaces X\$\mathcal {X}\$ are compact subsets of any Euclidean space. Our results all quantitatively express the number of parameters needed for our DL model to achieve a given approximation error as a function of the target map's regularity and the geometric structure both of X\$\mathcal {X}\$ and of Y\$\mathcal {Y}\$. Even when omitting any temporal structure, our universal approximation theorems are the first guarantees that H{\"o}lder functions, defined between such X\$\mathcal {X}\$ and Y\$\mathcal {Y}\$ can be approximated by DL models.},
	author = {Acciaio, Beatrice and Kratsios, Anastasis and Pammer, Gudmund},
	doi = {https://doi.org/10.1111/mafi.12389},
	eprint = {https://onlinelibrary.wiley.com/doi/pdf/10.1111/mafi.12389},
	journal = {Mathematical Finance},
	keywords = {adapted optimal transport, geometric deep learning, hypernetworks, metric geometry, random projection, stochastic processes, transformer networks, universal approximation},
	number = {n/a},
	title = {Designing universal causal deep learning models: The geometric (Hyper)transformer},
	url = {https://onlinelibrary.wiley.com/doi/abs/10.1111/mafi.12389},
	volume = {n/a},
	bdsk-url-1 = {https://onlinelibrary.wiley.com/doi/abs/10.1111/mafi.12389},
	bdsk-url-2 = {https://doi.org/10.1111/mafi.12389},
    year={2023},
}

@book{Kallenberg_2021__FoundationsOfModernProbability,
    AUTHOR = {Kallenberg, Olav},
     TITLE = {Foundations of modern probability},
    SERIES = {Probability Theory and Stochastic Modelling},
    VOLUME = {99},
    ADDRESS = {Cham}, 
 PUBLISHER = {Springer, Cham},
      YEAR = {2021},
     PAGES = {xii+946},
      ISBN = {978-3-030-61871-1; 978-3-030-61870-4},
   MRCLASS = {60-01 (60A10 60G05)},
  MRNUMBER = {4226142},
MRREVIEWER = {Myron Hlynka},
       DOI = {10.1007/978-3-030-61871-1},
       URL = {https://doi.org/10.1007/978-3-030-61871-1},
}

@article {GivensShortt_ClassofWassersteinMetricforProbDists_1984,
    AUTHOR = {Givens, Clark R. and Shortt, Rae Michael},
     TITLE = {A class of {W}asserstein metrics for probability distributions},
   JOURNAL = {Michigan Math. J.},
  FJOURNAL = {Michigan Mathematical Journal},
    VOLUME = {31},
      YEAR = {1984},
    NUMBER = {2},
     PAGES = {231--240},
      ISSN = {0026-2285,1945-2365},
   MRCLASS = {60B10 (60E15)},
  MRNUMBER = {752258},
MRREVIEWER = {R.\ M.\ Dudley},
       DOI = {10.1307/mmj/1029003026},
       URL = {https://doi.org/10.1307/mmj/1029003026},
}

@book {MunkresTopology_2000,
    AUTHOR = {Munkres, James R.},
     TITLE = {Topology},
      NOTE = {Second edition of [ MR0464128]}, 
      ADDRESS = {Upper Saddle River, NJ}, 
 PUBLISHER = {Prentice Hall, Inc.},
      YEAR = {2000},
     PAGES = {xvi+537},
      ISBN = {0-13-181629-2},
   MRCLASS = {54-01},
  MRNUMBER = {3728284},
}

@article{kratsios2022universal,
  title={Universal approximation theorems for differentiable geometric deep learning},
  author={Kratsios, Anastasis and Papon, Leonie},
  journal={Journal of Machine Learning Research},
  volume={23},
  number={196},
  pages={1--73},
  year={2022}
}

@article{galimberti2022designing,
  title={Designing Universal Causal Deep Learning Models: The Case of Infinite-Dimensional Dynamical Systems from Stochastic Analysis},
  author={Galimberti, Luca and Livieri, Giulia and Kratsios, Anastasis},
  journal={arXiv preprint arXiv:2210.13300},
  year={2022}
}

@article{lu2021deep,
  title={Deep network approximation for smooth functions},
  author={Lu, Jianfeng and Shen, Zuowei and Yang, Haizhao and Zhang, Shijun},
  journal={SIAM Journal on Mathematical Analysis},
  volume={53},
  number={5},
  pages={5465--5506},
  year={2021},
  publisher={SIAM}
}

@article{kratsios2021universal,
	author = {Kratsios, Anastasis},
	date = {2023/06/01},
	date-added = {2023-07-18 09:09:32 -0400},
	date-modified = {2023-07-18 09:09:32 -0400},
	isbn = {1432-0940},
	journal = {Constructive Approximation},
	number = {3},
	pages = {1145--1212},
	title = {Universal Regular Conditional Distributions via Probabilistic Transformers},
	volume = {57},
	year = {2023},
	bdsk-url-1 = {https://doi.org/10.1007/s00365-023-09635-3}
}

@inproceedings{park2021provable,
  title={Provable memorization via deep neural networks using sub-linear parameters},
  author={Park, Sejun and Lee, Jaeho and Yun, Chulhee and Shin, Jinwoo},
  booktitle={Conference on Learning Theory},
  pages={3627--3661},
  year={2021},
  organization={PMLR}
}

@article{vershynin2020memory,
  title={Memory capacity of neural networks with threshold and rectified linear unit activations},
  author={Vershynin, Roman},
  journal={SIAM Journal on Mathematics of Data Science},
  volume={2},
  number={4},
  pages={1004--1033},
  year={2020},
  publisher={SIAM}
}

@article {CristaJosefMartin_2020_UnivInterpol,
    AUTHOR = {Cuchiero, Christa and Larsson, Martin and Teichmann, Josef},
     TITLE = {Deep neural networks, generic universal interpolation, and
              controlled {ODE}s},
   JOURNAL = {SIAM J. Math. Data Sci.},
  FJOURNAL = {SIAM Journal on Mathematics of Data Science},
    VOLUME = {2},
      YEAR = {2020},
    NUMBER = {3},
     PAGES = {901--919},
   MRCLASS = {93-10 (68T07 93B05 93B15)},
  MRNUMBER = {4154317},
MRREVIEWER = {Christian Hirsch},
       DOI = {10.1137/19M1284117},
       URL = {https://doi.org/10.1137/19M1284117},
}

@article {LuShenYangZhang_2021_UATRegularTargets,
    AUTHOR = {Lu, Jianfeng and Shen, Zuowei and Yang, Haizhao and Zhang,
              Shijun},
     TITLE = {Deep network approximation for smooth functions},
   JOURNAL = {SIAM J. Math. Anal.},
  FJOURNAL = {SIAM Journal on Mathematical Analysis},
    VOLUME = {53},
      YEAR = {2021},
    NUMBER = {5},
     PAGES = {5465--5506},
      ISSN = {0036-1410},
   MRCLASS = {41A63 (41A25 46N10)},
  MRNUMBER = {4319100},
       DOI = {10.1137/20M134695X},
       URL = {https://doi.org/10.1137/20M134695X},
}

@InProceedings{pmlr-v125-kidger20a,
  title = 	 {{Universal Approximation with Deep Narrow Networks}},
  author =       {Kidger, Patrick and Lyons, Terry},
  booktitle = 	 {Proceedings of Thirty Third Conference on Learning Theory},
  pages = 	 {2306--2327},
  year = 	 {2020},
  editor = 	 {Abernethy, Jacob and Agarwal, Shivani},
  volume = 	 {125},
  series = 	 {Proceedings of Machine Learning Research},
  month = 	 {09--12 Jul.},
  address =    {PMLR},
  pdf = 	 {http://proceedings.mlr.press/v125/kidger20a/kidger20a.pdf},
  url = 	 {https://proceedings.mlr.press/v125/kidger20a.html},
  abstract = 	 { The classical Universal Approximation Theorem holds for neural networks of arbitrary width and bounded depth. Here we consider the natural ‘dual’ scenario for networks of bounded width and arbitrary depth. Precisely, let $n$ be the number of inputs neurons, $m$ be the number of output neurons, and let $\rho$ be any nonaffine continuous function, with a continuous nonzero derivative at some point. Then we show that the class of neural networks of arbitrary depth, width $n + m + 2$, and activation function $\rho$, is dense in $C(K; \mathbb{R}^m)$ for $K \subseteq \mathbb{R}^n$ with $K$ compact. This covers every activation function possible to use in practice, and also includes polynomial activation functions, which is unlike the classical version of the theorem, and provides a qualitative difference between deep narrow networks and shallow wide networks. We then consider several extensions of this result. In particular we consider nowhere differentiable activation functions, density in noncompact domains with respect to the $L^p$-norm, and how the width may be reduced to just $n + m + 1$ for ‘most’ activation functions.}
}

@book {CombettesBauschke_Book_ConvexAnalysisMonotonOperators_2017,
    AUTHOR = {Bauschke, Heinz H. and Combettes, Patrick L.},
     TITLE = {Convex analysis and monotone operator theory in {H}ilbert
              spaces},
    SERIES = {CMS Books in Mathematics/Ouvrages de Math\'{e}matiques de la
              SMC},
   EDITION = {Second},
      NOTE = {With a foreword by H\'{e}dy Attouch},
 PUBLISHER = {Springer, Cham},
      YEAR = {2017},
     PAGES = {xix+619},
      ISBN = {978-3-319-48310-8; 978-3-319-48311-5},
   MRCLASS = {49-02 (41A65 46B20 46C05 47H05 90C25)},
  MRNUMBER = {3616647},
       DOI = {10.1007/978-3-319-48311-5},
       URL = {https://doi.org/10.1007/978-3-319-48311-5},
}

@article {Kushner_1979_RobustDiscreteSTate,
    AUTHOR = {Kushner, Harold J.},
     TITLE = {A robust discrete state approximation to the optimal nonlinear
              filter for a diffusion},
   JOURNAL = {Stochastics},
  FJOURNAL = {Stochastics},
    VOLUME = {3},
      YEAR = {1979},
    NUMBER = {2},
     PAGES = {75--83},
      ISSN = {0090-9491},
   MRCLASS = {60G25 (62M20)},
  MRNUMBER = {553906},
MRREVIEWER = {B.\ L.\ Rozovski\u{\i}},
}

@article{bahdanau2016neural,
      title={Neural Machine Translation by Jointly Learning to Align and Translate}, 
      author={Dzmitry Bahdanau and Kyunghyun Cho and Yoshua Bengio},
      year={2016},
      journal={arXiv preprint arXiv:1409.0473},
}

@article {DavisFiltering,
    AUTHOR = {Davis, M. H. A. and Spathopoulos, M. P.},
     TITLE = {Pathwise nonlinear filtering for nondegenerate diffusions with noise correlation},
   JOURNAL = {SIAM J. Control Optim.},
  FJOURNAL = {SIAM Journal on Control and Optimization},
    VOLUME = {25},
      YEAR = {1987},
    NUMBER = {2},
     PAGES = {260--278},
      ISSN = {0363-0129},
   MRCLASS = {93E11 (58G32)},
  MRNUMBER = {877062},
MRREVIEWER = {I.\ Gy\"{o}ngy},
       DOI = {10.1137/0325016},
       URL = {https://doi.org/10.1137/0325016},
}

@article {Davis_1980MultiplicativeFunctionalNonLinearFiltering,
    AUTHOR = {Davis, M. H. A.},
     TITLE = {On a multiplicative functional transformation arising in nonlinear filtering theory},
   JOURNAL = {Z. Wahrsch. Verw. Gebiete},
  FJOURNAL = {Zeitschrift f\"{u}r Wahrscheinlichkeitstheorie und Verwandte
              Gebiete},
    VOLUME = {54},
      YEAR = {1980},
    NUMBER = {2},
     PAGES = {125--139},
      ISSN = {0044-3719},
   MRCLASS = {60G35 (60J27 93E11)},
  MRNUMBER = {597335},
MRREVIEWER = {A.\ N.\ Al-Hussaini},
       DOI = {10.1007/BF00531444},
       URL = {https://doi.org/10.1007/BF00531444},
}

@article{clark1978design,
  title={The design of robust approximations to the stochastic differential equations of nonlinear filtering},
  author={Clark, JMC},
  journal={Communication systems and random process theory},
  volume={25},
  pages={721--734},
  year={1978},
  publisher={Sijthoff and Noordhoff Alphen aan den Rijn}
}

@article {CrisanDielhFrizOberhauser_2013_AAP,
    AUTHOR = {Crisan, D. and Diehl, J. and Friz, P. K. and Oberhauser, H.},
     TITLE = {Robust filtering: correlated noise and multidimensional
              observation},
   JOURNAL = {Ann. Appl. Probab.},
  FJOURNAL = {The Annals of Applied Probability},
    VOLUME = {23},
      YEAR = {2013},
    NUMBER = {5},
     PAGES = {2139--2160},
      ISSN = {1050-5164,2168-8737},
   MRCLASS = {60G35 (93E11)},
  MRNUMBER = {3134732},
MRREVIEWER = {Ilie\ Valu\c{s}escu},
       DOI = {10.1214/12-AAP896},
       URL = {https://doi.org/10.1214/12-AAP896},
}

@article{clark2005robust,
    AUTHOR = {Clark, J. M. C. and Crisan, D.},
     TITLE = {On a robust version of the integral representation formula of
              nonlinear filtering},
   JOURNAL = {Probab. Theory Related Fields},
  FJOURNAL = {Probability Theory and Related Fields},
    VOLUME = {133},
      YEAR = {2005},
    NUMBER = {1},
     PAGES = {43--56},
      ISSN = {0178-8051,1432-2064},
   MRCLASS = {60G35 (62M20 93E11)},
  MRNUMBER = {2197136},
MRREVIEWER = {A.\ F.\ Gualtierotti},
       DOI = {10.1007/s00440-004-0412-5},
       URL = {https://doi.org/10.1007/s00440-004-0412-5},
}

@article {crisan2013robust,
    AUTHOR = {Crisan, D. and Diehl, J. and Friz, P. K. and Oberhauser, H.},
     TITLE = {Robust filtering: correlated noise and multidimensional
              observation},
   JOURNAL = {Ann. Appl. Probab.},
  FJOURNAL = {The Annals of Applied Probability},
    VOLUME = {23},
      YEAR = {2013},
    NUMBER = {5},
     PAGES = {2139--2160},
      ISSN = {1050-5164},
   MRCLASS = {60G35 (93E11)},
  MRNUMBER = {3134732},
MRREVIEWER = {Ilie Valu\c{s}escu},
       DOI = {10.1214/12-AAP896},
       URL = {https://doi.org/10.1214/12-AAP896},
}

@book{hornjohnson2012,
    place={Cambridge},
    edition={2},
    title={Matrix Analysis},
    DOI={10.1017/CBO9781139020411}, 
    Address = {New York}, 
    publisher={Cambridge University Press},
    author={Horn, Roger A. and Johnson, Charles R.},
    year={2012}
}

@inproceedings{KrishnanShalitSontag17,
  title={Structured inference networks for nonlinear state space models},
  author={Krishnan, Rahul and Shalit, Uri and Sontag, David},
  booktitle={Proceedings of the AAAI Conference on Artificial Intelligence},
  volume={31},
  year={2017}
}

@article{KrishnanShalitSontag15,
      title={Deep Kalman Filters}, 
      author={Rahul G. Krishnan and Uri Shalit and David Sontag},
      year={2015},
      journal={arXiv preprint arXiv:\ 1511.05121},
      eprint={1511.05121},
      archivePrefix={arXiv},
      primaryClass={stat.ML}
}

@article{LindstroemStroejbyBrodenWiktorssonHolst08,
  title={Sequential calibration of options},
  author={Lindstr{\"o}m, Erik and Str{\"o}jby, Jonas and Brod{\'e}n, Mats and Wiktorsson, Magnus and Holst, Jan},
  journal={Computational Statistics \& Data Analysis},
  volume={52},
  number={6},
  pages={2877--2891},
  year={2008},
  publisher={Elsevier}
}

@inproceedings{LindstroemWiktorsson14,
  title={Fast Simultaneous Calibration and Quadratic Hedging under Parameter Uncertainty},
  author={Wiktorsson, Magnus and Lindstr{\"o}m, Erik},
  booktitle={8th World Congress of the Bachelier Finance Society},
  year={2014}
}

@article{JavaheriLautierGalli03,
  title={Filtering in finance},
  author={Javaheri, Alireza and Lautier, Delphine and Galli, Alain},
  journal={Wilmott},
  volume={3},
  pages={67--83},
  year={2003}
}

@article {Barndorff-NielsenShephard02,
    AUTHOR = {Barndorff-Nielsen, Ole E. and Shephard, Neil},
     TITLE = {Econometric analysis of realized volatility and its use in
              estimating stochastic volatility models},
   JOURNAL = {J. R. Stat. Soc. Ser. B Stat. Methodol.},
  FJOURNAL = {Journal of the Royal Statistical Society. Series B.
              Statistical Methodology},
    VOLUME = {64},
      YEAR = {2002},
    NUMBER = {2},
     PAGES = {253--280},
      ISSN = {1369-7412},
   MRCLASS = {60H30},
  MRNUMBER = {1904704},
       DOI = {10.1111/1467-9868.00336},
       URL = {https://doi.org/10.1111/1467-9868.00336},
}

@article{BabbsNowman99,
  title={Kalman filtering of generalized Vasicek term structure models},
  author={Babbs, Simon H and Nowman, K Ben},
  journal={Journal of financial and quantitative analysis},
  volume={34},
  number={1},
  pages={115--130},
  year={1999},
  publisher={Cambridge University Press}
}

@article{ManoliuTompaidis02,
  title={Energy futures prices: term structure models with Kalman filter estimation},
  author={Manoliu, Mihaela and Tompaidis, Stathis},
  journal={Applied mathematical finance},
  volume={9},
  number={1},
  pages={21--43},
  year={2002},
  publisher={Taylor \& Francis}
}

@article{LautierGalli04,
  title={Simple and extended Kalman filters: an application to term structures of commodity prices},
  author={Lautier, Delphine and Galli, Alain},
  journal={Applied Financial Economics},
  volume={14},
  number={13},
  pages={963--973},
  year={2004},
  publisher={Taylor \& Francis}
}

@article{Kalman60,
    author = {Kalman, R. E.},
    title = "{A New Approach to Linear Filtering and Prediction Problems}",
    journal = {Journal of Basic Engineering},
    volume = {82},
    number = {1},
    pages = {35-45},
    year = {1960},
    month = {03},
    issn = {0021-9223},
    doi = {10.1115/1.3662552},
    url = {https://doi.org/10.1115/1.3662552},
    eprint = {https://asmedigitalcollection.asme.org/fluidsengineering/article-pdf/82/1/35/5518977/35\_1.pdf},
}

@article{KalmanBucy61,
    author = {Kalman, R. E. and Bucy, R. S.},
    title = "{New Results in Linear Filtering and Prediction Theory}",
    journal = {Journal of Basic Engineering},
    volume = {83},
    number = {1},
    pages = {95-108},
    year = {1961},
    month = {03},
    issn = {0021-9223},
    doi = {10.1115/1.3658902},
    url = {https://doi.org/10.1115/1.3658902},
    eprint = {https://asmedigitalcollection.asme.org/fluidsengineering/article-pdf/83/1/95/5503549/95\_1.pdf},
}

@article {TBD,
    AUTHOR = {TBD},
     TITLE = {TBD},
   JOURNAL = {TBD},
      YEAR = {X},
}

\end{document}